\newtheorem{theorem}{Theorem}
\newtheorem{theoremE}{Theorem}
\newtheorem{propositionF}{Proposition}
\theoremstyle{definition}
\newtheorem{definition}{Definition}
\newtheorem{definitionJ}{Definition}
\newtheorem{theoremD}{Theorem}
\newtheorem{assumption}{Assumption}
\newtheorem{lemmaA}{Lemma}
\newtheorem{lemmaD}{Lemma}
\newtheorem{lemmaJ}{Lemma}
\newtheorem{lemmaK}{Lemma}
\newtheorem{corollaryK}{Corollary}
\DeclareMathOperator*{\argmin}{argmin}
\DeclareMathOperator*{\argmax}{argmax}
\newcommand{\EE}{\mathbb{E}}
\newcommand{\MM}{\mathcal{M}}
\newcommand{\OA}{\overline{\alpha}}
\newcommand{\VV}{\mathcal{V}}
\newcommand{\DD}{\mathcal{D}}
\newcommand{\vv}{\mathbf{v}}
\newcommand{\pin}{p_{\text{in}}}
\newcommand{\pint}{p_{\text{in}}(\eta_t)}
\title{On the Inherent Privacy Properties of Discrete Denoising Diffusion Models}
\author{\name Rongzhe Wei \email rongzhe.wei@gatech.edu \\
      \addr ECE, Georgia Institute of Technology
      \AND
      \name Eleonora Krea\v{c}i\'{c} \email eleonora.kreacic@jpmchase.com \\
      \addr J.P. Morgan AI Research
      \AND
      \name Haoyu Wang \email haoyu.wang@gatech.edu\\
      \addr ECE, Georgia Institute of Technology
      \AND
      \name Haoteng Yin \email yinht@purdue.edu\\
      \addr CS, Purdue University
      \AND
      \name Eli Chien \email ichien6@gatech.edu\\
      \addr ECE, Georgia Institute of Technology
      \AND
      \name Vamsi K. Potluru \email vamsi.k.potluru@jpmchase.com\\
      \addr J.P. Morgan AI Research
      \AND
      \name Pan Li \email panli@gatech.edu\\
      \addr ECE, Georgia Institute of Technology
      }
\begin{document}

\maketitle

\etocdepthtag.toc{mtchapter}
\etocsettagdepth{mtchapter}{subsection}
\etocsettagdepth{mtappendix}{none}

\begin{abstract}
    Privacy concerns have led to a surge in the creation of synthetic datasets, with diffusion models emerging as a promising avenue. Although prior studies have performed empirical evaluations on these models, there has been a gap in providing a mathematical characterization of their privacy-preserving capabilities. To address this, we present the pioneering theoretical exploration of the privacy preservation inherent in \emph{discrete diffusion models} (DDMs) for discrete dataset generation. Focusing on per-instance differential privacy (pDP), our framework elucidates the potential privacy leakage for each data point in a given training dataset, offering insights into how the privacy loss of each point correlates with the dataset's distribution.
    Our bounds also show that training with $s$-sized data points leads to a surge in privacy leakage from $(\epsilon, \mathcal{O}(\frac{1}{s^2\epsilon}))$-pDP to $(\epsilon, \mathcal{O}(\frac{1}{s\epsilon}))$-pDP of the DDM during the transition from the pure noise to the synthetic clean data phase, and a faster decay in diffusion coefficients amplifies the privacy guarantee. Finally, we empirically verify our theoretical findings on both synthetic and real-world datasets.
\end{abstract}
\section{Introduction}
\label{sec:into}
Discrete tabular or graph datasets with categorical attributes 
are prevalent in many privacy-sensitive domains~\citep{vatsalan2013taxonomy,pourhabibi2020fraud,li2021private,shwartz2022tabular,borisov2022deep}, including finance~\citep{clements2020sequential,wang2021review,potluru2024synthetic}, e-commerce
~\citep{ahmed2017survey,zhang2019deep}, and medicine~\citep{duvenaud2015convolutional,schork2015personalized,ulmer2020trust}. For instance, medical researchers often collect patient data, such as race,
gender, and medical conditions, in a discrete tabular form. However, using and sharing data in these domains carry the risk of revealing personal information~\citep{abay2019privacy}. Studies have shown that it is possible to re-identify individuals in supposedly de-identified healthcare data~\citep{mcguire2006no, el2011systematic}.  
To address these types of concerns, publishing synthetic datasets with privacy guarantees has been proposed as a way to protect sensitive information and to reduce the risk of privacy leakage~\citep{choi2017generating,patel2018correlated,tucker2020generating,dumont2021overcoming}.

Previous research has explored discrete synthetic database releasing methods~\citep{zhou2009differential,blum2013learning,li2023private}. Many of these methods employ data anonymization techniques~\citep{sweeney2002k,li2006t,liu2008towards,lu2012fast} or focus on private statistics/statistical models~\citep{sala2011sharing,jorgensen2016publishing,balog2018differentially,harder2021dp}. In the former category, k-anonymization~\citep{sweeney2002k} directly works on anonymizing categorical features but it can be vulnerable to the attackers with background knowledge~\citep{machanavajjhala2007diversity}. Alternatively, methods using private statistics or models concentrate on sharing specific private statistics~\citep{harder2021dp} or privatizing model parameters~\citep{hardt2012simple,zhang2017privbayes}. However, these techniques can sometimes misrepresent the original distribution or reduce sample quality by adding noise directly to model parameters. 

Neural network (NN)-based generative models have been leveraged in various domains on account of their ability in learning underlying distributions~\citep{austin2021structured}. Recently, discrete diffusion models (DDMs)~\citep{hoogeboom2021argmax,austin2021structured,campbell2022continuous,gu2022vector,vignac2022digress}, as a typical representative of diffusion models (DMs), have emerged as a powerful class of generative models for discrete data 
and demonstrate great potential to generate samples with striking performance~\citep{haefeli2022diffusion,zheng2023reparameterized}. DDMs are latent variable generative models that employ both a forward and reverse Markov process (See Fig.~\ref{fig.DDM_framework}). In the forward diffusion process, each discrete sample is gradually corrupted with dimension-wise independent noise. This is often implemented through the use of progressive transition kernels, which yields not only high fidelity-diversity trade-offs but also robust training objectives~\citep{dhariwal2021diffusion}. On the other hand, the reverse process learns denoising neural networks 
that aim to predict the noise and reconstruct the original sample. Despite the impressive performance of DDMs, it is still unclear whether DDMs trained on sensitive datasets can be safely used to generate synthetic samples.

\begin{figure}[t]
\label{fig.DDM_framework}
\centering
\includegraphics[width=0.85\textwidth]{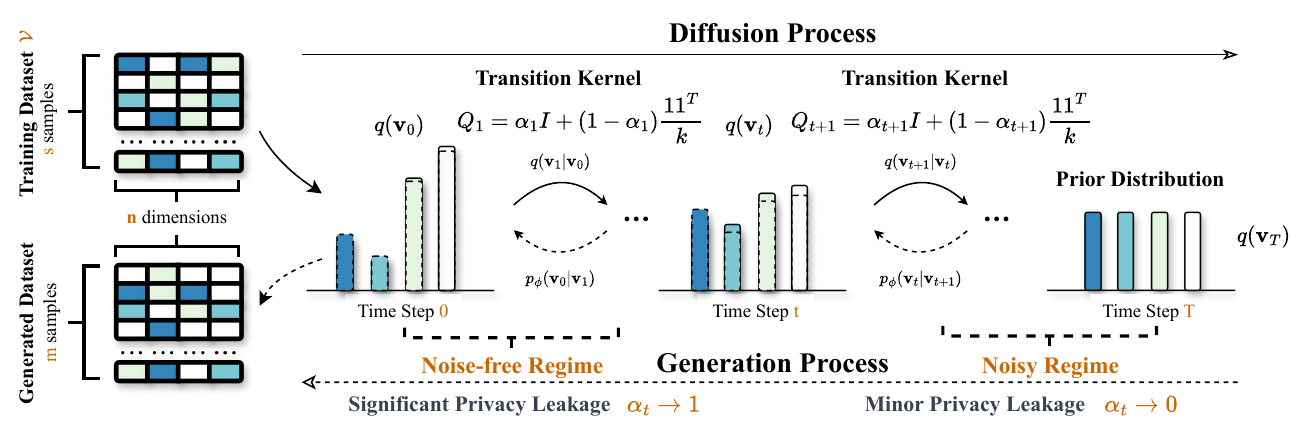}
\caption{\textbf{An Illustration of Discrete Diffusion Models (DDMs).}}
\end{figure}

Efforts have empirically examined the privacy implications of DMs. While previous literature suggests that DMs generate synthetic training data to address privacy concerns~\citep{jahanian2021generative,andrew}, recent studies have shown that DMs may not be suitable for releasing private synthetic data. Specifically, 
\citet{wu2022membership,hu2023membership} conduct 
membership inference attacks on DMs for text-to-image tasks and demonstrate that membership inference poses a severe threat in diffusion-based generation. Besides, studies show that DMs can memorize training samples~\citep{somepalli2022diffusion,carlini2023extracting}. Although there exist practical observations for privacy properties of DMs, there is limited research aimed at mathematically characterizing the privacy guarantees of data generated by DMs. Moreover, understanding privacy guarantees may guide practitioners to determine whether additional mechanisms, such as DP-SGD~\citep{abadi2016deep}, PATE~\citep{papernot2016semi}, should be incorporated to meet practical privacy requirements.

Differential privacy (DP)~\citep{dwork2006calibrating,dwork2014algorithmic}, the most commonly used \emph{algorithm-centric} framework to characterize the privacy guarantee of an algorithm, is derived from the worst-case dataset. However, in the context of synthetic data sharing, the characterization of privacy leakage is about the synthetic dataset (the algorithm output) rather than the generative model (the algorithm itself), and the learned data distribution to generate synthetic data strongly depends on the empirical distribution of the data points used for training. Therefore, a privacy guarantee that may incorporate \emph{the distributional characteristics of data points} in the given training dataset  may offer a far more accurate privacy characterization than the worst-case analysis. Such data-dependent analysis may help practitioners learn which data points in the 
training dataset tend to introduce privacy leakage concerns in the generation process and thus design the relevant protection strategy.

In this paper, we take the first step to analyze the privacy guarantees of DDMs for a fixed training dataset.  Specifically, we leverage the data-dependent privacy framework termed per-instance differential privacy (pDP), 
which is defined upon an instance in a fixed training dataset
 as outlined by~\citep{wang2019per}. The analysis of pDP allows for a fine-grained characterization of the potential privacy leakage of each data point in the training set. This offers data curators a better understanding of the sensitivity of training data.
 
Our analysis considers a DDM trained on $s$ samples and generates $m$ samples, and we keep track of the privacy leakage in each generation step. We prove that as the data generation step transits from $t = T$ (noisy regime) to $t = 0$ (noise-free regime), the privacy leakage increases from $(\epsilon, \mathcal{O}(\frac{m}{s^2\epsilon(1 - e^{-\epsilon})}))$-pDP to $(\epsilon, \mathcal{O}(\frac{m}{s\epsilon(1 - e^{-\epsilon})}))$-pDP where the data-dependent term is hidden in the big-$\mathcal{O}$ notation. Consequently, the final few generation steps ($\alpha_t \to 1$ in Fig.~\ref{fig.DDM_framework}) dominate the main privacy leakage in DDMs. 
Further, our analysis demonstrates that the privacy bound $\mathcal{O}(1/s)$ is tight when $m = 1$, emphasizing the inherent weak privacy guarantee of DDMs. Moreover, faster decay in diffusion coefficients yields better privacy preservation. Both synthetic and real dataset evaluations validate our theoretical findings.

For the data-dependent part, we develop a practical algorithm to estimate the privacy leakage of each data point in real-world datasets according to our pDP bounds. We evaluate the data-dependent part by removing the most sensitive data points (according to our data-dependent privacy parameters) from the dataset to train a DDM, and then evaluating the ML models trained based on the synthetic dataset generated by the DDM.  Interestingly, we observe that the ML models obtained after a part of data removal can even outperform others without such data removal. 
We attribute this to the fact that the removed  data points are likely outliers which may be actually not good for ML models to learn from. This illustrates another potentially valuable usage of our data-dependent analysis. 

To avoid any confusion, we provide several important explanations for considering pDP in our work. pDP, tailored to the training set, offers data curators a more accurate and fine-grained estimation of the potential privacy leakage of each data point, compared to DP which studies the worst case and keeps agnostic to the dataset~\citep{wang2019per}. 
However, it is crucial to understand that pDP is not a replacement for DP. 
Direct application of data-dependent sensitivity for noise addition is not permissible for ensuring privacy, as the added noise may leak private information due to its data dependency. Data-dependent methods such as smooth sensitivity~\citep{nissim2007smooth} and propose-test-release~\citep{dwork2009differential} may be employed, while they are beyond the scope of this paper. 
Our analysis is to provide insights into the inherent privacy afforded by DDMs, and to guide data curators in assessing the privacy risks associated with different parts of the dataset. 
We are not to develop an algorithm to match a certain privacy budget as the goal. Given this purpose, pDP is a more suitable metric than DP. In practice, the pDP assessment is expected to be kept confidential and used by the data curators to understand the dataset and evaluate the potential privacy leakage if one uses DDMs to generate synthetic datasets. 
\subsection{More Related Work}
A significant amount of research has been conducted on the subject of publishing privacy sensitive data~\citep{ji2014differential,baraheem2022survey}. As of now, traditional non-deep learning techniques for preserving privacy while generating discrete data
can be broadly classified into two categories: \textbf{(a) Data anonymization-based approaches.} 
 These methods employ a variety of techniques to directly sanitize data to prevent easy re-identification~\citep{abay2019privacy}. One most popular framework is termed k-anonymity~\citep{sweeney2002k} that requires each record is indistinguishable from at least $k-1$ other records with respect to certain identifying attributes. Several extensions of this framework have been proposed in ~\citep{lefevre2005incognito,aggarwal2005approximation,machanavajjhala2007diversity,li2006t,truta2006privacy,machanavajjhala2008privacy,liu2008towards,liang2020optimization}. However, these methods are typically prone to various privacy attacks~\citep{machanavajjhala2007diversity}. \textbf{(b) Methods based on statistical models or private statistics.} \citet{barak2007privacy} 
 employed Fourier decomposition and prior knowledge to release low-dimensional data projections. \citet{zhou2009differential} proposed a database compression procedure based on low-rank random affine transformations and publish low-dimensional data. Other works along this line include~\citep{liu2005random,zhou2009compressed,ding2011differentially,cormode2011differentially,kenthapadi2012privacy,cormode2012differentially}. Note that these works can work for both discrete and continuous data. Furthermore, \citet{balog2018differentially} introduced 
 a framework employing kernel mean embeddings~\citep{smola2007hilbert} in Reproducing Kernel Hilbert Space, and ensuring privacy by using synthetic data approximations to enable safe data release.
 Nevertheless, these methods usually suffer from poorly generated sample qualities. With regard to this, establishing NN-based private models is a promising way to enhance sample qualities due to the great expressive power of deep networks.
 
Hitherto, there are studies on NN-based private models but few analyze the inherent privacy of the model itself. In \citep{lin2021privacy}, it was shown that a vanilla GAN trained on $s$ samples inherently satisfies a weak $(\epsilon, \mathcal{O}(\frac{m}{s\epsilon}))$-DP guarantee when releasing $m$ samples. 
In this work, 
our results demonstrate that DDMs provide weak privacy guarantees in the same order as GANs. But note that \citet{lin2021privacy} did not provide a data-dependent bound. Their bounds are in the order form and cannot be explicitly computed from data curator's side for a given training dataset. 
Because of such weak inherent privacy 
there were efforts to bring additional privacy techniques into the model, such as DP-SGD~\citep{abadi2016deep}. \citet{xie2018differentially} proposed DPGAN that integrates modified DP-SGD in WGAN to ensure privacy for GAN-generated samples. \citet{dockhorn2022differentially} applied DP-SGD to privatize model parameters in continuous DMs for image data without analyzing the inherent privacy of DMs.  Recently, \citet{ghalebikesabi2023differentially} have shown that fine-tuning a pre-trained diffusion model with DP-SGD can generate verifiable private synthetic data for the dataset used for fine-tuning. 

\section{Preliminaries}
\label{sec:preliminaries}
We start by introducing notations and concepts for analysis. Let $[n] = \{1, 2, ..., n\}$ and $\mathcal{X}^n$ represent an $n$-dimensional discrete space with each dimension having $k$ categories,
i.e. $\mathcal{X}^n := \mathcal{X}_1 \times \cdot \cdot \cdot \times \mathcal{X}_n$ with $\mathcal{X}_i = [k], i \in [n]$. We assume that training datasets $\VV$ reside in $\mathcal{X}^n$, implying samples are vector-valued data of $n$ entries, each from one of the $k$ categories. Although we assume consistent categories across columns, our analysis can account for datasets with varied category counts using the maximum category count.

\textbf{Per-instance Differential Privacy.}
DP \citep{dwork2006calibrating,dwork2014algorithmic} is a de-facto standard to quantify privacy leakage. 
We adapt DP definition for specific adjacent datasets, introducing per-instance DP:
\begin{definition}[($\epsilon, \delta$)-Per-instance Differential Privacy (pDP)~\citep{wang2019per}]
Let $\VV_0$ be a training dataset, $\vv^* \in \VV_0$ be a fixed point and $\mathcal{M}$ be a randomized mechanism. Define adjacent dataset $\VV_1 = \VV_0 \backslash \{\vv^*\}$. We say $\mathcal{M}$ satisfies ($\epsilon, \delta$)-pDP with respect to $(\VV_0, \vv^*)$ if
for all measurable set $\mathcal{O} \subset range(\mathcal{M})$, $\{i,j\}=\{0,1\}$:
\begin{align}\label{eq:DP-def}
\mathcal{P}(\mathcal{M}(\VV_i)\in \mathcal{O}) \leq e^{\epsilon}\mathcal{P}(\mathcal{M}(\VV_j)\in \mathcal{O}) + \delta.
\end{align}
\end{definition}
It is important to highlight that pDP is uniquely defined for a specific dataset-data point pair. 
This capability is crucial for understanding the privacy leakage of the given dataset, 
as elaborated in Sec.~\ref{sec.exp}. Additionally, by taking the supremum over all conceivable datasets $\VV_0$ and points $\vv^*$, we can obtain DP from pDP when considering model releasing scenario (Theorem~\ref{thm.DP_DDM}). A more comprehensive discussion of the DP guarantees associated with DDMs is provided in Appendix.~\ref{app.ddp_dp}.

\textbf{Discrete Diffusion Models.}
DDMs~\citep{hoogeboom2021argmax,austin2021structured,vignac2022digress,haefeli2022diffusion} are diffusion models that can generate categorical data. 
Let $\mathbf{v}_t$ denote the data random variable at time t.
The forward process involves gradually corrupting data
with the noising Markov chain $q$, according to $q(\mathbf{v}_{1:T} | \mathbf{v}_0) = \prod_{t=1}^T q(\mathbf{v}_t | \mathbf{v}_{t-1})$, where $\mathbf{v}_{1:T} = \mathbf{v}_1, \mathbf{v}_2, ..., \mathbf{v}_T$. On the other hand, the reverse process, $p_{\phi}(\mathbf{v}_{0:T}) = p(\mathbf{v}_T)\prod_{t=1}^Tp_{\phi}(\mathbf{v}_{t-1} | \mathbf{v}_t)$, gradually reconstructs the datasets starting from a prior $p(\mathbf{v}_T)$. The denoising neural network (NN) learns $p_{\phi}(\mathbf{v}_{t-1} | \mathbf{v}_t)$ by optimizing the ELBO, which comprises three loss terms: the reconstruction term ($L_r$), the prior term ($L_p$), and the denoising term ($L_t$), represented in the following equation~\citep{ho2020denoising}: 
\begin{align}
\underbrace{\EE_{q(\mathbf{v}_1|\mathbf{v}_0)}[\log p_\phi(\mathbf{v}_0 | \mathbf{v}_1)]}_{\text{Reconstruction Term } L_r} - \underbrace{D_{\text{KL}}(q(\mathbf{v}_T|\mathbf{v}_0)\|p_{\phi}(\mathbf{v}_T))}_{\text{Prior Term } L_p} - \sum_{t=2}^{T}\underbrace{\EE_{q(\mathbf{v}_t|\mathbf{v}_0)}[D_{\text{KL}}(q(\mathbf{v}_{t-1}|\mathbf{v}_{t}, \mathbf{v}_0)\|p_\phi(\mathbf{v}_{t-1}|\mathbf{v}_t))]}_{\text{Denoising Term } L_t}.
\end{align}
Specifically, the \textbf{forward process} can be described by a series of transition kernels $\{Q_t^i\}_{t \in [T], i \in [n]}$ where for any entry $\mathbf{v}^i$, $[Q_t^i]_{lh} = q(\mathbf{v}_t^i = h | \mathbf{v}_{t-1}^i = l)$ represent the probability of a jump from category $l$ to $h$ on the $i$-th entry at time $t$.
Since for each entry $i$ the number of categories is the same, we can rely on the same transition kernels for all dimensions and use $Q_{t}$ instead of $Q_t^i$. Let $\overline{Q}_t = Q_1Q_2...Q_t$ denote the accumulative transition matrix from time 1 to time t. We use a uniform prior distribution $p(\vv_T)$.
The corresponding doubly stochastic matrices is determined by a series of important parameters termed \textbf{diffusion coefficients} ($\{\alpha_t, t \in [T]|\alpha_t \in (0, 1)\}$) which control the transition rate from original distribution to uniform measure. Specifically,  define $ Q_t = \alpha_t I + (1 - \alpha_t)\frac{\mathbbm{1}\mathbbm{1}^T}{k}$ and then $\bar{Q}_t = \bar{\alpha}_t I + (1 - \bar{\alpha}_t)\frac{\mathbbm{1}\mathbbm{1}^T}{k}$ where $\bar{\alpha}_t = \prod_{i=1}^t \alpha_t$. 
In the \textbf{reverse process}, denoising networks 
are leveraged to predict $p_\phi(\mathbf{v}_{t-1} | \mathbf{v}_t)$ in hope of approximating $q(\mathbf{v}_{t-1}|\mathbf{v}_t, \mathbf{v}_0)$. In practice, instead of directly predicting $p_\phi(\mathbf{v}_{t-1} | \mathbf{v}_t)$ , denoising networks are learned to predict a clean data $\mathbf{v}_0$ at time $0$ with a noisy $\mathbf{v}_t$ as input, i.e. $p_\phi(\mathbf{v}_0 | \mathbf{v}_t)$. 
To train the denoising network, one needs to sample noisy points from $q(\mathbf{v}_t | \mathbf{v}_0)$, and feed them into the denoising network $\phi_t$ and obtain $p_\phi(\mathbf{v}_0 | \mathbf{v}_t)$. Specifically, we adopt 
\begin{align}\label{eq:loss}
    L_{\text{train}} = D_{\text{KL}}(q(\mathbf{v}_0|\mathbf{v}_t)|| p_\phi(\mathbf{v}_0 | \mathbf{v}_t))
    = \frac{1}{|\mathcal{V}|}\sum_{\mathbf{v}_0\in\mathcal{V}}\mathbb{E}_{\mathbf{v}_t\sim q(\mathbf{v}_t|\mathbf{v}_0)}\left[\sum_{i = 1}^nL_{\text{CE}}(\mathbf{v}_0^i, p_\phi(\mathbf{v}_0^i | \mathbf{v}_t))\right]
\end{align}
This loss serves as the basis for our later sufficient training Assumption~\ref{as.approximation_error}. In the generation process, we need to bridge the connection of $p_\phi(\mathbf{v}_{t-1} | \mathbf{v}_t)$ and $p_\phi(\mathbf{v}_{0} | \mathbf{v}_t)$, which in practice depends on a dimension-wise conditional independence condition~\citep{vignac2022digress}: 
\begin{align}
    p_\phi(\mathbf{v}_{t-1} | \mathbf{v}_t) = \prod_{i \in [n]} p_\phi(\mathbf{v}_{t-1}^i | \mathbf{v}_t)
    = \prod_{i \in [n]} \sum_{l \in \mathcal{X}_i} q(\mathbf{v}_{t-1}^{i} | \mathbf{v}_t, \mathbf{v}_0^i = l) p_\phi(\mathbf{v}_0^i =l |  \mathbf{v}_t).
    \label{eq.CI}
\end{align}
\textbf{Other Notations.}
Given two samples $\mathbf{v}$ and $\tilde{\mathbf{v}}$, let $\bar\omega(\mathbf{v}, \tilde{\mathbf{v}})$ represent the count of differing entries, i.e., $\bar\omega(\mathbf{v}, \tilde{\mathbf{v}}) = \#\{ i | \mathbf{v}^i \neq \tilde{\mathbf{v}}^i, i \in [n]\}$. For $\eta \in [n]$ and $\mathbf{v} \in \VV_1$, define $N_{\eta}(\mathbf{v}) = |\{\mathbf{v}' \in \VV_1: \bar\omega(\mathbf{v}, \mathbf{v}') \leq \eta\}|$ and $\VV_1^{i|l} = \{\vv \in \VV_1 | \vv^i = l\}$ the set of data points with a fixed-valued entry. We use $\DD_{\text{KL}}(\cdot \| \cdot)$ and $\|\cdot\|_{TV}$ for KL-divergence and total variation. Let $\mu_t^+ = \frac{1 + (k-1)\alpha_t}{k}$ and $\mu_t^- = \frac{1 - \alpha_t}{k}$ represent one-step transition probabilities to the same and different states respectively at time $t$ while $\bar \mu_t^+ = \frac{1 + (k-1)\OA_t}{k}$ and $\bar \mu_t^- = \frac{1 - \OA_t}{k}$  are the accumulated transition probabilities.
Transition probability ratios are defined as $R_t = \frac{\mu_t^+}{\mu_t^-}$ and $\bar R_t = \frac{\bar \mu_t^+}{\bar \mu_t^-}$. A larger ratio indicates a higher likelihood of maintaining the same feature category in the diffusion process. Moreover, define $(\cdot)_+ = \max\{\cdot, 0\}$.
\section{Main Results}
\label{sec:main}
\subsection{Inherent Privacy Guarantees of DDMs}\label{subsec:main_privacy}
First, we define the mechanism under analysis. Let $\MM_t(\VV; m)$ represent the mechanism where, for an input dataset $\VV$, it outputs $m$ samples generated at time $t$ using the DDM's generation process. Specifically, $\MM_0(\VV; m)$ signifies the final generated dataset by DDM. 
 In the paper, we focus on the behavior of $\MM_t$ in the generation process.
Below, we outline the assumptions: 
\begin{assumption}[\textbf{Sufficient training of} $\bm{\phi}$\label{as.approximation_error}]
    Given dataset $\VV$, let $\mathbf{v}_0$ denote the predicted random variables at time $0$. Let $\phi$ denote denoising NNs trained on dataset $\VV$. We say  Assumption~\ref{as.approximation_error} is satisfied if there exist small constants $\gamma_t>0$ such that $\forall\mathbf{v}_t\in\mathcal{X}^n$:
    \begin{align}
        \mathcal{D}_{\text{KL}}(q(\mathbf{v}_0^i | \mathbf{v}_t ) \|  p_\phi(\mathbf{v}_0^i | \mathbf{v}_t)) \leq \gamma_t, \forall i \in [n], \forall t \in [T].
    \end{align}
\end{assumption}
\begin{assumption}[\textbf{Gap between Forward and Backward Diffusion Paths}\label{as.gap_forward_backward}]
    Given dataset $\VV$, let $\mathbf{v}_t$ denote the random variable sampled from intermediate distributions at time t in both the forward process (following $q(\mathbf{v}_t)$) and backward process (following $p_{\phi}(\mathbf{v}_t)$). We say the Assumption~\ref{as.gap_forward_backward} is satisfied if there exists 
    small positive constant $\tilde{\gamma}_t \ll 2$ such that
    \vspace{-1mm}
    \begin{align}\label{eq:TV}
        \|q(\mathbf{v}_t) - p_\phi(\mathbf{v}_t)\|_{\text{TV}} \leq \tilde{\gamma}_t, \forall t \in [T].
    \end{align}
\end{assumption}
Assumption~\ref{as.approximation_error} states that denoising networks, when trained using the loss function in Eq.~\eqref{eq:loss}, can effectively infer clean data from intermediate noisy data distributions.
Given a sufficiently expressive model, we expect $\gamma_t$ to be small.
Assumption~\ref{as.gap_forward_backward} asserts that diffusion and generation paths are close, which is a reasonable assumption due to the recent analysis \citep{campbell2022continuous}. However, one cannot use Eq.~\eqref{eq:TV} to derive privacy bound directly as closeness in total variation does not imply DP in general though the reverse could be true~\citep{bassily2016algorithmic}.

 With above assumptions, we investigate the flow of privacy leakage along generation process. Our analysis centers around the inherent privacy guarantees of DDM-generated samples at specific release step, denoted as $T_{\text{rl}}$. Later, we will show that our privacy bound is tight when generating a single sample ($m = 1$).
\begin{theorem}[\textbf{Inherent pDP Guarantees for DDMs}\label{thm.main}]
    Given a dataset $\VV_0$ with size $|\VV_0| = s+1$ and a data point $\mathbf{v}^*\in \VV_0$ to be protected, denote $\VV_1$ such that $\VV_1=\VV_0 \backslash \{\mathbf{v}^*\}$. Assume the denoising networks trained on $\VV_0$ and $\VV_1$ satisfy Assumption~\ref{as.approximation_error} and Assumption~\ref{as.gap_forward_backward}.
    Given a specific time step $T_{\text{rl}}$, the mechanism $\mathcal{M}_{T_{\text{rl}}}(\cdot; m)$ satisfies $(\epsilon, \delta)$-pDP with respect to $(\VV_0, \vv^*)$ such that given $\epsilon$, 
    \begin{align}
         \delta(\VV_0,\mathbf{v}^*) \leq m\biggl[\underbrace{ \sum_{t = T_{\text{rl}}}^{T}\min \biggl\{\frac{4N_{(1 + c_t^*)\eta_t}(\mathbf{v}^*)}{s}, 1 \biggl\} \cdot \frac{n}{s^{\psi_t}} + \frac{n(1 - \frac{1}{\bar R_{t-1}})}{s^2}}_{\text{Main Privacy Term}} + \underbrace{\mathcal{O}\biggl(\sqrt{\gamma_t} + \tilde{\gamma}_t\biggl)\vphantom{\Biggl[\frac{a}{b}\Biggl]}}_{\text{Error Term}} \biggl] / (\epsilon(1 - e^{-\epsilon})). \label{eq.main_theorem_1}
    \end{align}
    where $\psi_t, \eta_t, c_t^*$ are \textbf{data-dependent quantities} determined by $\vv^*$ and $\VV_1$. Define a similarity measure $\text{Sim}(\vv^*, \VV) =\sum_{\mathbf{v} \in \VV} \bar R_t^{-\bar\omega(\mathbf{v}, \vv^*)}$. Then, $\psi_t, \eta_t, c_t^*$ follow
    \begin{align}
        \frac{n}{s^{\psi_t}} = \frac{(\OA_{t-1}-\OA_t) / (k \bar\mu_t^+ \bar\mu_t^-)}{1 +  \text{Sim}(\vv^*, \VV_1)} \cdot \sum_{i = 1}^n \log \biggl(1+\frac{\bar{R}_{t-1}^2-1}{\bar{R}_{t-1}^2 \text{Sim}(\vv^*, \VV_1^{i | \vv^{*i}})  + \text{Sim}(\vv^*, \VV_1) +1}\biggl). \label{eq:psi-t}
    \end{align}
    And, $\eta_t, c_t^*$ are the \underline{smallest} $\eta_t \in \{1, 2, ..., n\}$, $c_t^* \in \{0, \frac{1}{\eta_t}, \frac{2}{\eta_t}, ..., \frac{n-\eta_t}{\eta_t}\}$ which satisfy
    \begin{align}
        \eta_t \geq \frac{\log\vartheta(\eta_t)}{\log\frac{1}{n(1 - \bar\mu_t^+)}}  + \left(\frac{\log\left(\vartheta(\eta_t)
        \frac{\OA_{t-1} - \OA_t}{k\bar\mu_t^+\bar\mu_t^-} \cdot s^{\psi_t}\right)}{2\log \bar R_t} - 2\right)_+, \quad c_t^* \geq \frac{\frac{1}{\eta_t}\log \vartheta((1+c_t^*)\eta_t) + \frac{3}{2}}{\log \frac{1}{\mu_t^-} - 1}. \label{eq.data_quantities_inequality}
    \end{align}
where $\vartheta(\eta) = (s - N_{\eta}(\mathbf{v}^*)) / N_{\eta}(\mathbf{v}^*)$ that represents the ratio between the numbers of points outside the $\eta$-ball and inside it. 
\end{theorem}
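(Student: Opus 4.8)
The plan is to reduce the pDP guarantee to a bound on an accumulated divergence between the two output distributions $\MM_{T_{\text{rl}}}(\VV_0;m)$ and $\MM_{T_{\text{rl}}}(\VV_1;m)$, and then convert that divergence into $(\epsilon,\delta)$. Concretely, I would first invoke a conversion lemma that bounds $(\epsilon,\delta)$-indistinguishability (in both directions, as required by the symmetric pDP definition) in terms of KL divergence; this is the step that produces the outer factor $1/(\epsilon(1-e^{-\epsilon}))$ and reduces the problem to controlling the KL divergence between the two generation distributions. Since the $m$ released samples are generated independently through the same reverse chain and share the common uniform prior $p(\vv_T)$, the chain rule for KL divergence factorizes this accumulated divergence as $m\sum_{t=T_{\text{rl}}}^{T}\EE_{\vv_t}\!\left[D_{\text{KL}}\big(p_{\phi_0}(\vv_{t-1}\mid\vv_t)\,\|\,p_{\phi_1}(\vv_{t-1}\mid\vv_t)\big)\right]$, where $\phi_0,\phi_1$ are the networks trained on $\VV_0,\VV_1$. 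This explains both the leading $m$ and the summation over reverse steps; it remains to bound each per-step term by the bracketed quantity in Eq.~\eqref{eq.main_theorem_1}.

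For a fixed step $t$, I would replace the learned conditionals by the true Bayes posteriors. Using the dimension-wise decomposition Eq.~\eqref{eq.CI}, each $p_{\phi_j}(\vv_{t-1}\mid\vv_t)$ is governed by $p_{\phi_j}(\vv_0^i\mid\vv_t)$, which Assumption~\ref{as.approximation_error} places within $\gamma_t$ in KL of the true posterior $q_j(\vv_0^i\mid\vv_t)$; applying Pinsker converts this into a total-variation perturbation of order $\sqrt{\gamma_t}$, which is exactly the source of the $\sqrt{\gamma_t}$ error term. The comparison thus reduces to $D_{\text{KL}}(q_0(\vv_{t-1}\mid\vv_t)\|q_1(\vv_{t-1}\mid\vv_t))$, where by Bayes' rule $q_j(\vv_0=\vv\mid\vv_t)\propto \bar R_t^{-\bar\omega(\vv,\vv_t)}$ over the training points in $\VV_j$. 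Writing the two normalizers as $Z_0=Z_1+q(\vv_t\mid\vv^*)$, the only difference between $q_0$ and $q_1$ is the reweighting induced by $\vv^*$, so the per-step divergence is controlled by the relative mass that $\vv^*$ contributes at $\vv_t$. Finally, since one must take the expectation over $\vv_t$ drawn from the backward marginal $p_\phi(\vv_t)$, I would use Assumption~\ref{as.gap_forward_backward} to swap to the forward marginal $q(\vv_t)$ at a cost of $\tilde\gamma_t$ in total variation, which produces the $\tilde\gamma_t$ error term and makes the remaining expectation analytically tractable.

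The data-dependent quantities should then emerge from evaluating this forward expectation. Summing the per-dimension categorical KL across the $n$ coordinates yields the $\sum_i \log(1+\cdots)$ structure, and taking the expectation of the $\vv^*$-reweighting over noised samples turns the normalizers into the similarity functional $\text{Sim}(\vv^*,\VV_1)$ and its coordinate-restricted version $\text{Sim}(\vv^*,\VV_1^{i\mid\vv^{*i}})$; matching this to the ansatz $n/s^{\psi_t}$ gives exactly Eq.~\eqref{eq:psi-t}. To obtain the factor $\min\{4N_{(1+c_t^*)\eta_t}(\vv^*)/s,\,1\}$ I would localize the $\vv_t$-expectation to a Hamming ball around $\vv^*$: the noising process concentrates $\vv_t$ within radius $\sim\eta_t$ of its clean source, so contributions from $\vv_t$ far from $\vv^*$ are negligible, while within the ball the posterior mass of $\vv^*$ is controlled by the local point count $N_\eta(\vv^*)$. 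The two inequalities in Eq.~\eqref{eq.data_quantities_inequality} are precisely the conditions on the radius $\eta_t$ and the slack $c_t^*$ under which the truncation tail (governed by $\vartheta(\eta)$, $\bar R_t$, and $\mu_t^-$) is dominated by the main term, and the $\min\{\cdot,1\}$ clipping reflects that a probability bound never exceeds one.

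I expect the main obstacle to be this localization/truncation analysis together with the tight data-dependent bookkeeping. Bounding $D_{\text{KL}}(q_0\|q_1)$ uniformly over the combinatorially large state space $\mathcal{X}^n$, and then showing that its $\vv_t$-expectation collapses to a neighborhood of $\vv^*$ whose size and density are captured by $N_\eta(\vv^*)$ and $\text{Sim}$, requires carefully balancing the exponential decay $\bar R_t^{-\bar\omega}$ against the combinatorial growth in the number of states at a given Hamming distance; this is what forces the implicit fixed-point definitions of $\eta_t$ and $c_t^*$ and is where the $\log$-ratio thresholds in Eq.~\eqref{eq.data_quantities_inequality} come from. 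A secondary technical point is verifying that these fixed-point conditions are well-posed (that a smallest admissible $\eta_t\in\{1,\dots,n\}$ and $c_t^*$ exist) and that the Pinsker and total-variation swaps compose cleanly, so that the error terms remain additive at the claimed $\mathcal{O}(\sqrt{\gamma_t}+\tilde\gamma_t)$ order after the KL-to-pDP conversion.
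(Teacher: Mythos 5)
Your plan follows essentially the same route as the paper's own proof: the KL-to-pDP conversion lemma (the source of the $1/(\epsilon(1-e^{-\epsilon}))$ factor), the Markov chain-rule decomposition over generation steps with the factor $m$ from independence of the released samples, Assumption~\ref{as.approximation_error} plus Pinsker to replace the learned conditionals by forward posteriors at cost $\mathcal{O}(\sqrt{\gamma_t})$, Assumption~\ref{as.gap_forward_backward} to swap the backward marginal for the forward one at cost $\mathcal{O}(\tilde{\gamma}_t)$, the Bayes-posterior computation in which each training point is weighted by $\bar{R}_t^{-\bar\omega(\cdot,\vv_t)}$, and a Hamming-ball localization around $\vv^*$ whose balancing conditions are Eq.~\eqref{eq.data_quantities_inequality}.

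There is, however, one concrete gap. Since pDP is two-sided, the conversion lemma requires bounding the \emph{symmetric} coupled divergence $\DD_{\text{KL}}(\MM_{T_{\text{rl}}}(\VV_0)\|\MM_{T_{\text{rl}}}(\VV_1)) + \DD_{\text{KL}}(\MM_{T_{\text{rl}}}(\VV_1)\|\MM_{T_{\text{rl}}}(\VV_0))$, and the chain rule then produces per-step expectations under \emph{two different} marginals, $q(\vv_{t|0})$ and $q(\vv_{t|1})$. You write a single $\EE_{\vv_t}$ without specifying the measure, and your entire localization analysis is implicitly carried out under one marginal. The paper resolves this by splitting the symmetrized sum into $\mathscr{B}_1$ (the coupled conditional KL averaged under $q(\vv_{t|0})$, which is what your plan bounds) plus $\mathscr{B}_2$ (a one-directional conditional KL integrated against the signed measure $q(\vv_{t|1}) - q(\vv_{t|0})$), and $\mathscr{B}_2$ needs its own argument (Lemma~\ref{lm.second_bound}): the pointwise measure gap is at most $\frac{1}{s+1}$ on the region where it is positive, and combining this with case analysis on the transition-probability ratios yields the bound $\frac{n(1 - \frac{1}{\bar R_{t-1}})}{s^2}$. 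That is precisely the second component of the Main Privacy Term in Eq.~\eqref{eq.main_theorem_1}, which your plan never derives; as written, your argument either silently drops that term or leaves one direction of the pDP inequality unestablished. A secondary imprecision: the localization is not a single-ball truncation in which "far contributions are negligible." Points far from $\vv^*$ carry most of the measure $q(\vv_{t|0})$; their contribution is small only because their conditional KL is small \emph{when they are closer to points of $\VV_1$ by a margin} $\eta_t'$, which forces the paper's three-set partition ($\mathcal{S}_a,\mathcal{S}_b,\mathcal{S}_c$ in Lemma~\ref{lm.measure_partition_bound}) and the balancing over the auxiliary radius $\eta_t'$; this balancing is exactly where the $2\log\bar{R}_t$ denominator in the first inequality of Eq.~\eqref{eq.data_quantities_inequality} originates, which the single-ball picture cannot explain.
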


Theorem~\ref{thm.main} quantifies the privacy leakage of a specific point $\vv^*$ in training set $\VV_0$. The privacy bound comprises a main privacy term that represents the inherent pDP guarantees for DDMs, highlighting the data-dependent nature of our bound, and an error term stemming from denoising network training and path discrepancies. Those data-dependent quantities are complex to maintain a tight measurement for a dataset-data point pair. Next, we will further explain these quantities. 

First, as the generation process forms a Markov chain where the transition probability $p_{\phi}(\vv^{(t-1)}|\vv^{(t)})$ is learned from training, each generation step will leak some information from the training dataset. It can be shown that the majority of such leakage, represented in the pDP bound (in the appendix) follows 
\begin{align}
 &
 \mathbb{E}_{\vv\sim p_{\phi}(\mathbf{v}_{t|0}=\vv)}d^{(t)}(\vv)\label{eq.main_conditional_KL}
\end{align}
where let $\vv_{t | \lambda}$ represents the random variable of the generated data at time $t$ of the generation process when the diffusion model gets trained over the dataset $\VV_{\lambda}$, $\lambda\in\{0,1\}$ and $d^{(t)}(\vv) = \sum_{\lambda\in\{0,1\}} \mathcal{D}_{\text{KL}}(p_{\phi}(\mathbf{v}_{t-1|\lambda}|\mathbf{v}_{t|\lambda} =\vv)\| p_{\phi}(\mathbf{v}_{t-1|\bar{\lambda}}|\mathbf{v}_{t|\bar{\lambda}}=\vv))$ with $\bar{\lambda} = 1 - \lambda$, which characterizes a symmetric distance between two conditional distributions characterized by the learned diffusion model. Essentially, the three data-dependent quantities $\psi_t, \eta_t, c_t^*$ are to bound Eq.~\eqref{eq.main_conditional_KL}. 

\textbf{Quantity $\psi_t$:} As shown in Fig.~\ref{fig.theory_data_dependent}, $\frac{n}{s^{\psi_t}}$ quantifies $\max_{\vv} d^{(t)}(\vv)$
where the maximum is achieved at the removed point $\vv=\vv^*$ (green in Fig.~\ref{fig.theory_data_dependent} ). 
A closer inspection reveals that $\psi_t$ depends on the terms $\text{Sim}(\vv^*, \VV_1)$ and $\text{Sim}(\vv^*, \VV_1^{i | \vv^{*i}})$.
By the definition of $\bar\omega$, 
these terms assess how $\vv^*$ aligns with the remaining points in $\VV_1$. 

\textbf{Evolution of $\psi_t$.} During the generation phase, as $t$ progresses from $T$ to $1$, the values of $\frac{1}{s^{\psi_t}}$ increase from $\mathcal{O}_s(\frac{1}{s^2})$ to $\mathcal{O}_s(1)$. This implies that the potential privacy risk escalates as the data generation process evolves from a noisy regime to a noise-free regime.

\begin{figure}[h]
    \centering
    \begin{minipage}{0.48\textwidth}
        \centering
        \includegraphics[width=0.8\linewidth]{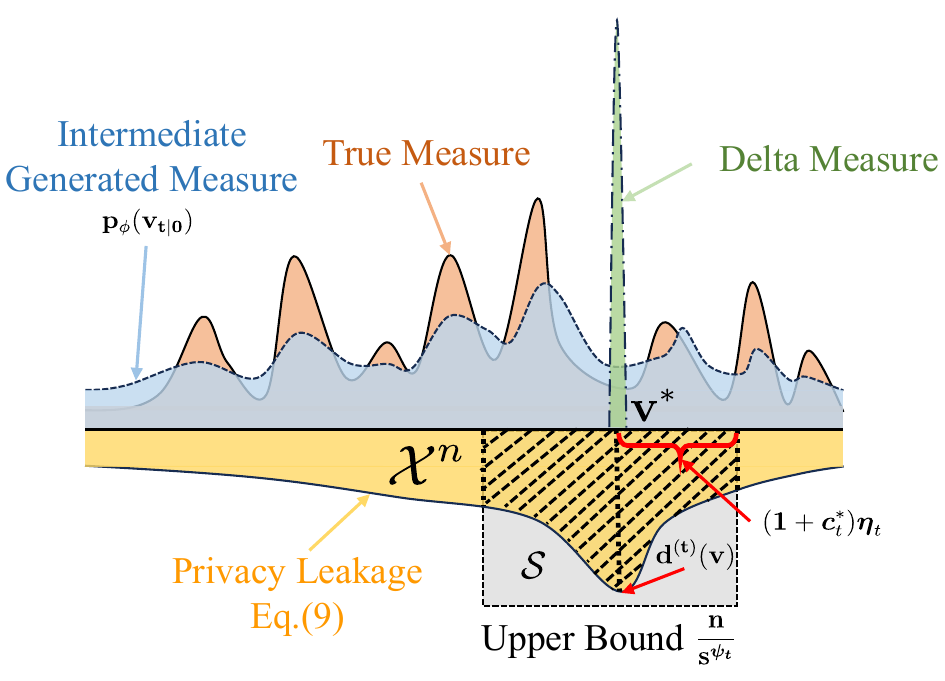}
        \caption{\small{Illustration of Data-dependent Quantities.}}
        \label{fig.theory_data_dependent}
    \end{minipage}%
    \hfill
    \begin{minipage}{0.48\textwidth}
        \centering
        \includegraphics[width=0.8\linewidth]{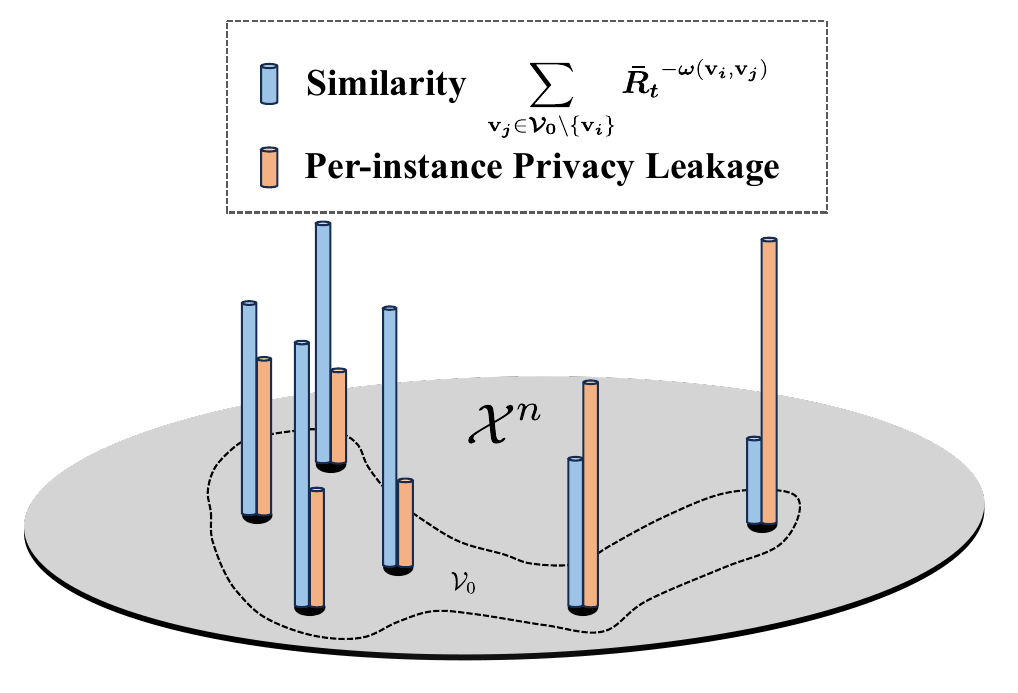}
        \caption{\small{Illustration of the correlation between dataset similarity ($\text{Sim}(\vv_i, \VV_0 \backslash \{\vv_i\}), \forall \vv_i \in \VV_0$) and pDP Leakage.}}
        \label{fig.theory_sim_privacy}
    \end{minipage}
\end{figure}

\textbf{Quantities $\eta_t$ and $c_t^*$:}  It is evident that the intermediate generated measure $p_{\phi}(\mathbf{v}_{t|0})$ (blue in Fig.~\ref{fig.theory_data_dependent}) diverges from the delta measure on the most sensitive point $\delta_{\vv=\vv^*}$ (green). 
Therefore, the actual privacy leakage characterized by $d^{(t)}(\vv)$ (yellow) averaged over the measure $p_{\phi}(\mathbf{v}_{t|0})$ is much less than its maximum. To provide a tight characterization of such, the two quantities $\eta_t$ and $c_t^*$ are introduced to define a local region $\mathcal{S}=\{\mathbf{v}' \in \mathcal{X}^n: \bar\omega(\mathbf{v}, \mathbf{v}') \leq (1+c_t^*)\eta_t\}$ centered on vulnerable point $\vv^*$, within which the privacy leakage can be bounded by the sum of (a) $p_{\phi}(\mathbf{v}_{t|0}\in\mathcal{S}) \max_{\vv\in \mathcal{S}} d^{(t)}(\vv)$ with a small $p_{\phi}(\mathbf{v}_{t|0}\in\mathcal{S})$ and (b) 
$p_{\phi}(\mathbf{v}_{t|0}\notin\mathcal{S}) \max_{\vv\notin \mathcal{S}} d^{(t)}(\vv)$ with a small $\max_{\vv\notin \mathcal{S}} d^{(t)}(\vv)$. $(\eta_t, c_t^*)$ shown in Eq.~\eqref{eq.data_quantities_inequality} are chosen to properly balance these two parts. 
$\eta_t$ and $c_t^*$ always exist: Note that when $\eta_t = n$ or $c_t^* = n/\eta_t-1$, the right-hand side of either inequality in Equation~\eqref{eq.data_quantities_inequality} approaches $-\infty$ ($\log 0$).
In fact, both of the RHS's of the two inequalities decrease w.r.t. $\eta_t$ and $c_t^*$. So, in practice, the smallest $\eta_t$ and $c_t^*$ can be found via binary search given the dataset $\VV_0$ and $\mathbf{v}^*$. 

\textbf{Evolution of $\bm{(1 + c_t^*)\eta_t}$.} For each time step $ t $, the smallest value of $(1+c_t^*)\eta_t$ is chosen as the radius. As $ t $ progresses from $ T $ to $ 1 $, the value of $(1+c_t^*)\eta_t$ monotonically decreases. When $\bar \alpha_t$ approaches 1 for smaller $ t $ values, 
$(1+c_t^*)\eta_t$ tends to zero, i.e., $\mathcal{S}$ only includes $\vv^*$. The reason is that, as smaller $t$ values, different data points are less mixed with others (because of less noise added in the forward process), 
the privacy leakage of $\vv^*$ becomes more concentrated around the changes of the likelihoods of the generated data points that look like $\vv^*$, 
thus calling for a decrease of the radius. To consider the impact on the bound in Eq.~\eqref{eq.main_theorem_1}, the number of data points in this region $N_{(1 + c_t^*)\eta_t}(\mathbf{v}^*)$ will decrease from $s$ to $1$ as $t$ changes from $T$ to $1$.

\textbf{Discussion on Theorem~\ref{thm.main}.}  
Based on the previous discussion, as $t$ decreases from $T$ to $1$, $N_{(1+c^*)\eta_t}(\mathbf{v}^*) / s$ changes from $\mathcal{O}_s(1)$ to $\mathcal{O}_s(1/s)$, and $1 / s^{\psi_t}$ changes from $\mathcal{O}_s(\frac{1}{s^2})$ to $\mathcal{O}_s(1)$. 
Consequently, the privacy leakage for each-step DDM-generated samples gradually increases from $\mathcal{O}_s(\frac{m}{s^2}) / [\epsilon(1 - e^{-\epsilon})]$ to $\mathcal{O}_s(\frac{m}{s}) / [\epsilon(1 - e^{-\epsilon})]$. 

This implies a natural utility-privacy tradeoff for the data generated by DDMs. In practice, to guarantee the data quality, we often release the data in the noise-free side $(t=0)$, where only a weak privacy guarantee of approximately $(\epsilon, \mathcal{O}_s(m/s) / [\epsilon(1 - e^{-\epsilon})])$ can be achieved. To enhance data privacy, we may expect to release the data generated with a larger step $t\geq1$.

This result also reveals that the inherent privacy guarantees of releasing data generated by DDMs is weak ($\propto \mathcal{O}(m/s)$), in the same order of guarantees for GAN-generated samples \citep{lin2021privacy}. This characterization also matches many recent empirical studies that have shown concerns on privacy leakage due to publishing data generated by DMs~\citep{hu2023membership,carlini2023extracting,dockhorn2022differentially}. While privacy budgets for all data points maintain the same order in relation to the sample size, the contacts can differ markedly across data points. 
Intuitively, a data point $\vv^* \in \VV_0$ with less similarity with the other data points tends to have higher privacy leakage. This is indicated by Eq.~\eqref{eq:psi-t}, where a smaller similarity $\sum_{\mathbf{v} \in \mathcal{V}_0 \backslash \{\vv^*\}}\bar{R}_t^{-\omega(\mathbf{v}^*, \mathbf{v})}$, leads to a larger pDP leakage (as illustrated in Fig.~\ref{fig.theory_sim_privacy}). 

The upper bound for $t=1$ with a dependence on the dataset size $\mathcal{O}_s(\frac{1}{s})$ demonstrates a weak privacy guarantee given by DDMs, as 
this is on par with the privacy implication of the Strawman approach that uniformly at random samples from the original dataset and publishes the samples. 
However, we emphasize that this is not due to a loose analysis, as the following will provide a lower bound of such privacy leakage due to DDM generation in the same order. Beyond this, our upper bound is  valuable as it elucidates that releasing a dataset at an earlier stage (with a larger $t$ step) generated by DDMs could potentially strengthen the privacy guarantee to $\Omega_s(\frac{1}{s^2})$. Moreover, our upper bound specifies the influence of diffusion coefficients and dataset distributions on the privacy bounds, which the Strawman approach by publishing the samples from the original dataset cannot tell. More details will be discussed in Sec.~\ref{subsec:dependency}.  

\begin{figure*}[t]
\centering
\includegraphics[width=0.3\textwidth]{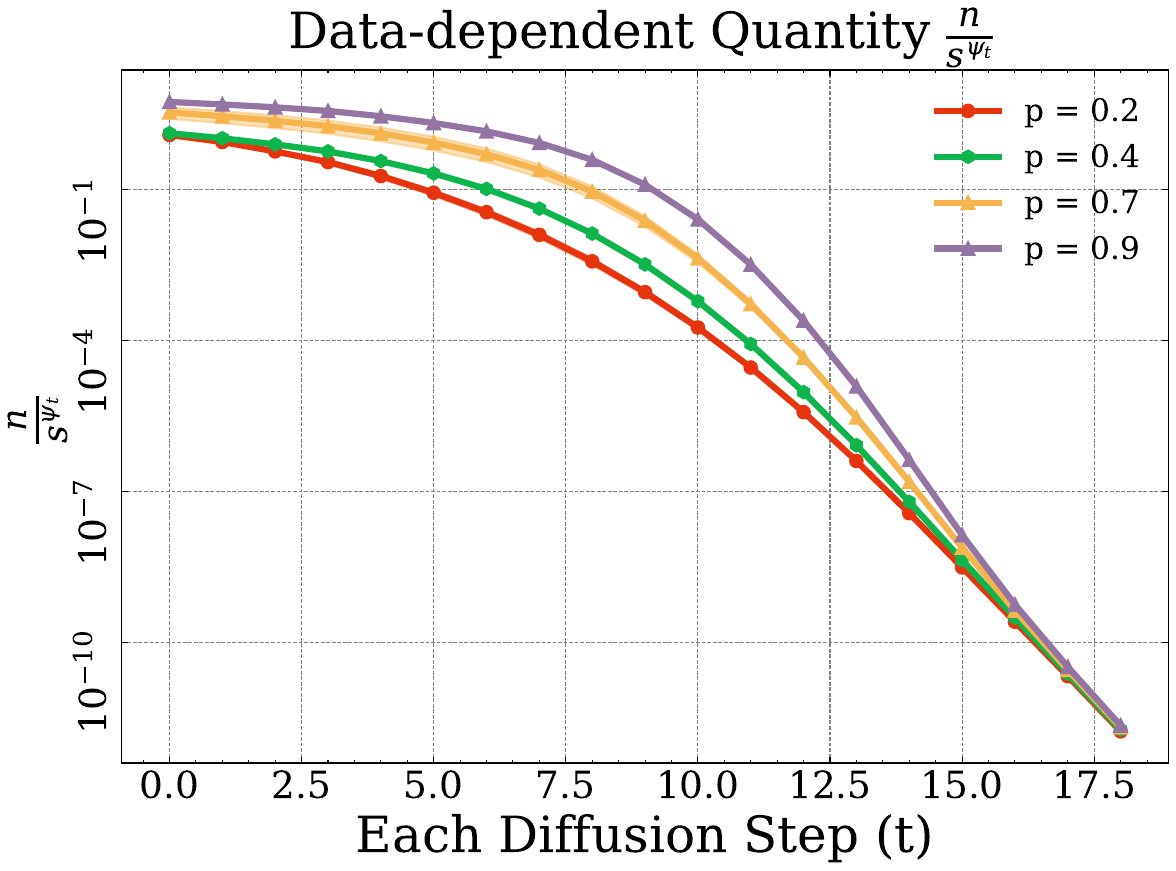}
\hfill\vline\hfill
\includegraphics[width=0.3\textwidth]{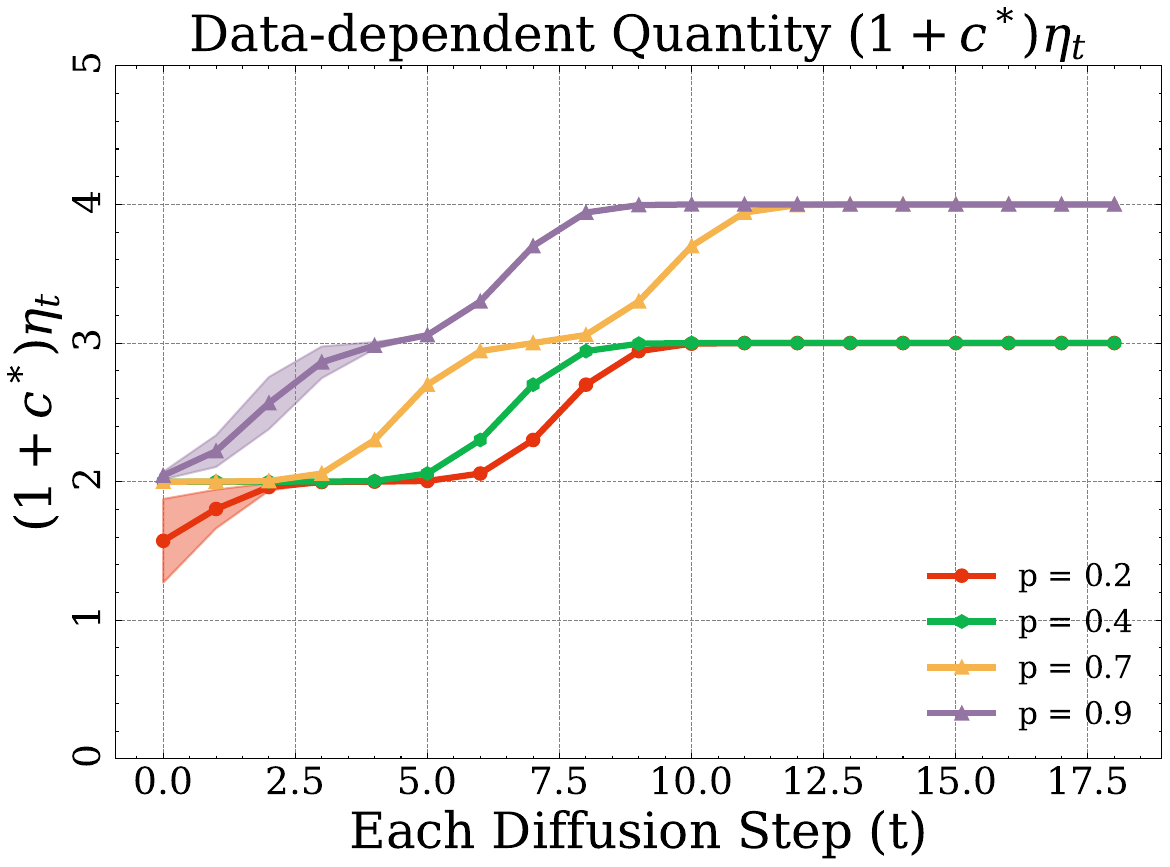}
\hfill\vline\hfill
\includegraphics[width=0.3\textwidth]{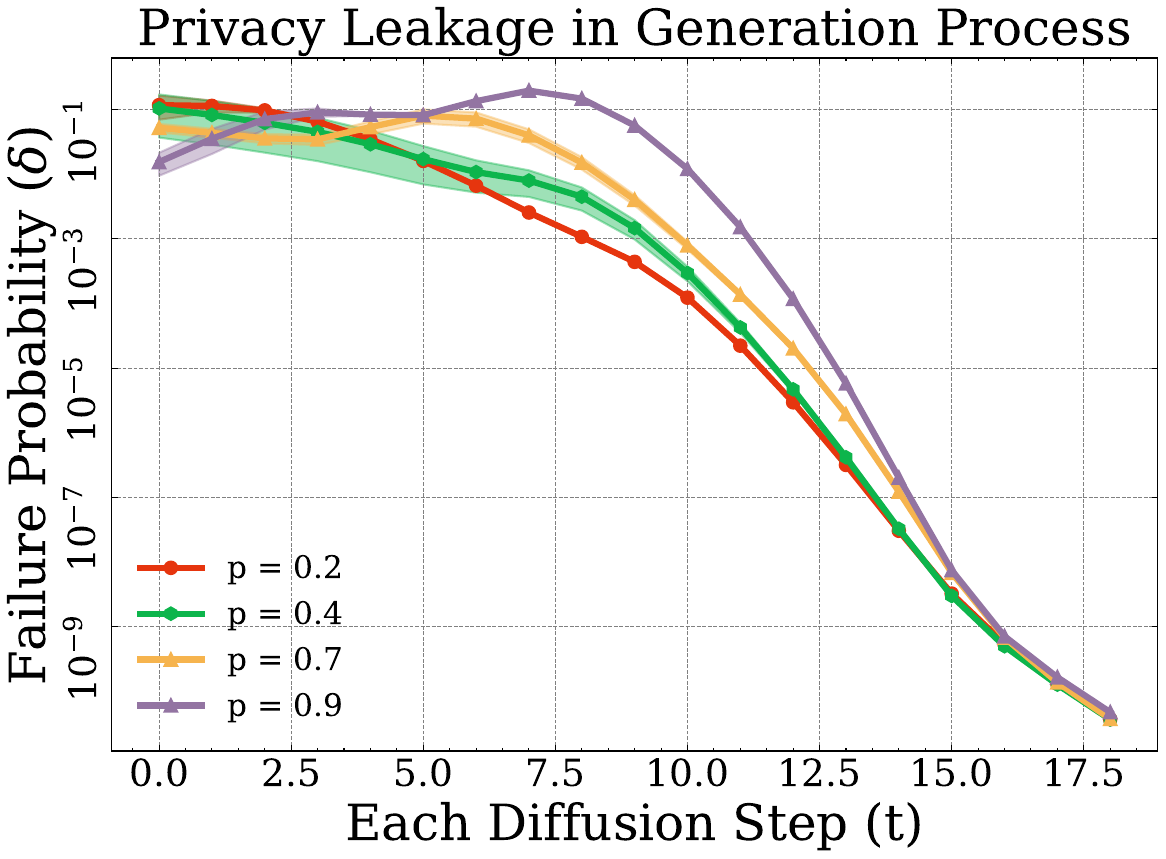}
\caption{\small{pDP Leakage in Eq.~\eqref{eq.main_theorem_1}: \textbf{LEFT:} Characterization of $\frac{n}{s^{\psi_t}}$. \textbf{MIDDLE:} Characterization of $(1 + c_t^*)\eta_t$. \textbf{RIGHT:} Characterization of Privacy Leakage (Main Privacy Term). \textbf{Experimental Setup:} Given specific DDM design $k = 5, n = 5, T = 20, \epsilon = 10$ trained on dataset with $s = 1000$ following the distribution in Sec.~\ref{subsec.eg} with parameter $p$. Fix $\mathbf{v}^*$ where each column has a non-majority category. Results are based on 5 times independent tests.
\label{fig.examples}}}
\vspace{-3mm}
\end{figure*}

\textbf{Tightness of Privacy Bound w.r.t Sample Size.} In Theorem~\ref{thm.main}, the privacy parameter of $\delta$ scales as $\mathcal{O}_s(\frac{1}{s})$ with sample size. Here we establish a lower bound for $\delta$ with respect to the sample size by evaluating the worst-case scenario and show that $\mathcal{O}(1/s)$ is the optimal bound that DDM can achieve inherently for $m = 1$. For illustrative purposes, consider the case where $n = 2$ with two distinct categories. 
Define adjacent datasets: 
$\VV_0 = \{ 
\underbrace{
[0, 0]^T, ..., [0, 0]^T
}_{s - 1}, [1, 1]^T, [1, 1]^T
\}$
and 
$\VV_1 = \VV_0 \backslash \{[1, 1]^T\}$. 
\begin{theorem}[\textbf{Lower Bound on Inherent pDP Guarantees for DDMs}]
    Assume the denoising networks are perfectly trained. Given a diffusion model architecture design $(\text{Sigmoid Schedule } \alpha_t = \frac{\text{Sigmoid}(3) - \text{Sigmoid}(\frac{3t}{T})}{\text{Sigmoid}(3) - 0.5}, T = 10)$, there exist an adjacent dataset $\VV_0, \VV_1 = \VV_0 \backslash \{\vv^*\}$ with feature dimension $n = 2$ such that the mechanism $\mathcal{M}_0(\cdot; 1)$ does not satisfies $(0.04, \delta)$-pDP with respect to $(\VV_0, \vv^*)$ for any $\delta < \frac{1}{6s}$.
\end{theorem}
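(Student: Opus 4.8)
The plan is to witness a violation of the pDP inequality in Eq.~\eqref{eq:DP-def} on a single output event. Since $n=2$ and $k=2$, the mechanism $\MM_0(\cdot;1)$ returns one sample in the four-point space $\{[0,0]^T,[0,1]^T,[1,0]^T,[1,1]^T\}$, so it suffices to take the event $\mathcal{O}=\{\vv^*\}=\{[1,1]^T\}$ and prove the one-sided bound
\[
\PP(\MM_0(\VV_0)=[1,1]^T)-e^{0.04}\,\PP(\MM_0(\VV_1)=[1,1]^T)\ \geq\ \tfrac{1}{6s}.
\]
I would use the direction $i=0,\,j=1$ because $\VV_0$ carries twice the $[1,1]^T$-mass of $\VV_1$, so the output probability of $[1,1]^T$ is strictly larger under $\VV_0$. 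Any $\delta<1/(6s)$ then makes the right-hand side of Eq.~\eqref{eq:DP-def} too small, so pDP fails for the pair $(\VV_0,\vv^*)$.

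First I would write both output laws in closed form. Under perfect training the learned denoiser equals the Bayes posterior of the empirical distribution, $p_\phi(\vv_0^i=l\mid\vv_t)=q(\vv_0^i=l\mid\vv_t)$, and the generated law is the $t=0$ marginal of the reverse chain started from the uniform prior and driven by the conditional-independence kernel of Eq.~\eqref{eq.CI}. Because every training point lies on the diagonal $\{[0,0]^T,[1,1]^T\}$, the posterior depends on the dataset only through the scalar $r$ equal to the empirical mass on $[1,1]^T$ (so $r_1=1/s$ for $\VV_1$ and $r_0=2/(s+1)$ for $\VV_0$), and each per-step joint reverse kernel is an explicit $4\times 4$ stochastic matrix $P^{(t)}(r)$ whose entries are rational in $r$ and in the accumulated coefficients $\bar\mu_t^{+},\bar\mu_t^{-}$; the $[0,1]^T\!\leftrightarrow\![1,0]^T$ symmetry lets me collapse this to a $3\times 3$ matrix on the states $\{[0,0]^T,\,\text{off-diagonal},\,[1,1]^T\}$. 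Fixing the sigmoid schedule with $T=10$ pins down every $\OA_t$, hence every $\bar\mu_t^{\pm}$, so with $\pi$ the uniform initial row vector and $e$ the indicator of the output $[1,1]^T$ the quantity $g(r):=\PP(\MM_0=[1,1]^T)=\pi\,\bigl(\prod_{t=T}^{1}P^{(t)}(r)\bigr)\,e$ is an explicit scalar function of $r$.

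The next step exploits that $g(0)=0$: when $r=0$ the data is purely $[0,0]^T$, so the final step samples $\vv_0$ from a factorized posterior placing zero mass on $\vv_0^i=1$, and $[1,1]^T$ is never produced. Thus $g(r)=g'(0)\,r+O(r^2)$, and to leading order the violation gap is
\[
g(r_0)-e^{0.04}g(r_1)=g'(0)\Bigl(\tfrac{2}{s+1}-\tfrac{e^{0.04}}{s}\Bigr)+O\!\bigl(\tfrac{1}{s^2}\bigr)=\frac{g'(0)\,(2-e^{0.04})}{s}+O\!\bigl(\tfrac{1}{s^2}\bigr).
\]
I would compute $g'(0)$ by differentiating the matrix product, summing over $t$ the sensitivity $\pi\bigl(\prod_{\tau>t}P^{(\tau)}(0)\bigr)\,\partial_r P^{(t)}(0)\,\bigl(\prod_{\tau<t}P^{(\tau)}(0)\bigr)e$; at $r=0$ every factor $P^{(\tau)}(0)$ is the pure-$[0,0]^T$ reverse kernel, a fixed matrix determined by the sigmoid $\OA_t$'s, so $g'(0)$ is a concrete number. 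Since $2-e^{0.04}\approx 0.959$, the target reduces to certifying $g'(0)(2-e^{0.04})>1/6$, i.e. roughly $g'(0)\gtrsim 0.174$, with enough margin to absorb the $O(1/s^2)$ remainder over the admissible range of $s$.

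The main obstacle I anticipate is the explicit propagation of this first-order sensitivity through the ten coupled reverse steps. Although each step is elementary, the conditional-independence factorization of Eq.~\eqref{eq.CI} couples the two coordinates through the shared clean-data posterior, so the linearization does not telescope into a single product and one must track the three symmetry-reduced states across all steps with the concrete sigmoid coefficients. Completing the proof therefore amounts to carrying out this bookkeeping to obtain $g'(0)$ exactly, certifying the numerical inequality $g'(0)(2-e^{0.04})>1/6$, and giving a uniform bound on the quadratic remainder so that the gap stays above $1/(6s)$; the violation of $(0.04,\delta)$-pDP for all $\delta<1/(6s)$ then follows.
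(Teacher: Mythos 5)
Your plan founders on the one computation you deferred: for the singleton event $\mathcal{O}=\{[1,1]^T\}$ the derivative is $g'(0)=0$, so the inequality $g'(0)(2-e^{0.04})>1/6$ that your whole argument reduces to can never be certified. The culprit is the dimension-wise conditional independence in Eq.~\eqref{eq.CI}: at the last reverse step the mechanism samples each coordinate \emph{independently} from the marginal posterior, so
\begin{align}
p_\phi\bigl(\vv_0=[1,1]^T\mid\vv_1\bigr)\;=\;q_r(\vv_0^1=1\mid\vv_1)\,q_r(\vv_0^2=1\mid\vv_1),
\qquad
q_r(\vv_0^i=1\mid\vv_1)=\frac{r\,q(\vv_1\mid[1,1]^T)}{r\,q(\vv_1\mid[1,1]^T)+(1-r)\,q(\vv_1\mid[0,0]^T)},
\end{align}
where $r$ is the empirical mass on $[1,1]^T$. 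Since the only training point with a $1$ in either coordinate is $[1,1]^T$ itself, each factor is $O(r)$ \emph{uniformly} over the four values of $\vv_1$, because the denominator is bounded below by $(1-r)\,q(\vv_1\mid[0,0]^T)\geq(1-r)(\bar\mu_1^-)^2>0$ for the sigmoid schedule (where $\alpha_1<1$). Hence $g(r)\leq \bar R_1^4\,r^2/(1-r)^2=O(r^2)$: the very factorization that gives you $g(0)=0$ also kills the linear term. Carried out honestly, your expansion yields a singleton-event violation of order $\Theta(r_0^2-e^{0.04}r_1^2)=\Theta(1/s^2)$, which falls below $1/(6s)$ once $s$ is large, so the theorem (which must hold for all sufficiently large $s$) is out of reach on this route. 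The obstruction is not an artifact of linearizing at $r=0$: the paper's own proof keeps this event and runs a step-wise recursion $H_{t-1}\geq F_tH_t+G_t$, and while the intermediate-step injections $G_t$ are genuinely $\Theta(1/s)$ (the noise floor of the reverse kernels keeps those transitions linear in the data perturbation), at the final step both the injection $G_1$ and the retention factor $F_1$ are differences of \emph{squares} of $O(1/s)$ posteriors, i.e.\ $O(1/s^2)$, which damps everything accumulated at times $t\geq1$ by the same quadratic suppression you would hit.

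The statement is nevertheless provable inside your framework if you change the event to a coordinate event, say $\mathcal{O}=\{\vv:\vv^1=1\}$, with $h(r)=\mathcal{P}(\MM_0(\cdot;1)\in\mathcal{O})$. Then $h(0)=0$ as before, but the first-order coefficient survives and is computable in one line: the term involving $\partial_r p_\phi(\vv_1;r)$ vanishes because $q_0(\vv_0^1=1\mid\vv_1)=0$, and at $r=0$ the factorized reverse chain coincides exactly with the forward chain of pure $[0,0]^T$ data (the prior is exactly uniform here since $\alpha_T=0$), so $p_\phi(\vv_1;0)=q(\vv_1\mid[0,0]^T)$ and
\begin{align}
h'(0)\;=\;\sum_{\vv_1}q(\vv_1\mid[0,0]^T)\cdot\frac{q(\vv_1\mid[1,1]^T)}{q(\vv_1\mid[0,0]^T)}\;=\;\sum_{\vv_1}q(\vv_1\mid[1,1]^T)\;=\;1 .
\end{align}
The violation then becomes $h(r_0)-e^{0.04}h(r_1)=(2-e^{0.04})/s+O(1/s^2)\approx 0.96/s$, comfortably above $1/(6s)$, and the uniform second-order remainder bound you already sketched (now for a fixed rational $h$) closes the argument without any ten-step bookkeeping. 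In short: keep your linearization strategy, but replace the singleton output event by a coordinate event, whose generated probability tracks the empirical marginal linearly rather than quadratically.
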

Regarding lower bounds on $(\epsilon, \delta)$-pDP under general model configurations, including diffusion schedules and diffusion steps, please refer to Appendix~\ref{app.lower_bound}. 

\textbf{Discussion on the Inherent DP Guarantees for DDMs.} As mentioned in Sec.~\ref{sec:preliminaries}, DP guarantees can be obtained from taking all conceivable datasets $\VV_0$ and points $\vv^*$. Therefore, by considering the worst case adjacent dataset pair, we derive the DP privacy bound for DDMs:
\begin{theorem}[\textbf{Inherent DP Guarantee for DDMs (Informal)}\label{thm.maintext_DP_DDM}]
    Given any adjacent datasets $\VV_0, \VV_1$. Assume the denoising networks trained on $\VV_0$ and $\VV_1$ satisfy Assumption~\ref{as.approximation_error} and Assumption~\ref{as.gap_forward_backward}.
    Given a specific time step $T_{\text{rl}}$, the mechanism $\mathcal{M}_{T_{\text{rl}}}(\cdot; m)$ satisfies $\bm{(\epsilon, \delta)}$-\textbf{differential privacy} such that
    \begin{align}
        \small{\delta \leq m\biggl[\underbrace{ \sum_{t = T_{\text{rl}}}^{T}\min \biggl\{\frac{4N_{(1 + \tilde{c}_t^*)\tilde{\eta}_t}}{s}, 1 \biggl\} \cdot \frac{n}{s^{\Psi_t}} + \frac{n\varrho_t}{s^2}}_{\text{Main Privacy Term}} 
        + \underbrace{\mathcal{O}\biggl(\sqrt{\gamma_t} + \tilde{\gamma}_t\biggl)\vphantom{\Biggl[\frac{a}{b}\Biggl]}}_{\text{Error Term}}\biggl] / (\epsilon(1 - e^{-\epsilon}))}\label{eq.main_theorem_dp}
    \end{align}
    where $\Psi_t, \tilde{c}_t^*, \tilde{\eta}_t, \varrho_t$ are quantities that \textbf{only depend on the diffusion coefficients}, and $\tilde{c}_t^*, \tilde{\eta}_t$ meet certain radius selection criteria (refer to Eq.~\eqref{eq.worst_eta_c_t^*} and Eq.~\eqref{eq.worst_psi_t} in Appendix~\ref{app.ddp_dp}). As $t$ progresses from $T$ to $1$, the value of $\frac{1}{s^{\Psi_t}}$ increases from $\mathcal{O}_s\left(\frac{1}{s^2}\right)$ to $\mathcal{O}s(1)$, while $N_{\eta_t'} / s$ monotonically decreases from $\mathcal{O}_s(1)$ to $\mathcal{O}_s\left(\frac{1}{s}\right)$.
\end{theorem}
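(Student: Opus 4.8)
The plan is to obtain the differential privacy (DP) statement as the worst-case envelope of the per-instance bound already established in Theorem~\ref{thm.main}. Recall that DP requires the inequality Eq.~\eqref{eq:DP-def} to hold for \emph{every} adjacent pair, whereas pDP fixes a particular $(\VV_0, \vv^*)$. Since $\mathcal{M}_{T_{\text{rl}}}(\cdot; m)$ already satisfies $(\epsilon, \delta(\VV_0, \vv^*))$-pDP for each such pair, it suffices to take the supremum of the data-dependent $\delta(\VV_0, \vv^*)$ over all admissible $(\VV_0, \vv^*)$ at a fixed $\epsilon$. Concretely, I would set $\delta = \sup_{\VV_0, \vv^*} \delta(\VV_0, \vv^*)$ and majorize this supremum termwise in Eq.~\eqref{eq.main_theorem_1}, replacing each data-dependent quantity $\psi_t, \eta_t, c_t^*$ with a worst-case surrogate $\Psi_t, \tilde{\eta}_t, \tilde{c}_t^*$ that no longer references $\VV_1$. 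The error term $\mathcal{O}(\sqrt{\gamma_t} + \tilde{\gamma}_t)$ is already uniform over datasets, since Assumption~\ref{as.approximation_error} and Assumption~\ref{as.gap_forward_backward} bound the approximation and path-discrepancy errors for \emph{all} $\vv_t \in \mathcal{X}^n$ on both $\VV_0$ and $\VV_1$, so it passes through unchanged.

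The core of the argument is to compute these suprema in closed form. For the factor $n/s^{\psi_t}$, I would inspect Eq.~\eqref{eq:psi-t}: both the prefactor $\frac{(\OA_{t-1} - \OA_t)/(k\bar\mu_t^+\bar\mu_t^-)}{1 + \text{Sim}(\vv^*, \VV_1)}$ and each summand $\log\bigl(1 + \frac{\bar R_{t-1}^2 - 1}{\bar R_{t-1}^2 \text{Sim}(\vv^*, \VV_1^{i|\vv^{*i}}) + \text{Sim}(\vv^*, \VV_1) + 1}\bigr)$ are monotonically decreasing in the similarity measures, so the supremum is attained in the limit $\text{Sim}(\vv^*, \VV_1) \to 0$, i.e.\ when $\vv^*$ differs from every point of $\VV_1$ in all $n$ coordinates (whence $\VV_1^{i|\vv^{*i}}$ is empty and its similarity also vanishes). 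Evaluating Eq.~\eqref{eq:psi-t} at this limit collapses each summand to $2\log \bar R_{t-1}$ and yields $n/s^{\Psi_t} = \frac{\OA_{t-1} - \OA_t}{k\bar\mu_t^+\bar\mu_t^-}\, n \cdot 2\log\bar R_{t-1}$, which depends only on the diffusion coefficients through $\OA_t, \bar R_t, \bar\mu_t^\pm$. The lattice term $n(1 - 1/\bar R_{t-1})/s^2$ is already data-independent, so I set $\varrho_t = 1 - 1/\bar R_{t-1}$, and the prefactor $4 N_{(1+\tilde{c}_t^*)\tilde{\eta}_t}/s$ is controlled by the $\min\{\cdot, 1\}$ clipping. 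For the radius $(1 + c_t^*)\eta_t$, I would replace the data-dependent selection rule of Eq.~\eqref{eq.data_quantities_inequality}, which enters through $\vartheta(\eta)$, by its worst-case counterpart obtained by bounding $\vartheta(\eta) \le s - 1$ (the maximum when only $\vv^*$ lies inside the $\eta$-ball), turning the implicit inequalities into conditions on $\tilde{\eta}_t, \tilde{c}_t^*$ expressed purely through $\mu_t^\pm, \bar R_t$.

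Finally, I would read off the endpoint asymptotics from the two diffusion regimes: as $t \to T$, $\OA_t \to 0$ and $\bar R_t \to 1$, so $\log\bar R_{t-1}\to 0$ and the worst-case $n/s^{\Psi_t}$ scales as $\mathcal{O}_s(1/s^2)$, whereas as $t\to 1$, $\OA_t\to 1$ and $\bar R_t\to\infty$ give $n/s^{\Psi_t}=\mathcal{O}_s(1)$; the radius, and hence $N_{(1+\tilde{c}_t^*)\tilde{\eta}_t}/s$, moves oppositely, from $\mathcal{O}_s(1)$ down to $\mathcal{O}_s(1/s)$, so the product is $\mathcal{O}_s(m/s)$ at the noise-free end. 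Because $\delta(\VV_0,\vv^*)$ is a sum of nonnegative terms, a valid DP upper bound is obtained by majorizing each factor by its own supremum, so I do not need the suprema realized at a single common dataset. I therefore expect the main obstacle to be the radius step rather than the $\Psi_t$ step: the quantities $\tilde{\eta}_t, \tilde{c}_t^*$ inherit the implicit, coupled definition of Eq.~\eqref{eq.data_quantities_inequality} through $\vartheta(\eta)$ and $N_\eta$, so I must show that substituting the worst case $\vartheta(\eta)\le s-1$ still leaves both right-hand sides decreasing to $-\infty$ as $\eta\to n$ or $c_t^*\to n/\eta-1$ (guaranteeing existence), and that the resulting $\tilde{\eta}_t,\tilde{c}_t^*$ depend on the diffusion coefficients alone while reproducing the stated $N/s$ endpoint behavior.
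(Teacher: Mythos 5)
Your high-level strategy---obtain DP as the supremum of the pDP bound in Eq.~\eqref{eq.main_theorem_1} over all dataset--point pairs, majorizing term by term---is exactly the paper's route in Appendix~\ref{app.ddp_dp}. The genuine gap is in how you compute the supremum of $n/s^{\psi_t}$. You claim the worst case is $\text{Sim}(\vv^*,\VV_1)\to 0$, attained when $\vv^*$ differs from every point of $\VV_1$ in all $n$ coordinates. But that configuration does not make the similarity vanish: since $\bar\omega(\vv,\vv^*)\le n$ for every $\vv$, each of the $s$ points of $\VV_1$ contributes at least $\bar R_t^{-n}$ to $\text{Sim}(\vv^*,\VV_1)=\sum_{\vv\in\VV_1}\bar R_t^{-\bar\omega(\vv,\vv^*)}$, so over \emph{all} admissible datasets one has $\text{Sim}(\vv^*,\VV_1)\ge s\,\bar R_t^{-n}$, which grows with $s$ whenever $\bar R_t$ is finite (i.e., everywhere except the strictly noise-free limit). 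The paper's worst case, Eq.~\eqref{eq.worst_psi_t}, retains precisely this residual similarity, $\text{Sim}(\VV_1,\vv^*,t)\ge(\bar R_t)^{-n}(s-n)+(\bar R_t)^{1-n}$, and this is the sole source of the $s$-dependence of $\Psi_t$: in the noisy regime ($\bar R_t\to 1$) the denominators are $\Theta_s(s)$ in both the prefactor and the logarithm, giving $1/s^{\Psi_t}=\mathcal{O}_s(1/s^2)$, while in the noise-free regime they collapse and give $\mathcal{O}_s(1)$. Your surrogate $n/s^{\Psi_t}=\frac{\OA_{t-1}-\OA_t}{k\bar\mu_t^+\bar\mu_t^-}\cdot n\cdot 2\log\bar R_{t-1}$ is a valid but far looser upper bound that contains no $s$ at all, so it cannot reproduce the theorem's claim that $1/s^{\Psi_t}$ moves from $\mathcal{O}_s(1/s^2)$ to $\mathcal{O}_s(1)$; your argument that $\log\bar R_{t-1}\to 0$ as $t\to T$ yields $\mathcal{O}_s(1/s^2)$ conflates smallness in $t$ (through the schedule) with decay in $s$. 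Summed over the noisy steps, your main privacy term is $\Theta_s(1)$, i.e.\ a vacuous $\delta$ for large $s$, rather than the $\mathcal{O}_s(m/s)$ guarantee the theorem asserts.

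There is also an internal inconsistency between your two worst cases. Your radius surrogate $\vartheta(\eta)\le s-1$ presupposes $N_\eta(\vv^*)\ge 1$, but your $\Psi_t$ configuration places every point of $\VV_1$ at Hamming distance exactly $n$ from $\vv^*$, so $N_\eta(\vv^*)=0$ and $\vartheta(\eta)$ is infinite for all $\eta<n$. The paper resolves both issues with a single worst-case choice: $|\VV_1^{i|\vv^{*i}}|=1$ for each dimension $i$ (one matching point per coordinate). This simultaneously pins the similarity lower bounds entering Eq.~\eqref{eq.worst_psi_t} and guarantees a nonempty ball at radius $n-1$, so the data-dependent ratio $\vartheta$ can be replaced by the coefficient-only function $h(\eta)$ in Eq.~\eqref{eq.worst_eta_c_t^*} while preserving existence of $(\tilde\eta_t,\tilde c_t^*)$ (the right-hand sides still drop to $-\infty$ at $\eta=n$). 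To repair your proof, redo the supremum of Eq.~\eqref{eq:psi-t} subject to the constraints $\text{Sim}(\vv^*,\VV_1)\ge (\bar R_t)^{-n}(s-n)+(\bar R_t)^{1-n}$ and $\text{Sim}(\vv^*,\VV_1^{i|\vv^{*i}})\ge(\bar R_t)^{1-n}$; this lands you exactly at the paper's Eq.~\eqref{eq.worst_psi_t}, from which the stated endpoint asymptotics for both $1/s^{\Psi_t}$ and the ball-count ratio follow.
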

 As indicated by Theorem~\ref{thm.maintext_DP_DDM}, akin to findings from pDP, during the generation process, privacy leakage intensifies from $\mathcal{O}_s\left(\frac{m}{s^2}\right) / [\epsilon(1 - e^{-\epsilon})]$ to $\mathcal{O}_s\left(\frac{m}{s}\right) / [\epsilon(1 - e^{-\epsilon})]$. The formal presentation and analysis of DP guarantees for DDMs are detailed in Appendix~\ref{app.ddp_dp}.

\subsection{Impact of DDM Coefficients and Dataset Distributions on the Privacy Bound}\label{subsec:dependency}
\textbf{Influence of Diffusion Coefficients.} The privacy term is largely influenced by the proximity between $\vv^*$ and $\VV_1$. As time $t$ progresses, this similarity is governed by the transition ratio $\bar R_t$. A faster rate of diffusion coefficients going to zero boosts this ratio, enhancing the privacy guarantee. Experiments in Sec.~\ref{sec.exp} validate this observation.

\textbf{Impact of Dataset Distribution.} We find that 
 $\psi_t$ has a major effect on the privacy bound. $\psi_t$ is influenced by the similarity between the additional point $\mathbf{v}^*$ and $\VV_0\backslash\{\mathbf{v}^*\}$. If $\mathbf{v}^*$ is far away from (close to) the rest points in $\VV_0$, then $\text{Sim}(\VV_0\backslash\{\mathbf{v}^*\}, \mathbf{v}^*, t)$ becomes small (large) and the corresponding term $s^{-\psi_t}$ become large (small), which indicates weaker (stronger) protection of $\mathbf{v}^*$. \emph{This indicates that points with notably low $\text{Sim}(\VV_0\backslash\{\mathbf{v}^*\}, \mathbf{v}^*, t)$ are probably sensitive points in the dataset.} 

\subsection{Characterizing Data-dependent Quantities under Simple Distributions}\label{subsec.eg}
Here, we consider the training dataset sampled from some specific distributions to further illustrate the data-dependent quantities.

Consider a distribution such that each column independently takes value $l\in[k]$ with probability $p$ $(p\geq\frac{1}{k})$ and any other $k-1$ categories with probability $\frac{1-p}{k-1}$. Let $\mathbf{v}^{*}$ take non-majority category ($(\mathbf{v}^*)^i \neq l$) along all $n$ columns (termed \textbf{non-majority points}, which thus tends to have higher privacy leakage)  and the rest points in $\mathcal{V}_0 \backslash \{\mathbf{v}^*\}$ are sampled from the distribution. 
We have the following characterization (For detailed explanations and proofs, please refer to Appendix \ref{appendix_examples}).
\begin{itemize}[leftmargin=3mm]
\vspace{-4mm}
    \setlength\itemsep{-0.2em}
    \item $\bm{\frac{1}{s^{\psi_t}}.}$ 
    For a sufficiently large $s$ (detailed in appendix), with high probability, $ \frac{1}{s^{\psi_t - 2}} \to \frac{(\OA_{t-1} - \OA_t) / (k\bar\mu_t^+ \bar \mu_t^-)}{\bar R_{t-1}^2 \cdot \tau_t^{2n-1} \cdot \frac{1 - p}{k - 1} + \tau_t^{2n}}$, where $\tau_t := \frac{1-p}{k-1}+\frac{\bar\mu^-_t}{\bar\mu^+_t}(1-\frac{1-p}{k-1})$.
In the noisy regime (a large t, $\frac{\bar\mu^-_t}{\bar\mu^+_t} \to 1$), $\tau_t \to 1$, $\frac{1}{s^{\psi_t}} = \mathcal{O}_s(\frac{1}{s^2})$. For distribution characterized by larger skewness, i.e., larger $p$, we have smaller $\tau_t$ result in larger $\frac{1}{s^{\psi_t}}$.
Fig.~\ref{fig.examples} (LEFT) precisely matches the above conclusions.
\item $\bm{\eta_t, c_t^*.}$ 
 For a sufficiently large $s$ (detailed in appendix), a sufficient condition for $\eta_t$ and $c_t^*$ to satisfy Eq.~\eqref{eq.data_quantities_inequality} is
\begin{align}
    \eta_t \geq n - \biggl(\frac{n - \log (s\sqrt{\frac{\OA_{t-1}-\OA_t}{k \bar\mu_t^+ \bar\mu_t^-}}) / \log \frac{\bar \mu_t^+}{\bar \mu_t^-}}{2\log\frac{k-1}{1-p} / \log(\max\{\frac{1}{n\bar\mu_t^-}, 1\}) + 1}\biggl)_+, \quad c_t^* \geq  \frac{\frac{n - \eta_t}{\eta_t} \log \frac{k - 1}{1 - p} - \log \frac{1}{2e}}{\log \frac{k - 1}{1 - p} + \log \frac{1}{e\bar\mu_t^-}}. \label{eq.sufficient_condition}
\end{align}
In the noise free regime ($\alpha_t \to 1$), $\eta_t \to 0$, while in the noise full regime ($\alpha_t \to 0$), $\eta_t \to n$. From noise free regime to noisy regime, $\bar{\mu}_t$ increases, $c_t^* \rightarrow \frac{n - \eta_t}{\eta_t}$. Furthermore, as we rise in the skewness ($p$) of the distribution, the R.H.S of Eq.~\ref{eq.sufficient_condition} monotonically increases, and
results in larger values for $\eta_t$ and $c_t^*$. Fig.~\ref{fig.examples} (MIDDLE) matches the above conclusions.
\end{itemize}

\subsection{The Algorithm for Evaluating Privacy Bound in Eq.~\eqref{eq.main_theorem_1} on a given Dataset}\label{subsec.alg_ddm}
In practical situations, when data curators release synthetic data, it is crucial to assess the privacy safeguards of the 
mechanism trained on a specific dataset. This ensures the synthetic data upholds privacy and the confidentiality of the training data's sensitive information. To this end, we introduce Algorithm~\ref{alg.maintext_main} (paired with Algorithm~\ref{alg.maintext_c_t^*}), to compute the privacy bound, enabling direct per-instance privacy leakage calculation for DDM-generated datasets given particular training sets.
Specifically, for each $\mathbf{v}^*$, we determine $\psi_t$, $\eta_t$, and $c_t^*$ to compute $\delta(\mathbf{v}^*,\VV_0)$ using Eq.~\eqref{eq.main_theorem_1}. 
Using this algorithm, data curator can have better assessment of the potential privacy leakage of each point in training set and may exclude sensitive points $\mathbf{v}^*$ (outliers) with high $\delta(\mathbf{v}^*,\VV_0)$ to enhance privacy protection. This approach's efficacy is confirmed with real dataset experiments in Sec.~\ref{sec.exp}. The total time complexity of the algorithm is further analyzed in Appendix\mbox{~\ref{app.alg}}.

 \begin{algorithm}[H]
    \caption{Privacy Bound for Discrete Diffusion Models}\label{alg.maintext_main}
    \begin{algorithmic}[1]
        \State \textbf{Input:} Dataset $\VV_0$; Diffusion Step: $T$; Diffusion Coefficients: $\{\alpha_t\}_{t \in [T]}$
        \State For each $\mathbf{v}, \mathbf{v}' \in \VV_0$, calculate and store $\bar\omega(\mathbf{v}, \mathbf{v}')$ in $\mathcal{T}_1$ and $N_{\eta}(\mathbf{v})$ in $\mathcal{T}_2$ for $\eta \in [n]$.
        \State Use diffusion coefficients $\{\alpha_t\}_{t \in [T]}$ to calculate $\{\mu_t^+, \mu_t^-, \bar\mu_t^+, \bar\mu_t^-\}_{t \in \{1, 2, ..., T\}}$.
        \State Define empty array $\texttt{Privacy}\_\texttt{Bound} = []$.
        \For{For every $\mathbf{v} \in \VV_0$}
            \State Define empty array $\texttt{Array}_{c_t^*} = []$.
            \For{$t \gets 1$ to $T$}
            \For{$\eta_t \gets 1$ to $n$}
                \State Calculate $c_t^*$ using $\texttt{Alg}_{c_t^*}$ (Algorithm~\ref{alg.maintext_c_t^*}) and determine $\{1 / s^{\psi_t} \}$ with $\mathcal{T}_1$ and $\{\mu_t^+, \mu_t^-\}$.
                \State If $\eta_t$ meets the condition in Eq.\eqref{eq.data_quantities_inequality} using $N_{\eta_t}(\mathbf{v})$ and $N_{(1+c_t^*)\eta_t}(\mathbf{v})$ from $\mathcal{T}_2$, then \textbf{break}.
            \EndFor
            \State $\texttt{Array}_{c_t^*}[t]\longleftarrow(1+c_t^*)\eta_t(\mathbf{v})$
            \EndFor
            \State Use $\texttt{Array}_{c_t^*}$ to compute and append
            $$\texttt{Privacy}\_\texttt{Bound}(\mathbf{v}) \longleftarrow m \sum_{t = T_{\text{rl}}}^{T}\biggl[\min \biggl\{\frac{4N_{(1+c_t^*)\eta_t}(\mathbf{v})}{s}, 1 \biggl\} \cdot \frac{n}{s^{\psi_t}} + \frac{n(1 - \frac{\bar\mu_{t-1}^-}{\bar \mu_{t-1}^+})}{s^2} \biggl] / [\epsilon(1 - e^{-\epsilon})]$$
        \EndFor
        \State \textbf{Output:} 
        $\texttt{Privacy}\_\texttt{Bound}$
    \end{algorithmic}
\end{algorithm}

\begin{algorithm}[H]
    \caption{$\texttt{Alg}_{c_t^*}$: Finding $c_t^*$}\label{alg.maintext_c_t^*}
    \begin{algorithmic}[1]
        \State \textbf{Input: } $\eta_t, \alpha_t, N_{\eta}, \mathbf{v}^*$.
        \For{$t = 1:T$}
        \If{$\frac{1}{\vartheta(2\eta_t)} > (2e\bar\mu_t^-)^{\eta_t}$} 
            \State \ding{172} Select the smallest $c_t^* \in \{0, \frac{1}{\eta_t}, \frac{2}{\eta_t}, ..., \frac{\eta_t - 1}{\eta_t}\}$ such that 
            $c_t^* \geq \frac{\frac{1}{\eta_t}\log \vartheta((1 + c_t^*)\eta_t) + 1 + \log2}{\log \frac{1}{\mu_t^-}}$
        \ElsIf{$\frac{1}{\vartheta(2\eta_t)}\leq (2e\bar\mu_t^-)^{\eta_t}$}
            \State \ding{173} Select the smallest $c_t^* \in \{1, \frac{\eta_t + 1}{\eta_t}, \frac{\eta_t + 2}{\eta_t}, ..., \frac{n - \eta_t}{\eta_t}\}$ such that 
            $c_t^* \geq \frac{\frac{1}{\eta_t}\log \vartheta((1+c_t^*)\eta_t)}{\log \frac{1}{\mu_t^-} -1}$
        \EndIf
        \EndFor
        \State \textbf{Output:} $c_t^*$
    \end{algorithmic}
\end{algorithm}
\section{Experiments} \label{sec.exp}
We validate our theoretical findings via computational simulations on synthetic and real-world datasets. 
\subsection{Synthetic Experiments}
We first study the asymptotic behavior of privacy leakage with respect to the training dataset size $s$. Given a DDM with 100 diffusion steps and trained with a linear schedule $\alpha_t= 1- \frac{t}{T}$, we fix $\vv^*$ and increase the number of samples in the training set from $1e4$ to $1e7$, ensuring that the newly added samples satisfy $\bar\omega(\vv, \vv^*) = n$, which makes $\vv^*$ with high privacy leakage risk. Results shown in Fig.~\ref{fig.exp_1} (LEFT, MIDDLE) confirm our theoretical prediction that, in noise-free regime ($t = 1$, Fig.~\ref{fig.exp_1} (LEFT)), the \textbf{main privacy term} in Theorem~\ref{thm.main} is $\mathcal{O}_s(\frac{1}{s})$, which is almost a linear decay with a slope of $-1$ in the logarithmic scale (all lines in the figure). On the other hand, in the noisy-regime ($t = 50$, Fig.~\ref{fig.exp_1} (MIDDLE)), the privacy leakage term decays faster at the rate of $\mathcal{O}_s(\frac{1}{s^2})$, which is evident from the linear decay with a slope around $-2$.
In the second experiment, we examine how decay rate of diffusion coefficients affects the privacy bound. Given specific $\mathbf{v}^*$ (non-majority categories along all entries), we sample the training set from the distribution with $p = 0.5$ in Sec.~\ref{subsec.eg}. We consider two noise schedules: linear schedule and sigmoid schedule. In Fig.~\ref{fig.exp_1} RIGHT, the red line denotes the linear schedule with decay rate $\in \{0.1, 0.3, 0.5, 0.7, 0.9\}$ and the blue line denotes the Sigmoid schedule where decay rate increases from $2.5$ to $5$. $\delta$ decreases along both two lines as we increase the decay rate of diffusion coefficients. This indicates that a faster decay rate in diffusion coefficients implies better privacy.

More results discussing privacy leakage and the behaviors of data-dependent quantities under various DDM configurations are given in Appendix~\ref{app.additional_exp}.

\begin{figure*}[h]
\centering
\includegraphics[width=0.3\textwidth]{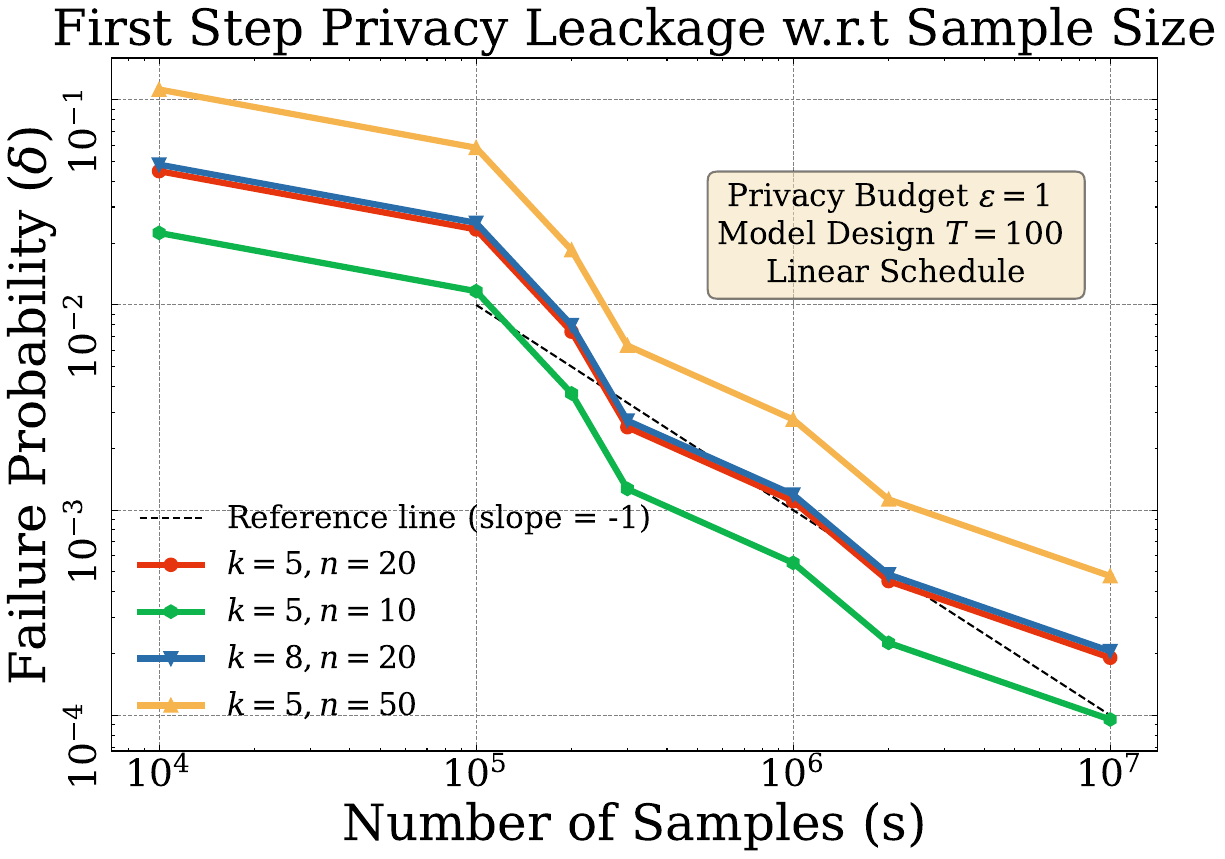}
\hfill
\includegraphics[width=0.3\textwidth]{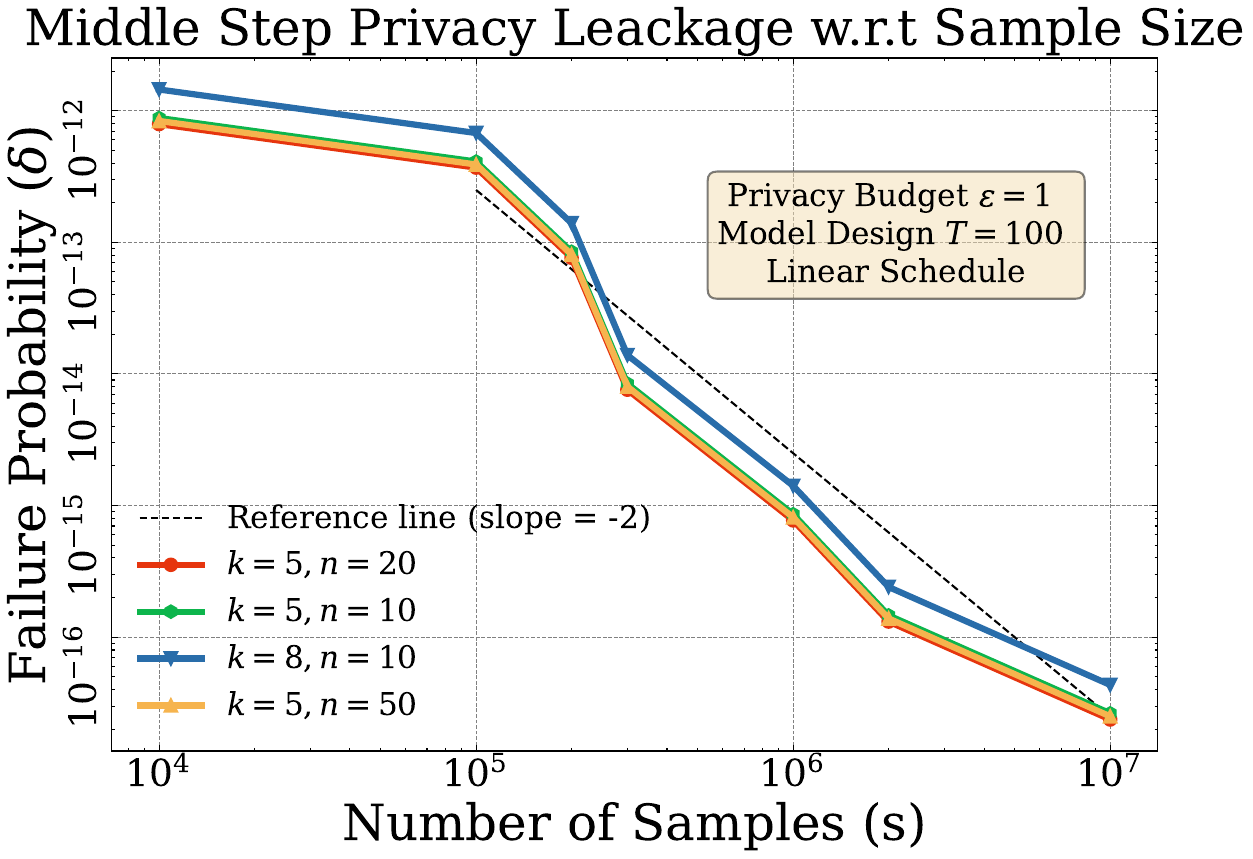}
\hfill\vline\hfill
\includegraphics[width=0.3\textwidth]{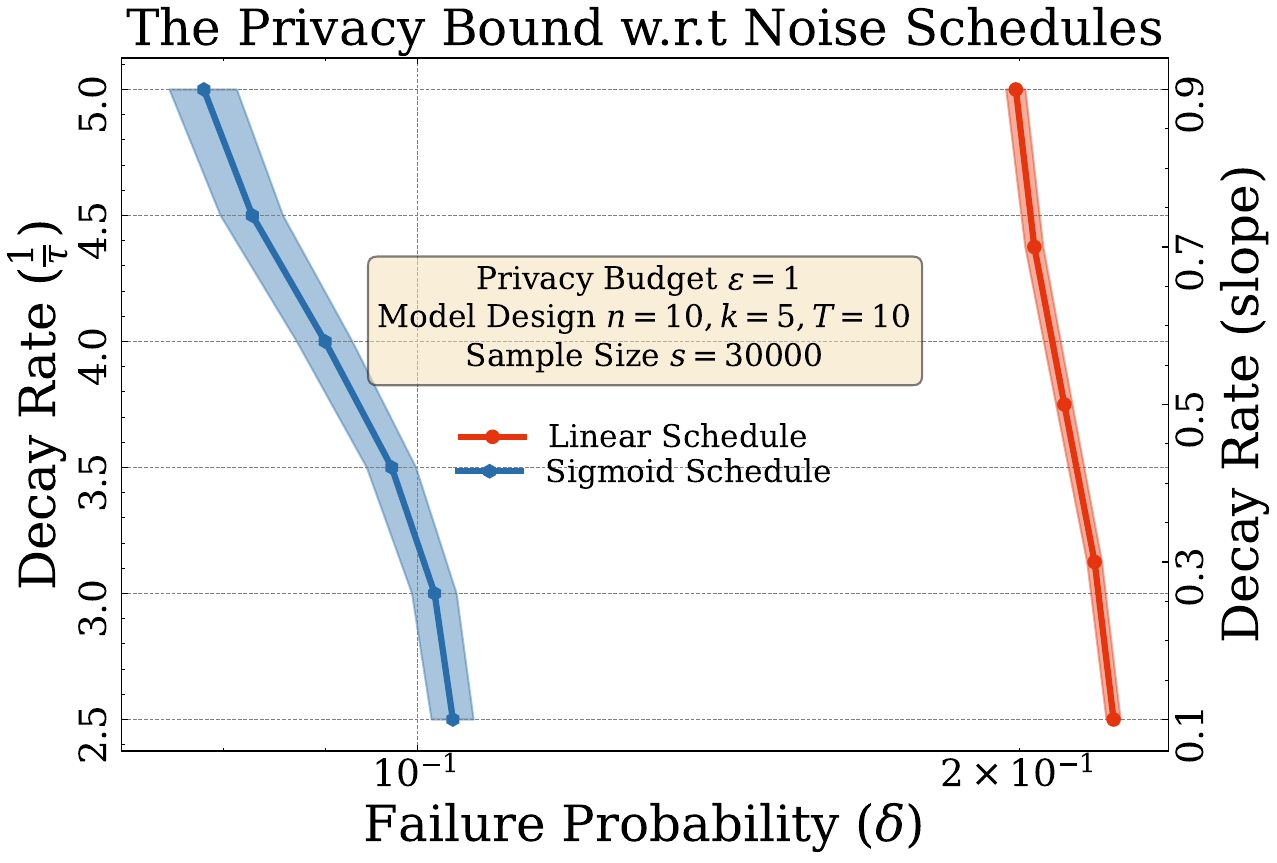}
\caption{\small{\textbf{LEFT:} Privacy Leakage at $t = 1$ (Noise-free Regime). \textbf{MIDDLE:} Privacy Leakage at $t = 50$ (Noisy Regime). \textbf{Right:} Privacy Leakage w.r.t Decay Rate under Linear ($\alpha_t = 1 - \text{decay rate} * \frac{t}{T}$) and Sigmoid ($\alpha_t = \frac{\text{Sigmoid}(3 * \text{decay rate}) - \text{Sigmoid}(\frac{3t}{T} * \text{decay rate})}{\text{Sigmoid}(3 * \text{decay rate}) - 0.5}$) Schedules. Results are based on $5$ times independent tests.\label{fig.exp_1}}}
\end{figure*}
\begin{figure*}[ht]
\centering
\includegraphics[width=0.3\textwidth]{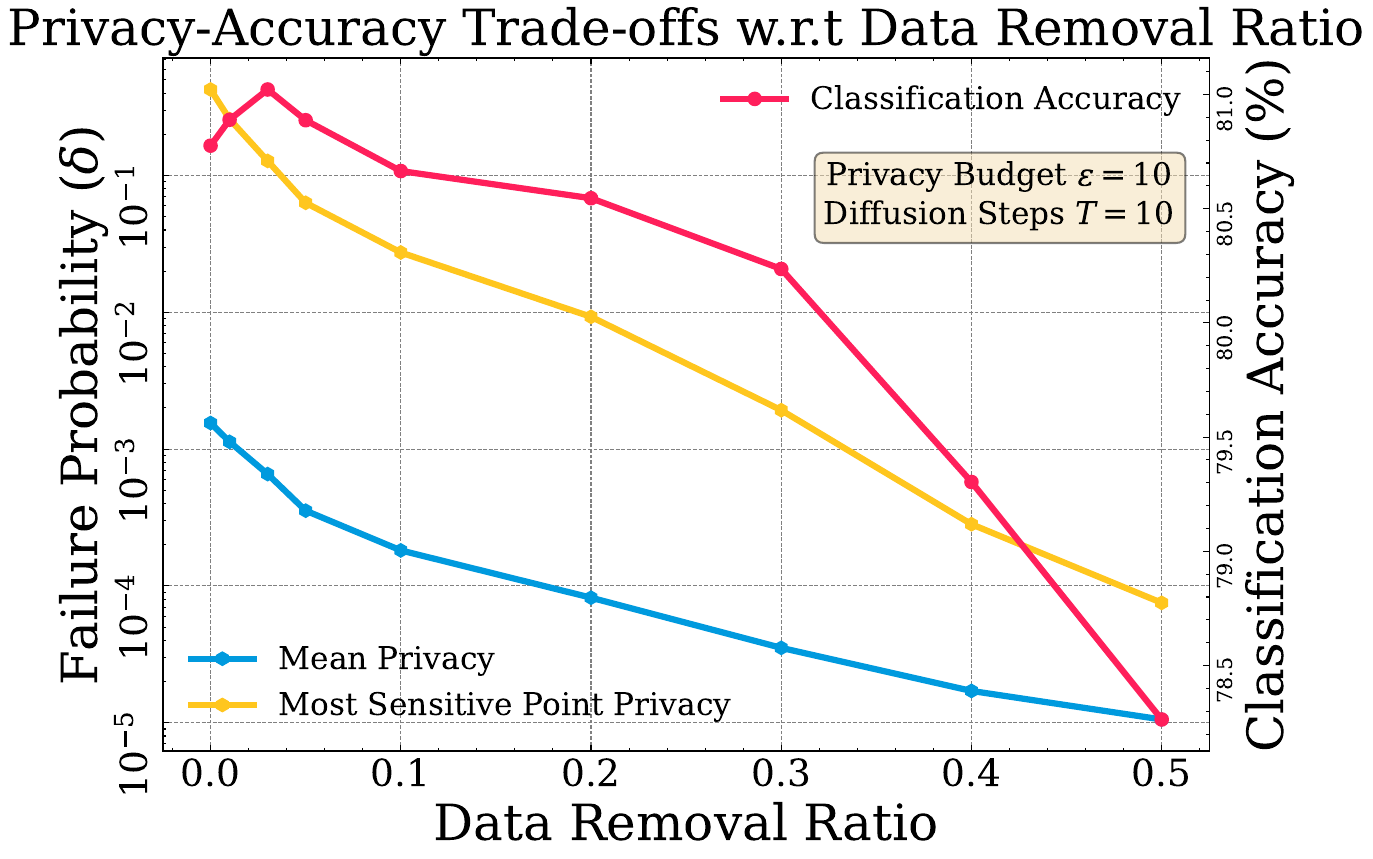}
\hspace{3mm}
\includegraphics[width=0.3\textwidth]{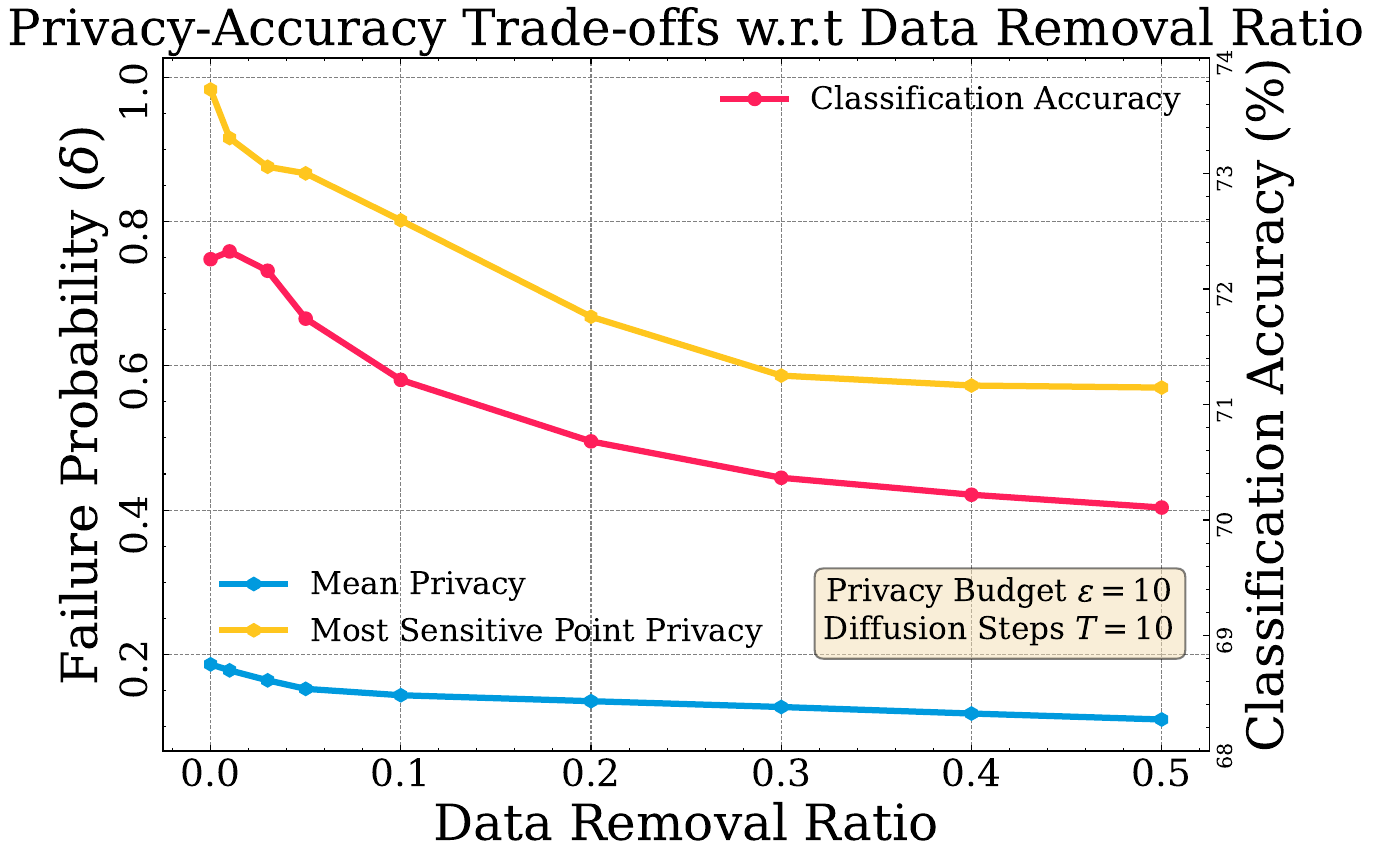}
\hspace{3mm}
\includegraphics[width=0.3\textwidth]{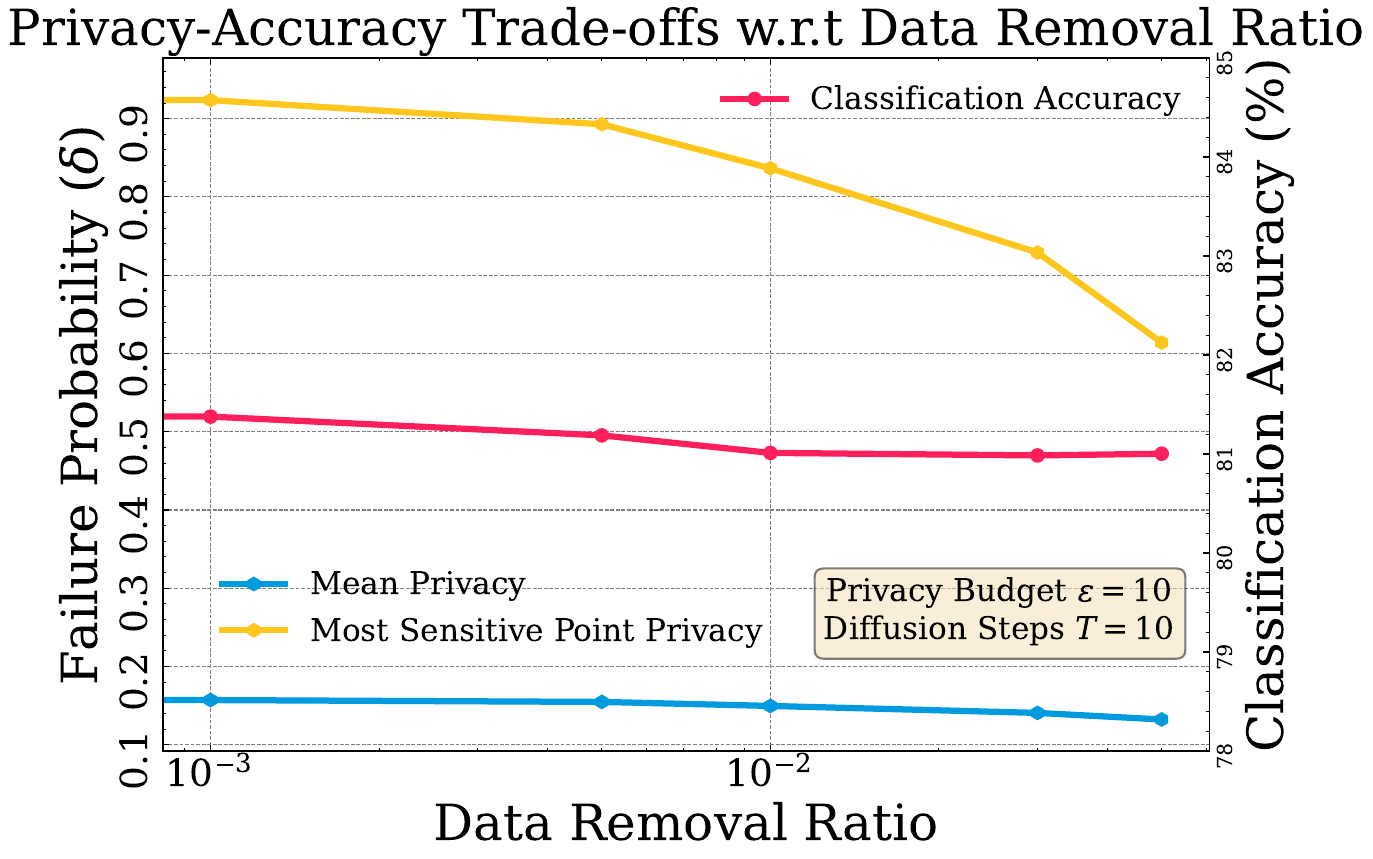}

\vspace{1mm} 

\includegraphics[width=0.3\textwidth]{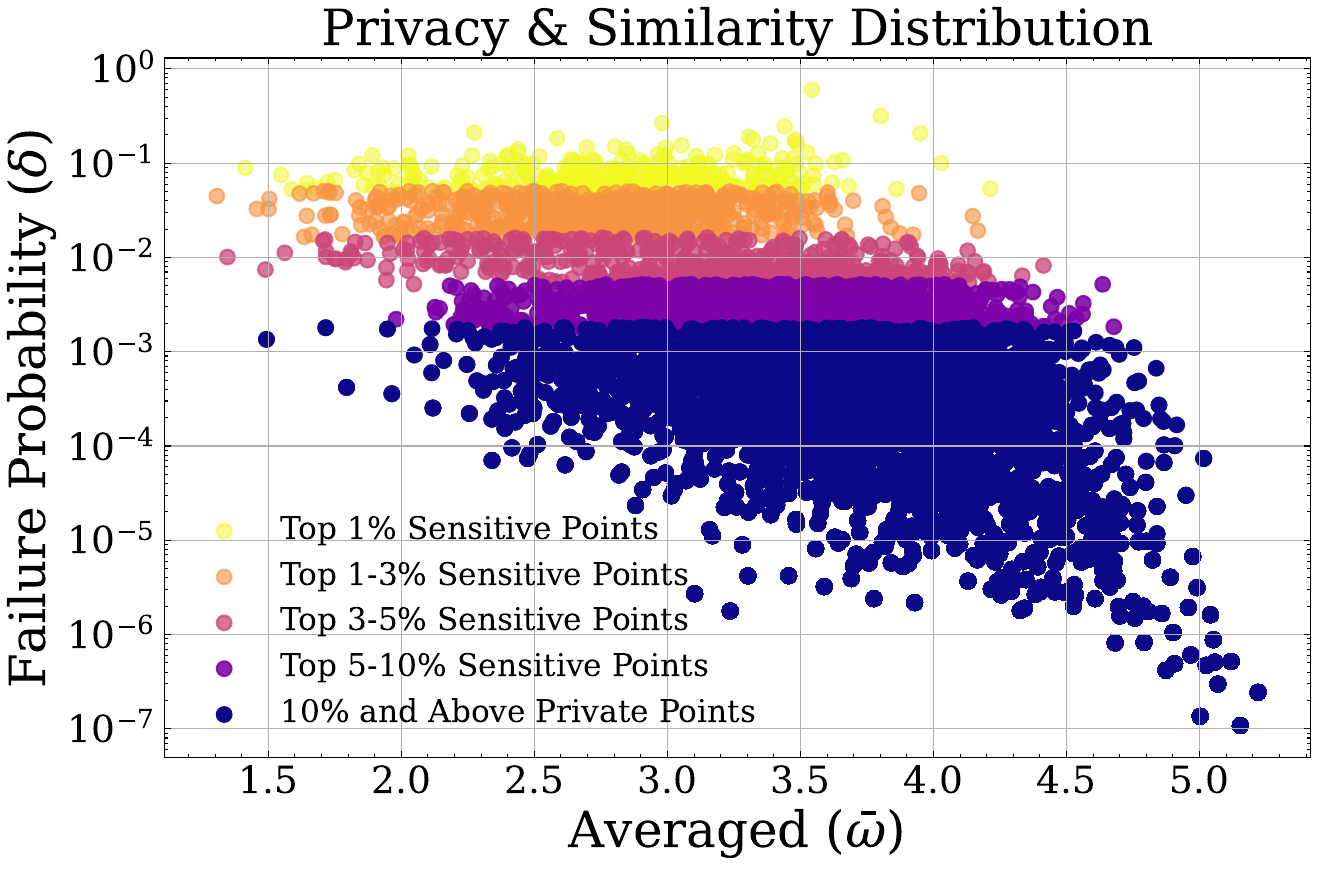}
\hspace{3mm}
\includegraphics[width=0.3\textwidth]{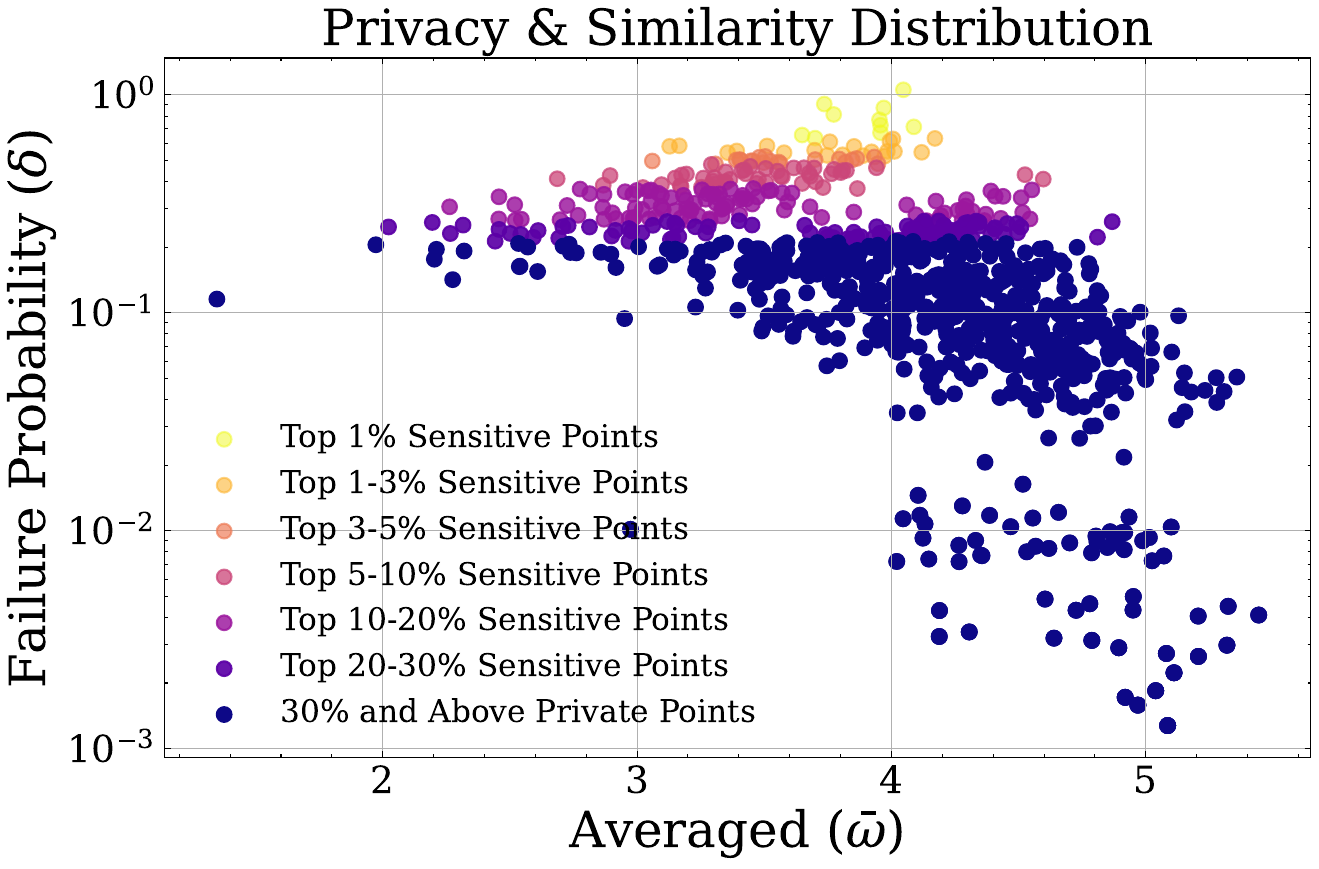}
\hspace{3mm}
\includegraphics[width=0.3\textwidth]{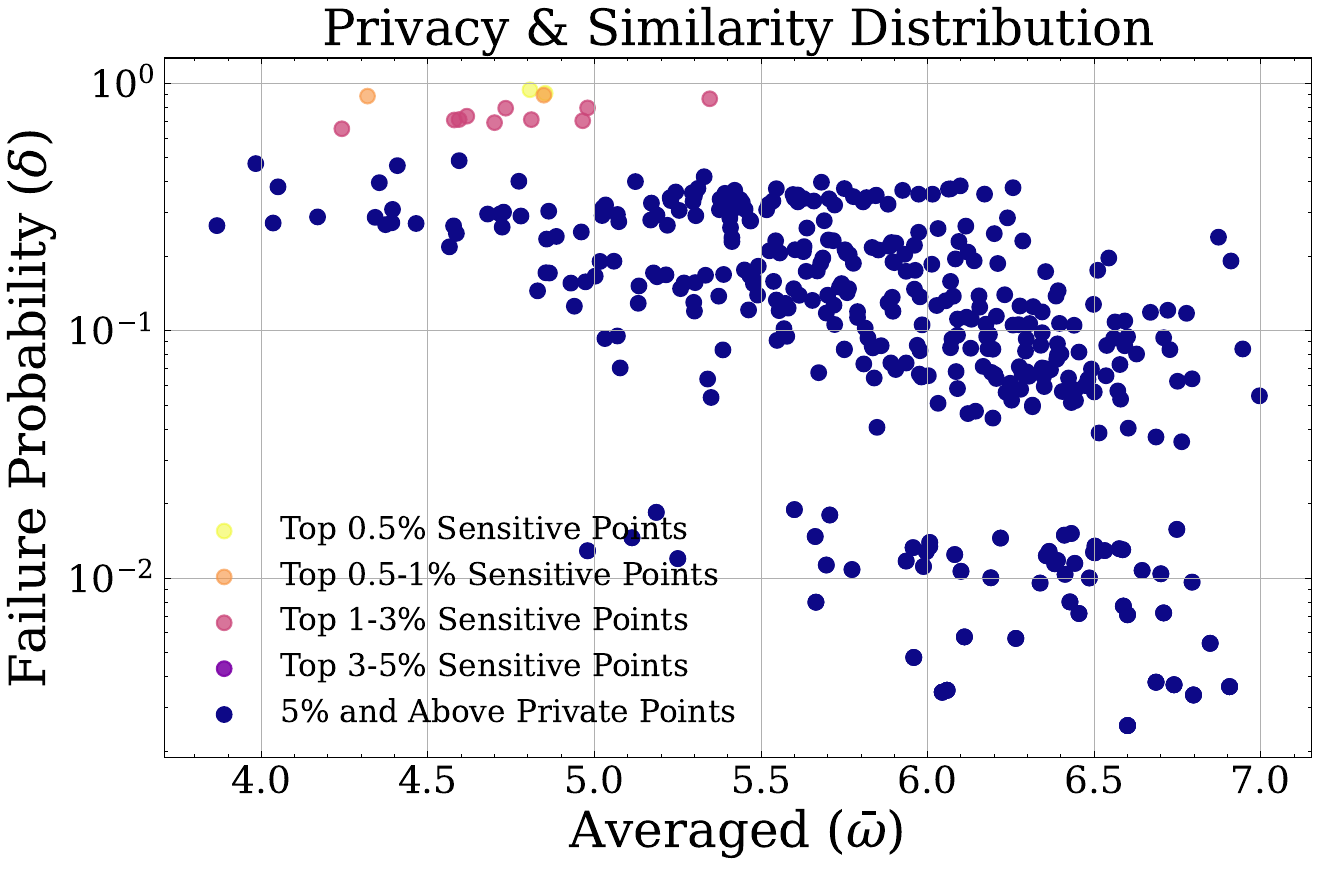}
\caption{\small{
\textbf{First Row:} Privacy-utility trade-offs with respect to data removal ratio. \textbf{LEFT:} Adult, \textbf{MIDDLE:} German Credit, \textbf{RIGHT:} Loan. \textbf{Experimental Setup:} DDM design: T = 10, Linear Schedule. 
\textbf{Second Row:} Visualizing privacy budget in relation to average feature overlap. \textbf{LEFT:} Adult, \textbf{MIDDLE:} German Credit, \textbf{RIGHT:} Loan.
}}\label{fig.real_combined_figure}
\end{figure*}

\subsection{Experiments on Real Datasets}\label{subsec.real_exp}
\subsubsection{Effectiveness of Privacy Bound Algorithm for Data Sensitivity Assessment}\label{subsubsec.data_Sensitivity}
In this series of experiments, we aim to showcase our privacy algorithm's effectiveness in assessing the sensitivity of individual data points within real-world datasets and to delineate the relationship between sample privacy leakage and dataset similarity (as illustrated in Fig.~\ref{fig.theory_sim_privacy}). Additionally, from a data curator’s perspective, we explore the potential of our algorithm assisting in outlier removal, which may enhance privacy protections while maintaining utility performance. It is important to note that the use of pDP assessment itself is kept confidential to the data curator, safeguarding against any privacy concerns.
 We evaluate our algorithm on three benchmark datasets: Adult~\citep{kohavi1996scaling}, German Credit~\citep{misc_statlog_(german_credit_data)_144}, and Loan~\citep{loandata} with (\# training samples, \# feature dimensions, \# categories) of (30718, 9, 5), (1000, 10, 5), and (480, 11, 4) (see Appendix~\ref{app:additional_exp} for details).

\textbf{Experimental Settings.} Our study, approaching from the perspective of a data curator who preprocesses the dataset, investigates how varying the sensitive data removal ratio according to our per-instance privacy bound assessment can potentially assist data curators in eliminating outliers to enhance privacy protection. 
Specifically, we calculate the privacy budget for every point in the dataset according to Eq.~\eqref{eq.main_theorem_1} via Algorithm~\ref{alg.maintext_main} and remove the most sensitive points according to the assessment in the dataset amounting to a specific portion which is controlled by the removal ratio. The removal ratio ranges from $0.01$ to $0.5$ for the Adult and German Credit datasets, and between $0.001$ and $0.05$ for the Loan dataset. 
It is important to underscore that, after each data removal, we conduct \textbf{\textit{privacy recalculation}} and report the mean privacy leakage (blue line) and the most sensitive point privacy budget (yellow line) after each data removal process in \mbox{Fig.~\ref{fig.real_combined_figure}} (First Row). We measure utility performance with respect to downstream classification task by training a binary classifier on DDM-generated samples and evaluate its performance (red line) on the original dataset. We further illustrate the sensitive points—those removed from the dataset—by graphing their potential privacy leakage alongside the average overlap with the entire dataset across all feature dimensions, denoted by $\bar\omega$. The visualizations are presented in Figure~\ref{fig.real_combined_figure} (Second Row). 

\textbf{Remove privacy sensitive points with comparable utility.} As depicted in Fig.~\ref{fig.real_combined_figure} (First Row), eliminating a minor proportion of the most sensitive points from the dataset results in a decrease in privacy leakage. 
Meanwhile, the classification accuracy (red line) only gets slightly decreased: $81\% \rightarrow 78\%$ for Adult, $73\% \rightarrow 70\%$ for German Credit, $81.1\%\rightarrow 79.8\%$ for Loan (note that for Loan we remove at most 5\% data points as its size is too small)
More interestingly, by removing a certain number of those most sensitive data points, the classification model trained over the pruned generated dataset may even achieve better performance over the original dataset, say removing 3\% in Adult and 1\% in German Credit. We attribute such gains to the fact that the most sensitive data points are often outliers in the dataset, which may be actually not good for training an ML model. In data visualization (Fig.~\ref{fig.real_combined_figure}, Second Row), we note that the data points prone to greater privacy leakage tend to have less feature overlap, indicating that these data points have a lower similarity to others in the dataset. As pointed out in \mbox{\citep{carlini2022privacy}}, the mere exclusion of the most sensitive data points from the dataset does not necessarily guarantee a reduction in privacy risks as certain inliers may become outliers post-removal, a phenomenon termed "Onion Effect". However, in our experiments, we did not observe the "Onion Effect" within the Adult dataset. Empirically, we did find that as the data removal ratios increase to a certain extent, privacy leakage ceases to decrease and may fluctuate. This can be attributed to the fact that an individual data point's privacy risk is determined by its similarity to the current dataset distribution. Simply removing certain data points does not necessarily enhance this similarity, and thus privacy leakage may not decrease. Furthermore, a decrease in dataset size also intensifies privacy leakage. Therefore, in practice, data curators should perform pDP recalculations on modified datasets with varying data removal ratios to potentially achieve better privacy-utility trade-offs.

\subsubsection{Evaluation of DDM Vulnerability to Black-box Membership Inference Attacks}\label{subsubsec.membership}

In this subsection, we further investigate the privacy leakage of DDMs from membership inference attacks perspective.

\textbf{Black-box Attacks with No Auxiliary Knowledge:} In alignment with the experimental settings delineated by \citep{hayes2017logan}, this study considers \emph{black-box attacks} where the attacker has no prior knowledge or external information regarding the target model, i.e. DDM in our case, including \emph{model parameters / hyper-parameters, model architecture, training data}, or \emph{prediction scores}. For evaluation, it is hypothesized that the attacker possesses access to the whole dataset, denoted as $X = X_{\text{train}} \cup X_{\text{non-train}}$. Additionally, it is presumed that the adversary is aware of the size of the training set. The attack primarily utilizes the target model's generated samples to identify and exploit its vulnerabilities.

\textbf{Experimental Settings:} Our experiments utilize the Adult dataset. We partition this dataset by randomly selecting 20\% of the records as the training set, denoted as \(X_{\text{train}}\), while the remainder is labeled as \(X_{\text{non-train}}\). We train a DDM to learn the conditional distribution of $X_{\text{train}}$ given their corresponding labels $Y_{\text{train}}$. 
\begin{wrapfigure}{r}{0.3\textwidth}
\centering
\vspace{-3mm}
\includegraphics[width=0.3\textwidth]{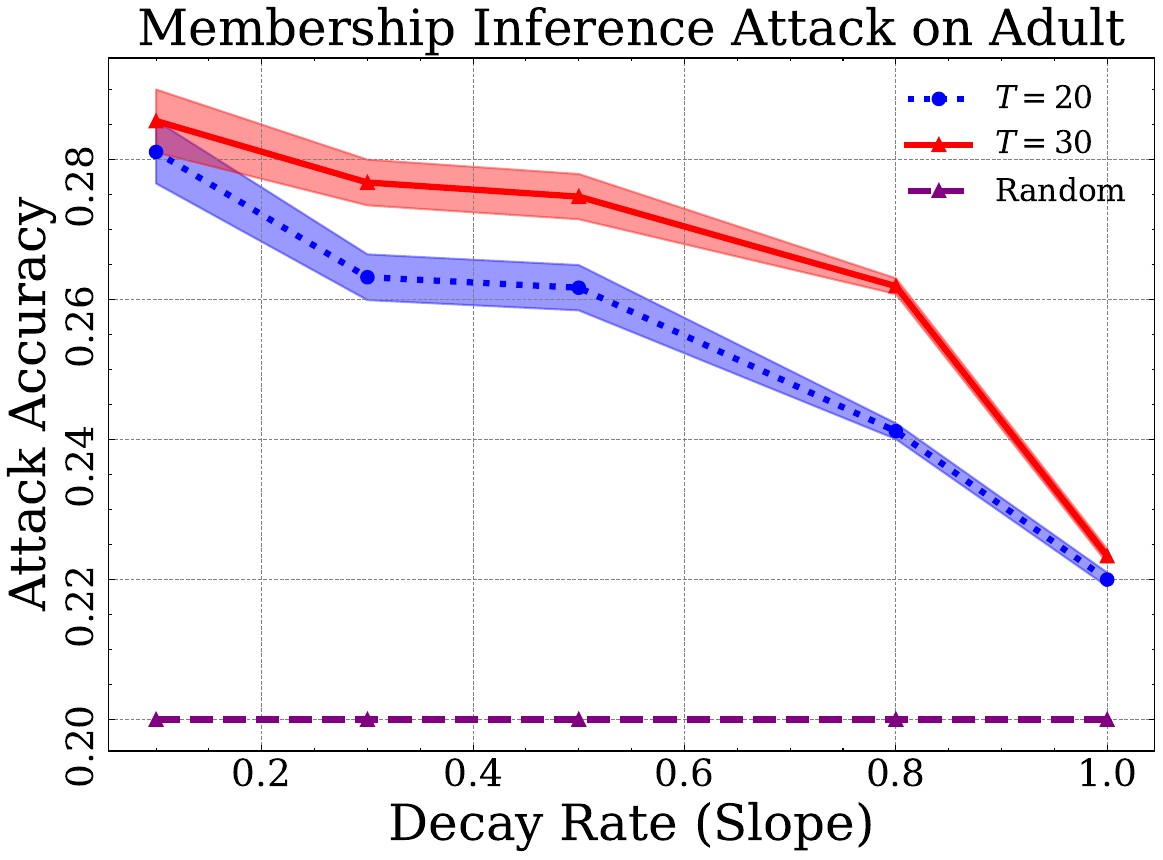}
\vspace{-6mm}
\caption{\small{Black-box Attack with no auxiliary knowledge over DDMs under various designs. Results are averaged over 3 independent tests.
}}\vspace{-2mm}
\label{fig.membership_inference_attack}
\end{wrapfigure}
Next, the adversary employs a discriminative model denoted as $f(\cdot; \theta): \text{ feature space } \mathcal{X} \mapsto \text{ label space } \mathcal{Y}$ (detailed in Appendix.~\ref{app:additional_exp}), which is trained using two-class samples $(X_{\text{gen}}, Y_{\text{gen}})$ generated by the DDM and denote trained model weights as $\theta_{\text{gen}}$. This trained discriminative model $f(\cdot; \theta_{\text{gen}})$ is then used to make predictions across the entire dataset \(X_{\text{train}} \cup X_{\text{non-train}}\). We identify \(|X_{\text{train}}|\) samples with the highest confidence values—those whose prediction scores are closest to the true labels—and designate these as training members $\tilde{X}_{\text{train}}$. The evaluation of the DDMs is centered around varying the decay rate of diffusion coefficients, influencing the degree of privacy leakage in the models. The outcomes of these evaluations, particularly the attack accuracy ($\frac{|\tilde{X}_{\text{train}} \cap X_{\text{train}}|}{|X_{\text{train}}|}$), are illustrated in Fig.~\ref{fig.membership_inference_attack}. More specifically, we examine DDMs with total diffusion steps \(T\) set to either 20 or 30, and a linear schedule defined as \(\alpha_t= 1- \text{decay rate} \times \frac{t}{T}\). Additionally, the decay rate is varied within the range \{0.1, 0.3, 0.5, 0.8, 1.0\} to adjust the privacy guarantees of the DDMs, with a faster decay rate typically providing better privacy protection.

\textbf{Increased Privacy Leakage Enhances Vulnerability of Models to Attacks:} As illustrated in Fig.~\ref{fig.membership_inference_attack}, the blue and red lines represent DDMs with total diffusion steps of 30 and 20, respectively. A noteworthy observation is that the DDM with \(T = 30\) exhibits higher attack accuracy, suggesting a stronger capability for memorizing training data in models with a greater number of diffusion steps. Furthermore, an increase in the decay rate leads to improved privacy guarantees for the DDMs. This enhancement in privacy is evidenced by a decrease in attack accuracy for both models (as represented by the blue and red lines), which diminishes from 28\% / 29\% to approximately 22\%. It is important to note that the performance of both lines surpasses that of random guessing (indicated by the purple line), signifying that our target models retain training data memorization across all evaluated settings.

\section{Conclusion}
In this work, we analyzed data-dependent privacy bound for the synthetic datasets generated by DDMs, which revealed a weak privacy guarantee of DDMs. Thus, to meet practical needs, other privacy-preserving techniques such as DP-SGD~\citep{abadi2016deep} and PATE~\citep{papernot2016semi} may have to be corporated. 
Our findings well align with empirical observations over synthetic and real datasets. 

\section{Limitations and Future Work}
\begin{itemize}[leftmargin=5mm]
    \item Our research currently targets discrete-time diffusion models for discrete data. An intriguing extension of this work could explore continuous diffusion models applied to continuous data domains, such as images, which present numerous practical applications. We believe the core logic of our proof concept is adaptable to continuous diffusion models, albeit some of our current arguments, grounded in the characteristics of discrete distributions, may not directly apply or might necessitate further examination.
    \item Our findings are predicated on the assumptions concerning the denoising networks' expressive power and the total variation gap between forward and backward diffusion trajectories. Relaxing these assumptions can be an interesting direction for future work.
\end{itemize}

\section{Acknowledgement}
We extend our sincere gratitude to Zinan Lin for his insightful discussions and contributions to this project. We also appreciate the reviewers for their valuable feedback and actionable suggestions. R.Wei, H.Wang, H.Yin, E.Chien and P.Li are partially supported by 2023 JPMorgan Faculty Award and NSF IIS-2239565.

\section{Disclaimer} 
This paper was prepared for informational purposes by the Artificial Intelligence
Research group of JPMorgan Chase \& Co. and its affiliates (“JP Morgan”), and is not a product
of the Research Department of JP Morgan. JP Morgan makes no representation and warranty
whatsoever and disclaims all liability, for the completeness, accuracy or reliability of the information
contained herein. This document is not intended as investment research or investment advice, or a
recommendation, offer or solicitation for the purchase or sale of any security, financial instrument,
financial product or service, or to be used in any way for evaluating the merits of participating in
any transaction, and shall not constitute a solicitation under any jurisdiction or to any person, if
such solicitation under such jurisdiction or to such person would be unlawful.

\bibliography{main}
\bibliographystyle{tmlr}

\newpage
\onecolumn
\appendix
{\huge \bfseries Appendix}
\etocdepthtag.toc{mtappendix}
\etocsettagdepth{mtchapter}{none}
\etocsettagdepth{mtappendix}{subsection}
\tableofcontents

\newpage

\section{Proof of Theorem~\ref{thm.main}}\label{app.main}
\subsection{Notations}
In this section, we introduce and recall important notations for the convenience of presenting proofs. Let $\mathbf{v}_{t | \lambda}, t \in [T], \lambda \in \{0, 1\}$ denote the intermediate data distribution r.v.s at time $t$ trained with dataset $\lambda$ in the forward $q(\cdot)$ and backward $p(\cdot)$ processes.
Recall some notations  $\mu_t^+ = (1 + (k-1)\alpha_t) / k, \mu_t^- = (1 - \alpha_t) / k, \bar{\mu}_t^+ = (1 + (k-1)\OA_t) / k, \bar{\mu}_t^- = (1 - \OA_t) / k, R_t = \mu_t^+ / \mu_t^-, \bar{R}_t = \bar{\mu}_t^+ / \bar{\mu}_t^-$, and similarity $\text{Sim}(\VV_1, \vv^*, t) = \sum_{\vv \in \VV_1} \bar R_t^{-\bar \omega(\vv, \vv^*)}$. Let $N_{\eta}(\mathbf{v}^*) = |\{\mathbf{v} \in \VV_1 \text{ s.t. } \bar\omega(\mathbf{v}, \mathbf{v}^*) \leq \eta\}|$ and $\Delta N_{\eta}(\mathbf{v}^*) = |\{\mathbf{v} \in \VV_1 \text{ s.t. } \bar\omega(\mathbf{v}, \mathbf{v}^*) = \eta \}|$. Let $\mathbbm{1}_{\mathbf{v} \neq \mathbf{v}'}, \mathbbm{1}_{\mathbf{v} = \mathbf{v}'}$ denote the characteristic function over $\mathbf{v}$ and $\mathbf{v}'$. Given dataset $\VV$, define $\VV_1^{i|l} = \{\vv \in \VV_1 | \vv^i = l\}$. Further, $(\cdot)_+ = \max\{\cdot, 0\}$.

Note that in certain cases, when there is no potential for confusion, we employ a slight abuse of notation by representing intermediate data distribution r.v.s $\mathbf{v}_{t | \lambda}$ in its one-hot encoding form, and we use $\MM_t(\VV)$ as shorthand for $\MM_t(\VV; 1)$.

\subsection{Proof Sketch of Theorem~\ref{thm.main}}


\textbf{Characterizing Privacy Leakage with Expected Conditional KL Divergence.} First, we leverage coupled KL divergence to quantify the inherent privacy guarantees of the DDM-generated samples. Given adjacent datasets $\VV_0, \VV_1 (\VV_0 \backslash \{\vv^*\} = \VV_1)$, the mechanism $\mathcal{M}_t$ satisfies $(\epsilon, \frac{\tau}{\epsilon(1 - e^{-\epsilon})})$-pDP with respect to $(\VV_0, \vv^*)$ if $\DD_{\text{KL}}(\MM_t(\VV_0) \| \MM_t(\VV_1)) + \DD_{\text{KL}}(\MM_t(\VV_1) \| \MM_t(\VV_0)) \leq \tau$. Note that the generation process $\mathbf{v}_{T | \lambda} \to \mathbf{v}_{T-1 | \lambda} \to \cdot \cdot \cdot \mathbf{v}_{t | \lambda}$ forms a Markov chain, and in order to keep track of privacy leakage from each step, we upper bound the coupled marginal KL divergence by summation over conditional KL divergences. By leveraging sufficient training, backward path approximation (Assumption~\ref{as.approximation_error} and~\ref{as.gap_forward_backward}) with conditional independence properties, we obtain \begin{align}
    \DD_{\text{KL}}(\MM_{T_{\text{rl}}}(\VV_0) \| \MM_{T_{\text{rl}}}(\VV_1)) + \DD_{\text{KL}}(\MM_{T_{\text{rl}}}(\VV_1) \| \MM_{T_{\text{rl}}}(\VV_0)) \leq \sum_{t=T_{\text{rl}}}^{T} (\mathscr{B}_1 + \mathscr{B}_2) + \mathcal{O}(\sqrt{\gamma_t} + \tilde{\gamma}_t)
\end{align} 
where 
$$
    \mathscr{B}_1 = \EE_{\mathbf{v}_t\sim q(\mathbf{v}_{t|0})}\sum_{i = 1}^n[\mathcal{D}_{\text{KL}}(q(\mathbf{v}_{t-1|0}^i|\mathbf{v}_{t|0}) \| q(\mathbf{v}_{t-1|1}^i|\mathbf{v}_{t|1})) + \mathcal{D}_{\text{KL}}(q(\mathbf{v}_{t-1|1}^i|\mathbf{v}_{t|1}) \| q(\mathbf{v}_{t-1|0}^i|\mathbf{v}_{t|0}))]
$$
represents the expected coupled KL divergence for each feature, conditioned on the data at time $t$, and $$\mathscr{B}_2 = \sum_{i = 1}^n\EE_{\mathbf{v}_{t}\sim (q(\mathbf{v}_{t|1}) - q(\mathbf{v}_{t|0}))}[\mathcal{D}_{\text{KL}}(q(\mathbf{v}_{t-1|1}^i|\mathbf{v}_{t|1}) \| q(\mathbf{v}_{t-1|0}^i|\mathbf{v}_{t|0}))]$$ describes conditional KL averaged over measure gap and can be directly upper bounded by $\frac{n\varrho_t}{s^2}$. 

\textbf{Measure Partition to Compute Expected Conditional KL Divergence in $\mathscr{B}_1$.} When quantifying $\mathscr{B}_1$, certain regions of the space $\mathcal{X}^n$ may have a large measure under $q(\mathbf{v}_{t|0})$, but a small $\mathcal{D}_{\text{KL}}$ gap, while other regions may exhibit the opposite behavior (illustrated in Fig~\ref{fig.theory_data_dependent}). Thus, our objective is to partition the probability measure into distinct sets, with the aim of achieving a balance between the expectations of $\mathcal{D}_{\text{KL}}$ in each set (can be interpreted as achieving balanced privacy leakage). Such balance can be achieved by selecting two radius parameters $\eta_t, \eta_t'\in\{0, 1,...,n\}$ to partition the space according to the distances to $\mathbf{v}^*$ and $\mathbf{v}_0\in\VV_1$. We partition the measure $q(\vv_{t|0})$ into the following three parts under different conditions of $\eta_t$ and $\eta_t'$: (1) $\bm{q(\mathbf{v}_t\in\mathcal{S}_a)}$ quantifies the $q(\vv_{t|0})$ measure for points near $\vv^*$ (within $\eta_t$-ball) or those generated from dataset points close to $\vv^*$. (2) $\bm{q(\mathbf{v}_t\in\mathcal{S}_b)}$ signifies the $q(\vv_{t|0})$ measure for points that diffused from points in dataset $\VV_1$ position distant from $\vv^*$ (beyond the $\eta_t$-ball) and are closer to some elements in $\VV_1$ than to $\vv^*$ by a margin of at least $\eta_t'$. (3) $\bm{q(\mathbf{v}_t\in\mathcal{S}_c)}$ represents the $q(\vv_{t|0})$ measure for points that diffused from points in dataset $\VV_1$ position distant from $\vv^*$ (beyond the $\eta_t$-ball), yet closer to any points in $\VV_1$ than to $\vv^*$ with no morn than $\eta_t'$. As adjacent datasets differ by an element $\vv^*$, the privacy leakage in $\mathcal{S}_a$ and $\mathcal{S}_c$ is notably higher than in $\mathcal{S}_b$. However, for larger datasets, $q(\mathbf{v}_t\in\mathcal{S}_b)$ often constitutes a significant portion of the original data measure $q(\vv_{t|0})$. This is because the likelihood of finding points in the dataset close to $\vv^*$ within a margin of $\eta_t'$ is high, especially when $\eta_t'$ is small.
    We show that $\mathscr{B}_1$ has general data-dependent upper bound $n / s^{\psi_t}$. For $\mathbf{v}\in \mathcal{S}_b$, we can further improve the bound to $n (\OA_{t-1}-\OA_t) / [(k \bar\mu_t^+ \bar\mu_t^-)(\bar R_{t-1}^2\bar R_t^{2\eta_t'})]$. Thus, we upper bound $\mathscr{B}_1$ as
    \vspace{-2mm}
    \begin{align}\label{eq.B1_upper}
        \mathscr{B}_1
        \leq q(\mathbf{v}_t\in\mathcal{S}_a) \cdot \frac{n}{s^{\psi_t}} + q(\mathbf{v}_t\in\mathcal{S}_b) \cdot \frac{n (\OA_{t-1}-\OA_t) / (k \bar\mu_t^+ \bar\mu_t^-)}{\bar R_{t-1}^2 \cdot \bar R_t^{2\eta_t'}} + q(\mathbf{v}_t\in\mathcal{S}_c) \cdot \frac{n}{s^{\psi_t}}.
        \vspace{-3mm}
    \end{align}
Quantities $\eta_t$ and $\eta_t'$ can be adjusted to balance the three terms on the R.H.S of Eq.~\eqref{eq.B1_upper} such that $\mathscr{B}_1 \leq 2q(\mathbf{v}_t\in\mathcal{S}_a) \cdot \frac{n}{s^{\psi_t}}$.
Finally, the data-dependent quantity $c_t^*$ is introduced to give a more accurate characterization of $q(\mathbf{v}\in\mathcal{S}_a)$. By selection $c^*_t$ satisfies Eq.~\eqref{eq.data_quantities_inequality}, we have $q(\mathbf{v}\in\mathcal{S}_a)\leq \frac{2N_{(1 + c_t^*)\eta_t}}{s}$. 
Hence, by gluing the pieces together, we completes the proof.

\subsection{Main Proof}\label{App.proof_thm1}
In this subsection, we provide a rigorous proof of Theorem~\ref{thm.main}. The related lemma proofs are deferred to Appendix~\ref{app.main_lemma}.

\textbf{Step 1: Characterizing pDP with Marginal KL Divergence.}
First, we quantify the inherent DP guarantees of the DDM-generated samples using Coupled KL divergence. 
\begin{lemmaA}[Characterizing pDP with Coupled KL Divergence]\label{lm.DP_KL}
    Given two adjacent dataset $\VV_0, \VV_1 (\VV_0 \backslash \{\vv^*\} = \VV_1)$ and a specific time step $T_{\text{rl}}$, if the mechanism $\MM_{T_{\text{rl}}}$ satisfies the condition: 
    \begin{align}
        \mathcal{D}_{\text{KL}}(\MM_{T_{\text{rl}}}(\VV_0) \| \MM_{T_{\text{rl}}}(\VV_1)) + \mathcal{D}_{\text{KL}}(\MM_{T_{\text{rl}}}(\VV_1) \| \MM_{T_{\text{rl}}}(\VV_0)) \leq \tau
    \end{align}
    then the mechanism satisfies $(\epsilon, \frac{\tau}{\epsilon(1 - e^{-\epsilon})})$-pDP with respect to $(\VV_0, \vv^*)$.
\end{lemmaA}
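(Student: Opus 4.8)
The plan is to convert the symmetric KL hypothesis into an approximate–DP guarantee through the hockey‑stick divergence, which is the exact quantity governing $(\epsilon,\delta)$–indistinguishability. Write $P=\MM_{T_{\text{rl}}}(\VV_0)$ and $Q=\MM_{T_{\text{rl}}}(\VV_1)$ for the two output distributions, with probability mass functions $p,q$ on the (discrete) output space, since the released samples are categorical. Taking the supremum over $\mathcal{O}$ in Eq.~\eqref{eq:DP-def} for both orderings $\{i,j\}=\{0,1\}$, the pDP requirement is equivalent to
\[
\delta \;\geq\; \max\Big\{ \textstyle\sum_{\omega} (p(\omega)-e^{\epsilon}q(\omega))_+,\ \sum_{\omega}(q(\omega)-e^{\epsilon}p(\omega))_+ \Big\},
\]
because $P(\mathcal{O})-e^{\epsilon}Q(\mathcal{O})=\sum_{\omega\in\mathcal{O}}(p-e^{\epsilon}q)$ is maximized by the set $A=\{\omega: p(\omega)>e^{\epsilon}q(\omega)\}$. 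It therefore suffices to bound each of these two hockey‑stick terms by $\tau/(\epsilon(1-e^{-\epsilon}))$, and by symmetry I only treat the first.

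The key step is a pointwise estimate on $A$. Setting $u=\log(p(\omega)/q(\omega))>\epsilon$, I claim that on $A$
\[
p(\omega)-e^{\epsilon}q(\omega)\;\leq\;\frac{1}{\epsilon(1-e^{-\epsilon})}\,(p(\omega)-q(\omega))\log\frac{p(\omega)}{q(\omega)},
\]
which, after dividing by $q(\omega)>0$, reduces to the scalar inequality $\epsilon(1-e^{-\epsilon})(e^{u}-e^{\epsilon})\leq (e^{u}-1)\,u$ for $u\geq\epsilon$. I would verify this by showing the difference $g(u)=(e^{u}-1)u-\epsilon(1-e^{-\epsilon})(e^{u}-e^{\epsilon})$ satisfies $g(\epsilon)=\epsilon(e^{\epsilon}-1)>0$ together with $g'(u)=e^{u}\bigl(u+1-\epsilon(1-e^{-\epsilon})\bigr)-1>0$ for $u\geq\epsilon$; the latter holds because $u-\epsilon(1-e^{-\epsilon})\geq\epsilon e^{-\epsilon}>0$ forces both factors $e^u>1$ and $u+1-\epsilon(1-e^{-\epsilon})>1$. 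Summing the pointwise bound over $A$ and then extending the sum to the whole output space — which only increases the right‑hand side, as the summand $(p-q)\log(p/q)$ is nonnegative everywhere — yields
\[
\sum_{\omega}(p-e^{\epsilon}q)_+ \;\leq\; \frac{1}{\epsilon(1-e^{-\epsilon})}\sum_{\omega}(p-q)\log\frac{p}{q} \;=\; \frac{\mathcal{D}_{\text{KL}}(P\|Q)+\mathcal{D}_{\text{KL}}(Q\|P)}{\epsilon(1-e^{-\epsilon})}\;\leq\;\frac{\tau}{\epsilon(1-e^{-\epsilon})},
\]
where the identity uses $\sum_{\omega}(p-q)\log(p/q)=\mathcal{D}_{\text{KL}}(P\|Q)+\mathcal{D}_{\text{KL}}(Q\|P)$ and the last inequality is the hypothesis of the lemma.

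I expect the main obstacle to be pinning down the scalar inequality $g(u)\geq 0$ with the exact constant $\epsilon(1-e^{-\epsilon})$, since this is what forces the precise denominator in the statement. A cruder route — bounding $1-e^{\epsilon-L}\leq$ (linear in $L$) via $1-e^{-x}\leq x$ — leaves a residual negative‑part term $\EE_P[(-L)_+]$ that cannot be reabsorbed into either KL divergence, which is precisely why the \emph{symmetric} combination $(p-q)\log(p/q)=\mathcal{D}_{\text{KL}}(P\|Q)+\mathcal{D}_{\text{KL}}(Q\|P)$ is the right object to compare against. One caveat is support: the bound is informative only when $P\ll Q$ and $Q\ll P$, so that both KL terms are finite; otherwise $\tau=\infty$ and the claim is vacuous, and in the DDM setting the strictly positive reverse transition kernels guarantee common support. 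Finally, running the identical argument with $P$ and $Q$ interchanged, using that $\sum_{\omega}(p-q)\log(p/q)$ is symmetric, bounds the second hockey‑stick term by the same quantity, so $\delta=\tau/(\epsilon(1-e^{-\epsilon}))$ satisfies both orderings and the lemma follows.
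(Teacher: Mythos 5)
Your proof is correct, but it follows a genuinely different route from the paper's. The paper proves this lemma by a citation chain: it introduces the notion of $(\epsilon,\delta)$-\emph{probabilistic} differential privacy (PDP), invokes a result of Meiser (2018) that PDP implies pDP, and then invokes the lemma of Lin et al.\ (2021) stating that a bound $\tau$ on the symmetric KL divergence yields $(\epsilon, \tau/(\epsilon(1-e^{-\epsilon})))$-PDP. The analytic content hidden in that cited lemma is a Markov-type argument: on the bad set $B=\{\omega: \log(p(\omega)/q(\omega))>\epsilon\}$ one has both $\log(p/q)>\epsilon$ and $p-q>(1-e^{-\epsilon})p$, so the symmetric KL dominates $\epsilon(1-e^{-\epsilon})P(B)$, which bounds $P(B)\le \tau/(\epsilon(1-e^{-\epsilon}))$ and gives PDP. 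You instead bound the hockey-stick divergence $\sum_\omega (p-e^{\epsilon}q)_+$ directly, via the pointwise scalar inequality $\epsilon(1-e^{-\epsilon})(e^{u}-e^{\epsilon})\le (e^{u}-1)u$ for $u\ge\epsilon$, whose verification ($g(\epsilon)=\epsilon(e^{\epsilon}-1)>0$ and $g'(u)>0$) is sound, and then use $\sum_\omega(p-q)\log(p/q)=\mathcal{D}_{\text{KL}}(P\|Q)+\mathcal{D}_{\text{KL}}(Q\|P)$ together with nonnegativity of the summand to extend the sum off the set $A$. Your reduction of the two-sided pDP condition to the maximum of the two hockey-stick terms, and the symmetry argument disposing of the second one, are both correct, as is the absolute-continuity caveat (if either KL is infinite, $\tau=\infty$ and the claim is vacuous). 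What each approach buys: the paper's route is shorter and reuses known machinery, and PDP carries the interpretable "privacy loss bounded except on a small bad event" guarantee; your route is self-contained, avoids introducing an auxiliary privacy notion, and bounds exactly the quantity appearing in the $(\epsilon,\delta)$ definition --- indeed the hockey-stick divergence is dominated by $P(B)$, so your intermediate estimate is in principle at least as tight, though both arguments land on the same constant $\tau/(\epsilon(1-e^{-\epsilon}))$.
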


From this lemma, we know that the pDP guarantees of mechanism $\MM_{T_{\text{rl}}}$ can be characterized by coupled KL divergence.

\textbf{Step 2: Upper Bounding Marginal KL Divergence with Expected Conditional KL Divergence.}  
Note that the generation process $\mathbf{v}_{T | \lambda} \to \mathbf{v}_{T-1 | \lambda} \to \cdot \cdot \cdot \mathbf{v}_{t | \lambda}$ forms a Markov chain, and the coupled marginal KL divergence can be bounded by summation over conditional KL divergences.
\begin{lemmaA}\label{lm.marginal_conditional_kl}
    Given two adjacent datasets $\VV_0, \VV_1 (\VV_0 \backslash \{\vv^*\} = \VV_1)$ and a specific time step $T_{\text{rl}}$, consider the mechanism $\MM_{T_{\text{rl}}}$ for generating $m$ samples, we have
    \begin{align}
        &\mathcal{D}_{\text{KL}}(\MM_{T_{\text{rl}}}(\VV_0) \| \MM_{T_{\text{rl}}}(\VV_1)) + \mathcal{D}_{\text{KL}}(\MM_{T_{\text{rl}}}(\VV_1) \| \MM_{T_{\text{rl}}}(\VV_0)) \nonumber \\
        \leq & m\sum_{t=T_{\text{rl}}+1}^{T}\sum_{\lambda \in \{0, 1\}}\sum_{i = 1}^n \EE_{\mathbf{v}_{t}\sim p_{\phi}(\mathbf{v}_{t|\lambda})}[\mathcal{D}_{\text{KL}}(p_{\phi}(\mathbf{v}_{t-1|\lambda}^i|\mathbf{v}_{t|\lambda}) \| p_{\phi}(\mathbf{v}_{t-1|1 - \lambda}^i|\mathbf{v}_{t|1 - \lambda}))]
    \end{align}
\end{lemmaA}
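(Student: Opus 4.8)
The plan is to reduce the coupled marginal KL divergence of the released output at time $T_{\text{rl}}$ to a sum of one-step, per-coordinate conditional KL divergences through three successive reductions: tensorization over the $m$ independently generated samples, the data-processing inequality combined with the chain rule for Markov chains, and the dimension-wise factorization in Eq.~\eqref{eq.CI}. First I would dispose of the factor $m$. Since $\MM_{T_{\text{rl}}}(\VV; m)$ produces its $m$ samples by running the reverse generation chain $m$ times independently, its output law is the $m$-fold product of the law of a single generated sample $\vv_{T_{\text{rl}}}$. Because KL divergence tensorizes over independent coordinates, both $\mathcal{D}_{\text{KL}}(\MM_{T_{\text{rl}}}(\VV_0)\|\MM_{T_{\text{rl}}}(\VV_1))$ and its reverse equal $m$ times the corresponding single-sample divergences, pulling $m$ out front and letting me work with $\MM_{T_{\text{rl}}}(\cdot;1)$ thereafter.

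Second, for a single sample the output $\vv_{T_{\text{rl}}}$ is the time-$T_{\text{rl}}$ marginal of the backward Markov chain $\vv_T \to \vv_{T-1} \to \cdots \to \vv_{T_{\text{rl}}}$ with law $p(\vv_T)\prod_{t=T_{\text{rl}}+1}^{T} p_\phi(\vv_{t-1}|\vv_t)$. By the data-processing inequality the KL between the two marginals is at most the KL between the two full-trajectory laws, and I would then apply the chain rule for the KL divergence of two Markov chains in generation order $\vv_T,\vv_{T-1},\ldots,\vv_{T_{\text{rl}}}$; using the Markov property to collapse conditioning on the whole past down to conditioning on $\vv_t$, this gives
\begin{align}
    \mathcal{D}_{\text{KL}}(p_\phi(\vv_{T_{\text{rl}}|\lambda})\|p_\phi(\vv_{T_{\text{rl}}|\bar\lambda})) \leq \mathcal{D}_{\text{KL}}(p(\vv_T)\|p(\vv_T)) + \sum_{t=T_{\text{rl}}+1}^{T}\EE_{\vv_t\sim p_\phi(\vv_{t|\lambda})}\bigl[\mathcal{D}_{\text{KL}}(p_\phi(\vv_{t-1|\lambda}|\vv_t)\|p_\phi(\vv_{t-1|\bar\lambda}|\vv_t))\bigr]. \nonumber
\end{align}
The prior term vanishes because both models share the same uniform, data-independent prior $p(\vv_T)$, so no residual appears at the boundary $t=T$. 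Summing this bound over the two directions $\lambda\in\{0,1\}$ (with $\bar\lambda=1-\lambda$) produces the symmetric structure of the claimed right-hand side, each expectation being taken against the appropriate backward marginal $p_\phi(\vv_{t|\lambda})$.

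Third and finally, I would invoke the dimension-wise conditional independence of Eq.~\eqref{eq.CI}, namely $p_\phi(\vv_{t-1}|\vv_t)=\prod_{i\in[n]}p_\phi(\vv_{t-1}^i|\vv_t)$, which holds for the denoising models trained on both $\VV_0$ and $\VV_1$. Since the KL divergence between two product measures equals the sum of the coordinate-wise KL divergences, each conditional term splits into $\sum_{i=1}^n \mathcal{D}_{\text{KL}}(p_\phi(\vv_{t-1|\lambda}^i|\vv_t)\|p_\phi(\vv_{t-1|\bar\lambda}^i|\vv_t))$, which assembled with the previous two steps yields exactly the stated bound. I expect the main difficulty to be bookkeeping rather than a deep technical step: one must track the correct reference measure throughout the chain rule (the expectation is always against the ``first-argument'' backward marginal $p_\phi(\vv_{t|\lambda})$), confirm that the data-processing step legitimately upper-bounds the released marginal divergence by the trajectory divergence, and verify that the shared prior kills the boundary term so the summation begins cleanly at $t=T_{\text{rl}}+1$.
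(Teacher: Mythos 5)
Your proposal is correct and follows essentially the same route as the paper's proof: tensorization of KL over the $m$ independent samples, the monotonicity (data-processing) inequality to pass from the time-$T_{\text{rl}}$ marginal to the trajectory law, the Markov-chain factorization to obtain the per-step conditional KL terms with the shared uniform prior killing the boundary term, and the dimension-wise conditional independence of Eq.~\eqref{eq.CI} to split each conditional KL into coordinate-wise terms. The only difference is cosmetic ordering—you pull out the factor $m$ first, while the paper treats a single sample and extends to $m$ at the end.
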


\textbf{Step 3: Relating Backward Conditional KL Divergence with Forward Process.}
\begin{lemmaA}\label{lm.conditional_kl_forward}
    Assume the denoising networks trained on $\VV_0$ and $\VV_1$ satisfy Assumption~\ref{as.approximation_error} and Assumption~\ref{as.gap_forward_backward}, we have for any $i \in \{1, 2, ..., n\}$
    \begin{align}
       &\sum_{\lambda \in \{0, 1\}} \EE_{\mathbf{v}_{t}\sim p_{\phi}(\mathbf{v}_{t|\lambda})}[\mathcal{D}_{\text{KL}}(p_{\phi}(\mathbf{v}_{t-1|\lambda}^i|\mathbf{v}_{t|\lambda}) \| p_{\phi}(\mathbf{v}_{t-1|1 - \lambda}^i|\mathbf{v}_{t|1 - \lambda}))] \nonumber \\
       \leq & \sum_{\lambda \in \{0, 1\}} \EE_{\mathbf{v}_{t}\sim q(\mathbf{v}_{t|\lambda})}[\mathcal{D}_{\text{KL}}(q(\mathbf{v}_{t-1|\lambda}^i|\mathbf{v}_{t|\lambda}) \| q(\mathbf{v}_{t-1|1 - \lambda}^i|\mathbf{v}_{t|1 - \lambda}))] + f_1(\gamma_t) + f_2(\tilde{\gamma}_t)
    \end{align}
    where $ f_1(\gamma_t) = \frac{k \cdot (\mu_t^+ \cdot \bar \mu_{t-1}^+)^2}{\bar \mu_t^+ \cdot \mu_t^- \cdot \bar \mu_{t-1}} \cdot \sqrt{2\gamma_t}$, and 
        $f_2(\tilde{\gamma}_t) = 2\log \left(\frac{\mu_t^+ \cdot \bar \mu_{t-1}^+}{\mu_t^- \cdot \bar \mu_{t-1}^-}\right) \cdot \tilde{\gamma}_t.$
\end{lemmaA}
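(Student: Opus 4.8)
The plan is to bound the left-hand side by the right-hand side through two successive substitutions, each controlled by one of the two assumptions. Write $g_\lambda(\vv_t) = \DD_{\text{KL}}(p_\phi(\vv_{t-1|\lambda}^i \mid \vv_t) \| p_\phi(\vv_{t-1|1-\lambda}^i \mid \vv_t))$ for the backward-process integrand and $\tilde g_\lambda(\vv_t) = \DD_{\text{KL}}(q(\vv_{t-1|\lambda}^i \mid \vv_t) \| q(\vv_{t-1|1-\lambda}^i \mid \vv_t))$ for the forward-process integrand. The two sides differ in (i) the measure under which $\vv_t$ is drawn, $p_\phi(\vv_{t|\lambda})$ versus $q(\vv_{t|\lambda})$, and (ii) the conditional laws appearing inside the divergence, $p_\phi(\vv_{t-1}^i\mid\vv_t)$ versus $q(\vv_{t-1}^i\mid\vv_t)$. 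I would control (ii) with Assumption~\ref{as.approximation_error} and (i) with Assumption~\ref{as.gap_forward_backward}, inserting the telescoping intermediate term $\EE_{\vv_t\sim q(\vv_{t|\lambda})}[g_\lambda(\vv_t)]$ so that the two effects decouple; this intermediate term should be compared against $\EE_{\vv_t\sim q(\vv_{t|\lambda})}[\tilde g_\lambda(\vv_t)]$, which is exactly the target.

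For the inner substitution (ii), the key observation is that the conditional independence relation Eq.~\eqref{eq.CI} expresses both $p_\phi(\vv_{t-1}^i\mid\vv_t)$ and $q(\vv_{t-1}^i\mid\vv_t)$ as images of the clean-data predictions $p_\phi(\vv_0^i\mid\vv_t)$ and $q(\vv_0^i\mid\vv_t)$ under the \emph{same} fixed Markov kernel $l\mapsto q(\vv_{t-1}^i\mid\vv_t,\vv_0^i=l)$. Since a Markov kernel is non-expansive in total variation, Assumption~\ref{as.approximation_error} together with Pinsker's inequality yields $\|p_\phi(\vv_{t-1}^i\mid\vv_t)-q(\vv_{t-1}^i\mid\vv_t)\|_{TV}\leq\sqrt{\gamma_t/2}$ for each training set $\VV_\lambda$. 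I would then invoke a perturbation (local Lipschitz) bound for KL divergence in both of its arguments: because the posterior kernel $q(\vv_{t-1}^i\mid\vv_t,\vv_0^i=l)$ has every entry bounded below by a positive quantity of order $\mu_t^-\bar\mu_{t-1}^-/\bar\mu_t^+$ and its entrywise max-to-min ratio is $\mu_t^+\bar\mu_{t-1}^+/(\mu_t^-\bar\mu_{t-1}^-)$, every mixture produced by Eq.~\eqref{eq.CI} is bounded away from zero with a controlled log-ratio. This uniform boundedness converts the total-variation perturbations into the additive error $f_1(\gamma_t)=\frac{k(\mu_t^+\bar\mu_{t-1}^+)^2}{\bar\mu_t^+\mu_t^-\bar\mu_{t-1}}\sqrt{2\gamma_t}$, where the $\sqrt{\gamma_t}$ is the Pinsker factor, the $k$ counts categories, and the $\mu$-ratios come from the min-entry and log-ratio bounds.

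For the outer substitution (i), I would use the elementary change-of-measure inequality $\EE_{p_\phi(\vv_{t|\lambda})}[g_\lambda]-\EE_{q(\vv_{t|\lambda})}[g_\lambda]\leq (\sup_{\vv_t}g_\lambda)\,\|p_\phi(\vv_{t|\lambda})-q(\vv_{t|\lambda})\|_{TV}$, bound the total-variation term by $\tilde\gamma_t$ via Assumption~\ref{as.gap_forward_backward}, and bound $\sup_{\vv_t}g_\lambda$ by noting that $g_\lambda$ is a KL divergence between two mixtures of the same posterior kernels, hence at most the logarithm of that kernel's max-to-min entry ratio, $\log(\mu_t^+\bar\mu_{t-1}^+/(\mu_t^-\bar\mu_{t-1}^-))$; this produces $f_2(\tilde\gamma_t)=2\log(\mu_t^+\bar\mu_{t-1}^+/(\mu_t^-\bar\mu_{t-1}^-))\,\tilde\gamma_t$. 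Summing the two contributions after performing the substitutions for both $\lambda=0,1$ gives the claim. I expect the main obstacle to be the inner substitution (ii): KL is not globally Lipschitz, so the whole argument hinges on establishing sharp two-sided bounds on the entries of the posterior kernel and propagating them through the mixture in Eq.~\eqref{eq.CI}, so that both arguments of the divergence stay uniformly bounded away from $0$; recovering the exact prefactor in $f_1$ requires carefully tracking the normalizing constant of order $\bar\mu_t^+$ and combining the min-entry bound with the Pinsker conversion, and one must also handle the simultaneous perturbation of \emph{both} arguments of the divergence rather than just one.
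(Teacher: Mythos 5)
You take essentially the same approach as the paper: the identical telescoping decomposition through the intermediate term $\EE_{\mathbf{v}_t\sim q(\mathbf{v}_{t|\lambda})}[\mathcal{D}_{\text{KL}}(p_{\phi}(\mathbf{v}_{t-1|\lambda}^i|\mathbf{v}_{t|\lambda})\,\|\,p_{\phi}(\mathbf{v}_{t-1|1-\lambda}^i|\mathbf{v}_{t|1-\lambda}))]$, with the measure swap controlled by Assumption~\ref{as.gap_forward_backward} and a sup bound on the integrand via the log max-to-min entry ratio of the posterior kernel (yielding $f_2$), and the conditional-law swap controlled by Assumption~\ref{as.approximation_error} via Pinsker's inequality plus the two-sided entrywise bounds on the posterior kernel (yielding $f_1$) — which is exactly the content of the paper's Lemma~\ref{lm.posterior_max_min} and Lemma~\ref{lm.forward_backward_gap}, including your handling of the simultaneous perturbation of both KL arguments. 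Your proposal is correct and matches the paper's proof in both structure and constants.
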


\textbf{Step 4: Estimating Expected Measure Gap via Measuring Partition} 
As $\VV_0 \backslash \{\mathbf{v}^*\} = \VV_1$, we consider the following partition of original coupled conditional KL:
\begin{align}
    &\sum_{\lambda \in \{0, 1\}}\sum_{i = 1}^n \EE_{\mathbf{v}_{t}\sim q(\mathbf{v}_{t|\lambda})}[\mathcal{D}_{\text{KL}}(q(\mathbf{v}_{t-1|\lambda}^i|\mathbf{v}_{t|\lambda}) \| q(\mathbf{v}_{t-1|1 - \lambda}^i|\mathbf{v}_{t|1 - \lambda}))] \\
     = & \underbrace{\EE_{\mathbf{v}_t\sim q(\mathbf{v}_{t|0})}\sum_{i = 1}^n[\mathcal{D}_{\text{KL}}(q(\mathbf{v}_{t-1|0}^i|\mathbf{v}_{t|0}) \| q(\mathbf{v}_{t-1|1}^i|\mathbf{v}_{t|1})) + \mathcal{D}_{\text{KL}}(q(\mathbf{v}_{t-1|1}^i|\mathbf{v}_{t|1}) \| q(\mathbf{v}_{t-1|0}^i|\mathbf{v}_{t|0}))]}_{\mathscr{B}_1} \nonumber \\
    &+ \sum_{i = 1}^n\underbrace{\EE_{\mathbf{v}_{t}\sim (q(\mathbf{v}_{t|1}) - q(\mathbf{v}_{t|0}))}[\mathcal{D}_{\text{KL}}(q(\mathbf{v}_{t-1|1}^i|\mathbf{v}_{t|1}) \| q(\mathbf{v}_{t-1|0}^i|\mathbf{v}_{t|0}))]}_{\mathscr{B}_2}
\end{align}

\textbf{For $\bm{\mathscr{B}_1}$}, we can upper bounded coupled KL terms $\sum_{i = 1}^n \mathcal{D}_{\text{KL}}(q(\mathbf{v}_{t-1|0}^i|\mathbf{v}_{t|0}) \| q(\mathbf{v}_{t-1|1}^i|\mathbf{v}_{t|1})) + \mathcal{D}_{\text{KL}}(q(\mathbf{v}_{t-1|1}^i|\mathbf{v}_{t|1}) \| q(\mathbf{v}_{t-1|0}^i|\mathbf{v}_{t|0}))$ as follows:
\begin{lemmaA}[Upper Bounding Coupled Conditional KL]\label{lm.upper_bound_conditional_kl}
    Given adjacent datasets $\VV_0 \backslash \{\mathbf{v}^*\} = \VV_1$ and a specific DDM, we have
    \begin{align}
        & \sum_{i = 1}^n [\mathcal{D}_{\text{KL}}(q(\mathbf{v}_{t-1|0}^i|\mathbf{v}_{t|0}) \| q(\mathbf{v}_{t-1|1}^i|\mathbf{v}_{t|1})) + \mathcal{D}_{\text{KL}}(q(\mathbf{v}_{t-1|1}^i|\mathbf{v}_{t|1}) \| q(\mathbf{v}_{t-1|0}^i|\mathbf{v}_{t|0}))] \leq \Delta_t (\mathbf{v}_t)
    \end{align}
    where $\Delta_t (\mathbf{v}_t)$ is defined as
    $$ \frac{\mathcal{A}_t}{1 + \zeta(\VV_1, \mathbf{v}^*, \mathbf{v}_t, t)} \cdot \sum_{i = 1}^n\log ( 1 + \frac{\mathcal{B}_t}{(\bar R_{t-1}^2 + 1)\zeta(\VV_1^{i|\vv^{*i}}, \mathbf{v}^*, \mathbf{v}_t, t) + \bar{R}_{t-1}/\bar{R}_t \cdot \zeta(\VV_1, \mathbf{v}^*, \mathbf{v}_t, t) + 1})$$
    such that $\zeta(\VV, \mathbf{v}^*, \mathbf{v}_t, t) = \sum_{\mathbf{v}_0 \in \VV}(\bar R_t)^{\bar \omega(\mathbf{v}^*, \mathbf{v}_t) - \bar \omega(\mathbf{v}_0, \mathbf{v}_t)}$, $\mathcal{A}_t = \mu_t^+ \cdot (\bar{\mu}_{t-1}^+ / \bar{\mu}_{t}^+ - \bar{\mu}_{t-1}^- / \bar{\mu}_{t}^-)$, and $\mathcal{B}_t = \bar R_{t-1}^2 - 1$.
\end{lemmaA}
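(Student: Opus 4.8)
The plan is to compute the two single-coordinate reverse conditionals $q(\mathbf{v}_{t-1|\lambda}^i \mid \mathbf{v}_{t|\lambda})$ explicitly and exploit the fact that they differ only through the inclusion of $\mathbf{v}^*$. First I would use the doubly-stochastic forward kernel $\bar Q_t = \OA_t I + (1-\OA_t)\mathbbm{1}\mathbbm{1}^T/k$ to write $q(\mathbf{v}_t \mid \mathbf{v}_0) = (\bar\mu_t^+)^{\,n-\bar\omega(\mathbf{v}_0,\mathbf{v}_t)}(\bar\mu_t^-)^{\,\bar\omega(\mathbf{v}_0,\mathbf{v}_t)}$, so that the Bayes posterior over the clean points of the empirical distribution on $\VV_\lambda$ assigns to $\mathbf{v}_0$ the unnormalized weight $\bar R_t^{-\bar\omega(\mathbf{v}_0,\mathbf{v}_t)}$. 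Summing the per-coordinate posterior $q(\mathbf{v}_{t-1}^i \mid \mathbf{v}_t^i, \mathbf{v}_0^i)$ (obtained from the one-step $Q_t$ and accumulated $\bar Q_{t-1}$ via Bayes) against these weights expresses each conditional as a mixture. Crucially, the normalized weight of $\mathbf{v}^*$ under $\VV_0$ is exactly $\beta := 1/(1 + \zeta(\VV_1, \mathbf{v}^*, \mathbf{v}_t, t))$, and the two conditionals obey the convex-combination identity $q(\mathbf{v}_{t-1|0}^i\mid\mathbf{v}_{t|0}) = \beta\, P^*_i + (1-\beta)\, q(\mathbf{v}_{t-1|1}^i\mid\mathbf{v}_{t|1})$, where $P^*_i := q(\mathbf{v}_{t-1}^i \mid \mathbf{v}_t^i, (\mathbf{v}^*)^i)$ is the single-point conditional contributed by $\mathbf{v}^*$. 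This identity is the backbone of the whole argument.

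With it in hand I would bound the coupled KL one coordinate at a time. Convexity of $\mathcal{D}_{\text{KL}}(\cdot\,\|\,Q_i)$ in its first argument, with $Q_i := q(\mathbf{v}_{t-1|1}^i\mid\mathbf{v}_{t|1})$, gives $\mathcal{D}_{\text{KL}}(q(\mathbf{v}_{t-1|0}^i\mid\cdot)\,\|\,Q_i) \le \beta\, \mathcal{D}_{\text{KL}}(P^*_i \,\|\, Q_i)$, which already produces the clean first-power prefactor $\beta$ that appears out front of $\Delta_t$ as $\tfrac{1}{1+\zeta(\VV_1,\cdot)}$. The reverse direction $\mathcal{D}_{\text{KL}}(Q_i\,\|\,\cdot)$ is of order $\beta^2$: its first-order term vanishes because $\sum_h (P^*_i(h)-Q_i(h))=0$, leaving a $\chi^2$-type remainder divided by $1-\beta$, and I would fold it into the stated coefficient. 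It then remains to show $\mathcal{D}_{\text{KL}}(P^*_i\,\|\,Q_i) \le \mathcal{A}_t \log\!\bigl(1 + \mathcal{B}_t / D_i\bigr)$, where $D_i$ is the stated denominator. Here I would compute $Q_i(h)$ by grouping the points of $\VV_1$ according to their $i$-th coordinate: since $q(\mathbf{v}_{t-1}^i \mid \mathbf{v}_t^i, l)$ depends on $l$ only through whether $l = \mathbf{v}_t^i$, the mixture collapses to weighted sums that are precisely $\zeta(\VV_1^{i\mid\vv^{*i}}, \cdot)$ (points agreeing with $\mathbf{v}^*$ at coordinate $i$, whose kernels place the most mass on $(\mathbf{v}^*)^i$) and $\zeta(\VV_1, \cdot)$. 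The probability ratio $P^*_i(h)/Q_i(h)$ at the dominant category equals $1 + \mathcal{B}_t/D_i$, with $\mathcal{B}_t = \bar R_{t-1}^2 - 1$ arising from the time-$(t-1)$ contrast $\bar\mu_{t-1}^+$ versus $\bar\mu_{t-1}^-$, while $\mathcal{A}_t = \mu_t^+(\bar\mu_{t-1}^+/\bar\mu_t^+ - \bar\mu_{t-1}^-/\bar\mu_t^-)$ emerges as the sensitivity $q(\mathbf{v}_{t-1}^i = \mathbf{v}_t^i \mid \mathbf{v}_t, \mathbf{v}_0^i = \mathbf{v}_t^i) - q(\mathbf{v}_{t-1}^i = \mathbf{v}_t^i \mid \mathbf{v}_t, \mathbf{v}_0^i \neq \mathbf{v}_t^i)$. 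Summing over $i$ reassembles $\Delta_t(\mathbf{v}_t)$.

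The main obstacle is the explicit reduction of $\mathcal{D}_{\text{KL}}(P^*_i\,\|\,Q_i)$ to the single closed form $\log(1 + \mathcal{B}_t/D_i)$. The single-point conditional $P^*_i$ carries mass on three distinct category types ($h = \mathbf{v}_t^i$, $h = (\mathbf{v}^*)^i$, and the remaining $k-2$ values), and the reference mixture $Q_i$ superposes $k$ differently shaped kernels, so a priori the KL is a sum of logarithms over categories. Showing that the contributions away from the dominant category either cancel or can be bounded so that the coefficients collapse to exactly $\mathcal{A}_t$, $\mathcal{B}_t$, and the precise denominator $(\bar R_{t-1}^2+1)\zeta(\VV_1^{i\mid\vv^{*i}},\cdot) + (\bar R_{t-1}/\bar R_t)\zeta(\VV_1,\cdot) + 1$ demands careful bookkeeping of the $\bar\mu_{t-1}^{\pm}$ and $\bar\mu_t^{\pm}$ factors, together with the conversion between the time-$t$ weight exponent $\bar R_t^{-\bar\omega}$ and the time-$(t-1)$ conditional, which is the origin of the $\bar R_{t-1}/\bar R_t$ factor. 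I expect the tightest part to be justifying that $(\mathbf{v}^*)^i$ is the worst-case category and that all normalizations are controlled uniformly in $\mathbf{v}_t$, so that a single logarithm per coordinate genuinely dominates.
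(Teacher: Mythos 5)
Your structural reduction is sound up to a point: the convex-combination identity $q(\mathbf{v}_{t-1|0}^i\mid\mathbf{v}_{t|0}=\mathbf{v}_t)=\beta P_i^*+(1-\beta)\,q(\mathbf{v}_{t-1|1}^i\mid\mathbf{v}_{t|1}=\mathbf{v}_t)$ with $\beta=1/(1+\zeta(\VV_1,\mathbf{v}^*,\mathbf{v}_t,t))$ is correct, and it is indeed the same structure the paper exploits implicitly when it writes the joint-density ratio as $1+1/\sum_{\mathbf{v}_0\in\VV_1}\bar\kappa\,Z_t^i(\mathbf{v}_0)$. However, your bounding strategy has a genuine gap that cannot be repaired along the route you describe. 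The step $\mathcal{D}_{\text{KL}}(q_0\Vert q_1)\le\beta\,\mathcal{D}_{\text{KL}}(P_i^*\Vert Q_i)$ by convexity, followed by the claimed reduction $\mathcal{D}_{\text{KL}}(P_i^*\Vert Q_i)\le\mathcal{A}_t\log(1+\mathcal{B}_t/D_i)$, fails because that last inequality is false. Take $\VV_1$ to consist of $s$ copies of a single point $\mathbf{u}$ with $\mathbf{u}^i\neq\vv^{*i}$: then $P_i^*$ and $Q_i$ are two \emph{fixed, distinct} single-point kernels (whenever $\OA_{t-1}>\OA_t$), so $\mathcal{D}_{\text{KL}}(P_i^*\Vert Q_i)$ is a positive constant independent of $s$; but $\zeta(\VV_1^{i\mid\vv^{*i}},\cdot)=0$ and $D_i\ge(\bar R_{t-1}/\bar R_t)\,\zeta(\VV_1,\cdot)\to\infty$ as $s\to\infty$, so the right-hand side tends to zero. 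The conceptual reason is that the KL divergence between a mixture $\beta P^*+(1-\beta)Q$ and its component $Q$ is \emph{second} order in $\beta$ (exactly as you observe for the reverse direction), so Jensen/convexity, which is only first-order accurate, is irreparably lossy here: after spending the single factor $\beta$, nothing in $\mathcal{D}_{\text{KL}}(P_i^*\Vert Q_i)$ decays with $\zeta$, whereas the lemma's bound requires the residual factor $\log(1+\mathcal{B}_t/D_i)$ to decay like $1/\zeta$. Equivalently, the oscillation of $\log(P_i^*/Q_i)$, which in the counterexample equals $2\log\bar R_{t-1}$, does not shrink as the dataset grows, while the oscillation of $\log(q_0/q_1)=\log\bigl(1+\beta(P_i^*/Q_i-1)\bigr)$ does.

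The paper avoids this by never splitting the two directions and never bounding a KL of $P_i^*$ against $Q_i$ alone. It writes the symmetric (Jeffreys) divergence as the single sum $\sum_{h}\bigl(q_0(h)-q_1(h)\bigr)\log\bigl(q_0(h)/q_1(h)\bigr)$, notes that the marginal-ratio contribution drops out because it is constant in $h$ and the difference sums to zero, and then applies the elementary inequality (Lemma~\ref{lm.l1_ineq}) bounding this by $\tfrac12\Vert q_0-q_1\Vert_{\ell_1}$ times the oscillation of the log joint-ratio. Each factor then carries one power of decay: the $\ell_1$ factor is bounded by $\mathcal{A}_t/(1+\zeta)$ (Lemma~\ref{lm.l1_conditional_gap}), and the oscillation of the \emph{mixture} ratio, with maximum at $h=\vv^{*i}$ and minimum at the strongest competing category, is bounded by $\log(1+\mathcal{B}_t/D_i)$ via the algebraic inequality of Lemma~\ref{lm.inequality}. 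To salvage your argument you would need to keep both directions together and bound the oscillation of $\log(q_0/q_1)$ rather than $\log(P_i^*/Q_i)$; at that point you have essentially reconstructed the paper's proof, and the convex-combination identity serves only as a cleaner notation for the ratio computations.
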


\textbf{Measure Partition.}
Now, we have demonstrated that the coupled conditional KL can be constrained by $\Delta_t(\mathbf{v}_t)$. On the one hand, the magnitude of $\Delta_t(\mathbf{v}_t)$ heavily relies on the disparity between the distances of $(\mathbf{v}_t, \mathbf{v}^*)$ and $(\mathbf{v}_t, \VV_1)$. When $\mathbf{v}_t$ exhibits greater similarity to points in $\VV_1$ (i.e., $\mathbb{E}_{\mathbf{v}_0 \in \VV_1}\bar \omega(\mathbf{v}_t, \mathbf{v}_0) \ll \bar \omega(\mathbf{v}_t, \mathbf{v}_0)$), the resulting privacy leakage is reduced. Otherwise, the privacy leakage will be significant. On the other hand, if $\mathbf{v}^*$ is an anomalous point compared to the points in $\VV_1$, the probability measure $q(\mathbf{v}_{t | 0})$ around $\mathbf{v}^*$ is relatively low, but the privacy leakage remains high. Therefore, in the subsequent analysis, we perform a partition of the probability measure to strike a balance between probability measure and privacy leakage.

We partition probability measure $q(\mathbf{v}_{t | 0})$ as follows:
   \begin{itemize}[leftmargin=3mm]
       \item $q(\mathbf{v}_t\in\mathcal{S}_a):=  q(\mathbf{v}_t: (1)\,\mathbf{v}_t \text{ is diffused from } \mathbf{v}_0\in \VV_0 \text{ s.t. } \bar\omega(\mathbf{v}_0, \mathbf{v}^*) \leq \eta_t \text{ or } (2)\, \bar\omega(\mathbf{v}_t, \mathbf{v}^*) \leq \eta_t)$,
       \item $q(\mathbf{v}_t\in\mathcal{S}_b) :=q(\mathbf{v}_t: (1)\, \mathbf{v}_t \text{ is diffused from } \mathbf{v}_0 \in \VV_0 \quad\text{ s.t. }\quad \bar\omega(\mathbf{v}_0, \mathbf{v}^*) > \eta_t \text{ and }\, (2)\, \exists \mathbf{v}_0' \in \VV_0, \quad\text{ s.t. }\quad \bar\omega(\mathbf{v}_t, \mathbf{v}_0') \leq \bar\omega(\mathbf{v}_t, \mathbf{v}^*) - \eta_t')$,
       \item $q(\mathbf{v}_t\in\mathcal{S}_c) := q(\mathbf{v}_t:  (1)\,\mathbf{v}_t \text{ is diffused from } \mathbf{v}_0 \in \VV_0 \quad\text{ s.t. }\quad \bar\omega(\mathbf{v}_0, \mathbf{v}^*) > \eta_t \text{ and } \, (2)\, \forall \mathbf{v}_0' \in \VV_0 \text{ s.t. }\bar\omega(\mathbf{v}_t, \mathbf{v}_0') > \bar\omega(\mathbf{v}_t, \mathbf{v}^*) - \eta_t')$.
   \end{itemize}
   In this case, $q(\mathbf{v}_t\in\mathcal{S}_a)$ and $q(\mathbf{v}_t\in\mathcal{S}_c)$ represent low-probability events that carry a higher risk of significant privacy disclosure. On the other hand, $q(\mathbf{v}_t\in\mathcal{S}_b)$ corresponds to a significant portion of the probability measure but exhibits minimal privacy leakage.

    According to the above measure partition, we introduce the following lemma to further bound $\mathbb{E}_{\mathbf{v}_t \sim q(\mathbf{v}_{t | 0})}[\Delta_t(\mathbf{v}_t)]$.
    \begin{lemmaA}\label{lm.measure_partition_bound}
        Given adjacent datasets $\VV_0, \VV_1$ and a specific DDM design, we have
        \begin{align}
            \mathbb{E}_{\mathbf{v}_t \sim q(\mathbf{v}_{t | 0})}[\Delta_t(\mathbf{v}_t)] \leq \min \biggl\{ \frac{4N_{(1 + c_t^*)\eta_t}(\mathbf{v}^*)}{s}, 1\biggl\} \cdot \frac{n}{s^{\psi_t}}
        \end{align}
        where $\frac{n}{s^{\psi_t}} = \frac{\mathcal{A}_t}{1 +  \text{Sim}(\VV_1, \mathbf{v}^*, t)} \cdot \sum_{i = 1}^n\log (1+\frac{\mathcal{B}_t}{\bar{R}_{t-1}^2 \cdot \text{Sim}(\VV_1^{i | \vv^{*i}}, \mathbf{v}^*, t)  + \text{Sim}(\VV_1, \mathbf{v}^*, t)+1})$, $\mathcal{A}_t = \mu_t^+ \cdot (\bar{\mu}_{t-1}^+ / \bar{\mu}_{t}^+ - \bar{\mu}_{t-1}^- / \bar{\mu}_{t}^-)$ and $\mathcal{B}_t = \bar{R}_{t-1}^2-1$ and $\eta_t \in \{0, 1, ..., n\}$ and $c_t^* \in \{0, \frac{1}{\eta_t}, \frac{2}{\eta_t}, ..., \frac{n - \eta_t}{\eta_t}\}$ satisfy:
        \begin{align}
            \eta_t \geq \frac{\log\frac{s - N_{\eta_t}(\mathbf{v}^*)}{N_{(1+c_t^*)\eta_t}(\mathbf{v}^*)}}{\log\frac{1}{n(1 - \bar\mu_t^+)}}  + \biggl(\frac{\log \frac{s - N_{\eta_t}(\mathbf{v}^*)}{N_{(1+c_t^*)\eta_t}(\mathbf{v}^*)} + \log (\mathcal{A}_t \cdot \mathcal{B}_t \cdot s^{\psi_t} / \bar R_{t-1}^2)}{2\log \bar R_t}\biggl)_+ - 2 \label{eq.eta_ineq_app}
        \end{align}
        and for a given $\eta_t$, the corresponding $c_t^*$ is calculated in Algorithm~\ref{alg.maintext_c_t^*}.
    \end{lemmaA}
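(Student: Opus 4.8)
The plan is to control $\EE_{\mathbf{v}_t\sim q(\mathbf{v}_{t|0})}[\Delta_t(\mathbf{v}_t)]$ by combining a crude pointwise bound on $\Delta_t$ with a sharper estimate extracted from the three-way measure partition $\mathcal{S}_a,\mathcal{S}_b,\mathcal{S}_c$, and then taking the better of the two. The first ingredient is a \emph{uniform} bound $\Delta_t(\mathbf{v}_t)\le n/s^{\psi_t}$ valid for every $\mathbf{v}_t\in\mathcal{X}^n$. Observe that in Lemma~\ref{lm.upper_bound_conditional_kl} the quantity $\Delta_t$ depends on $\mathbf{v}_t$ only through the similarities $\zeta(\VV_1,\vv^*,\mathbf{v}_t,t)$ and $\zeta(\VV_1^{i|\vv^{*i}},\vv^*,\mathbf{v}_t,t)$, and the triangle inequality for the Hamming count $\bar\omega$ gives $\bar\omega(\mathbf{v}_0,\mathbf{v}_t)\le\bar\omega(\mathbf{v}_0,\vv^*)+\bar\omega(\vv^*,\mathbf{v}_t)$, hence $\zeta(\VV,\vv^*,\mathbf{v}_t,t)\ge\sum_{\mathbf{v}_0\in\VV}\bar R_t^{-\bar\omega(\mathbf{v}_0,\vv^*)}=\text{Sim}(\VV,\vv^*,t)$, with equality at $\mathbf{v}_t=\vv^*$. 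Since $\Delta_t$ is decreasing in each $\zeta$ (both the prefactor $\mathcal{A}_t/(1+\zeta)$ and each logarithm shrink as $\zeta$ grows), every $\zeta$ is simultaneously minimised at $\mathbf{v}_t=\vv^*$; using in addition $\bar R_{t-1}\ge\bar R_t$ to pass from the denominator of $\Delta_t(\vv^*)$ to the cleaner denominator $\bar R_{t-1}^2\,\text{Sim}(\VV_1^{i|\vv^{*i}},\vv^*,t)+\text{Sim}(\VV_1,\vv^*,t)+1$ yields exactly the stated $n/s^{\psi_t}$. This already gives the branch $\EE[\Delta_t]\le n/s^{\psi_t}$ of the minimum.

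For the sharper branch I split $\EE[\Delta_t]=\EE[\Delta_t\mathbbm{1}_{\mathcal{S}_a}]+\EE[\Delta_t\mathbbm{1}_{\mathcal{S}_b}]+\EE[\Delta_t\mathbbm{1}_{\mathcal{S}_c}]$ according to the partition of $q(\mathbf{v}_{t|0})$ introduced above, writing $\eta_t'=c_t^*\eta_t$ so that the combined radius is $(1+c_t^*)\eta_t=\eta_t+\eta_t'$. On $\mathcal{S}_a$ and $\mathcal{S}_c$ I simply invoke the uniform bound, contributing $q(\mathbf{v}_t\in\mathcal{S}_a)\,n/s^{\psi_t}$ and $q(\mathbf{v}_t\in\mathcal{S}_c)\,n/s^{\psi_t}$. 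On $\mathcal{S}_b$ the defining property is that some $\mathbf{v}_0'\in\VV_0$ satisfies $\bar\omega(\mathbf{v}_t,\mathbf{v}_0')\le\bar\omega(\mathbf{v}_t,\vv^*)-\eta_t'$, so the corresponding term of $\zeta(\VV_1,\vv^*,\mathbf{v}_t,t)$ is at least $\bar R_t^{\eta_t'}$; feeding $\zeta\ge\bar R_t^{\eta_t'}$ into the $\Delta_t$ formula and using $\log(1+x)\le x$ bounds $\Delta_t$ on $\mathcal{S}_b$ by $n(\OA_{t-1}-\OA_t)/[(k\bar\mu_t^+\bar\mu_t^-)\,\bar R_{t-1}^2\,\bar R_t^{2\eta_t'}]$, an exponentially small factor in $\eta_t'$. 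The remaining work is to estimate the measures $q(\mathbf{v}_t\in\mathcal{S}_b)$ and $q(\mathbf{v}_t\in\mathcal{S}_c)$, which I bound by counting the dataset points outside the $\eta_t$-ball of $\vv^*$ and controlling the per-point diffusion tail $q(\bar\omega(\mathbf{v}_t,\mathbf{v}_0)=r)$ through the accumulated transition probabilities $\bar\mu_t^\pm$.

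I then choose the radii $\eta_t$ and $\eta_t'=c_t^*\eta_t$ just large enough that the $\mathcal{S}_b$ and $\mathcal{S}_c$ contributions are each absorbed into the $\mathcal{S}_a$ contribution, collapsing the sum to $\EE[\Delta_t]\le 2\,q(\mathbf{v}_t\in\mathcal{S}_a)\,n/s^{\psi_t}$; this is precisely the content of condition~\eqref{eq.eta_ineq_app}, whose two summands encode the two competing requirements. Morally, the first summand with denominator $\log\frac{1}{n(1-\bar\mu_t^+)}$ forces the diffusion tail governing $q(\mathbf{v}_t\in\mathcal{S}_c)$ to be small relative to the counting ratio $\vartheta$, while the second, $(\cdot)_+$ summand with denominator $2\log\bar R_t$ makes the factor $\bar R_t^{-2\eta_t'}$ in the $\mathcal{S}_b$ bound beat the same ratio. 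Finally I bound $q(\mathbf{v}_t\in\mathcal{S}_a)\le 2N_{(1+c_t^*)\eta_t}(\vv^*)/s$: the mass landing within $\eta_t$ of $\vv^*$ comes predominantly from sources inside the $(1+c_t^*)\eta_t$-ball (there are $N_{(1+c_t^*)\eta_t}(\vv^*)$ of them), and the choice of $c_t^*$ in Algorithm~\ref{alg.maintext_c_t^*} — with its two regimes governed by comparing $1/\vartheta(2\eta_t)$ against $(2e\bar\mu_t^-)^{\eta_t}$ — guarantees that the tail from farther sources is at most the near contribution, accounting for the factor $2$. Combining gives $\EE[\Delta_t]\le 4N_{(1+c_t^*)\eta_t}(\vv^*)/s\cdot n/s^{\psi_t}$, and taking the minimum with the first branch finishes the proof.

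The main obstacle is the balancing in the third step: producing the clean closed-form condition~\eqref{eq.eta_ineq_app} requires carefully matching the exponential decay rate $\bar R_t^{-2\eta_t'}$ of the $\mathcal{S}_b$ estimate and the diffusion-tail decay controlling $q(\mathbf{v}_t\in\mathcal{S}_c)$ against the polynomial counting factor $\vartheta(\eta_t)$, all while keeping the prefactors $\mathcal{A}_t,\mathcal{B}_t$ and the transition ratios $\bar R_t,\bar\mu_t^\pm$ explicit so that the final bound stays genuinely data-dependent rather than collapsing to a worst-case estimate. The companion bound $q(\mathbf{v}_t\in\mathcal{S}_a)\le 2N_{(1+c_t^*)\eta_t}(\vv^*)/s$, with the two-regime selection of $c_t^*$, is a secondary but equally delicate tail computation.
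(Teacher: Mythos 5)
Your proposal follows essentially the same route as the paper's proof: the same uniform bound $\Delta_t(\mathbf{v}_t) \leq n/s^{\psi_t}$ obtained from $\zeta(\cdot,\mathbf{v}^*,\mathbf{v}_t,t) \geq \text{Sim}(\cdot,\mathbf{v}^*,t)$ (with equality at $\mathbf{v}_t=\mathbf{v}^*$), the same $\mathcal{S}_a/\mathcal{S}_b/\mathcal{S}_c$ partition with the margin-$\eta_t'$ estimate $\zeta \geq \bar R_t^{\eta_t'}$ on $\mathcal{S}_b$, the same balancing of the $\mathcal{S}_b$ and $\mathcal{S}_c$ contributions against $\frac{N_{(1+c_t^*)\eta_t}(\mathbf{v}^*)}{s}\cdot\frac{n}{s^{\psi_t}}$ that yields condition~\eqref{eq.eta_ineq_app}, and the same two-regime selection of $c_t^*$ in Algorithm~\ref{alg.maintext_c_t^*} giving $q(\mathbf{v}_t\in\mathcal{S}_a) \leq \frac{2N_{(1+c_t^*)\eta_t}(\mathbf{v}^*)}{s}$, hence the factor $4$ and the minimum with $1$. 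Your reading of the two summands of~\eqref{eq.eta_ineq_app} (the $\mathcal{S}_c$ diffusion-tail condition and the $(\cdot)_+$ condition for $\mathcal{S}_b$) matches the paper's derivation step for step.
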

    As shown from the lemma that when $(1 + c_t^*)\eta_t$ is small, we will have more accurate estimate of privacy leakage. Thus, we select the smallest $\eta_t$ that satisfy the above inequality which are monotone functions on both sides and we select $c_t^*$ according to Algorithm~\ref{alg.maintext_c_t^*}. Also, we observe that $(n, 0)$ is a natural solution to the above inequalities, therefore, the feasible set of above inequalities is not empty. Further, Eq.~\eqref{eq.eta_ineq_app} is a \textbf{weaker condition} compared with Eq.~\eqref{eq.data_quantities_inequality} in Main Theorem~\ref{thm.main}. We further relax this condition in Appendix~\ref{app.weaker_condition} to more precisely calculate the privacy bound.

    Now that we have the upper bound estimation for $\mathscr{B}_1$, we consider $\mathscr{B}_2$ where the measure around point $\mathbf{v}^*$ can be nearly zero when $\mathbf{v}^*$ is a outlier point compared to $\VV_1$.

    \textbf{For $\bm{\mathscr{B}_2}$}, differently from previous analysis, we can directly upper bounded expected KL term $\EE_{\mathbf{v}_{t}\sim (q(\mathbf{v}_{t|1}) - q(\mathbf{v}_{t|0}))}[\mathcal{D}_{\text{KL}}(q(\mathbf{v}_{t-1|1}^i|\mathbf{v}_{t|1}) \| q(\mathbf{v}_{t-1|0}^i|\mathbf{v}_{t|0}))]$ as follows:
    \begin{lemmaA}\label{lm.second_bound}
        Given adjacent datasets $\VV_0, \VV_1$ and a specific DDM design, we have
        \begin{align}
            &\EE_{\mathbf{v}_{t}\sim (q(\mathbf{v}_{t|1}) - q(\mathbf{v}_{t|0}))}[\mathcal{D}_{\text{KL}}(q(\mathbf{v}_{t-1|1}^i|\mathbf{v}_{t|1}) \| q(\mathbf{v}_{t-1|0}^i|\mathbf{v}_{t|0}))]\\ 
            \leq & \frac{\mathbb{P}(\vv_{t|0}\in \Omega)}{s(s+1)} \cdot \biggl[(1-\frac{\mu_t^+\bar{\mu}_{t-1}^+}{\bar{\mu}_{t}^+}) \cdot (1- \frac{1}{\bar R_{t-1}}) + \frac{\mu_t^+\bar{\mu}_{t-1}^+}{\bar{\mu}_{t}^+} \cdot (1 - \frac{\bar R_{t}}{\bar R_{t-1}})\biggl]\\
            \leq & \frac{\varrho_t}{s^2} \cdot \mathbb{P}(\vv_{t|0}\in \Omega) \leq \frac{n(1 - \frac{1}{\bar R_{t-1}})}{s^2}
        \end{align}
        where $\Omega = \{\vv_t | \zeta(\VV_1, \mathbf{v}^*, \mathbf{v}_t, t) >s\}$ and $\varrho_t = \biggl[(1-\frac{\mu_t^+\bar{\mu}_{t-1}^+}{\bar{\mu}_{t}^+}) \cdot (1- \frac{1}{\bar R_{t-1}}) + \frac{\mu_t^+\bar{\mu}_{t-1}^+}{\bar{\mu}_{t}^+} \cdot (1 - \frac{\bar R_{t}}{\bar R_{t-1}})\biggl]$.
    \end{lemmaA}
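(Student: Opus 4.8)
The plan is to reduce the signed-measure expectation to a single-point contribution and then exploit that $\vv^*$ carries vanishing posterior weight on $\Omega$. First I would use that each marginal is the empirical mixture $q(\mathbf{v}_{t|\lambda}) = \frac{1}{|\VV_\lambda|}\sum_{\mathbf{v}_0\in\VV_\lambda}q(\mathbf{v}_t|\mathbf{v}_0)$. Since $\VV_0 = \VV_1\cup\{\vv^*\}$ with $|\VV_1|=s$, a one-line computation gives $q(\mathbf{v}_{t|1}) - q(\mathbf{v}_{t|0}) = \frac{1}{s+1}\left(q(\mathbf{v}_{t|1}) - q(\mathbf{v}_t|\vv^*)\right)$, the diffusion of the single removed point. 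Using the accumulated forward kernel $q(\mathbf{v}_t|\mathbf{v}_0) = (\bar\mu_t^+)^{n-\bar\omega(\mathbf{v}_0,\mathbf{v}_t)}(\bar\mu_t^-)^{\bar\omega(\mathbf{v}_0,\mathbf{v}_t)}$ one finds $q(\mathbf{v}_t|\mathbf{v}_0)/q(\mathbf{v}_t|\vv^*) = \bar R_t^{\,\bar\omega(\vv^*,\mathbf{v}_t)-\bar\omega(\mathbf{v}_0,\mathbf{v}_t)}$, so summing over $\VV_1$ yields the key identity $\zeta(\VV_1,\vv^*,\mathbf{v}_t,t) = s\,q(\mathbf{v}_{t|1})/q(\mathbf{v}_t|\vv^*)$. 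This shows the signed measure is positive precisely on $\Omega=\{\zeta>s\}$; since the integrand is a nonnegative KL, I would discard the $\mathbf{v}_t\notin\Omega$ part and retain only the integral over $\Omega$, which is where the factor $\mathbb{P}(\cdot\in\Omega)$ enters.

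Next I would quantify the per-feature divergence $g^i(\mathbf{v}_t):=\mathcal{D}_{\text{KL}}(q(\mathbf{v}_{t-1|1}^i|\mathbf{v}_t)\,\|\,q(\mathbf{v}_{t-1|0}^i|\mathbf{v}_t))$ on $\Omega$. By Bayes' rule together with the conditional-independence factorization in Eq.~\eqref{eq.CI}, the two reverse posteriors differ only through the clean-data posterior, and a direct computation gives $q(\mathbf{v}_{0|0}^i|\mathbf{v}_t) = (1-w)\,q(\mathbf{v}_{0|1}^i|\mathbf{v}_t) + w\,\mathbbm{1}[\cdot=\vv^{*i}]$ with weight $w = 1/(\zeta+1)$, and the same convex combination carries over to $q(\mathbf{v}_{t-1}^i|\mathbf{v}_t)$. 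On $\Omega$ we have $\zeta>s$, hence $w<1/(s+1)$: the removed point is a vanishing share of the mixture, which is the structural reason the term is $\mathcal{O}_s(1/s^2)$ rather than $\mathcal{O}_s(1/s)$. The main obstacle is turning this small weight into the explicit bracket $\varrho_t$. Writing the two posteriors out with the one-step kernels and splitting according to whether the reverse step keeps $\mathbf{v}_{t-1}^i$ fixed (probability $\beta_t := \mu_t^+\bar\mu_{t-1}^+/\bar\mu_t^+$) or not (probability $1-\beta_t$), I would bound $g^i(\mathbf{v}_t)\le w\,\varrho_t$, where the two summands of $\varrho_t$ are exactly the maximal posterior shift on the two events, measured through the accumulated ratios $\bar R_{t-1}$ and $\bar R_t$. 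Keeping this bracket exact, rather than collapsing to the crude $g^i\le w/(1-w)=1/\zeta$, is the delicate part, since the dependence on $\beta_t,\bar R_{t-1},\bar R_t$ is what later surfaces the schedule dependence of the bound.

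Finally I would integrate $w\,\varrho_t$ against the positive part $\frac{1}{s+1}\left(q(\mathbf{v}_{t|1}) - q(\mathbf{v}_t|\vv^*)\right)$ over $\Omega$. Re-expressing $q(\mathbf{v}_{t|1})$ and $q(\mathbf{v}_t|\vv^*)$ through $q(\mathbf{v}_{t|0}) = \frac{1}{s+1}\left(s\,q(\mathbf{v}_{t|1}) + q(\mathbf{v}_t|\vv^*)\right)$ and $w=1/(\zeta+1)$, the per-point contribution becomes $\frac{\varrho_t(\zeta-s)}{s(\zeta+1)^2}\,q(\mathbf{v}_{t|0})$, and the elementary estimate $\frac{\zeta-s}{(\zeta+1)^2}\le\frac{1}{4(s+1)}\le\frac{1}{s+1}$ valid on $\Omega$ collapses the prefactors to $\frac{1}{s(s+1)}$, giving the first claimed inequality. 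The remaining inequalities are routine: $\frac{1}{s(s+1)}\le\frac{1}{s^2}$ yields the $\varrho_t/s^2$ form, and then $\mathbb{P}(\cdot\in\Omega)\le1$ together with $\varrho_t\le 1-\frac{1}{\bar R_{t-1}}$ gives the final $n(1-\frac{1}{\bar R_{t-1}})/s^2$ bound (the factor $n$ being a harmless over-estimate, $n\ge1$, that also absorbs the later summation over the $n$ features). The bound $\varrho_t\le 1-\frac{1}{\bar R_{t-1}}$ itself holds because $\bar R_t\ge1$ forces $1-\frac{\bar R_t}{\bar R_{t-1}}\le 1-\frac{1}{\bar R_{t-1}}$, so the convex combination $(1-\beta_t)(1-\frac{1}{\bar R_{t-1}})+\beta_t(1-\frac{\bar R_t}{\bar R_{t-1}})$ defining $\varrho_t$ is dominated by $1-\frac{1}{\bar R_{t-1}}$.
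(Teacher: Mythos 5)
Your proposal is sound and reaches all three stated inequalities, but it routes the key estimate differently from the paper. The first stage coincides: your identity $q(\mathbf{v}_{t|1}) - q(\mathbf{v}_{t|0}) = \frac{1}{s+1}\bigl(q(\mathbf{v}_{t|1}) - q(\mathbf{v}_t|\vv^*)\bigr)$ together with $\zeta = s\, q(\mathbf{v}_{t|1})/q(\mathbf{v}_t|\vv^*)$ is exactly the paper's computation $1 - q(\mathbf{v}_{t|0})/q(\mathbf{v}_{t|1}) = \frac{1}{s+1}(1 - s/\zeta) \leq \frac{1}{s+1}\mathbbm{1}_{\vv_t\in\Omega}$, just expressed through the forward kernel. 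The divergence is in bounding the conditional KL on $\Omega$: the paper expands both reverse posteriors via the $\tau(\vv_0^i,\vv_{t-1}^i,\vv_t^i)$ weights and bounds the log-ratio term by term using the $\kappa$-ratio case table of Appendix~\ref{app.tau}, splitting $\vv_t^i=\vv^{*i}$ versus $\vv_t^i\neq\vv^{*i}$ and taking the larger of the two resulting brackets (that maximum is where $\varrho_t$ comes from), whereas you exploit the contamination structure $q(\mathbf{v}_{t-1|0}^i|\vv_t) = (1-w)\,q(\mathbf{v}_{t-1|1}^i|\vv_t) + w\,K_{\vv^{*i}}$ with $w=1/(\zeta+1)$ and $K_{\vv^{*i}} := q(\mathbf{v}_{t-1}^i|\vv_t,\mathbf{v}_0^i=\vv^{*i})$, and bound the KL of a distribution against its own $w$-contamination. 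Your route is more modular and makes the $1/s^2$ rate structurally transparent (one $1/s$ from the signed measure, one from $w<1/(s+1)$ on $\Omega$), while the paper's case analysis produces the exact bracket $\varrho_t$ with less abstraction. Two caveats on your middle step. First, the per-point bound you assert, $g^i\leq w\varrho_t$, is slightly stronger than what the natural estimate yields: a chord bound on $-\log(1+w(r-1))$ over $\{r<1\}$, with $r = K_{\vv^{*i}}/P$ and $P = q(\mathbf{v}_{t-1|1}^i|\vv_t)$, gives $g^i \leq (-\log(1-w))\,\|P - K_{\vv^{*i}}\|_{TV} \leq \frac{w}{1-w}\,\|P - K_{\vv^{*i}}\|_{TV}$; the needed fact $\|P - K_{\vv^{*i}}\|_{TV}\leq\varrho_t$ does hold, since by convexity of TV it suffices to check each mixture component $K_l$, and a direct computation shows $\|K_l - K_{\vv^{*i}}\|_{TV}$ equals either $\frac{\mu_t^+\bar\mu_{t-1}^+}{\bar\mu_t^+}\bigl(1-\frac{\bar R_t}{\bar R_{t-1}}\bigr)$ or $\frac{\mu_t^-\bar\mu_{t-1}^+}{\bar\mu_t^-}\bigl(1-\frac{1}{\bar R_{t-1}}\bigr)$, both of which are at most $\varrho_t$. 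The extra factor $\frac{1}{1-w}\leq\frac{s+1}{s}$ is harmless because your prefactor estimate $\frac{\zeta-s}{(\zeta+1)^2}\leq\frac{1}{4(s+1)}$ carries a factor of four in reserve, so the chain still closes at $\frac{\varrho_t}{s(s+1)}\mathbb{P}(\vv_{t|0}\in\Omega)$. Second, do not replace this with the tempting convexity shortcut $\mathcal{D}_{\text{KL}}(P\|(1-w)P+wK_{\vv^{*i}})\leq w\,\mathcal{D}_{\text{KL}}(P\|K_{\vv^{*i}})$: the reference divergence $\mathcal{D}_{\text{KL}}(P\|K_{\vv^{*i}})$ is \emph{not} bounded by $\varrho_t$ (it can grow like $\log(\bar R_{t-1}/\bar R_t)$ near the noise-free regime), so the total-variation route you sketch is the one that must be carried out.
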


Summarizing the above results, we reach the final result
    \begin{align}
        \small{\delta(\VV_0,\mathbf{v}^*)\leq m\biggl[\underbrace{ \sum_{t = T_{\text{rl}}}^{T}\min \biggl\{\frac{4N_{(1+c_t^*)\eta_t}(\mathbf{v}^*)}{s}, 1 \biggl\} \cdot \frac{n}{s^{\psi_t}} + \frac{\varrho_t}{s^2}}_{\text{Main Privacy Term}} 
        + \underbrace{\mathcal{O}\biggl(\sqrt{\gamma_t} + \tilde{\gamma}_t\biggl)\vphantom{\Biggl[\frac{a}{b}\Biggl]}}_{\text{Error Term}}\biggl] / (\epsilon(1 - e^{-\epsilon}))}
    \end{align}

\section{Relaxed Conditions on $\eta_t$ and $c_t^*$} \label{app.weaker_condition}
In this section, we further elaborate data-dependent quantities $c_t^*$ and $\eta_t$ and present a weaker conditions that are less restrictive than Eq.~\ref{eq.data_quantities_inequality}. 

\subsection{Relaxed Condition on $\eta_t$}
As described in proof sketch that we partition the measure $q(\vv_{t | 0})$ into three sets $\mathcal{S}_a$, $\mathcal{S}_b$ and $\mathcal{S}_c$ where we introduced $\eta_t$ and $\eta_t'$ that define the points that are close to $\vv_t^*$ within $\eta_t$-ball and the positive margin that whether a specific point are more close to $\vv^*$ than points in $\VV_1$ by this margin. By balancing the privacy leakage on the these three sets w.r.t their measure, we obtain the relaxed condition for $\eta_t$. Define $\vartheta'(\eta_t, \eta_t') = (s - N_{\eta_t}(\mathbf{v}^*)) / (N_{\eta_t'}(\mathbf{v}^*))$ and recall that $\varphi_t = \mathcal{A}_t \cdot \mathcal{B}_t \cdot s^{\psi_t}$, we have
\begin{align}
        \eta_t \geq \kappa^*  + \biggl(\frac{\log\vartheta'(\eta_t, (1 + c_t^*)\eta_t)) + \log\varphi_t}{2\log\frac{\mu^+_t}{\mu^-_t}}\biggl)_+ - 2 \label{eq.eta_inequality}
\end{align}
where $\kappa^* = \argmin_{\kappa \in \{0, 1, ..., n\}}\{\mathbb{P}(\bar \omega(\vv_t, \vv_0) \geq \kappa) \leq \frac{N_{(1+c_t^*)\eta_t}(\mathbf{v}^*)}{s - N_{\eta_t}(\mathbf{v}^*)}\} = \argmin_{\kappa \in \{0, 1, ..., n\}}\{\sum_{j = \kappa}^n \binom{n}{j}(1 - \bar \mu_t^+)^j(\bar \mu_t^+)^{n - j} \leq \frac{N_{(1+c_t^*)\eta_t}(\mathbf{v}^*)}{s - N_{\eta_t}(\mathbf{v}^*)} \}$. For technical details, we refer readers to the proof of Lemma~\ref{lm.measure_partition_bound}.

\subsection{Relaxed condition for $c_t^*$}\label{app.subsec.c_t^*}
We introduce the data-dependent quantity $c_t^*$ to characterize $q(\vv_t \in \mathcal{S}_a)$, which subsequently aids in the privacy balancing process. Based on the definition of $\mathcal{S}_a$, we further examine the measure on $\vv_t$ stemming from points both within the $\eta_t$-ball and those external to it. Specifically,
\begin{align}
    q(\mathbf{v}_t\in\mathcal{S}_a)
    = & q(\mathbf{v}_t: (1)\,\mathbf{v}_t \text{ is diffused from } \mathbf{v}_0\in \VV_0 \text{ s.t. } \bar\omega(\mathbf{v}_0, \mathbf{v}^*) \leq \eta_t \text{ or } (2)\, \bar\omega(\mathbf{v}_t, \mathbf{v}^*) \leq \eta_t)\\
    \leq & q(\mathbf{v}_t; \mathbf{v}_t \text{ is diffused from } \mathbf{v}_0 \in \VV_0 \text{ s.t. } \bar\omega(\mathbf{v}_0, \mathbf{v}^*)\leq \eta_t) \nonumber \\
        +& q(\mathbf{v}_t; \mathbf{v}_t \text{ is diffused from } \mathbf{v}_0 \in \VV_0 \text{ s.t. } \bar\omega(\mathbf{v}_0, \mathbf{v}^*) \in[\eta_t + 1, (1+c_t^*)\eta_t], \bar\omega(\mathbf{v}_t, \mathbf{v}^*) \leq \eta_{t})\nonumber \\
        +& q(\mathbf{v}_t; \mathbf{v}_t \text{ is diffused from } \mathbf{v}_0 \in \VV_0 \text{ s.t. } \bar\omega(\mathbf{v}_0, \mathbf{v}^*) \geq (1+c_t^*)\eta_t + 1, \bar\omega(\mathbf{v}_t, \mathbf{v}^*) \leq \eta_{t})
\end{align}
To this end, we improve the algorithm ($\texttt{Alg}_{c_t^*}$) with weaker condition than Eq.~\eqref{eq.data_quantities_inequality} (RIGHT) to directly calculate parameter $c_t^*$ given specific $\eta_t$ and particular model design. Recall that $\vartheta(\eta_t) = (s - N_{\eta_t}(\mathbf{v}^*)) / N_{\eta_t}(\mathbf{v}^*)$.

\subsection{Comparison of Original and Enhanced Conditions for Data-Dependent Quantities}
\begin{figure}[h]
\centering
\includegraphics[width=0.45\textwidth]{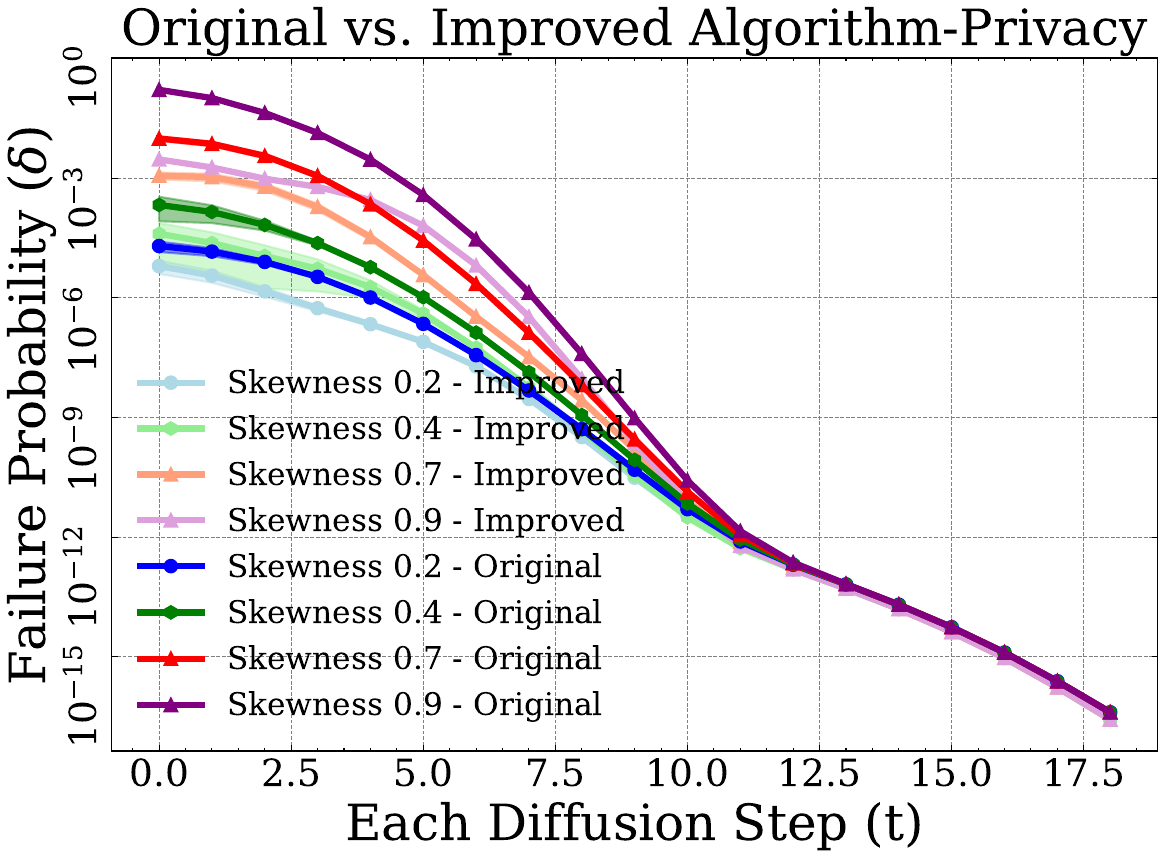}
\hfill\vline\hfill
\includegraphics[width=0.48\textwidth]{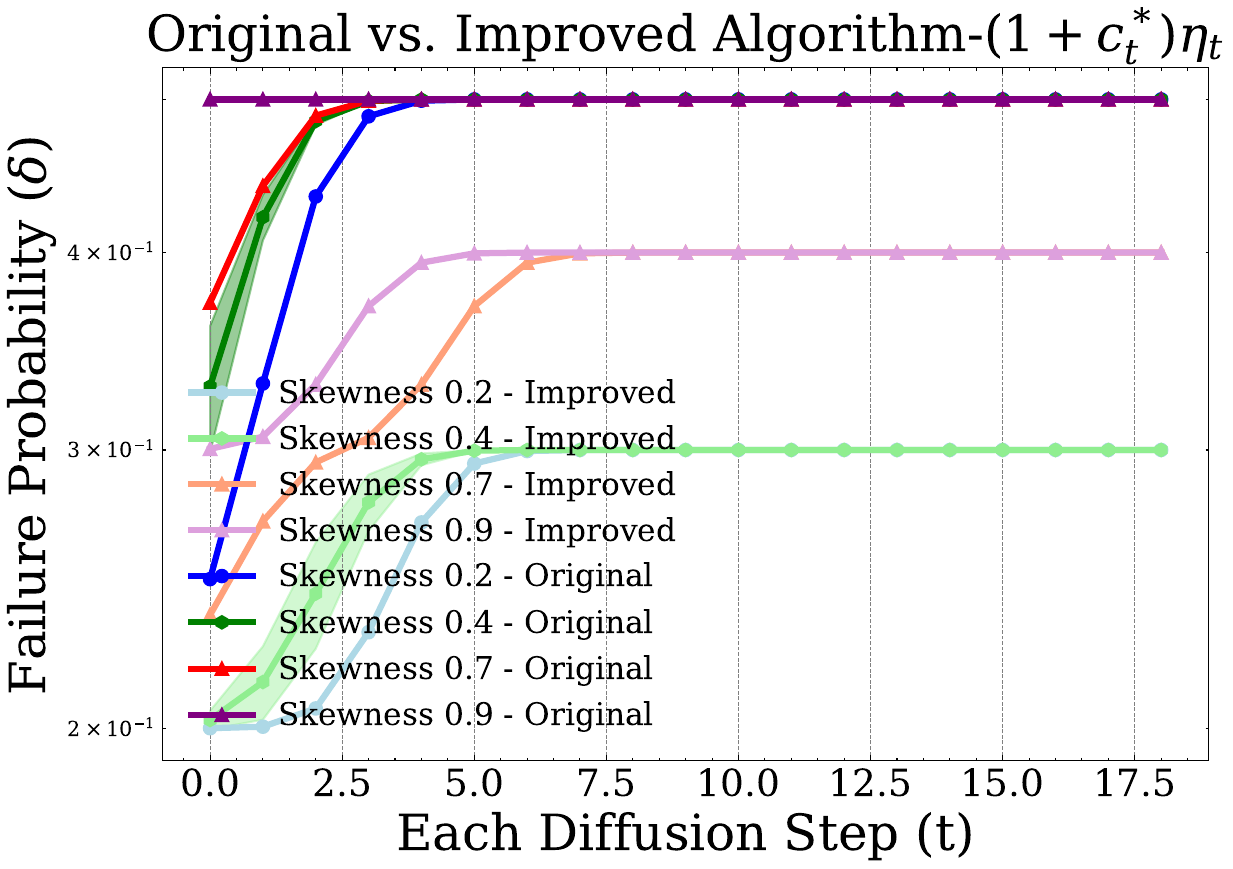}
\caption{\small{Comparison between two algorithms in each diffusion step (Linear Schedule): \textbf{LEFT:} Characterization of Privacy Leakage (Main Privacy Term). \textbf{Right:} Characterization of $(1 + c_t^*)\eta_t$. Experimental Setup: Given specific DDM design $k = 5, n = 5, T = 20, \epsilon = 1$ trained on dataset with $s = 30000$ following skewed distribution with various skewness parameter. We consider a fixed $\mathbf{v}^*$ where each column has a non-majority category. Data-dependent quantities are computed at each generation step based on 5 times independent tests.}\label{fig.alg_compare}}
\end{figure}

In this section, we compare original conditions (Eq.~\eqref{eq.data_quantities_inequality}) and improved conditions (Eq.~\eqref{eq.eta_inequality}). We detail the selection of $(1+c_t^*)\eta_t$ alongside the privacy leakage observed at each diffusion step for various skewed distributions, as depicted in Fig.~\ref{fig.alg_compare}. Evident from the figure, the lines of a lighter shade, denoting the enhanced conditions, are notably lower than their darker-shaded counterparts. This underscores the enhanced algorithm's marked advantage over the original.

    
    
\section{Time Complexity of The Privacy Bound Algorithm for DDMs} \label{app.alg}
In this section, further analyze the total time complexity of our privacy bound algorithm for DDMs. The total time complexity is $O(s \times T \times n \log n)$, where $s$ represents the number of samples for which we aim to assess privacy, $T$ denotes the total diffusion steps, and $n$ is the feature dimension. The most time-intensive part of our algorithm involves selecting the radius (lines 9 in Algorithm~\ref{alg.maintext_main}), where determining the radius coefficient for each sample at each diffusion step for a given $\eta$ requires $O(\log n)$ time through binary search.

\section{Lower Bound on pDP}\label{app.lower_bound}
In this section, we delve into the lower bound of pDP. Specifically, we examine datasets with one or two dimensions ($n = 1, 2$), and our findings illustrate the tightness of privacy bound in relation to dataset size $s$, exhibiting an order of $\mathcal{O}_s(\frac{1}{s})$. Our discussion will mainly focus on two-dimensional case as we only have exchange of dimensional features in the generation process (due to conditional independence property) for $n \geq 2$.

First, we introduce several diffusion-coefficient based quantities: 
\begin{align}
    C_{1, t} = \frac{\mu_t^+ \cdot \bar \mu_{t-1}^+}{\bar \mu_t^+}, C_{2, t} = \frac{\mu_t^- \cdot \bar \mu_{t-1}^-}{\bar \mu_t^+}, \tilde{C}_{1, t} = \frac{\mu_t^+ \cdot \bar \mu_{t-1}^-}{\bar \mu_t^-}, \tilde{C}_{2, t} = \frac{\mu_t^- \cdot \bar \mu_{t-1}^+}{\bar \mu_t^-}.
\end{align}
These quantities represent the values of $q(\mathbf{v}_{t-1}^i | \mathbf{v}_{t}^i, \mathbf{v}_{0}^i)$ for any $i \in [n]$ across various combinations of $\mathbf{v}_{t-1}^i$, $\mathbf{v}_{t}^i$, and $\mathbf{v}_{0}^i$.

\begin{theoremD}[Lower Bound on Inherent pDP Guarantees for DDMs for $n = 2$]\label{thm.lower_bound_two_dimension}
    Assume the denoising networks are perfectly trained ($\gamma_t$ is negligible). Given a diffusion model architecture design $(\{\alpha_t\}_{t \in [T]})$, there exist adjacent datasets $\VV_0, \VV_1 = \VV_0 \backslash \{\vv^*\}$ with feature dimension $n = 2$. For a sufficiently large $s$, the mechanism $\mathcal{M}_0(\cdot; 1)$ will not satisfies $(\epsilon, \delta)$-pDP with respect to $(\VV_0, \vv^*)$ for
    \begin{align}
        \epsilon = \log \biggl( 1 + \frac{\tilde{G}_1 + \tilde{F}_1\tilde{G}_2 + \ldots \tilde{F}_1 \tilde{F}_2 \ldots \tilde{F}_{T-2}\tilde{G}_{T-1}}{2 \cdot (1 + \bar R_1^2 \cdot \Delta)}\biggl)
    \end{align} and for any
    \begin{align}
        \delta < \frac{\tilde{G}_1 + \tilde{F}_1\tilde{G}_2 + \ldots \tilde{F}_1 \tilde{F}_2 \ldots \tilde{F}_{T-2}\tilde{G}_{T-1}}{s}.
    \end{align}
    where $\tilde{G}_1 = \frac{\bar R_1^2}{s}, \tilde{G}_t = \frac{2(\tilde{C}_{2, t} - C_{2, t})}{\bar R_t^2}, \tilde{F}_t = C_{1, t} \cdot (C_{1, t} - \tilde{C}_{1, t})$ and $\tilde{\Delta} = \min_{t} \frac{C_{2, t} + \tilde{C}_{1, t} + 2 \tilde{C}_{1, t} \cdot C_{2, t}}{4}$.
\end{theoremD}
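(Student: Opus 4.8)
The plan is to exhibit, for the explicit binary ($k=2$) adjacent pair from the main text, a single output event on which the pDP inequality provably fails, and to compute the two generation probabilities precisely enough (to leading order in $1/s$) to read off the claimed $\epsilon$ and $\delta$. Concretely, I take $\VV_0$ to consist of $s-1$ copies of $[0,0]^T$ together with two copies of $\vv^*=[1,1]^T$, and $\VV_1 = \VV_0 \setminus \{\vv^*\}$, so that $\VV_0$ carries empirical weight $2/(s+1)$ on $\vv^*$ while $\VV_1$ carries weight $1/s$. I choose the distinguishing event $\mathcal{O} = \{\vv^*\}$. Since the model trained on $\VV_0$ should generate $\vv^*$ strictly more often, I test the inequality $\mathcal{P}(\mathcal{M}_0(\VV_0)\in\mathcal{O}) \le e^{\epsilon}\mathcal{P}(\mathcal{M}_0(\VV_1)\in\mathcal{O}) + \delta$ in the direction $i=0,\,j=1$ and show it is violated for every $\delta$ below the stated threshold.

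\textbf{Making the backward kernels explicit.} First I would use perfect training, $p_\phi(\mathbf{v}_0^i\mid\mathbf{v}_t) = q(\mathbf{v}_0^i\mid\mathbf{v}_t)$. Because the training measure is supported on only the two atoms $[0,0]^T$ and $[1,1]^T$, the posterior reduces to a two-point distribution with weights proportional to $w_{[0,0]}\,q(\mathbf{v}_t\mid[0,0])$ and $w_{[1,1]}\,q(\mathbf{v}_t\mid[1,1])$, where $q(\mathbf{v}_t\mid\mathbf{v}_0) = (\bar\mu_t^+)^{2-\bar\omega}(\bar\mu_t^-)^{\bar\omega}$. Substituting this posterior into the conditional-independence identity Eq.~\eqref{eq.CI} and using the per-coordinate quantities $C_{1,t},C_{2,t},\tilde C_{1,t},\tilde C_{2,t}$ (the exact values of $q(\mathbf{v}_{t-1}^i\mid\mathbf{v}_t^i,\mathbf{v}_0^i)$) produces a closed-form one-step backward transition on $\{0,1\}^2$. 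The coordinate-swap symmetry $[0,1]\leftrightarrow[1,0]$ (with $[0,0],[1,1]$ fixed) is preserved throughout, so each backward marginal is determined by the three numbers $\pi_\lambda^{(t)}([0,0])$, $\pi_\lambda^{(t)}([1,1])$, and $\pi_\lambda^{(t)}([0,1])=\pi_\lambda^{(t)}([1,0])$; I would track these three-dimensional marginals rather than the full four-state vector, which keeps the matrix algebra explicit.

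\textbf{Generation recursion as a $1/s$ perturbation.} Writing $K_\lambda^{(t)}$ for the one-step backward kernel and $\pi_\lambda^{(t-1)}=\pi_\lambda^{(t)}K_\lambda^{(t)}$, the two models differ only through the $O(1/s)$ discrepancy in the weight on $\vv^*$, so $K_0^{(t)} = K_1^{(t)} + \delta K^{(t)}$ with $\delta K^{(t)} = O(1/s)$. First-order perturbation of the matrix product run from $t=T$ down to $t=1$ then expresses $\mathcal{P}(\mathcal{M}_0(\VV_0)=\vv^*) - \mathcal{P}(\mathcal{M}_0(\VV_1)=\vv^*)$ as a sum over $t$ of (the source injected at step $t$) propagated to time $0$, plus an $O(1/s^2)$ remainder. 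I would identify each source with a $\tilde G_t$ (the reconstruction step $t=1$ being special and contributing the direct weight gap $\tilde G_1 = \bar R_1^2/s$, while an intermediate step contributes $\tilde G_t = 2(\tilde C_{2,t}-C_{2,t})/\bar R_t^2$), and the propagation through the later transitions with the products $\tilde F_1\cdots\tilde F_{t-1}$ of the ``stay-on-track'' factors $\tilde F_t = C_{1,t}(C_{1,t}-\tilde C_{1,t})$. Unrolling yields exactly $\tilde G_1 + \tilde F_1\tilde G_2 + \dots + \tilde F_1\cdots\tilde F_{T-2}\tilde G_{T-1}$ for the leading-order probability gap, and an analogous computation gives the baseline $\mathcal{P}(\mathcal{M}_0(\VV_1)=\vv^*) = (1+\bar R_1^2\Delta)/s + o(1/s)$, where the constant $\Delta$ collects the per-step factors captured by $\tilde\Delta$.

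\textbf{Conclusion and the hard part.} Finally I would substitute into the pDP definition. With $\epsilon$ calibrated as in the statement so that $e^{\epsilon}-1$ is half the relative gap, the quantity $\mathcal{P}(\mathcal{M}_0(\VV_0)=\vv^*) - e^{\epsilon}\mathcal{P}(\mathcal{M}_0(\VV_1)=\vv^*)$ equals the absolute gap minus $(e^{\epsilon}-1)$ times the baseline; substituting the explicit leading-order expressions for both simplifies this to the stated threshold $(\text{gap})/s$, so no $\delta$ below it can satisfy the inequality. I expect the main obstacle to be the third step: because the posterior is a nonlinear, Bayes-normalized function of the empirical weights, extracting the precise source terms $\tilde G_t$ and propagation factors $\tilde F_t$ requires a careful state-by-state accounting of how the single-point weight perturbation flows through each backward transition, and the delicate part is rigorously bounding the $O(1/s^2)$ remainder \emph{uniformly across the $T$ steps} so that the strict inequality $\delta < (\text{gap})/s$ (not merely the leading asymptotics) is preserved for sufficiently large $s$.
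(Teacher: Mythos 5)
Your proposal is correct and follows essentially the same route as the paper's proof: the same adjacent pair ($s-1$ copies of $[0,0]^T$ plus two copies of $[1,1]^T$, removing one), the same distinguishing event $\{[1,1]^T\}$, the same two ingredients (a $\Theta_s(1/s)$ lower bound on the generation-probability gap unrolled as $\tilde G_1 + \tilde F_1\tilde G_2 + \cdots$, and a $\Theta_s(1/s)$ upper bound on the baseline probability under $\VV_1$), and the same final calibration of $\epsilon$ against the relative gap. The only organizational difference is that the paper derives the gap via the exact per-step recursive inequality $H_{t-1} \geq G_t + F_t H_t$ with $s$-dependent coefficients (Lemma~\ref{lm.lower_bound_1_two_dimension}), which sidesteps the uniform $O(1/s^2)$-remainder control that you correctly identify as the hard part of a formal first-order perturbation expansion.
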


\begin{proof}[\textbf{Proof Sketch of Theorem~\ref{thm.lower_bound_two_dimension}}]
    Consider a worst case adjacent datasets with two categories $\VV_0 = \left\{ 
\underbrace{
\begin{bmatrix}
0 \\
0 \\
\end{bmatrix}, 
\begin{bmatrix}
0 \\
0 \\
\end{bmatrix}, 
\ldots, 
\begin{bmatrix}
0 \\
0 \\
\end{bmatrix}
}_{s - 1}, 
\begin{bmatrix}
1 \\
1 \\
\end{bmatrix}, 
\begin{bmatrix}
1 \\
1 \\
\end{bmatrix} 
\right\}, \VV_1 = \left\{ 
\underbrace{
\begin{bmatrix}
0 \\
0 \\
\end{bmatrix}, 
\begin{bmatrix}
0 \\
0 \\
\end{bmatrix}, 
\ldots, 
\begin{bmatrix}
0 \\
0 \\
\end{bmatrix}
}_{s - 1}, 
\begin{bmatrix}
1 \\
1 \\
\end{bmatrix} 
\right\}$. In this case, $\pi_0^*, \pi_1^*$ can be regarded as a probability vector over four points ($\begin{bmatrix}
0 \\
0 \\
\end{bmatrix}, \begin{bmatrix}
1 \\
0 \\
\end{bmatrix}, \begin{bmatrix}
0 \\
1 \\
\end{bmatrix}, \begin{bmatrix}
1 \\
1 \\
\end{bmatrix}$), i.e. $\pi_0^*, \pi_1^* \in \mathbb{R}^4$. $i$-th dimension denote the probability of generating $i$-th point respectively.

\begin{lemmaD}\label{lm.lower_bound_1_two_dimension}
        Given the same assumption and diffusion model design, Let $e_j, j \in \{1, 2, 3, 4\}$ denote the vector with a 1 in the $j$-th position and 0s elsewhere
where the 1 appears in the $j$-th row, we have
        \begin{align}
            (\pi_1^* - \pi_0^*)^Te_4 \geq & G_1 + F_1G_2 + \ldots + F_1 F_2 \ldots F_{T-2} G_{T-1} \\
            = & \Theta_s(\frac{1}{s}).
        \end{align}
        where $F_t$ and $G_t$ are positive diffusion-coefficient based constants
  \begin{align}
    &F_t = \frac{4(\bar \mu_t^+)^4 A +  4(s-1)(\bar \mu_t^+)^2(\bar \mu_t^-)^2 B + (s-1)^2(\bar \mu_t^-)^4 C}{(s+1)^2[2(\bar \mu_t^+)^2 + (s-1)(\bar \mu_t^-)^2]^2}, t \in [T],\\
    &G_1 = \frac{\bar R_1^2 (s-1)}{(1 + (s-1)\bar R_1^2)(2 + (s-1)\bar R_1^2)},\\
    &G_t = \frac{[(s-1)(\tilde{C}_{2, t} - C_{2, t})(\bar \mu_t^+)^2(\bar \mu_t^-)^2][4\tilde{C}_{2, t}(\bar \mu_t^-)^4 + 3(s-1)(C_{2, t} + \tilde{C}_{2, t})(\bar \mu_t^+)^2(\bar \mu_t^-)^2 + 2(s-1)^2C_{2, t}(\bar \mu_t^+)^4]}{[(s-1)(\bar \mu_t^+)^2 + 2(\bar \mu_t^-)^2]^2[(s-1)(\bar \mu_t^+)^2 + (\bar \mu_t^-)^2]^2}, t \in \{2, ..., T\}.
  \end{align}
and 
\begin{align}
  &A = (s+1)^2C_{1, t}^2 - [(s-1)\tilde{C}_{1, t} + 2C_{1, t}][(s-1)C_{1, t} + 2\tilde{C}_{2, t}], \\
  &B = (s+1)^2\tilde{C}_{1, t} \cdot C_{1, t} - [(s-1)\tilde{C}_{1, t} + 2C_{1, t}][(s-1)C_{1, t} + 2\tilde{C}_{2, t}], \\
  &C = (s+1)^2\tilde{C}_{1, t}^2 - [(s-1)\tilde{C}_{1, t} + 2C_{1, t}][(s-1)C_{1, t} + 2\tilde{C}_{2, t}].
\end{align}
\end{lemmaD}
    Lemma~\ref{lm.lower_bound_1_two_dimension} demonstrates that when datasets $\VV_0$ and $\VV_1$ differ by a single sample relative to a dataset size of $s$, the output measure of the diffusion model will exhibit a difference of at least $\Omega_s(\frac{1}{s})$ in its predictions.
    \begin{lemmaD}\label{lm.lower_bound_2_two_dimension}
        Given the same assumption and diffusion model design, we have
        \begin{align}
            \pi_0^{*T} e_4 \leq & \frac{1}{s} + \frac{R_1^2}{(R_1^2 + s)}  \cdot \Delta
            = \Theta_s(\frac{1}{s})
        \end{align}
        where we have
        \begin{align}
            \Delta = \min_{t}\frac{1}{4}[\frac{C_{2, t} \cdot s \cdot \bar R_t^2}{s \cdot \bar R_t^2 + 1} + \frac{\tilde{C}_{2, t}}{s \bar R_t^2 + 1} + \frac{\tilde{C}_{1, t} \cdot s}{s + \bar R_t^2} + \frac{C_{1, t} \cdot \bar R_t^2}{s + \bar R_t^2} + 2(\frac{\tilde{C}_{1, t} \cdot (s-1)}{s} + \frac{C_{1, t} \cdot }{s})(\frac{C_{2, t} \cdot (s-1)}{s} + \frac{\tilde{C}_{2, t}}{s})].
        \end{align}
    \end{lemmaD}
    Lemma~\ref{lm.lower_bound_2_two_dimension} suggests that if a specific category constitutes approximately $\frac{1}{s}$ of the original dataset, then, the diffusion model will produce data from this category at a proportion of $\Theta_s(\frac{1}{s})$.
    
    Therefore, consider the measurable set $\mathcal{B} = \{[1, 1]^T\}$, from Lemma~\ref{lm.lower_bound_1_two_dimension} and Lemma~\ref{lm.lower_bound_2_two_dimension}, for sufficient large $s$, for any $c \in \mathbb{N}_+$, select
    \begin{align}
        \epsilon_0 = \log \biggl( 1 + \frac{\tilde{G}_1 + \tilde{F}_1\tilde{G}_2 + \ldots \tilde{F}_1 \tilde{F}_2 \ldots \tilde{F}_{T-2}\tilde{G}_{T-1}}{c \cdot (1 + \bar R_1^2 \cdot \tilde{\Delta})}\biggl)
    \end{align}
    such that
    \begin{align}
        \delta \geq & \mathcal{P}(\mathcal{M}_0(\VV_0; 1) \in \mathcal{B}) - e^{\epsilon_0}\mathcal{P}(\mathcal{M}_0(\VV_1; 1) \in \mathcal{B}) \\
        = & (\mathcal{P}(\mathcal{M}_0(\VV_0; 1) \in \mathcal{B}) - \mathcal{P}(\mathcal{M}_0(\VV_1; 1) \in \mathcal{B})) - (e^{\epsilon_0} - 1) \mathcal{P}(\mathcal{M}_0(\VV_1; 1) \in \mathcal{B}) \\
        \geq & \frac{c-1}{c} \cdot \frac{\tilde{G}_1 + \tilde{F}_1\tilde{G}_2 + \ldots \tilde{F}_1 \tilde{F}_2 \ldots \tilde{F}_{T-2}\tilde{G}_{T-1}}{s}
         = \mathcal{O}_s(\frac{1}{s})
    \end{align}
    where $\tilde{G}_1 = \frac{\bar R_1^2}{s}, \tilde{G}_t = \frac{2(\tilde{C}_{2, t} - C_{2, t})}{\bar R_t^2}, \tilde{F}_t = C_{1, t} \cdot (C_{1, t} - \tilde{C}_{1, t})$ and $\tilde{\Delta} = \min_{t} \frac{C_{2, t} + \tilde{C}_{1, t} + 2 \tilde{C}_{1, t} \cdot C_{2, t}}{4}$. Let $c = 2$, we obtain the results.
\end{proof}

\section{DP Guarantees of DDMs from pDP}\label{app.ddp_dp}
In Sec.~\ref{sec:preliminaries}, we highlight the distinction between pDP and DP. The former prioritizes protecting the privacy of a specific point with respect to a (large) dataset prior to training, proving beneficial for preprocessing datasets with privacy assurances. The latter, however, addresses the privacy implications of publishing a model. In this section, we will discuss how to derive a general DP guarantee for a DDM from pDP by examining the worst-case adjacent datasets $(\VV_0 \backslash \{\vv^*\} = \VV_1)$. As mentioned in Sec.~\ref{sec:main}, the key for bounding the coupled KL divergence can be further focusing on balancing the privacy budget over three measure partitioned sets $\mathcal{S}_a, \mathcal{S}_b$ and $\mathcal{S}_c$ i.e.
\begin{align}
    q(\mathbf{v}_t\in\mathcal{S}_a) \cdot \frac{n}{s^{\psi_t}} + q(\mathbf{v}_t\in\mathcal{S}_b) \cdot \frac{n \cdot \mathcal{A}_t \cdot \mathcal{B}_t}{\bar R_{t-1}^2 \cdot \bar R_t^{2\eta_t'}} + q(\mathbf{v}_t\in\mathcal{S}_c) \cdot \frac{n}{s^{\psi_t}}.
\end{align}
We first consider the worst case for data-dependent quantity $\psi_t$.
\begin{align}
    \frac{n}{s^{\psi_t}} = \frac{\mathcal{A}_t}{1 +  \text{Sim}(\VV_1, \mathbf{v}^*, t)} \cdot \sum_{i = 1}^n\log (1+\frac{\mathcal{B}_t}{\bar{R}_{t-1}^2 \cdot \text{Sim}(\VV_1^{i | \vv^{*i}}, \mathbf{v}^*, t)  + \text{Sim}(\VV_1, \mathbf{v}^*, t)+1})
\end{align}
where $\mathcal{A}_t = \mu_t^+ \cdot (\bar{\mu}_{t-1}^+ / \bar{\mu}_{t}^+ - \bar{\mu}_{t-1}^- / \bar{\mu}_{t}^-)$, and $\mathcal{B}_t = \bar R_{t-1}^2 - 1$.

From the definition of similarity ($\text{Sim}$),
 we observe that the worst scenario arises when $\vv_t^*$ is an outlier within the dataset where, for each feature dimension, only one point in $\VV_1$ shares the same overlap on that particular dimension (i.e. $|\VV_1^{i | \vv^{*i}}| = 1$). From this, we can further prove that the similarity $\text{Sim}(\VV_1, \vv^*, t) \geq (\bar R_t)^{-n}(s-n) + (\bar R_t)^{-(n-1)}, \text{Sim}(\VV_1^{i | \vv^{*i}}, \mathbf{v}^*, t) \geq (\bar R_t)^{-(n-1)}$. Therefore, $\frac{1}{s^{\psi_t}}$ is upper bounded by
\begin{align}\label{eq.worst_psi_t}
    \frac{1}{s^{\psi_t}} \leq \underbrace{\frac{\mathcal{A}_t}{1 +  (\bar R_t)^{-n}(s-n) + (\bar R_t)^{1 - n}} \cdot \log (1+ \frac{\mathcal{B}_t}{\bar{R}_{t-1}^2 \cdot (\bar R_t)^{1 - n}  + (\bar R_t)^{-n}(s-n) + (\bar R_t)^{1 - n}+1})}_{\text{denote as } \frac{1}{s^{\Psi_t}}}
\end{align}
where the R.H.S $\frac{1}{s^{\Psi_t}}$ is independent from dataset properties. 

\textbf{Discussion on Dataset Independent Quantity $\bm{\Psi_t}$.} In the generation process, as we progressively transit from noisy regime (relatively large $t$) to noise-free regime (relatively small $t$), we have $\bar R_t$ monotonically increases from $1$ to $\frac{1 + (k-1) \alpha_1}{1 - \alpha_1}$ and $\Psi_t$ evolves from $\mathcal{O}_s(\frac{1}{s^2})$ to $\mathcal{O}_s(1)$. This aligns with the behavior of data-dependent quantity $\psi_t$.

Next, we consider the privacy balancing radius $\eta_t$ and $c_t^*$. Similar to the discussion in $\psi_t$, we consider the worst case when $\vv_t^*$ stands as an anomaly in the dataset such that for every feature dimension, there is a single point in $\VV_1$ that coincides on that specific dimension. Under this case, the conditions for $\eta_t$ and $c_t^*$ will be
\begin{align}
    \eta_t \geq \kappa^*  + \biggl(\frac{\log h(\eta_t) + \log(\mathcal{A}_t \cdot \mathcal{B}_t \cdot \Psi_t)}{2\log\frac{\mu^+_t}{\mu^-_t}}\biggl)_+-2, c_t^* \geq \frac{\frac{1}{\eta_t}\log h((1+c_t^*)\eta_t) + \frac{3}{2}}{\log \frac{1}{\mu_t^-} - 1}. \label{eq.worst_eta_c_t^*}
\end{align}
where $\kappa^* = \argmin_{\kappa \in \{0, 1, ..., n\}}\{\mathbb{P}(\bar \omega(\vv_t, \vv_0) \geq \kappa) \leq \frac{N_{(1+c_t^*)\eta_t}(\mathbf{v}^*)}{s - N_{\eta_t}(\mathbf{v}^*)}\} = \argmin_{\kappa \in \{0, 1, ..., n\}}\{\sum_{j = \kappa}^n \binom{n}{j}(1 - \bar \mu_t^+)^j(\bar \mu_t^+)^{n - j}\leq \frac{N_{(1+c_t^*)\eta_t}(\mathbf{v}^*)}{s - N_{\eta_t}(\mathbf{v}^*)}\}$ and generalized $h(\eta) = s \cdot \mathbbm{1}_{\{\eta < n+1\}} + \frac{s}{n} \cdot \mathbbm{1}_{\{\eta  = n-1\}} + (-\infty) \cdot \mathbbm{1}_{\{\eta = n\}}$.

\textbf{Discussion on Dataset Independent Quantity $\bm{\eta_t, c_t^*}$.} As depicted in Eq.~\eqref{eq.worst_eta_c_t^*}, the conditions for $\eta_t$ and $c_t^*$ are \textbf{data-independent}. As $t$ diminishes from $T$ (with $\bar \alpha_t \to 0$ representing the noisy regime) to $1$ (where $\bar \alpha_t \to 1$ indicates the noise-free regime), the value of $(1 + c_t^*)\eta_t$ transitions from $n$ to $0$, consistent with those observed under data-dependent conditions.

Synthesizing the aforementioned results, we put forth the theorem detailing the \textbf{inherent differential privacy guarantees of DDMs}.

\begin{theoremE}[\textbf{Inherent DP Guarantee for DDMs}\label{thm.DP_DDM}]
    Given any adjacent datasets $\VV_0, \VV_1$. Assume the denoising networks trained on $\VV_0$ and $\VV_1$ satisfy Assumption~\ref{as.approximation_error} and Assumption~\ref{as.gap_forward_backward}.
    Given a specific time step $T_{\text{rl}}$, the mechanism $\mathcal{M}_{T_{\text{rl}}}(\cdot; m)$ satisfies $\bm{(\epsilon, \delta)}$-\textbf{differential privacy} such that
    \begin{align}
        \small{\delta(\VV_0,\mathbf{v}^*)\leq m\biggl[\underbrace{ \sum_{t = T_{\text{rl}}}^{T}\min \biggl\{\frac{4N_{(1 + \tilde{c}_t^*)\tilde{\eta}_t}(\mathbf{v}^*)}{s}, 1 \biggl\} \cdot \frac{n}{s^{\Psi_t}} + \frac{n\varrho_t}{s^2}}_{\text{Main Privacy Term}} 
        + \underbrace{\mathcal{O}\biggl(\sqrt{\gamma_t} + \tilde{\gamma}_t\biggl)\vphantom{\Biggl[\frac{a}{b}\Biggl]}}_{\text{Error Term}}\biggl] / (\epsilon(1 - e^{-\epsilon}))}
    \end{align}
    where $\Psi_t, \tilde{\eta}_t, \tilde{c}_t^*, \varrho_t$ are \textbf{diffusion coefficients-based quantities} such that $(\tilde{\eta}_t, \tilde{c}_t^*)$ satisfy Eq.~\eqref{eq.worst_eta_c_t^*}, $\Psi_t$ satisfies Eq.~\eqref{eq.worst_psi_t} and $\varrho_t = (1 - \mu_t^+\bar{\mu}_{t-1}^+ / \bar{\mu}_{t}^+) \cdot (1 - 1 / \bar R_{t-1}) + \mu_t^+\bar{\mu}_{t-1}^+ / \bar{\mu}_{t}^+ \cdot (1 - \bar R_{t} / \bar R_{t-1})$.
\end{theoremE}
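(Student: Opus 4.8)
The plan is to obtain the differential privacy guarantee by taking the supremum of the per-instance bound of Theorem~\ref{thm.main} over all adjacent dataset pairs $(\VV_0, \VV_1)$ with $\VV_1 = \VV_0 \backslash \{\vv^*\}$, exactly as indicated in Section~\ref{sec:preliminaries}: since pDP holds for every fixed pair $(\VV_0, \vv^*)$, replacing each data-dependent quantity by its worst-case (data-independent) value yields a uniform DP statement. Concretely, I would start from the coupled-KL characterization of Lemma~\ref{lm.DP_KL} together with the measure-partition decomposition used in the proof of Theorem~\ref{thm.main}, which reduces the task to balancing the three terms over $\mathcal{S}_a, \mathcal{S}_b, \mathcal{S}_c$ and hence to controlling the three data-dependent quantities $\psi_t, \eta_t, c_t^*$ together with the already data-independent term $\varrho_t$ supplied by Lemma~\ref{lm.second_bound}.

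First I would handle $\psi_t$. Using $\bar R_t \geq 1$, the expression for $1/s^{\psi_t}$ in Lemma~\ref{lm.measure_partition_bound} is monotonically decreasing in both $\text{Sim}(\VV_1, \vv^*, t)$ and $\text{Sim}(\VV_1^{i|\vv^{*i}}, \vv^*, t)$, since each appears with a positive coefficient inside a denominator. Thus the worst case maximizes the leakage by minimizing these similarities, which occurs when $\vv^*$ is a maximal outlier: every remaining point differs from $\vv^*$ in as many coordinates as possible, subject to exactly one point matching $\vv^*$ on each coordinate ($|\VV_1^{i|\vv^{*i}}| = 1$). This furnishes the lower bounds $\text{Sim}(\VV_1, \vv^*, t) \geq (\bar R_t)^{-n}(s-n) + (\bar R_t)^{-(n-1)}$ and $\text{Sim}(\VV_1^{i|\vv^{*i}}, \vv^*, t) \geq (\bar R_t)^{-(n-1)}$; substituting them produces the data-independent $1/s^{\Psi_t}$ of Eq.~\eqref{eq.worst_psi_t}. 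The same outlier configuration collapses the similarity ratios $\vartheta$ appearing in Eq.~\eqref{eq.data_quantities_inequality} to the data-independent quantity $h(\eta)$, yielding the radius-selection criteria of Eq.~\eqref{eq.worst_eta_c_t^*} for $\tilde{\eta}_t, \tilde{c}_t^*$.

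It then remains to assemble the pieces. The $\mathscr{B}_2$ contribution is bounded by $n\varrho_t/s^2$ directly from Lemma~\ref{lm.second_bound}, with $\varrho_t$ depending only on the diffusion coefficients, so no worst-casing is needed there. Combining this with the worst-case $\Psi_t, \tilde{\eta}_t, \tilde{c}_t^*$ and summing over $t$ from $T_{\text{rl}}$ to $T$ delivers the claimed bound. I would close by tracking the evolution of $\Psi_t$, showing that $1/s^{\Psi_t}$ grows from $\mathcal{O}_s(1/s^2)$ to $\mathcal{O}_s(1)$ as $\bar R_t$ increases from $1$ toward $(1 + (k-1)\alpha_1)/(1 - \alpha_1)$, mirroring the data-dependent analysis of Theorem~\ref{thm.main}.

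The main obstacle is rigorously justifying that the stated outlier configuration is genuinely the minimizer of the two similarities subject to the combinatorial constraint that $\VV_1$ consists of $s$ points in $\mathcal{X}^n$. The two quantities $\text{Sim}(\VV_1, \vv^*, t)$ and $\text{Sim}(\VV_1^{i|\vv^{*i}}, \vv^*, t)$ are coupled: forcing a point to differ from $\vv^*$ in every coordinate drives the first toward its minimum but sets each per-coordinate term to zero, whereas the $\Psi_t$ bound requires the simultaneous lower bound $(\bar R_t)^{-(n-1)}$ on the per-coordinate similarity. Reconciling this tension demands a careful argument that the joint minimum, under the feasibility requirement of at least one matching point per coordinate, is attained at the claimed extremal configuration and that both lower bounds hold there at once; verifying this joint monotonicity and the simultaneous attainability of the two bounds is the crux of the proof.
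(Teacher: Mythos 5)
Your proposal is correct and is essentially the paper's own proof in Appendix~\ref{app.ddp_dp}: worst-case the pDP bound of Theorem~\ref{thm.main} at the outlier configuration where $|\VV_1^{i|\vv^{*i}}|=1$ for every coordinate, substitute the resulting similarity lower bounds $\text{Sim}(\VV_1,\vv^*,t)\geq(\bar R_t)^{-n}(s-n)+(\bar R_t)^{-(n-1)}$ and $\text{Sim}(\VV_1^{i|\vv^{*i}},\vv^*,t)\geq(\bar R_t)^{-(n-1)}$ to obtain $\Psi_t$ in Eq.~\eqref{eq.worst_psi_t}, collapse $\vartheta(\cdot)$ to the data-independent $h(\cdot)$ to get Eq.~\eqref{eq.worst_eta_c_t^*}, and carry over $\varrho_t$ from Lemma~\ref{lm.second_bound} unchanged. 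The joint-minimization obstacle you flag at the end is genuine but is not resolved in the paper either, which simply asserts that this outlier configuration is the worst case without proving it is the simultaneous minimizer of both similarities.
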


\section{Details on Examples \label{appsec.examples}}\label{appendix_examples}
In this section, we provide detailed analysis on the examples presented in Sec.~\ref{sec:main}.

\subsection{Analysis on $\frac{1}{s^{\psi_t}}$.} 
\begin{propositionF}\label{prop.psi_t}
    Given a skewed distribution with parameter $p$. Let $\vv^*$ be a non-majority point and $\VV_0 \backslash \{\vv^*\}$ sampled from the distribution. Define $\tau_t = \frac{1-p}{k-1} + \frac{\bar \mu_t^-}{\bar \mu_t^+}(1 - \frac{1-p}{k-1})$. When $s = \omega_n(\frac{\log n}{\tau_t^{2n}})$, we have with high probability
    \begin{align}
        \frac{1}{s^{\psi_t - 2}} \to \frac{(\OA_{t-1} - \OA_t) / (k\bar\mu_t^+ \bar \mu_t^-)}{\bar R_{t-1}^2 \cdot \tau_t^{2n-1} \cdot \frac{1 - p}{k - 1} + \tau_t^{2n}}
    \end{align}
\end{propositionF}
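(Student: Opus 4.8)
The strategy is to feed into the exact formula~\eqref{eq:psi-t} for $\psi_t$ the large-$s$ limiting values of the two similarity quantities $\text{Sim}(\vv^*,\VV_1)$ and $\text{Sim}(\vv^*,\VV_1^{i\mid\vv^{*i}})$, and then linearize the logarithm. The heart of the argument is an exact expectation computation exploiting the product structure $\bar R_t^{-\bar\omega(\vv,\vv^*)}=\prod_{j=1}^n \bar R_t^{-\mathbbm{1}[\vv^j\neq \vv^{*j}]}$ over independent coordinates. Since $\vv^{*j}$ is a non-majority category, a fresh sample $\vv$ satisfies $\vv^j=\vv^{*j}$ with probability $\tfrac{1-p}{k-1}$ (contributing $\bar R_t^{0}=1$) and $\vv^j\neq\vv^{*j}$ with probability $1-\tfrac{1-p}{k-1}$ (contributing $\bar R_t^{-1}=\bar\mu_t^-/\bar\mu_t^+$), so the per-coordinate expectation is exactly $\tau_t$. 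Multiplying over the $n$ coordinates gives $\EE[\bar R_t^{-\bar\omega(\vv,\vv^*)}]=\tau_t^n$, hence $\EE[\text{Sim}(\vv^*,\VV_1)]=s\,\tau_t^n$; isolating coordinate $i$ (forcing $\vv^i=\vv^{*i}$, weight $\tfrac{1-p}{k-1}$, and leaving $n-1$ free coordinates) gives $\EE[\text{Sim}(\vv^*,\VV_1^{i\mid\vv^{*i}})]=s\,\tfrac{1-p}{k-1}\,\tau_t^{n-1}$, the same for every $i$ by symmetry.

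I would then upgrade these expectations to high-probability statements. Each similarity is a sum of $s$ i.i.d.\ bounded terms: $\text{Sim}(\vv^*,\VV_1)=\sum_{j=1}^s X_j$ with $X_j=\bar R_t^{-\bar\omega(\vv_j,\vv^*)}\in[\bar R_t^{-n},1]$, and likewise for each $\VV_1^{i\mid\vv^{*i}}$. By Hoeffding's inequality $\mathbb{P}\!\left(\big|\tfrac1s\text{Sim}(\vv^*,\VV_1)-\tau_t^n\big|\geq \epsilon\,\tau_t^n\right)\leq 2\exp(-2s\epsilon^2\tau_t^{2n})$, and analogously for each of the $n$ conditional similarities. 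A union bound over the $n$ coordinates then makes all $n+1$ quantities lie within a factor $(1\pm\epsilon)$ of their means with probability tending to $1$ exactly under the hypothesis $s=\omega_n(\log n/\tau_t^{2n})$; this is the origin of both the $\tau_t^{2n}$ and the $\log n$ in the sample-size condition.

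On this event I substitute $\text{Sim}(\vv^*,\VV_1)\to s\tau_t^n$ and $\text{Sim}(\vv^*,\VV_1^{i\mid\vv^{*i}})\to s\tfrac{1-p}{k-1}\tau_t^{n-1}$ into~\eqref{eq:psi-t}. Because both similarities grow linearly in $s$, the argument of each logarithm tends to $0$, so $\log(1+x)=x+o(x)$ and every additive $+1$ is asymptotically negligible; by symmetry the $n$ summands become equal and cancel the explicit factor $n$ on the left. Collecting the two linear-in-$s$ denominators leaves the combination $\bar R_{t-1}^2\tfrac{1-p}{k-1}\tau_t^{2n-1}+\tau_t^{2n}$ together with the surviving coefficient $(\OA_{t-1}-\OA_t)/(k\bar\mu_t^+\bar\mu_t^-)$, and multiplying through by $s^2$ delivers the stated limit for $1/s^{\psi_t-2}$; the remaining steps are routine algebra.

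The main obstacle is the concentration step, not the substitution. Since $\tau_t^n$ is typically exponentially small in $n$, an absolute Hoeffding bound is too weak: one needs \emph{relative} error control on sums whose summands range over $[\bar R_t^{-n},1]$, and it is precisely the requirement that the relative deviation be $o(1)$ that forces $s$ to exceed $\log n/\tau_t^{2n}$, while the union bound over the $n$ per-coordinate similarities (which must concentrate simultaneously) contributes the $\log n$. A secondary point still to verify is that the $O(x^2)$ error in $\log(1+x)=x+o(x)$ is dominated uniformly in $i$, which follows once all similarities are pinned near their comparable means.
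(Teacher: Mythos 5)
Your proposal is correct and follows essentially the same route as the paper's proof: compute $\EE[\bar R_t^{-\bar\omega(\vv,\vv^*)}]=\tau_t^n$ (the paper does this via the Binomial law of $\bar\omega$, you via the per-coordinate product), obtain $\EE[\text{Sim}(\vv^*,\VV_1)]=s\tau_t^n$ and $\EE[\text{Sim}(\vv^*,\VV_1^{i\mid\vv^{*i}})]=s\tfrac{1-p}{k-1}\tau_t^{n-1}$, establish relative-error Hoeffding concentration with a union bound over the $n$ coordinates (which is exactly where the paper also gets the condition $s=\omega_n(\log n/\tau_t^{2n})$), and then substitute into Eq.~\eqref{eq:psi-t} and linearize the logarithm. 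The only cosmetic difference is that you treat each conditional similarity as a single i.i.d.\ sum, whereas the paper concentrates the count $N_i=|\VV_1^{i\mid\vv^{*i}}|$ first and then the sum conditioned on $N_i$; both yield the same bound.
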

\begin{proof}[\textbf{Proof of Proposition~\ref{prop.psi_t}}]
Recall the definition of $\frac{1}{s^{\psi_t}}$, 
\begin{align}
    \frac{1}{s^{\psi_t}} = \frac{1}{n} \cdot \frac{\mathcal{A}_t}{1 +  \text{Sim}(\VV_1, \mathbf{v}^*, t)} \cdot \sum_{i = 1}^n\log \biggl(1+\frac{\mathcal{B}_t}{\bar{R}_{t-1}^2 \cdot \text{Sim}(\VV_1^{i | \vv^{*i}}, \mathbf{v}^*, t)  + \text{Sim}(\VV_1, \mathbf{v}^*, t)+1}\biggl).
    \vspace{-3mm}
\end{align}
where $\mathcal{A}_t = \mu_t^+ \cdot (\bar{\mu}_{t-1}^+ / \bar{\mu}_{t}^+ - \bar{\mu}_{t-1}^- / \bar{\mu}_{t}^-)$ and $\mathcal{B}_t = \bar R_{t-1}^2 - 1$.

 Given $\mathbf{v}^{*}\in\mathcal{V}$ as an non-majority point, we sample $\mathbf{v} \in \mathcal{V}_1$ from skewed distribution with parameter $p$. Thus, we have $\bar\omega(\mathbf{v}^{*}, {\mathbf{v}})$ follows binomial $\mathcal{B}(n, 1-\frac{1-p}{k-1})$ distribution. Therefore, $\mathbb{E}\left((\frac{\mu^+_t}{\mu^-_t})^{-\bar\omega(\mathbf{v}^{*}, {\mathbf{v}})}\right)=\sum_{i=0}^{n}{ (\frac{\mu^+_t}{\mu^-_t})^{-i}\binom{n}{i} (\frac{1-p}{k-1})^{n-i} (1-(\frac{1-p}{k-1}))^{i} }=(\frac{1-p}{k-1}+\frac{\mu^-_t}{\mu^+_t}(1-\frac{1-p}{k-1}))^n$, and thus $\mathbb{E}(\text{Sim}(\VV_1, \tilde{\mathbf{v}}, t)) = |\VV_1| \cdot \mathbb{E}(\sum_{\mathbf{v} \in \VV_1} (\frac{\mu^+_t}{\mu^-_t})^{-\bar\omega(\mathbf{v}, \tilde{\mathbf{v}})})=|\VV_1| \cdot (\frac{1-p}{k-1}+\frac{\mu^-_t}{\mu^+_t}(1-\frac{1-p}{k-1}))^n=|\VV_1| \cdot (\frac{{1 - \OA_t}}{{1 +(k-1) \OA_t}}(1-\frac{1-p}{k-1})+\frac{1-p}{k-1})^n$. Let $E_t := (\frac{1-p}{k-1}+\frac{\mu^-_t}{\mu^+_t}(1-\frac{1-p}{k-1}))^n=(\frac{{1 - \OA_t}}{{1 +(k-1) \OA_t}}(1-\frac{1-p}{k-1})+\frac{1-p}{k-1})^n$.

 Since $\max (\frac{\mu^+_t}{\mu^-_t})^{-\bar\omega(\mathbf{v}^{*}, {\mathbf{v}})} - \min (\frac{\mu^+_t}{\mu^-_t})^{-\bar\omega(\mathbf{v}^{*}, {\mathbf{v}})} \leq 1$, from concentration inequality, we have for any small $\epsilon (\epsilon \ll E_t)$,
 \begin{align}
    \mathbb{P}(|\frac{1}{s} \cdot \text{Sim}(\VV_1, \vv^*, t) - \frac{1}{s} \cdot \mathbb{E} \text{Sim}(\VV_1, \vv^*, t)| \geq \epsilon) \leq 2\exp(-2\epsilon^2 s)
 \end{align}
 Let $\epsilon = \epsilon' * E_t$. Thus, with high probability at least $1 - 2\exp(-2(\epsilon')^2 E_t^2s)$, we have
 \begin{align}
    \frac{1}{s} \cdot \text{Sim}(\VV_1, \vv^*, t) \in [E_t - \epsilon, E_t + \epsilon] = [E_t (1 - \epsilon'), E_t (1 + \epsilon')]
 \end{align}
 Now, we consider $\text{Sim}(\VV_1^{i | \vv^{*i}}, \vv^*, t)$, let $X_1^i, X_2^i, ..., X_s^i$ be Bernoulli random variables such that
 \begin{align}
    X_j^i = \left\{
    \begin{aligned}
        &1, \text{ if $j$-th sample $\vv^{(j)} \in \VV_1$ such that $(\vv^{(j)})^i = \vv^{*i}$}. \\
        &0, \text{ otherwise}.
    \end{aligned}
    \right.
 \end{align}
 From Hoeffding inequality, we obtain
 \begin{align}
    \mathbb{P}(|\frac{1}{s}\sum_{j = 1}^s X_j^i - \frac{1}{s} \mathbb{E}[\sum_{j = 1}^s X_j^i]| \geq \epsilon_1^i) \leq 2\exp(-2(\epsilon_1^{i})^2s)
 \end{align}
 where $\mathbb{E}[\frac{1}{s}\sum_{j = 1}^s X_j^i] = \frac{1 - p}{k - 1}$. Let $\epsilon_1^i := \frac{1 - p}{k - 1} \cdot \tilde{\epsilon}_1^{i}$. Therefore, we have with high probability $1 - 2\exp(-2(\frac{1 - p}{k - 1})^2 \cdot (\tilde{\epsilon}_1^{i})^2 \cdot s)$
 \begin{align}
    \sum_{j = 1}^s X_j^i \in [\frac{1 - p}{k - 1}(1 - \tilde{\epsilon}_1^i)s, \frac{1 - p}{k - 1}(1 + \tilde{\epsilon}_1^i)s]
 \end{align}
 Let $N_i = |\VV_1^{i | \vv^{*i}}|$. Let $Y_1^i, Y_2^i, ..., Y_{N_i}^i$ denote the features of points in $\VV_1^{i | \vv^{*i}}$ on dimensions other than $i$-th dimension, which can be viewed as random variables sampling from categorical distributions with $n-1$ dimensions. Recall that $\bar\omega_{-i}(\vv, \vv') = \bar\omega(\vv, \vv') - \mathbbm{1}_{\vv^i \neq \vv^{*i}}$. Hence
 \begin{align}
    \text{Sim}(\VV_1^{i | \vv^{*i}}, \vv^*, t) = \sum_{j = 1}^{N_i} (\frac{\mu^+_t}{\mu^-_t})^{-\bar\omega_{-i}(Y_j^i, \vv^*)}
 \end{align}
 Similar to the derivation of $\text{Sim}(\VV_1, \vv^*, t)$, we have
 \begin{align}
    \mathbb{E}\left((\frac{\mu^+_t}{\mu^-_t})^{-\bar\omega_{-i}(Y_j^i, \vv^*)}\right)=&\sum_{i=0}^{n-1}{ (\frac{\mu^+_t}{\mu^-_t})^{-i}\binom{n-1}{i} (\frac{1-p}{k-1})^{n-1-i} (1-(\frac{1-p}{k-1}))^{i} }\\
    =&(\frac{1-p}{k-1}+\frac{\mu^-_t}{\mu^+_t}(1-\frac{1-p}{k-1}))^{n-1} =: E_t^i
 \end{align}
 Applying Hoeffding inequality, we have
 \begin{align}
    \mathbb{P}(|\frac{1}{N_i} \text{Sim}(\VV_1^{i | \vv^{*i}}, \vv^*, t) - (\frac{1-p}{k-1}+\frac{\mu^-_t}{\mu^+_t}(1-\frac{1-p}{k-1}))^{n-1}| \geq \epsilon_2^i) \stackrel{(i)}{\leq} 2\exp(-2(\epsilon_2^{i})^2 \cdot (\frac{1 - p}{k - 1} - \epsilon_2^i)s)
 \end{align}
 where $(i)$ is because $\max (\frac{\mu^+_t}{\mu^-_t})^{-\bar\omega_{-i}(Y_j^i, \vv^*)} - (\frac{\mu^+_t}{\mu^-_t})^{-\bar\omega_{-i}(Y_j^i, \vv^*)} \leq 1$ and the concentration property of $N_i$. Let $\epsilon_2^i := (\frac{1-p}{k-1}+\frac{\mu^-_t}{\mu^+_t}(1-\frac{1-p}{k-1}))^{n-1} \cdot \tilde{\epsilon}_2^i$. Summarizing the above, with high probability $(1 - 2\exp(-2(E_t^i)^2 \cdot (\tilde{\epsilon}_2^i)^2 \cdot s)) (1 - 2\exp(-2(\frac{1 - p}{k - 1})^2 \cdot (\tilde{\epsilon}_1^{i})^2 \cdot s))$,
 \begin{align}
    \frac{1}{s} \cdot \text{Sim}(\VV_1^{i | \vv^{*i}}, \vv^*, t) \in \biggl[E_t^i \cdot \frac{1 - p}{k - 1} \cdot (1 - \tilde{\epsilon}_2^i)(1 - \tilde{\epsilon}_1^i), E_t^i \cdot \frac{1 - p}{k - 1} \cdot (1 + \tilde{\epsilon}_2^i)(1 + \tilde{\epsilon}_1^i)\biggl]
 \end{align}
 Therefore, in order to let $\text{Sim}(\VV_1^{i | \vv^{*i}}, \vv^*, t), i \in [n]$ and $\text{Sim}(\VV_1, \vv^*, t)$ to concentrate, we require as $\tilde{\epsilon}_1^i, \tilde{\epsilon}_2^i, \epsilon' \to 0$,
 \begin{align}
     \rho := (1 - 2\exp(-2(\epsilon')^2 E_t^2s))\prod_{i = 1}^n [(1 - 2\exp(-2(E_t^i)^2 (\tilde{\epsilon}_2^i)^2 s)) (1 - 2\exp(-2(\frac{1 - p}{k - 1})^2 (\tilde{\epsilon}_1^{i})^2s))] \to 1.
 \end{align}
 Thus, $\exp(-2(E_t^i)^2 (\tilde{\epsilon}_2^i)^2 s) = o_s(\frac{1}{n}), i \in [n]$ and $\exp(-2(\epsilon')^2 E_t^2s) = o_s(1)$. Define $\tau_t := \frac{1-p}{k-1}+\frac{\mu^-_t}{\mu^+_t}(1-\frac{1-p}{k-1})$. From above, we obtain the condition $s = \omega_n(\frac{\log n}{\tau_t^{2n}})$. Here, we list a sufficient condition for $n$ to satisfy the constraint: $n$ can be chosen as $n = \frac{1 - a}{2} \cdot \frac{\log s}{\log \frac{1}{\tau_t}}$ for any positive $a<1$.
 
 For $\frac{1}{s^{\psi_t}} = \frac{1}{n}\sum_{i = 1}^n\frac{\mathcal{A}_t}{1 +  \text{Sim}(\VV_1, \mathbf{v}^*, t)} \cdot \log \biggl(1+\frac{\mathcal{B}_t}{\bar{R}_{t-1}^2 \cdot \text{Sim}(\VV_1^{i | \vv^{*i}}, \mathbf{v}^*, t)  + \text{Sim}(\VV_1, \mathbf{v}^*, t)+1}\biggl)$, 
 when $s = \omega_n(\frac{\log n}{\tau_t^{2n}})$, with high probability $\rho$, we have
 \begin{align}
    \frac{1}{s^{\psi_t - 2}} 
    \to & \frac{\mathcal{A}_t \cdot \mathcal{B}_t}{\bar R_{t-1}^2 \cdot (\frac{\bar\mu_t^-}{\bar\mu_t^+}(1-\frac{1-p}{k-1})+\frac{1-p}{k-1})^{2n-1} \cdot \frac{1 - p}{k - 1} + (\frac{\bar\mu_t^-}{\bar\mu_t^+}(1-\frac{1-p}{k-1})+\frac{1-p}{k-1})^{2n}}
 \end{align}
From the analysis above, we note that as the skewness of the distribution intensifies, the above term exhibits a monotonic increase with respect to $p$. This implies that the average distance between points in $\VV_1$ and $\vv^*$ grows, leading to a heightened sensitivity at point $\vv^*$. Consequently, the privacy bound increases.
\end{proof}

\textbf{Note:} From the aforementioned derivation, taking the logarithm of both sides implies that, since $s = \omega_n(\frac{\log n}{\tau_t^{2n}})$, with high probability,
\begin{align}
    \psi_t - 2 \to \log_s\biggl(\frac{\bar R_{t-1}^2 \cdot (\frac{\bar\mu_t^-}{\bar\mu_t^+}(1-\frac{1-p}{k-1})+\frac{1-p}{k-1})^{2n-1} \cdot \frac{1 - p}{k - 1} + (\frac{\bar\mu_t^-}{\bar\mu_t^+}(1-\frac{1-p}{k-1})+\frac{1-p}{k-1})^{2n}}{\mathcal{A}_t \cdot \mathcal{B}_t}\biggl) \to 0.
\end{align}
Thus, $\psi_t \to 2$ as $s \to \infty$.

 \subsection{Analysis on $\eta_t, c_t^*$.} 
 \begin{propositionF}\label{prop.eta_t, c_t^*}
    Given a skewed distribution with parameter $p$. Let $\vv^*$ be a non-majority point and $\VV_0 \backslash \{\vv^*\}$ sampled from the distribution. Define $\tau_t = \frac{1-p}{k-1} + \frac{\bar \mu_t^-}{\bar \mu_t^+}(1 - \frac{1-p}{k-1})$. When $s = \omega_n([\frac{1}{c}(1 - \frac{1 - p}{k-1})]^{-2cn}(\frac{1-p}{k-1})^{-2(1-c)n}, \frac{\log n}{\tau_t^{2n}})$ for some $c \in (\frac{1-p}{k-2-p}, 1)$, we have with high probability,
    \begin{align}
        &\eta_t = \argmin_{\eta \in \{\lceil cn \rceil, ..., n\}}\biggl\{\eta | \eta - \frac{\log(\frac{1}{\pin(\eta)} - 1)}{\log \frac{1}{n(1 - \bar \mu_t^+)}} + \max \biggl\{\frac{\log(\frac{1}{\pin(\eta)} - 1)}{2\log \bar R_t} + \mathcal{C}_t, 0 \biggl\} - 2 \geq 0\biggl\}\\
        &c_t^* = \argmin_{c_t^* \in \{0, \frac{1}{\eta_t}, ..., \frac{n - \eta_t}{\eta_t}\}}\biggl\{c | c - \frac{\frac{1}{\eta_t}\log (\frac{1}{\pin((1 + c)\eta_t}) - 1) + \log 2e}{\log \frac{1}{\mu_t^-} - 1} \geq 0\biggl\} \label{eq.concentrated_data_dependent}
    \end{align}
    where $\pin(\cdot)$ is the CDF of Binomial distribution with parameter $1 - \frac{1 - p}{k-1}$ and $\mathcal{C}_t = \frac{\log(\mathcal{A}_t \cdot \mathcal{B}_t)}{2\log \bar R_t} + \frac{\log s}{\log \bar R_t}$, $\mathcal{A}_t = \mu_t^+ \cdot (\bar{\mu}_{t-1}^+ / \bar{\mu}_{t}^+ - \bar{\mu}_{t-1}^- / \bar{\mu}_{t}^-)$ and $\mathcal{B}_t = \bar R_{t-1}^2 - 1$.
 \end{propositionF}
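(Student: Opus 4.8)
The plan is to reduce the two implicit inequalities of Eq.~\eqref{eq.data_quantities_inequality} to the explicit threshold tests of Eq.~\eqref{eq.concentrated_data_dependent} by replacing the empirical ratio $\vartheta(\cdot)$ by its high-probability limit under the skewed distribution. First I would pin down the law of $\bar\omega(\vv,\vv^*)$. Because $\vv^*$ is a non-majority point, in each coordinate $i$ an independent draw $\vv$ agrees with $\vv^*$ with probability $\frac{1-p}{k-1}$, so the indicators $\mathbbm{1}_{\vv^i\neq\vv^{*i}}$ are i.i.d.\ $\mathrm{Bernoulli}(1-\frac{1-p}{k-1})$ and $\bar\omega(\vv,\vv^*)\sim\mathrm{Binomial}(n,1-\frac{1-p}{k-1})$, whose CDF is exactly the $\pin$ appearing in the statement. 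Hence $N_\eta(\vv^*)=\sum_{\vv\in\VV_1}\mathbbm{1}_{\bar\omega(\vv,\vv^*)\leq\eta}$ is a sum of $s$ i.i.d.\ $\mathrm{Bernoulli}(\pin(\eta))$ indicators with mean $s\,\pin(\eta)$, and likewise for $N_{(1+c)\eta}(\vv^*)$.

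Second, I would prove the concentration $N_\eta(\vv^*)/s\to\pin(\eta)$ and $N_{(1+c)\eta}(\vv^*)/s\to\pin((1+c)\eta)$ simultaneously for every admissible radius. A multiplicative Chernoff bound for each fixed $\eta$ together with a union bound over the at most $n$ radii in $\{\lceil cn\rceil,\dots,n\}$ handles all of them at once. The delicate point is that the binding radius is the smallest one, $\lceil cn\rceil$, where $\pin$ is smallest and the relative error hardest to control: a single-term lower estimate of the lower tail gives $\pin(\lceil cn\rceil)^{-1}\lesssim[\tfrac1c(1-\tfrac{1-p}{k-1})]^{-cn}(\tfrac{1-p}{k-1})^{-(1-c)n}$, so demanding that $s$ dominate the square of this quantity is exactly the hypothesis $s=\omega_n(\cdot)$ (ensuring $s\,\pin(\lceil cn\rceil)\to\infty$). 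The lower bound $c>\frac{1-p}{k-2-p}$ caps how small the window's left endpoint can be, which both keeps the required $s$ finite and guarantees the true minimizer lies in $\{\lceil cn\rceil,\dots,n\}$. For radii below $\lceil cn\rceil$ I would argue separately that the inequality is non-binding: there $N_\eta(\vv^*)$ is negligible (often $0$), $\vartheta(\eta)$ is enormous, and the right-hand side exceeds $\eta$. This both licenses restricting the $\argmin$ domain and shows the feasible set is nonempty, since the natural solution $(\eta,c)=(n,0)$ already lies inside.

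Third, on this high-probability event I would substitute $\vartheta(\eta)\to\frac1{\pin(\eta)}-1$ and $\vartheta((1+c)\eta)\to\frac1{\pin((1+c)\eta)}-1$ into Eq.~\eqref{eq.data_quantities_inequality}. Invoking Proposition~\ref{prop.psi_t} ($\psi_t\to2$, hence $s^{\psi_t}\to s^2$ and $\tfrac{\psi_t\log s}{2\log\bar R_t}\to\tfrac{\log s}{\log\bar R_t}$) and collecting the residual factors of $\mathcal{A}_t,\mathcal{B}_t$ into $\mathcal{C}_t=\tfrac{\log(\mathcal{A}_t\mathcal{B}_t)}{2\log\bar R_t}+\tfrac{\log s}{\log\bar R_t}$ converts the $\eta_t$-inequality into the stated threshold $g(\eta)\geq0$, and the relaxed $c_t^*$-condition of Appendix~\ref{app.subsec.c_t^*} (carrying the constant $\log 2e$) into the second test. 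Since $\eta_t$ and $c_t^*$ are the smallest feasible integer and grid point and both threshold functions are monotone in their argument, the crossover points are precisely the $\argmin$ expressions of Eq.~\eqref{eq.concentrated_data_dependent}; the only care needed here is re-expressing the $(\cdot)_+$ terms as the $\max\{\cdot,0\}$ forms while tracking the detached constant $-2$.

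The main obstacle is the uniform lower-tail concentration of the second step: one must make the Chernoff estimate hold simultaneously across all radii down to $\lceil cn\rceil$ while keeping the total error small enough that it cannot perturb the \emph{discrete} threshold defining $\eta_t$, since a constant-order shift in $\log\vartheta$ could in principle change which integer radius is smallest-feasible. Establishing that the stated $\omega_n(\cdot)$ growth of $s$ is genuinely sufficient for this, rather than merely making the limiting replacement plausible, is where the real work lies.
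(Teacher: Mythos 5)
Your proposal follows essentially the same route as the paper's proof: identify $\bar\omega(\vv,\vv^*)$ as $\mathrm{Binomial}(n,1-\frac{1-p}{k-1})$ so that $\pin$ is its CDF, concentrate $N_\eta(\vv^*)/s$ around $\pin(\eta)$ (the paper uses Hoeffding, requiring $s\,\pin(\lceil cn\rceil)^2\to\infty$, which is exactly where the binomial-coefficient lower bound on $\pin(\lceil cn\rceil)$ turns into the stated $\omega_n(\cdot)$ hypothesis), then substitute $\vartheta(\eta)\to\frac{1}{\pin(\eta)}-1$ and invoke Proposition~\ref{prop.psi_t} ($\psi_t\to 2$) to collapse $\frac{\log\varphi_t}{2\log\bar R_t}$ into $\mathcal{C}_t$, yielding the argmin characterizations. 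Your extra touches (explicit union bound over radii, the infeasibility argument for $\eta<\lceil cn\rceil$) are refinements of steps the paper handles implicitly, and do not change the substance of the argument.
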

 \begin{proof}[\textbf{Proof of Proposition~\ref{prop.eta_t, c_t^*}}]
    Given $\mathbf{v}^{*}\in\mathcal{V}$ as an non-majority point, and $\mathbf{v} \in \mathcal{V}_1$ are sampled from skewed distribution with parameter $p$. Consider specific $\eta_t, c_t^*$, the probability of failing into the $\eta_t$-ball of $\mathbf{v}^*$ is $\mathbb{P}(\mathbf{v}; \bar\omega(\mathbf{v}, \mathbf{v}^*) \leq \eta_t) = \sum_{i = 0}^{\eta_t}\tbinom{n}{i}(\frac{1 - p}{k - 1})^{n-i}(1 - \frac{1 - p}{k - 1})^{i} \leq \min\{(\frac{1 - p}{k - 1})^{n - \eta_t}(en)^{\eta_t}, 1\}$. 

 Now consider the following inequality:
    \begin{align}
    \small{
        \eta_t \geq \frac{\log\vartheta(\eta_t)}{\log\frac{1}{n(1 - \bar \mu_t^+)}}  + \max\left\{\frac{\log\vartheta(\eta_t) + \log \varphi_t}{2\log \frac{\bar \mu_t^+}{\bar \mu_t^-}} - 2, -2 \right\}, c_t^* \geq \frac{\frac{1}{\eta_t}\log \vartheta((1+c_t^*)\eta_t) + \log 2e}{\log \frac{1}{\mu_t^-} - 1}
        }.
    \end{align}
where $\vartheta(\eta_t) = (s - N_{\eta_t}(\mathbf{v}^*)) / N_{\eta_t}(\mathbf{v}^*), \varphi_t = \mathcal{A}_t \cdot \mathcal{B}_t \cdot s^{\psi_t}$. 

To begin with, first consider $\vartheta(\eta_t) = \frac{1}{\frac{N_{\eta_t}(\vv^*)}{s}} - 1$. Let $X_{1, \eta_t}, X_{2, \eta_t}, ..., X_{s, \eta_t}$ be the indicator random variables of whether the point in $\VV_1$ fall in the $\eta_t$-ball of $\vv^*$, i.e. 
\begin{align}
    X_{i, \eta_t} = \left\{
    \begin{aligned}
        &1, \text{ with } \mathbb{P}(\vv; \bar\omega(\vv \text{ follow skewed distribution}; \bar\omega(\vv, \vv^*) \leq \eta_t)) \\
        &0, \text{ with } 1 - \mathbb{P}(\vv; \bar\omega(\vv \text{ follow skewed distribution}; \bar\omega(\vv, \vv^*) \leq \eta_t))
    \end{aligned}
    \right.
\end{align}
Define $\pint := \mathbb{P}(\vv; \bar\omega(\vv \text{ follow skewed distribution}; \bar\omega(\vv, \vv^*) \leq \eta_t))$. Since $X_{i, \eta_t}$ are Bernoulli random variables, from concentration inequality, with high probability $1 - 2\exp(-2(\epsilon')^2\pint^2s)$, 
\begin{align}
    \frac{1}{s}N_{\eta_t}(\vv^*) \in [\pint (1 - \epsilon'), \pint (1 + \epsilon')]
\end{align}
Let $\epsilon = \epsilon' \cdot \pint$. Since $\frac{\epsilon}{\pint - \epsilon} > \frac{\epsilon}{\pint + \epsilon}$, define $\epsilon_1 = \frac{\epsilon}{\pint - \epsilon}$, we have with high probability $1 - 2\exp(-2(\epsilon')^2\pint^2s)$
\begin{align}
    \vartheta(\eta_t) \in \biggl[\frac{1}{\pint} -1 - \epsilon_1, \frac{1}{\pint} - 1 + \epsilon_1\biggl]
\end{align}
Similarly, define $\epsilon_2 = \frac{\epsilon_1}{\frac{1}{\pint} - 1 - \epsilon_1}$, with same high probability,
\begin{align}
    \log \vartheta(\eta_t) \in \biggl[ \log\biggl(\frac{1}{\pint} -1\biggl) - \epsilon_2, \biggl(\frac{1}{\pint} - 1\biggl) + \epsilon_2\biggl]
\end{align}
From the analysis on $\frac{1}{s^{\psi_t}}$, define $\sigma(p, t) = \frac{\mathcal{A}_t \cdot \mathcal{B}_t}{E_t^2 + \bar R_{t-1}^2 \cdot E_t^{2 - \frac{1}{n}}}$. There exist $\epsilon_3$ such that $\epsilon_3 \to 0$ as $\epsilon' \to 0$. Further define $\epsilon_4 = \frac{\epsilon_3}{1 - (1 + \frac{1}{\sigma(p, t)}) \epsilon_3}$, we have
\begin{align}
    \psi_t \log s \in \biggl[2\log s - \epsilon_4, 2\log s + \epsilon_4\biggl]
\end{align}
Let $\epsilon'' = \frac{\epsilon_2}{\log \frac{1}{n(1 - \bar\mu_t^+)}} + \frac{\epsilon_2}{2\log \bar R_t} + \epsilon_4$. Then, from above, we have with high probability $[1 - 2\exp(-2(\epsilon')^2\pint^2s)]^2 \cdot \rho$,
\begin{align}
    &f_t(\eta_t) \geq \frac{\log(\frac{1}{\pint} - 1)}{\log \frac{1}{n(1 - \bar \mu_t^+)}} + \max \biggl\{\frac{\log(\frac{1}{\pint} - 1)}{2\log \bar R_t} + \mathcal{C}_t, 0 \biggl\} - 2 - \epsilon'' \\
    &f_t(\eta_t) \leq \frac{\log(\frac{1}{\pint} - 1)}{\log \frac{1}{n(1 - \bar \mu_t^+)}} + \max \biggl\{\frac{\log(\frac{1}{\pint} - 1)}{2\log \bar R_t} + \mathcal{C}_t, 0 \biggl\} - 2 + \epsilon''
\end{align}
where $\mathcal{C}_t := \frac{\log(\mathcal{A}_t \cdot \mathcal{B}_t)}{2\log \bar R_t} + \frac{\log s}{\log \bar R_t}$. 

In the proof of Lemma~\ref{lm.measure_partition_bound}, we already show the existence of $\eta_t$ that satisfy the condition $\eta_t \geq f(\eta_t)$. When $\pint = \omega_s(\frac{1}{\sqrt{s}})$ and $s = \omega_n(\frac{\log n}{\tau_t^{2n}})$, we have the concentration properties. Since $\pint$ is strictly monotonically increasing with $\eta_t$ and $\eta_t \geq cn$, thus, we need $\pin(\lceil cn \rceil) = \omega_s(\frac{1}{\sqrt{s}})$. Thus, we require
\begin{align}
    s = \omega_n\biggl([\frac{1}{c}(1 - \frac{1 - p}{k-1})]^{-2cn}(\frac{1-p}{k-1})^{-2(1-c)n}\biggl).
\end{align}
Therefore, when the above condition is satisfied, we only require
\begin{align}
    \eta_t \geq \underbrace{\frac{\log(\frac{1}{\pint} - 1)}{\log \frac{1}{n(1 - \bar \mu_t^+)}} + \max \biggl\{\frac{\log(\frac{1}{\pint} - 1)}{2\log \bar R_t} + \mathcal{C}_t, 0 \biggl\} - 2}_{(f_t^* \circ \frac{1 - p_{\text{in}}}{p_{\text{in}}})(\eta_t)}, \; \eta_t \in \{\lceil cn\rceil, ..., n\}.
\end{align}
where $\pint = \sum_{j = 0}^{\eta_t} \binom{n}{j} (\frac{1 - p}{k - 1})^{n-j} (1 - \frac{1-p}{k-1})^j$.

One direct observation from above is that 
\begin{itemize}
    \item In \text{noise free regime} ($t$ is close to 0, i.e. $\OA_t \to 1$), we have $\log \frac{1}{n(1 - \OA_t^+)}, \log \bar R_t \to \infty$ and $\mathcal{C}_t \to 0$. Therefore, $\eta_t \to 0$.
    \item In \text{noisy regime} ($t$ is close to T, i.e. $\OA_t \to 0$), similarly, we have $\log \frac{1}{n(1 - \OA_t^+)} \to \log \frac{1}{n}$, $\log \bar R_t \to 0$, and $\mathcal{C}_t \to \infty$. Therefore, $\eta_t \to n$.
\end{itemize}
Now, we consider how skewness parameter $p$ influence the selection of $\eta_t$. Given fixed diffusion coefficients $\{\alpha_t\}_{t \in [T]}$ with $\bar \OA_t$ monotonically decreases from $1$ to $0$ as $t$ goes from $1$ to $T$. Given relatively large $s$ (neglect the influence of concentration error $\epsilon''$), and corresponding $n$ that satisfies $s = \omega_n(\max \{\log n \cdot (\frac{1}{\frac{1-p}{k-1}+\frac{\bar \mu^-_t}{\bar \mu^+_t}(1-\frac{1-p}{k-1})})^{2n}, [\frac{1}{c}(1 - \frac{1 - p}{k-1})]^{-2cn}(\frac{1-p}{k-1})^{-2(1-c)n} \})$. We have the following observation: When we increase the skewness of the distribution, $f_t^* \circ \frac{1 - p_{\text{in}}}{p_{\text{in}}}(\eta_t)$ will increase monotonically with $p$, such that $\eta_t$ will increase to $n$ faster. In other words, given two skewness parameter $p, p'$ with $p > p'$, we have $\eta_t^p \geq \eta_t^{p'}$ where $\eta_t^p, \eta_t^{p'}$ denote the minimal $\eta$ that satisfy the constraint under two skewness parameters.
Similar derivation can be applied to $g_t^* \circ \frac{1 - p_{\text{in}}}{p_{\text{in}}}$ and we obtain the results for $c_t^*$.
\end{proof}

\begin{propositionF}[Sufficient Condition]\label{prop.data_dependent_sufficient}
    Given a skewed distribution with parameter $p$. Let $\vv^*$ be a non-majority point and $\VV_0 \backslash \{\vv^*\}$ sampled from the distribution. Define $\tau_t = \frac{1-p}{k-1} + \frac{\bar \mu_t^-}{\bar \mu_t^+}(1 - \frac{1-p}{k-1})$. When $s = \omega_n([\frac{1}{c}(1 - \frac{1 - p}{k-1})]^{-2cn}(\frac{1-p}{k-1})^{-2(1-c)n}, \frac{\log n}{\tau_t^{2n}})$ for some $c \in (\frac{1-p}{k-2-p}, 1)$, the sufficient conditions of $\eta_t, c_t^*$ that satisfy Eq.~\ref{eq.data_quantities_inequality} are
\begin{align}
    &\eta_t \geq n - \biggl(\frac{n - \log (s\sqrt{\frac{\OA_{t-1}-\OA_t}{k \bar\mu_t^+ \bar\mu_t^-}}) / \log \bar R_t}{2\log\frac{k-1}{1-p} / \log(\max\{\frac{1}{n\bar\mu_t^-}, 1\}) + 1}\biggl)_+,\\
    &c_t^* \geq \frac{n - \eta_t}{\eta_t} - \frac{\log \biggl( \frac{1}{2e} \cdot (\frac{1}{e\bar\mu_t^-})^{\frac{n - \eta_t}{\eta_t}}\biggl)}{\log \frac{k - 1}{1 - p} + \log \frac{1}{e\bar\mu_t^-}}.
\end{align}
\end{propositionF}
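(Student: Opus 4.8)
The plan is to derive the closed-form sufficient conditions directly from the $\pin$-form characterization already established in Proposition~\ref{prop.eta_t, c_t^*}, so that the only genuinely new ingredient is a closed-form two-sided estimate of the Binomial CDF together with routine algebra. Recall that under the stated concentration regime on $s$, Proposition~\ref{prop.eta_t, c_t^*} shows that with high probability the empirical ratio $\vartheta(\eta) = (s - N_\eta(\vv^*))/N_\eta(\vv^*)$ may be replaced by $\tfrac{1}{\pin(\eta)} - 1$, where $\pin(\eta) = \sum_{j=0}^{\eta}\binom{n}{j}(\tfrac{1-p}{k-1})^{n-j}(1 - \tfrac{1-p}{k-1})^{j}$ is the CDF of the number of differing coordinates, $\bar\omega(\vv,\vv^*) \sim \mathrm{Binomial}(n, 1 - \tfrac{1-p}{k-1})$. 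Since both inequalities in Eq.~\eqref{eq.data_quantities_inequality} depend on $\eta_t$ and $c_t^*$ only through $\log\vartheta(\cdot)$, and since their left-hand sides are increasing while their right-hand sides are decreasing in these arguments, it suffices to over-estimate $\log\vartheta$ by a clean upper bound and solve the resulting inequalities; any $(\eta_t, c_t^*)$ satisfying the relaxed inequalities then satisfies the original ones.

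For the condition on $\eta_t$, I would lower-bound $\pin(\eta)$ by its single dominant term $\binom{n}{\eta}(1 - \tfrac{1-p}{k-1})^{\eta}(\tfrac{1-p}{k-1})^{n-\eta}$ (valid in the skewed regime, where the CDF sum is dominated by $j = \eta$) and upper-bound the binomial coefficient by the $(en)$-type estimate already used in Proposition~\ref{prop.eta_t, c_t^*}. This yields $\log\vartheta(\eta_t) \lesssim (n - \eta_t)\log\tfrac{k-1}{1-p}$ up to the clamped factor $\log\max\{\tfrac{1}{n\bar\mu_t^-}, 1\}$ appearing in the denominator of the claim. Substituting into Eq.~\eqref{eq.data_quantities_inequality}, using $\log(\vartheta\cdot\tfrac{\OA_{t-1}-\OA_t}{k\bar\mu_t^+\bar\mu_t^-}\cdot s^{\psi_t}) \approx (n-\eta_t)\log\tfrac{k-1}{1-p} + 2\log s + \tfrac12\log\tfrac{\OA_{t-1}-\OA_t}{k\bar\mu_t^+\bar\mu_t^-}$ (with $\psi_t \to 2$ from Proposition~\ref{prop.psi_t}), and comparing the additive term $\tfrac{\log\vartheta}{\log[1/(n(1-\bar\mu_t^+))]}$ with the second contribution via the noise-regime inequality $2\log\bar R_t \ge \log\tfrac{1}{n(1-\bar\mu_t^+)}$, collapses the two $\log\vartheta$ contributions into a single coefficient $\tfrac{2L}{M}$ with $L = \log\tfrac{k-1}{1-p}$ and $M = \log\max\{\tfrac{1}{n\bar\mu_t^-},1\}$. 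Writing $m_t = n - \eta_t$ and solving the linear inequality $m_t(1 + \tfrac{2L}{M}) \le n - \tfrac{\log(s\sqrt{(\OA_{t-1}-\OA_t)/(k\bar\mu_t^+\bar\mu_t^-)})}{\log\bar R_t}$ for $m_t$ gives precisely the stated bound, with the $(\cdot)_+$ arising from the clamp on the second (max) term and the $\max\{\cdot,1\}$ encoding the noisy regime that forces $\eta_t = n$.

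For $c_t^*$, I would apply the same dominant-term estimate to $\vartheta((1+c_t^*)\eta_t)$, giving $\tfrac{1}{\eta_t}\log\vartheta((1+c_t^*)\eta_t) \lesssim \tfrac{n-\eta_t}{\eta_t}\log\tfrac{k-1}{1-p} - c_t^*\log\tfrac{k-1}{1-p}$, and substitute into the right inequality of Eq.~\eqref{eq.data_quantities_inequality}; the $c_t^*$-dependence on both sides is linear, so isolating $c_t^*$ and collecting the diffusion factors—where the $\bar\mu_t^-$ dependence enters through the measure of diffused points landing in the $\eta_t$-ball, as in the analysis of $q(\vv_t\in\mathcal{S}_a)$ in Lemma~\ref{lm.measure_partition_bound}—produces the claimed fraction with denominator $\log\tfrac{k-1}{1-p} + \log\tfrac{1}{e\bar\mu_t^-}$. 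I expect the main obstacle to be the Binomial CDF control: establishing that $\pin(\eta)$ is captured up to constants by its largest term uniformly over the admissible range $\eta \ge \lceil cn\rceil$ (which is exactly what the second, more stringent condition on $s$ guarantees, via $\pin(\lceil cn\rceil) = \omega_s(1/\sqrt{s})$), and carefully tracking which crude constants ($e$, $en$, the $\tfrac12$ versus $\log 2$ in the numerators) get absorbed so that the final thresholds match. The algebraic collapse relying on $2\log\bar R_t \ge \log\tfrac{1}{n(1-\bar\mu_t^+)}$ must also be verified to hold along the generation schedule, or else be handled by the clamping $\max\{\cdot,1\}$.
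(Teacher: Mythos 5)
You take essentially the same route as the paper's own proof: invoke Proposition~\ref{prop.eta_t, c_t^*} to replace $\vartheta(\cdot)$ by $\tfrac{1}{\pin(\cdot)}-1$ under the stated growth conditions on $s$, bound $\log(\tfrac{1}{\pin(\eta_t)}-1)\leq(n-\eta_t)\log\tfrac{k-1}{1-p}$ via the dominant ($j=\eta_t$) term of the Binomial CDF (this is where $c>\tfrac{1-p}{k-2-p}$ enters), and then substitute into Eq.~\eqref{eq.data_quantities_inequality} and solve the now-linear inequalities in $n-\eta_t$ and in $c_t^*$, including the coefficient collapse you flag via comparing $2\log\bar{R}_t$ with $\log\tfrac{1}{n(1-\bar{\mu}_t^+)}$ --- exactly the paper's three steps. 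The one slip to fix is directional: since upper-bounding $\tfrac{1}{\pin(\eta_t)}-1$ requires a \emph{lower} bound on $\pin(\eta_t)$, the binomial coefficient in the dominant term must be bounded \emph{below} (the paper uses $\binom{n}{i}\geq(\tfrac{n}{i})^i$), not above by the $(en/\eta)^{\eta}$-type estimate you cite, which would break the inequality chain; with that correction your sketch reproduces the paper's argument.
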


\begin{proof}[\textbf{Proof of Proposition~\ref{prop.data_dependent_sufficient}}]
    From Proposition~\ref{prop.eta_t, c_t^*}, when $s = \omega_n([\frac{1}{c}(1 - \frac{1 - p}{k-1})]^{-2cn}(\frac{1-p}{k-1})^{-2(1-c)n})$, we have $\eta_t, c_t^*$ from Eq.~\eqref{eq.concentrated_data_dependent}. Further since
    \begin{align}
        \pint \geq \sum_{i = 0}^{\eta_t} (\frac{n}{i})^i (\frac{1 - p}{k - 1})^{n - i} (1 - \frac{1 - p}{k - 1})^i \geq (\frac{n}{\eta_t})^{\eta_t} (\frac{1 - p}{k - 1})^{n - \eta_t}(1 - \frac{1 - p}{k - 1})^{\eta_t}
    \end{align}
    From $c \geq \frac{1-p}{k-2-p}$, we have
    \begin{align}
        \log(\frac{1}{\pint} - 1) \leq & n \log (\frac{k - 1}{1 - p}) + \eta_t (\log \frac{n}{\eta_t} - \log \frac{k - 1}{1 - p} + \log \frac{1}{1 - \frac{1 - p}{k - 1}}) \\
        \leq & (n - \eta_t) \log (\frac{k - 1}{1 - p})
    \end{align}
    Therefore, the sufficient conditions of $\eta_t$ and $c_t^*$ will be
    \begin{align}
        &\eta_t \geq \frac{(n - \eta_t) \log (\frac{k - 1}{1 - p})}{\log \frac{1}{n(1 - \bar \mu_t^+)}} + \max \biggl\{\frac{(n - \eta_t) \log (\frac{k - 1}{1 - p})}{2\log \bar R_t} + \mathcal{C}_t, 0 \biggl\} - 2,\\
        &c_t^* \geq \frac{\frac{n}{\eta_t}\log (\frac{k - 1}{1 - p}) - (1 + c_t^*)\log (\frac{k - 1}{1 - p}) + \log2e}{\log \frac{1}{\bar \mu_t^-} - 1}.
    \end{align}
    Further simplify the above terms, we get
    \begin{align}
        &\eta_t \geq n - \biggl(\frac{n - \log (s\sqrt{\frac{\OA_{t-1}-\OA_t}{k \bar\mu_t^+ \bar\mu_t^-}}) / \log \bar R_t}{2\log\frac{k-1}{1-p} / \log(\max\{\frac{1}{n\bar\mu_t^-}, 1\}) + 1}\biggl)_+,\\
        &c_t^* \geq \frac{n - \eta_t}{\eta_t} - \frac{\log \biggl( \frac{1}{2e} \cdot (\frac{1}{e\bar\mu_t^-})^{\frac{n - \eta_t}{\eta_t}}\biggl)}{\log \frac{k - 1}{1 - p} + \log \frac{1}{e\bar\mu_t^-}}.
    \end{align}
\end{proof}

\section{Experimental Settings}\label{app:exp}

\subsection{Datasets}
\label{app:exp_datasets}
In this section, we briefly introduce the real dataset included in the paper.

\textbf{Adult~\citep{kohavi1996scaling}.} The Adult dataset, also known as the Census Income dataset, contains information collected from the 1994 US Census Bureau database. It serves the purpose of predicting an individual's income category (above or below \$50,000 per year) based on demographic attributes such as age, education, occupation, and more. With around 32,000 records and 14 features, it is widely used for classification tasks.

\textbf{German Credit~\citep{misc_statlog_(german_credit_data)_144}.} The German Credit dataset, curated by Prof. Hofmann, encompasses data from 1000 individuals seeking bank credit. Each entry is characterized by 20 categorical / numerical attributes, with labels as either good or bad credit risk based on these features.

\textbf{Loan~\citep{loandata}.} The Loan Status dataset consists of 500 unique entries, each with 11 features. These records capture customer interactions with a bullet loan product. Labels within the dataset indicate whether a loan was approved or not, making it suitable for classification tasks.

\subsection{Environment}
\label{app:exp_env}
Experiments were performed on a server with four Intel 24-Core Gold 6248R CPUs, 1TB DRAM, and eight NVIDIA QUADRO RTX 6000 (24GB) GPUs.

\section{Additional Setup Configurations for Experiments on Real Datasets}\label{app:additional_exp}
In this section, we provide additional details regarding the experiments on real dataset in Section~\ref{subsec.real_exp}.

\subsection{Dataset Preprocessing}
\textbf{Adult.} The original Adult dataset consists of 14 continuous or discrete features. In our setting, we have selected the main 9 features, namely: \textbf{age, workclass, education, marital-status, occupation, relationship, race, gender, hours-per-week.} We have performed interval partitioning on the continuous features and merged some multi-category features (e.g., education) into a single category for discrete features. As a result, the number of categories for each feature does not exceed 5.

\textbf{German Credit.} The German Credit dataset is comprised of 20 features, both categorical and numerical. For our study, we focused on 10 primary features: \textbf{status of existing checking account, duration in month, credit history, purpose, credit amount, savings account / bonds, employment, personal status and sex, age in years, job.} We categorized numerical features, ensuring no category contained more than 5 subdivisions.

\textbf{Loan.} The Loan dataset encompasses 11 features, both numerical and categorical. In our study, we incorporated all these categories and further segmented the numerical features into categories, each containing a maximum of four subdivisions. The specific features are: \textbf{Gender, Married, Dependents, Education, Self Employed, Applicant Income, Coapplicant Income, Loan Amount, Loan Amount Term, Credit History, Property Area.}

\subsection{Experiments on Testing Effectiveness of Privacy Bound Algorithm in Sec.~\ref{subsubsec.data_Sensitivity}}

\subsubsection{Denoising Network Architecture and Training Procedure}
We designed a four-layer MLP equipped with 256 hidden neurons. Throughout the architecture, batch normalization was integrated, and the leakyReLU served as the activation function. For consistency during training, we anchored the random seed at 123. The denoising network underwent training via the Adam optimizer, set with a learning rate of 1e-3 and a weight decay of 5e-4. Our approach involved uniformly sampling the diffusion step and drawing batches of 30 samples each. The training spanned 100 epochs, focusing on minimizing the binary cross-entropy loss.


\subsubsection{Evaluation via DownStream Tasks}
To evaluate the efficacy of DDMs, we divided the original dataset into three segments: training, validation, and testing, adhering to an 8:1:1 ratio. Leveraging the trained denoising network, we generated a synthetic dataset, ensuring the number of instances for each class mirrored that of the original dataset. We then engaged in a downstream binary classification to measure the utility of the DDMs. In this context, we trained an independent MLP classifier using the synthetic dataset and subsequently evaluate its classification performance on the test set (original dataset).

\subsection{Experiments on Membership Inference Attack in Sec.~\ref{subsubsec.membership}}
\subsubsection{Discriminative Model Architecture and Training Procedure}
The discriminative model is designed to overfit the data released by the target model, which is indicative of the training data. We implemented a MLP for classification, featuring a three-layer architecture with each layer containing 256 neurons. The model leverages the ReLU activation function across its structure. The network's training process is guided by the Adam optimizer and the learning rate is adaptively adjusted, starting at 0.01, to optimize performance during training. The model iterates up to 1000 times or until the tolerance level of 1e-6 is reached.



\begin{figure}[ht]
\centering
\includegraphics[width=0.3\textwidth]{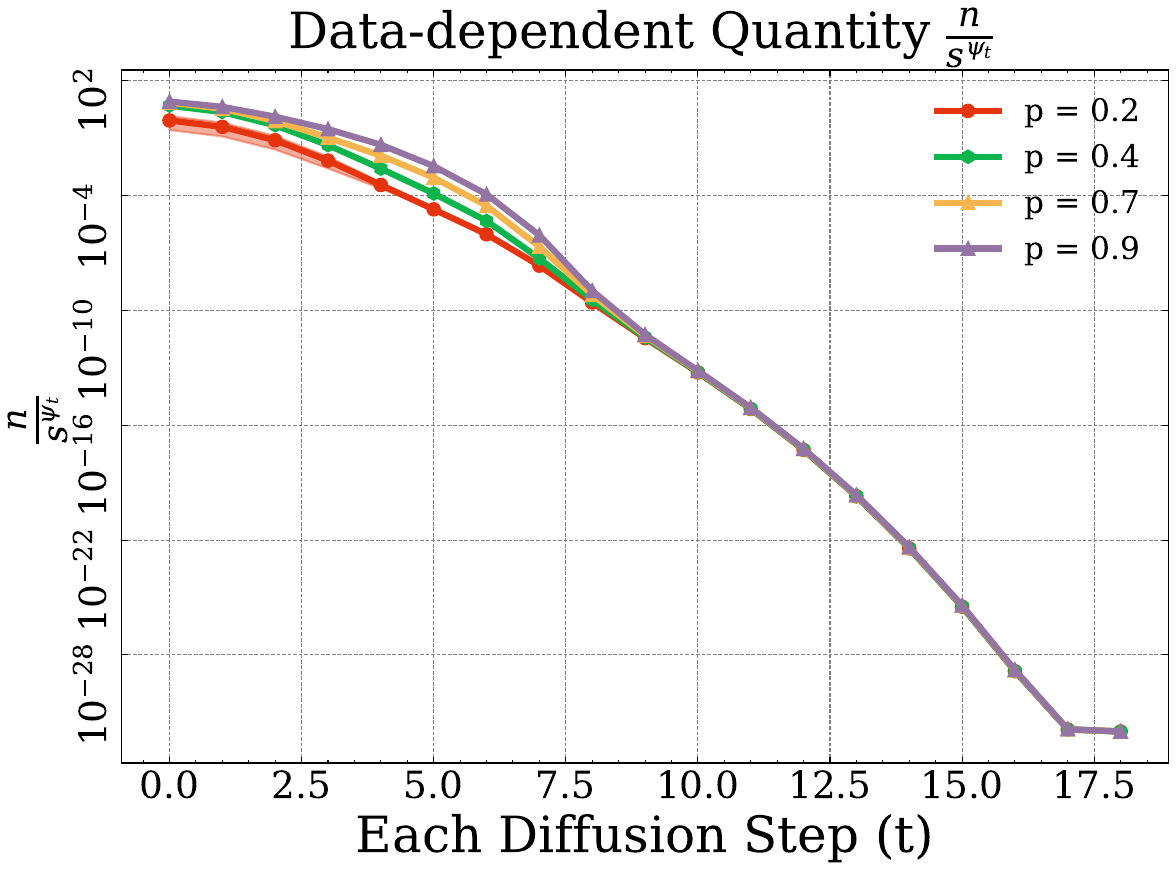}
\hfill\vline\hfill
\includegraphics[width=0.3\textwidth]{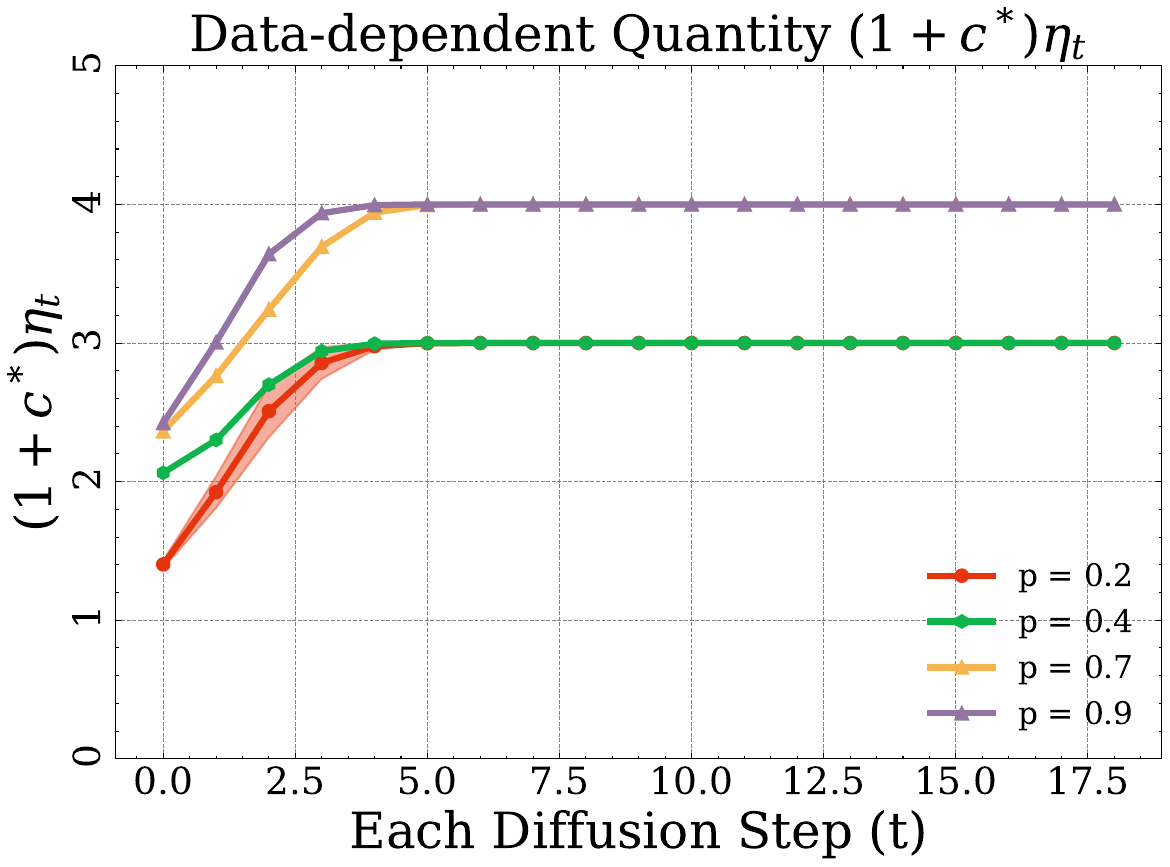}
\hfill\vline\hfill
\includegraphics[width=0.3\textwidth]{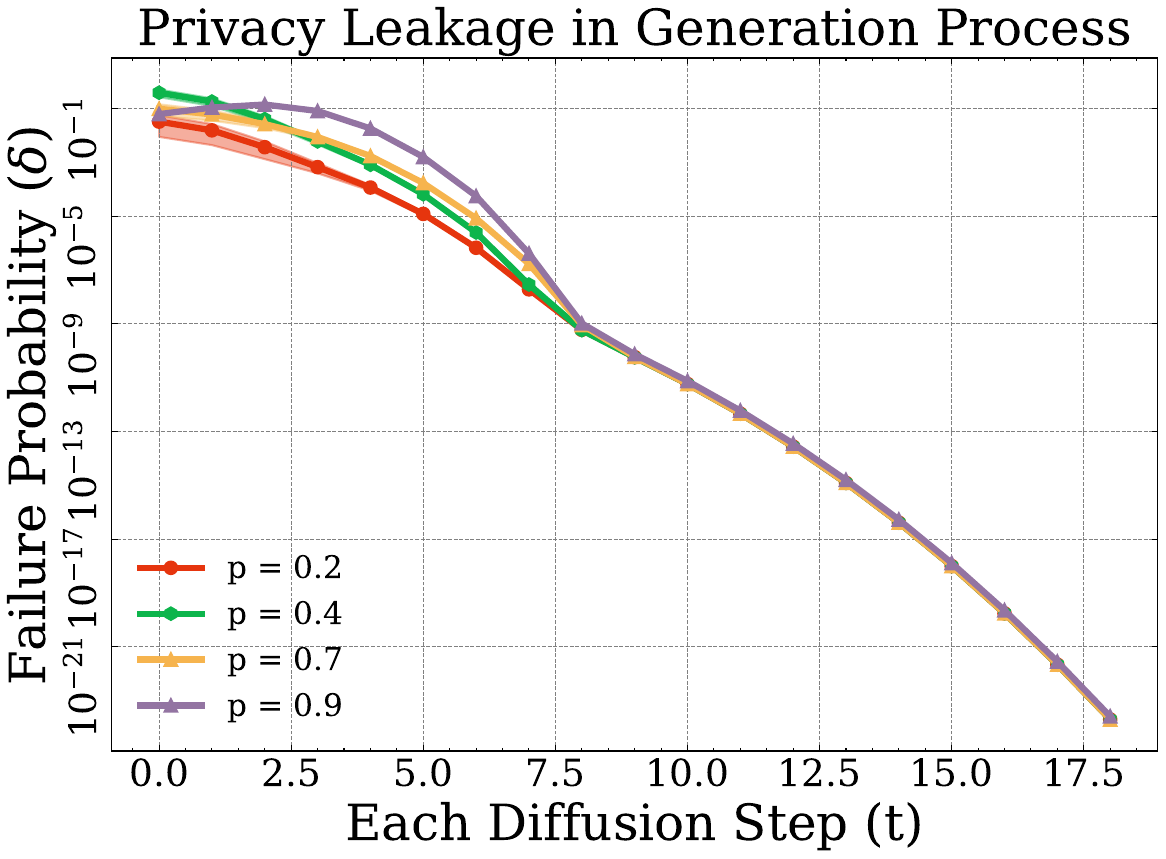}
\caption{\small{All based on single step-$t$ term in Eq.main (Sigmoid Schedule): \textbf{LEFT:} Characterization of $\frac{1}{s^{\psi_t}}$. \textbf{MIDDLE:} Characterization of $(1 + c_t^*)\eta_t$. \textbf{RIGHT:} Characterization of Privacy Leakage (Main Privacy Term). Experimental Setup: Given specific DDM design $k = 5, n = 5, T = 20, \epsilon = 10$ trained on dataset with $s = 1000$ following skewed distribution with parameter $p$. We consider a fixed $\mathbf{v}^*$ where each column has a non-majority category.
}\label{fig.sigmoid_5_5_20}}
\end{figure}

\begin{figure}[ht]
\centering
\includegraphics[width=0.3\textwidth]{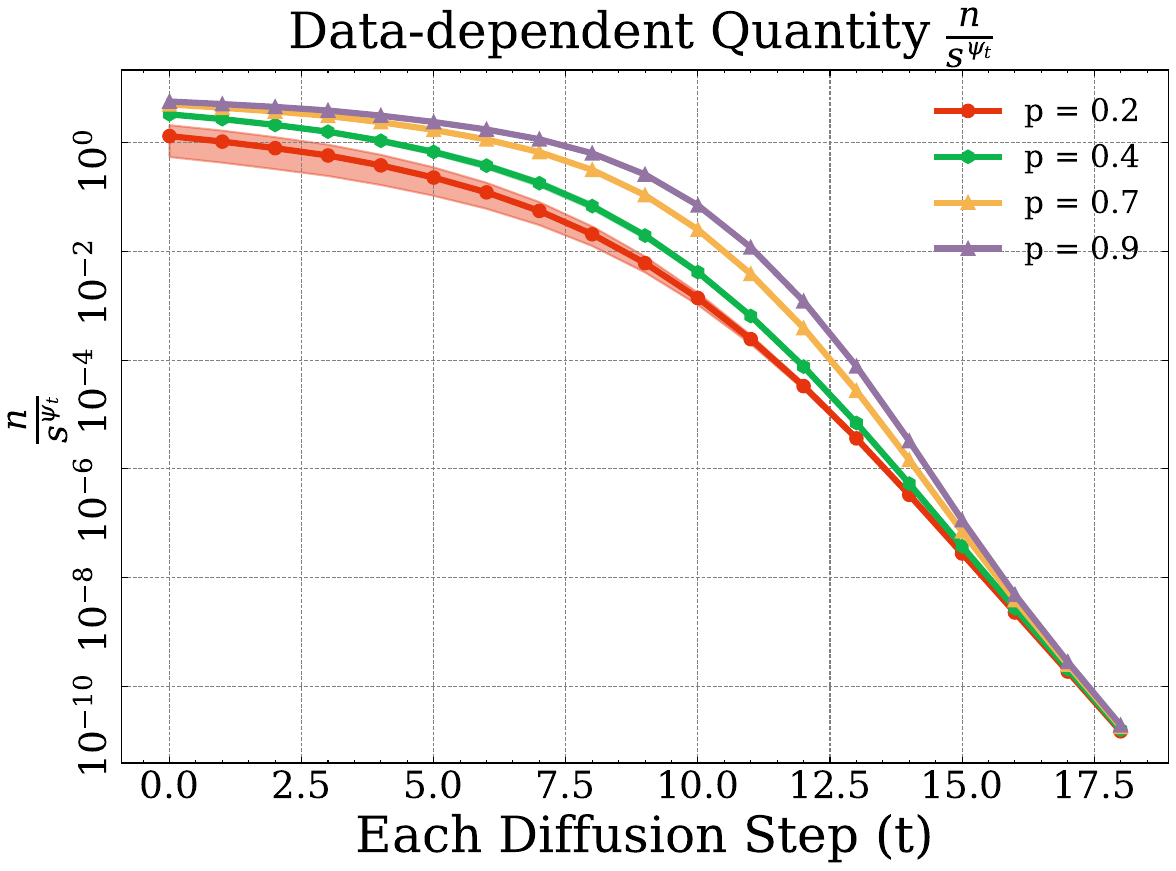}
\hfill\vline\hfill
\includegraphics[width=0.3\textwidth]{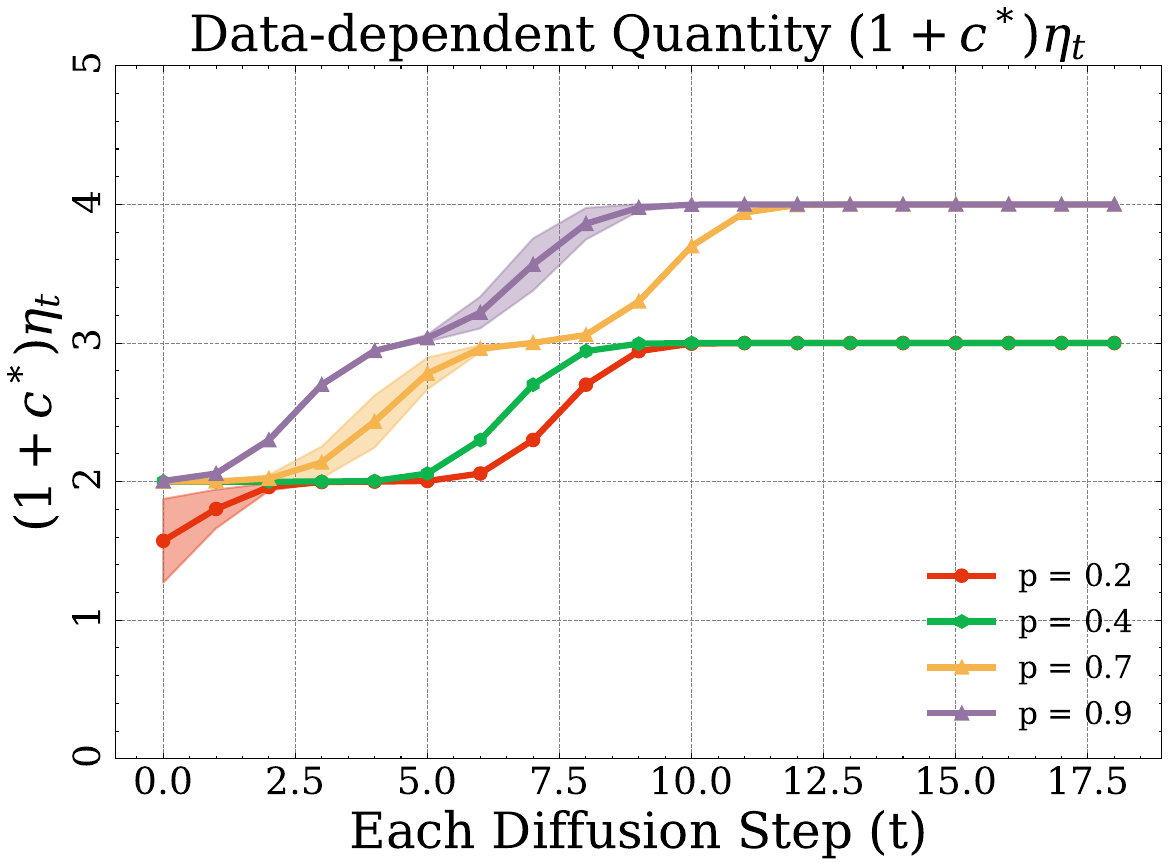}
\hfill\vline\hfill
\includegraphics[width=0.3\textwidth]{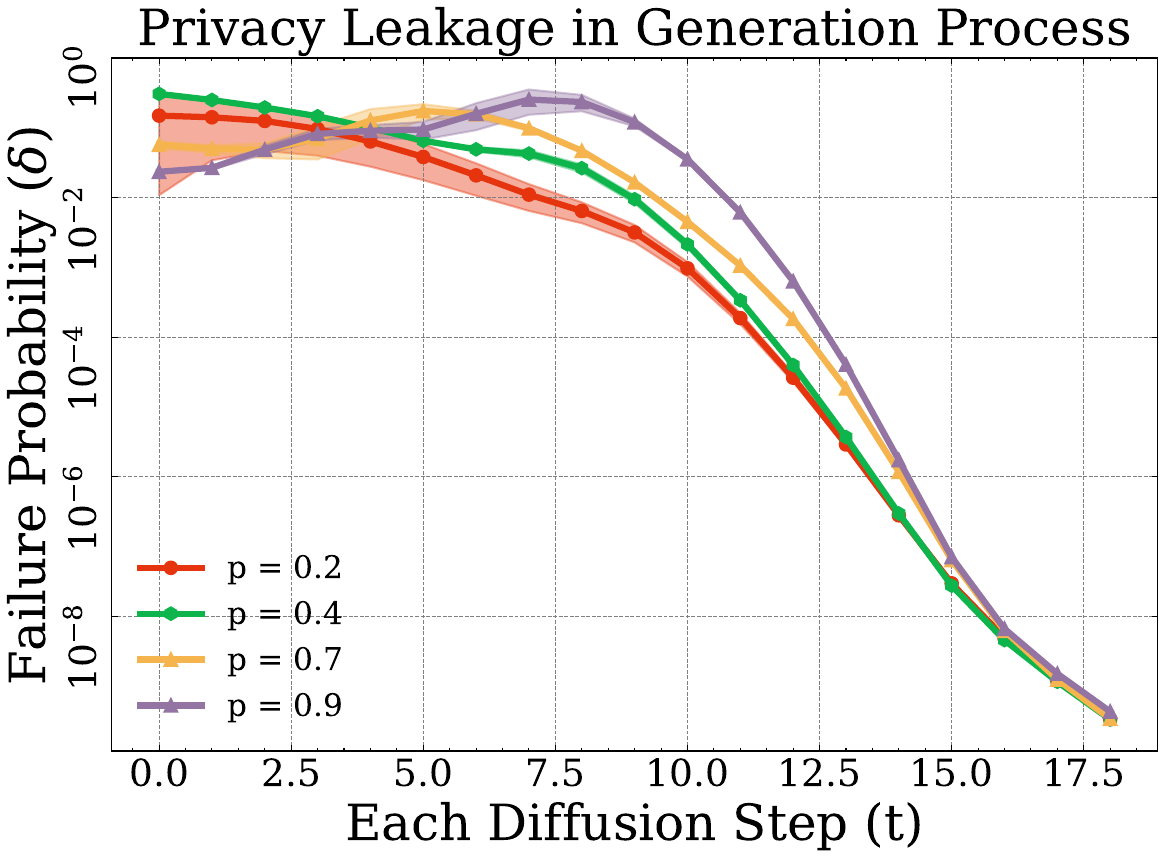}
\caption{\small{All based on single step-$t$ term in Eq.main (Cosine Schedule): \textbf{LEFT:} Characterization of $\frac{1}{s^{\psi_t}}$. \textbf{MIDDLE:} Characterization of $(1 + c_t^*)\eta_t$. \textbf{RIGHT:} Characterization of Privacy Leakage (Main Privacy Term). Experimental Setup: Given specific DDM design $k = 5, n = 5, T = 20, \epsilon = 10$ trained on dataset with $s = 1000$ following skewed distribution with parameter $p$. We consider a fixed $\mathbf{v}^*$ where each column has a non-majority category.
}\label{fig.cosine_5_5_20}}
\end{figure}

\section{Additional Experimental Results}\label{app.additional_exp}
In this section, we present additional experiments to backup our main results in Sec.~\ref{sec.exp}.
\subsection{Privacy Leakage and Behavior of Data-dependent Quantities under Various Noise Schedules}
In our experiments, we assess the privacy leakage and the behavior of data-dependent quantities $\psi_t$, $\eta_t$, and $c_t^*$ across different DDM model configurations. In particular, we explore two additional noise schedules: sigmoid and cosine (defined as $\OA_t  = \frac{f(t)}{f(0)}$, where $f(t) = \text{cos}\left(\frac{t / T + s}{1 + s} \cdot \frac{\pi}{2}\right)^2$). The outcomes, depicted in Fig.~\ref{fig.sigmoid_5_5_20} and Fig.\ref{fig.cosine_5_5_20}, resonate with our theoretical findings. It is observed that under all noise schedules, the privacy leakage intensifies for distributions that are more skewed.

\subsection{Most Private and Most Sensitive Points under Skewed Distribution}

Next, we delineate the evolving trends of the most private and most sensitive samples as the skewness parameter shifts, providing insights into its impact on privacy leakage. We specifically focus on three DDM configurations: \( n = 5, k = 5, T = 20 \), \( n = 10, k = 5, T = 20 \), and \( n = 20, k = 5, T = 100 \). For each setup, we adjust the skewness parameters \( p \) and curate the dataset \( \VV_0 \) in accordance with the respective distributions. The most private and sensitive data points, based on the privacy budget, are illustrated in Fig.~\ref{fig.private_sensitive_sample}. 
\begin{wrapfigure}{r}{0.3\textwidth}
\centering
\vspace{-3mm}
\includegraphics[width=0.3\textwidth]{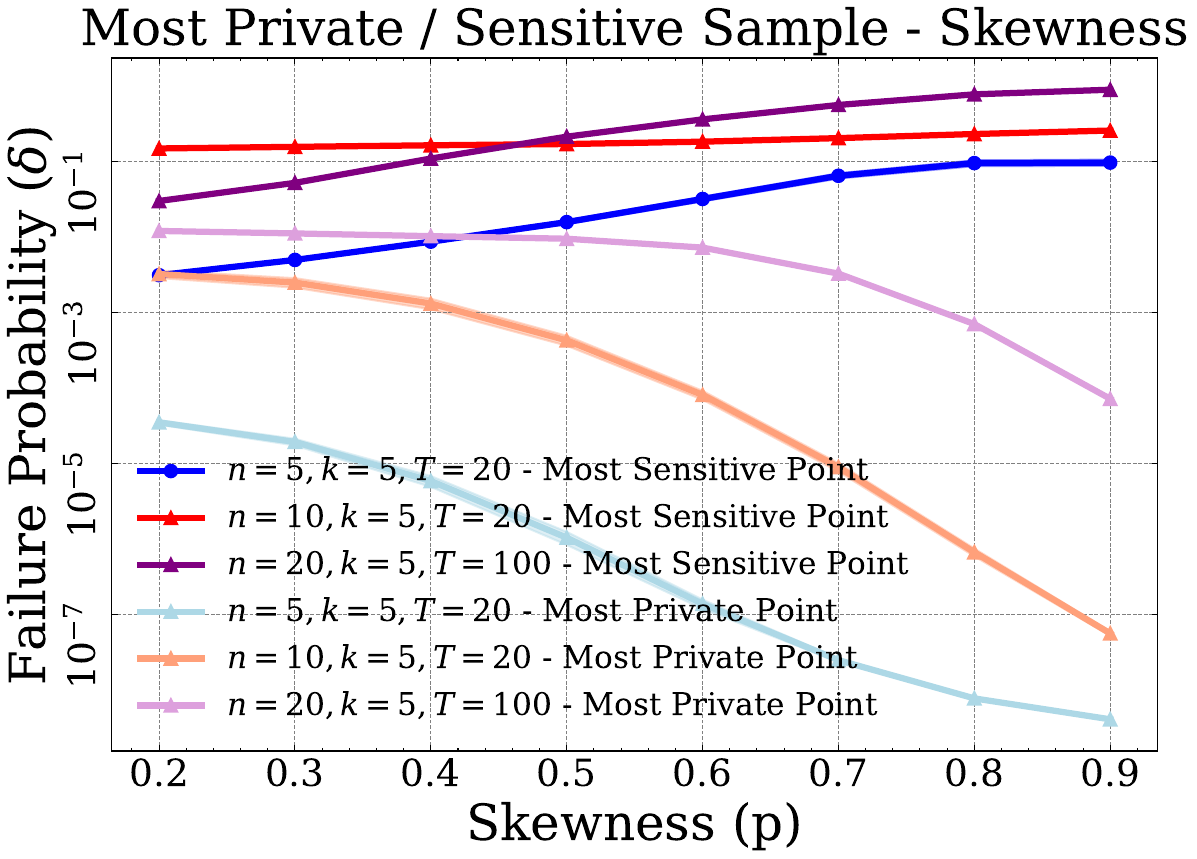}
\vspace{-6mm}
\caption{\small{Privacy budget trends across DDM configurations with varying skewness. 
}}\vspace{-10mm}
\label{fig.private_sensitive_sample}
\end{wrapfigure}
As evident in the figure, escalating the distribution's skewness boosts the privacy budget for the most sensitive points (represented by dark-colored lines). 
Conversely, the privacy budget for the most private points (depicted by light-colored lines) diminishes. This observation is consistent with our theoretical analyses: for highly skewed distributions, points distant from the main cluster exhibit reduced similarity, leading to augmented privacy leakage. Meanwhile, predominant points benefit from enhanced similarity, resulting in diminished privacy concerns.

\section{Main Lemmas and Corresponding Proofs}\label{app.main_lemma}
\subsection{Proof of Lemma~\ref{lm.DP_KL}}
\begin{proof}
    To prove the pDP guarantee of DDMs, we seek to a stronger notion termed probabilistically differential privacy. First, we present the definition of $(\epsilon, \delta)$-probabilistically differential privacy (PDP).
    
\begin{definitionJ}[($\epsilon, \delta$)-Probabilistically Differential Privacy~\citep{meiser2018approximate}]
    A randomized mechanism $\mathcal{M}: \mathcal{D} \to \mathcal{R}$ satisfies ($\epsilon, \delta$)-probabilistically differential private if for any adjacent datasets $\VV_0, \VV_1 \in \mathcal{D}$ and there exists sets $\mathcal{O}_0 \subseteq range(\mathcal{M})$ where $\mathcal{P}(\mathcal{M}(\VV_0) \subseteq \mathcal{O}_0) \leq \delta$, such that $\forall \mathcal{O} \subseteq range(\mathcal{M})$, we have
    \begin{equation}
        \mathcal{P}(\mathcal{M}(\VV_0)\in \mathcal{O} \backslash \mathcal{O}_0) \leq e^\epsilon \mathcal{P}(\mathcal{M}(\VV_1) \in \mathcal{O} \backslash \mathcal{O}_0)
    \end{equation}
\end{definitionJ}

The following lemma connects the relationship between PDP and DP.

\begin{lemmaJ}[($\epsilon, \delta$)-PDP to ($\epsilon, \delta$)-pDP~\citep{meiser2018approximate}\label{lm.PDP_DP}]
    If a randomized mechanism $\mathcal{M}$ satisfies ($\epsilon, \delta$)-PDP, it also satisfies ($\epsilon, \delta$)-pDP with respect to $(\VV_0, \vv^*)$.
\end{lemmaJ}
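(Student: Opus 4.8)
The plan is to reduce the pDP guarantee directly to the defining inequality of PDP via a routine set-decomposition argument, invoking the PDP hypothesis once for each of the two directions demanded by pDP. First I would recall that pDP with respect to $(\VV_0, \vv^*)$ requires, for every measurable $\mathcal{O} \subseteq range(\mathcal{M})$ and every orientation $\{i,j\} = \{0,1\}$, the bound $\mathcal{P}(\mathcal{M}(\VV_i) \in \mathcal{O}) \leq e^\epsilon \mathcal{P}(\mathcal{M}(\VV_j) \in \mathcal{O}) + \delta$, where $\VV_1 = \VV_0 \backslash \{\vv^*\}$. Since $\VV_0$ and $\VV_1$ are adjacent, the PDP hypothesis applies to this pair and supplies a ``bad set'' together with a one-sided comparison inequality that I will convert into the desired additive form.

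For the direction $i = 0,\ j = 1$, I would invoke PDP to obtain a set $\mathcal{O}_0 \subseteq range(\mathcal{M})$ with $\mathcal{P}(\mathcal{M}(\VV_0) \in \mathcal{O}_0) \leq \delta$ such that $\mathcal{P}(\mathcal{M}(\VV_0) \in \mathcal{O}' \backslash \mathcal{O}_0) \leq e^\epsilon \mathcal{P}(\mathcal{M}(\VV_1) \in \mathcal{O}' \backslash \mathcal{O}_0)$ for all $\mathcal{O}'$. Then, for a fixed $\mathcal{O}$, I would split
\begin{align}
\mathcal{P}(\mathcal{M}(\VV_0) \in \mathcal{O}) = \mathcal{P}(\mathcal{M}(\VV_0) \in \mathcal{O} \backslash \mathcal{O}_0) + \mathcal{P}(\mathcal{M}(\VV_0) \in \mathcal{O} \cap \mathcal{O}_0),
\end{align}
bound the second summand by $\mathcal{P}(\mathcal{M}(\VV_0) \in \mathcal{O}_0) \leq \delta$, and bound the first by applying the PDP inequality with $\mathcal{O}' = \mathcal{O}$ followed by the monotonicity estimate $\mathcal{P}(\mathcal{M}(\VV_1) \in \mathcal{O} \backslash \mathcal{O}_0) \leq \mathcal{P}(\mathcal{M}(\VV_1) \in \mathcal{O})$. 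Combining the two pieces yields exactly $\mathcal{P}(\mathcal{M}(\VV_0) \in \mathcal{O}) \leq e^\epsilon \mathcal{P}(\mathcal{M}(\VV_1) \in \mathcal{O}) + \delta$.

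For the reverse direction $i = 1,\ j = 0$, I would exploit the symmetry of the adjacency relation: applying the PDP hypothesis to the same pair but with $\VV_1$ playing the role of the first dataset produces a (possibly different) bad set and the complementary comparison, and the identical decomposition then gives $\mathcal{P}(\mathcal{M}(\VV_1) \in \mathcal{O}) \leq e^\epsilon \mathcal{P}(\mathcal{M}(\VV_0) \in \mathcal{O}) + \delta$. Taken together, the two directions establish $(\epsilon,\delta)$-pDP for every measurable $\mathcal{O}$.

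The argument is essentially mechanical; the only point requiring care is the \emph{asymmetry} baked into the PDP definition. Because the PDP inequality is stated one-sidedly, with the bad set removed only from the distribution of the first dataset, a single invocation cannot yield both pDP inequalities simultaneously, so one must apply the hypothesis twice, keep the two bad sets logically separate, and let the probability of each bad set (rather than of its intersection with $\mathcal{O}$) absorb the additive $\delta$ slack. I therefore expect the main, though modest, obstacle to be bookkeeping the set algebra $\mathcal{O} \cap \mathcal{O}_0$ versus $\mathcal{O} \backslash \mathcal{O}_0$ correctly across both orientations.
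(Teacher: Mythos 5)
Your proof is correct. Note that the paper does not actually prove this lemma: it imports the statement wholesale from \citep{meiser2018approximate}, so there is no in-paper argument to compare against. Your set-decomposition argument is the standard one and is complete: splitting $\mathcal{P}(\mathcal{M}(\VV_0) \in \mathcal{O})$ over $\mathcal{O} \backslash \mathcal{O}_0$ and $\mathcal{O} \cap \mathcal{O}_0$, absorbing the bad set into the additive $\delta$, and using monotonicity $\mathcal{P}(\mathcal{M}(\VV_1) \in \mathcal{O} \backslash \mathcal{O}_0) \leq \mathcal{P}(\mathcal{M}(\VV_1) \in \mathcal{O})$ yields exactly the pDP inequality for the orientation $i=0, j=1$. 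You are also right to flag the one genuinely delicate point: the PDP inequality as stated in Definition J.1 is one-sided, so the reverse orientation requires a second, independent invocation of the hypothesis with $(\VV_1, \VV_0)$ as the ordered pair; this is legitimate because the definition quantifies over all adjacent pairs and adjacency is symmetric, and you correctly keep the two bad sets $\mathcal{O}_0$ and $\mathcal{O}_1$ separate rather than trying to reuse one set for both directions.
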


The following lemma further characterizes PDP with KL divergence.

\begin{lemmaJ}[Characterization of ($\epsilon, \delta$)-PDP with Kl Divergence~\citep{lin2021privacy}\label{lm.PDP_KL}]
    Given two adjacent tabular datasets $\VV_0, \VV_1$, consider a mechanism $\mathcal{M}$, let $\pi_0^*$ and $\pi_1^*$ denote the probability measure of $\mathcal{M}(\VV_0), \MM(\VV_1)$ respectively. If there exist a constant $\tau$ such that \begin{align}
        \mathcal{D}_{\text{KL}}(\pi_0^* \| \pi_1^*) + \mathcal{D}_{\text{KL}}(\pi_1^* \| \pi_0^*)\leq \tau
    \end{align}
    the mechanism $\MM$ satisfies ($\epsilon, \frac{\tau}{\epsilon(1 - e^{-\epsilon})}$)-probabilistic differential privacy for all $\epsilon > 0$.
\end{lemmaJ}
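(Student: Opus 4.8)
The plan is to exhibit an explicit ``bad event'' $\mathcal{O}_0$ on which the pure-$\epsilon$ likelihood-ratio bound fails, verify that the pointwise bound holds by construction on its complement, and then control the $\pi_0^*$-mass of $\mathcal{O}_0$ using the symmetric KL hypothesis. Concretely, let $\ell(o) = \log \frac{\pi_0^*(o)}{\pi_1^*(o)}$ be the privacy-loss function (densities taken with respect to any common dominating measure), and set $\mathcal{O}_0 = \{ o : \ell(o) > \epsilon\}$, i.e.\ the set where $\pi_0^*(o) > e^{\epsilon}\pi_1^*(o)$. By the very definition of $\mathcal{O}_0$, for every measurable $\mathcal{O}$ we have $\pi_0^*(o) \le e^{\epsilon}\pi_1^*(o)$ pointwise on $\mathcal{O}\setminus\mathcal{O}_0$, so integrating gives $\PP(\MM(\VV_0)\in\mathcal{O}\setminus\mathcal{O}_0) \le e^{\epsilon}\,\PP(\MM(\VV_1)\in\mathcal{O}\setminus\mathcal{O}_0)$, which is exactly the PDP inequality. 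Hence it only remains to prove $\PP(\MM(\VV_0)\in\mathcal{O}_0) \le \frac{\tau}{\epsilon(1-e^{-\epsilon})}$.

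The key observation that makes the mass bound work is to use the \emph{symmetric} KL rather than a single direction. I would rewrite the hypothesis as
\begin{align}
\tau \ge \mathcal{D}_{\text{KL}}(\pi_0^*\|\pi_1^*) + \mathcal{D}_{\text{KL}}(\pi_1^*\|\pi_0^*) = \int \big(\pi_0^*(o) - \pi_1^*(o)\big)\,\ell(o)\,do,
\end{align}
and then note that the integrand is \emph{pointwise nonnegative}: the difference $\pi_0^*(o) - \pi_1^*(o)$ and the log-ratio $\ell(o)=\log\frac{\pi_0^*(o)}{\pi_1^*(o)}$ always share the same sign. This is precisely the advantage of the two-sided divergence; the single term $\mathcal{D}_{\text{KL}}(\pi_0^*\|\pi_1^*)=\EE_{\pi_0^*}[\ell]$ alone mixes positive and negative contributions and would not yield a sign-definite integral.

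With nonnegativity in hand, I would discard the mass outside $\mathcal{O}_0$ and lower-bound the integral restricted to $\mathcal{O}_0$. On that set one has simultaneously $\ell(o) > \epsilon$ and $\pi_0^*(o) - \pi_1^*(o) > (1 - e^{-\epsilon})\pi_0^*(o)$, the latter following from $\pi_1^*(o) < e^{-\epsilon}\pi_0^*(o)$, so that
\begin{align}
\tau \ge \int_{\mathcal{O}_0}\big(\pi_0^*(o) - \pi_1^*(o)\big)\,\ell(o)\,do \ge \epsilon(1-e^{-\epsilon})\int_{\mathcal{O}_0}\pi_0^*(o)\,do = \epsilon(1-e^{-\epsilon})\,\PP(\MM(\VV_0)\in\mathcal{O}_0).
\end{align}
Rearranging gives $\PP(\MM(\VV_0)\in\mathcal{O}_0)\le \tau / [\epsilon(1-e^{-\epsilon})] =: \delta$, which completes the verification of $(\epsilon,\delta)$-PDP.

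The two displayed inequalities are routine; the only genuine subtlety, and the step I would handle most carefully, is the measure-theoretic bookkeeping: choosing a common dominating measure so that $\ell$ and the density difference are well defined, and treating the boundary cases where $\pi_1^*(o)=0$ (absolute continuity, which forces the corresponding mass into $\mathcal{O}_0$) or where $\pi_0^*(o)=\pi_1^*(o)$ (zero contribution to the integral). Once the sign argument is justified, the conclusion holds for every $\epsilon>0$, matching the quantifier in the statement.
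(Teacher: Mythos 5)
Your proof is correct, and it is essentially the argument behind the cited result: the paper itself does not reprove this lemma (it is imported from \citep{lin2021privacy}), and your construction---taking $\mathcal{O}_0=\{o:\pi_0^*(o)>e^{\epsilon}\pi_1^*(o)\}$, observing that the symmetric-KL integrand $(\pi_0^*(o)-\pi_1^*(o))\,\ell(o)$ is pointwise nonnegative, and lower-bounding it by $\epsilon(1-e^{-\epsilon})\,\pi_0^*(o)$ on $\mathcal{O}_0$ to get $\pi_0^*(\mathcal{O}_0)\leq \tau/[\epsilon(1-e^{-\epsilon})]$---is exactly the standard route taken there. The only addition worth one line is that the PDP definition quantifies over both orderings of the adjacent pair, and the bound for $(\VV_1,\VV_0)$ follows by the identical argument with the roles of $\pi_0^*$ and $\pi_1^*$ swapped, since both your hypothesis and the integrand are symmetric in the two measures.
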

\end{proof}

\subsection{Proof of Lemma~\ref{lm.marginal_conditional_kl}}
\begin{proof}
    First consider generating a single sample. In the generation process, $\mathbf{v}_{T | \lambda} \to \mathbf{v}_{T-1 | \lambda} \to \cdot \cdot \cdot \to \mathbf{v}_{t | \lambda} \to \cdot \cdot \cdot \to \mathbf{v}_{T_{\text{rl}} | \lambda}$ forms a Markov chain. First we consider $\mathcal{D}_{\text{KL}}(p_{\phi}(\mathbf{v}_{T_{\text{rl}}|0})\|p_{\phi}(\mathbf{v}_{T_{\text{rl}}|1}))$:
    \begin{align}
        &\mathcal{D}_{\text{KL}}(p_{\phi}(\mathbf{v}_{T_{\text{rl}}|0})\|p_{\phi}(\mathbf{v}_{T_{\text{rl}}|1})) \\
        \stackrel{(i)}{\leq} & \mathcal{D}_{\text{KL}}(p_{\phi}(\mathbf{v}_{T_{\text{rl}}|0}, \mathbf{v}_{T_{\text{rl}}+1|0}, ..., \mathbf{v}_{T|0}) \| p_{\phi}(\mathbf{v}_{T_{\text{rl}}|1}, \mathbf{v}_{T_{\text{rl}}+1|1}, ..., \mathbf{v}_{T|1})) \\
        \stackrel{(ii)}{=}& \mathbb{E}_{p_{\phi}(\mathbf{v}_{T_{\text{rl}}|0}, \mathbf{v}_{T_{\text{rl}}+1|0}, ..., \mathbf{v}_{T|0})}\biggl [\frac{p_{\phi}(\mathbf{v}_{T|0})}{p_{\phi}(\mathbf{v}_{T|1})} + \sum_{t = T_{\text{rl}}+1}^{T}\log \frac{p_{\phi}(\mathbf{v}_{t-1|0} | \mathbf{v}_{t|0})}{p_{\phi}(\mathbf{v}_{t-1|1} | \mathbf{v}_{t|1})}\biggl] \\
        =& \sum_{\mathbf{v}_{T_{\text{rl}}}, ..., \mathbf{v}_{T}}p_{\phi}(\mathbf{v}_{T|0})p_{\phi}(\mathbf{v}_{T-1|0} | \mathbf{v}_{T|0}) \cdot\cdot\cdot p_{\phi}(\mathbf{v}_{T_{\text{rl}}|0} | \mathbf{v}_{T_{\text{rl}}+1|0})\biggl[\frac{p_{\phi}(\mathbf{v}_{T|0})}{p_{\phi}(\mathbf{v}_{T|1})} + \sum_{t = T_{\text{rl}}+1}^{T}\log \frac{p_{\phi}(\mathbf{v}_{t-1|0} | \mathbf{v}_{t|0})}{p_{\phi}(\mathbf{v}_{t-1|1} | \mathbf{v}_{t|1})}\biggl] \\
        \stackrel{(iii)}{=}&\mathcal{D}_{\text{KL}}(p_{\phi}(\mathbf{v}_{T | 0}) \| p_{\phi}(\mathbf{v}_{T | 1})) + \sum_{t=T_{\text{rl}}+1}^{T}\sum_{(\mathbf{v}_{t-1}, \mathbf{v}_t)}p_{\phi}(\mathbf{v}_{t|0})\prod_{i=1}^np_{\phi}(\mathbf{v}_{t-1|0}^i|\mathbf{v}_{t|0})\left(\sum_{i}\log\frac{p_{\phi}(\mathbf{v}_{t-1|0}^i|\mathbf{v}_{t|0})}{p_{\phi}(\mathbf{v}_{t-1|1}^i|\mathbf{v}_{t|1})}\right) \\
        =& \mathcal{D}_{\text{KL}}(p_{\phi}(\mathbf{v}_{T | 0}) \| p_{\phi}(\mathbf{v}_{T | 1})) + \sum_{t=T_{\text{rl}}+1}^{T}\sum_{i}\EE_{\mathbf{v} \sim p_{\phi}(\mathbf{v}_{t|0})}[\mathcal{D}_{\text{KL}}(p_{\phi}(\mathbf{v}_{t-1|0}^i|\mathbf{v}_{t|0} = \mathbf{v}) \| p_{\phi}(\mathbf{v}_{t-1|1}^i|\mathbf{v}_{t|1} = \mathbf{v}))]
    \end{align}
    where $(i)$ follows from Lemma~\ref{lm.monotonicity_kl}, $(ii)$ leverages Markov property and $(iii)$ is due to conditional independence. When generating $m$ independent samples, consider the coupled sum and we prove that
    \begin{align}
        &\mathcal{D}_{\text{KL}}(\MM_{T_{\text{rl}}}(\VV_0) \| \MM_{T_{\text{rl}}}(\VV_1)) + \mathcal{D}_{\text{KL}}(\MM_{T_{\text{rl}}}(\VV_1) \| \MM_{T_{\text{rl}}}(\VV_0)) \\
        \leq & m\sum_{t=T_{\text{rl}}+1}^{T}\sum_{\lambda \in \{0, 1\}}\sum_{i = 1}^n \EE_{\mathbf{v}_{t|\lambda}\sim p_{\phi}(\mathbf{v}_{t|\lambda})}[\mathcal{D}_{\text{KL}}(p_{\phi}(\mathbf{v}_{t-1|\lambda}^i|\mathbf{v}_{t|\lambda}) \| p_{\phi}(\mathbf{v}_{t-1|1 - \lambda}^i|\mathbf{v}_{t|1 - \lambda}))]
    \end{align}
\end{proof}

\subsection{Proof of Lemma~\ref{lm.conditional_kl_forward}}
\begin{proof}
    According to the generation process, from Lemma~\ref{lm.posterior_max_min}, we know that for any $\lambda \in \{0, 1\}$
    \begin{align}
        p_{\phi}(\mathbf{v}_{t-1|\lambda}^i|\mathbf{v}_{t|\lambda}) = \sum_{\mathbf{v}_{0 | \lambda}} p_{\phi}(\mathbf{v}_{i - 1 | \lambda} | \mathbf{v}_{0 | \lambda}, \mathbf{v}_{t | \lambda}) p_{\phi}(\mathbf{v}_{0 | \lambda} | \mathbf{v}_{t | \lambda}) \in \left[\frac{\mu_t^- \cdot \bar \mu_{t-1}^-}{\bar \mu_t^+}, \frac{\mu_t^+ \cdot \bar \mu_{t-1}^+}{\bar \mu_{t}^+}\right].
    \end{align}
    Define $f_1(\gamma_t) = \frac{k \cdot (\mu_t^+ \cdot \bar \mu_{t-1}^+)^2}{\bar \mu_t^+ \cdot \mu_t^- \cdot \bar \mu_{t-1}} \cdot \sqrt{2\gamma_t}, f_2(\tilde{\gamma}_t) = 2\log \left(\frac{\mu_t^+ \cdot \bar \mu_{t-1}^+}{\mu_t^- \cdot \bar \mu_{t-1}^-}\right) \cdot \tilde{\gamma}_t$. We have
    \begin{align}
        &\sum_{\lambda \in \{0, 1\}} \EE_{\mathbf{v}_{t}\sim p_{\phi}(\mathbf{v}_{t|\lambda})}[\mathcal{D}_{\text{KL}}(p_{\phi}(\mathbf{v}_{t-1|\lambda}^i|\mathbf{v}_{t|\lambda}) \| p_{\phi}(\mathbf{v}_{t-1|1 - \lambda}^i|\mathbf{v}_{t|1 - \lambda}))] \nonumber \\
        \stackrel{(i)}{\leq} & \sum_{\lambda \in \{0, 1\}} \EE_{\mathbf{v}_{t}\sim q(\mathbf{v}_{t|\lambda})}[\mathcal{D}_{\text{KL}}( p_{\phi}(\mathbf{v}_{t-1|\lambda}^i|\mathbf{v}_{t|\lambda}) \| p_{\phi}(\mathbf{v}_{t-1|1 - \lambda}^i|\mathbf{v}_{t|1 - \lambda}))] + f_2(\tilde{\gamma}_t)\\
        \stackrel{(ii)}{\leq} & \sum_{\lambda \in \{0, 1\}} \EE_{\mathbf{v}_{t}\sim q(\mathbf{v}_{t|\lambda})}[\mathcal{D}_{\text{KL}}(q(\mathbf{v}_{t-1|\lambda}^i|\mathbf{v}_{t|\lambda}) \| q(\mathbf{v}_{t-1|1 - \lambda}^i|\mathbf{v}_{t|1 - \lambda}))] + f_1(\gamma_t) + f_2(\tilde{\gamma}_t)
    \end{align}
    where $(i)$ is from Assumption~\ref{as.gap_forward_backward} while $(ii)$ is owing to Assumption~\ref{as.approximation_error} and Lemma~\ref{lm.forward_backward_gap}.
\end{proof}

\subsection{Proof of Lemma~\ref{lm.upper_bound_conditional_kl}}
\begin{proof}
    To begin with, we first calculate $q(\mathbf{v}_{t-1|\lambda}^i = \mathbf{v}_{t-1}^i, \mathbf{v}_{t|\lambda} = \mathbf{v}_t)$ and $q(\mathbf{v}_{t|\lambda} = \mathbf{v}_t), \lambda \in \{0, 1\}$.
    \begin{align}
        &q(\mathbf{v}_{t-1|\lambda}^i = \mathbf{v}_{t-1}^i, \mathbf{v}_{t|\lambda} = \mathbf{v}_t) \\
        =& \frac{1}{|\VV_\lambda|}\sum_{\mathbf{v}_0 \in \VV_0} q(\mathbf{v}_{t-1|\lambda}^i = \mathbf{v}_{t-1}^i, \mathbf{v}_{t|\lambda} = \mathbf{v}_t|\mathbf{v}_{0|\lambda} = \mathbf{v}_0) \\
        =& \frac{1}{|\VV_\lambda|}\sum_{\mathbf{v}_0 \in \VV_0} q( \mathbf{v}_{t|\lambda} = \mathbf{v}_t^i|\mathbf{v}_{t-1|\lambda}^i = \mathbf{v}_{t-1}^i) \frac{q(\mathbf{v}_{t-1|\lambda} = \mathbf{v}_{t-1}^i|\mathbf{v}_{0|\lambda}^i = \mathbf{v}_{0}^i)}{q(\mathbf{v}_{t|\lambda} = \mathbf{v}_{t}^i|\mathbf{v}_{0|\lambda}^i = \mathbf{v}_{0}^i)} \prod_{j} q( \mathbf{v}_{t|\lambda} = \mathbf{v}_t^j|\mathbf{v}_{0|\lambda}^j = \mathbf{v}_{t-1}^j)\\
        =& \frac{1}{|\VV_\lambda|}\sum_{\mathbf{v}_0 \in \VV_0}  \frac{(\mu_t^+)^{\mathbbm{1}_{\mathbf{v}_t^i = \mathbf{v}_{t-1}^i}}(\mu_t^-)^{\mathbbm{1}_{\mathbf{v}_t^i \neq \mathbf{v}_{t-1}^i}}(\bar{\mu}_{t-1}^+)^{\mathbbm{1}_{\mathbf{v}_{t-1}^i = \mathbf{v}_{0}^i}}(\bar{\mu}_{t-1}^-)^{\mathbbm{1}_{\mathbf{v}_{t-1}^i \neq \mathbf{v}_{0}^i}}}{(\bar{\mu}_{t}^+)^{\mathbbm{1}_{\mathbf{v}_{t}^i = \mathbf{v}_{0}^i}}(\bar{\mu}_{t}^-)^{\mathbbm{1}_{\mathbf{v}_{t}^i \neq \mathbf{v}_{0}^i}}} \prod_{j} q( \mathbf{v}_{t|\lambda} = \mathbf{v}_t^j|\mathbf{v}_{0|\lambda}^j = \mathbf{v}_{t-1}^j)\\
        =& \frac{1}{|\VV_\lambda|}\sum_{\mathbf{v}_0 \in \VV_\lambda}\underbrace{\frac{[(\mu_t^+)^{1 - \mathbbm{1}_{\mathbf{v}_t^i \neq \mathbf{v}_{t-1}^i}}(\mu_t^-)^{\mathbbm{1}_{\mathbf{v}_t^i \neq \mathbf{v}_{t-1}^i}}]\cdot[(\bar{\mu}_{t-1}^+)^{1 - \mathbbm{1}_{\mathbf{v}_{t-1}^i \neq \mathbf{v}_0^i}}(\bar{\mu}_{t-1}^-)^{\mathbbm{1}_{\mathbf{v}_{t-1}^i \neq \mathbf{v}_0^i}}]}{(\bar{\mu}_{t}^+)^{1 - \mathbbm{1}_{\mathbf{v}_{t}^i \neq \mathbf{v}_0^i}}(\bar{\mu}_t^-)^{\mathbbm{1}_{\mathbf{v}_{t}^i \neq \mathbf{v}_0^i}}}}_{\text{denote as } \tau(\mathbf{v}_0^i, \mathbf{v}_{t-1}^i, \mathbf{v}_t^i)} \cdot \frac{(\bar{\mu}_t^+)^{n - \bar\omega(\mathbf{v}_0, \mathbf{v}_t)}}{(\bar{\mu}_t^-)^{-\bar\omega(\mathbf{v}_0, \mathbf{v}_t)}}
    \end{align}
    Similar derivation for $q(\mathbf{v}_{t|\lambda} = \mathbf{v}_t)$, we obtain
    \begin{align}
        q(\mathbf{v}_{t|\lambda} = \mathbf{v}_t) = \frac{1}{|\VV_\lambda|} \sum_{\mathbf{v}_0 \in \VV_\lambda} (\bar{\mu}_t^+)^{n - \bar\omega(\mathbf{v}_0, \mathbf{v}_t)}(\bar{\mu}_t^-)^{\bar\omega(\mathbf{v}_0, \mathbf{v}_t)}
    \end{align}
    
    Now, we consider the coupled KL divergence:
    \begin{align}
        &\mathcal{D}_{\text{KL}}(q(\mathbf{v}_{t-1|0}^i|\mathbf{v}_{t|0}) \| q(\mathbf{v}_{t-1|1}^i|\mathbf{v}_{t|1})) + \mathcal{D}_{\text{KL}}(q(\mathbf{v}_{t-1|1}^i|\mathbf{v}_{t|1}) \| q(\mathbf{v}_{t-1|0}^i|\mathbf{v}_{t|0})) \\
        = & \sum_{\mathbf{v}_{t-1}^i}(q(\mathbf{v}_{t-1|0}^i = \mathbf{v}_{t-1}^i | \mathbf{v}_{t|0}) - q(\mathbf{v}_{t-1|1}^i = \mathbf{v}_{t-1}^i | \mathbf{v}_{t|1})) \log \frac{q(\mathbf{v}_{t-1|0}^i = \mathbf{v}_{t-1}^i | \mathbf{v}_{t|0})}{q(\mathbf{v}_{t-1|1}^i = \mathbf{v}_{t-1}^i | \mathbf{v}_{t|1})}\\
        = & \sum_{\mathbf{v}_{t-1}^i}(q(\mathbf{v}_{t-1|0}^i = \mathbf{v}_{t-1}^i | \mathbf{v}_{t|0}) - q(\mathbf{v}_{t-1|1}^i = \mathbf{v}_{t-1}^i | \mathbf{v}_{t|1})) \left[\log \frac{q(\mathbf{v}_{t-1|0}^i = \mathbf{v}_{t-1}^i, \mathbf{v}_{t|0})}{q(\mathbf{v}_{t-1|1}^i = \mathbf{v}_{t-1}^i, \mathbf{v}_{t|1})} + \log \frac{q(\mathbf{v}_{t|1})}{q(\mathbf{v}_{t|0})}\right]\\
        \stackrel{(i)}{=}& \sum_{\mathbf{v}_{t-1}^i}(q(\mathbf{v}_{t-1|0}^i = \mathbf{v}_{t-1}^i | \mathbf{v}_{t|0}) - q(\mathbf{v}_{t-1|1}^i = \mathbf{v}_{t-1}^i | \mathbf{v}_{t|1})) \log \frac{q(\mathbf{v}_{t-1|0}^i = \mathbf{v}_{t-1}^i, \mathbf{v}_{t|0})}{q(\mathbf{v}_{t-1|1}^i = \mathbf{v}_{t-1}^i, \mathbf{v}_{t|1})}\\
        \stackrel{(ii)}{\leq} & \frac{1}{2} \|q(\mathbf{v}_{t-1|0}^i = \mathbf{v}_{t-1}^i | \mathbf{v}_{t|0}) - q(\mathbf{v}_{t-1|1}^i = \mathbf{v}_{t-1}^i | \mathbf{v}_{t|1})\|_{l_1} \biggl[\max_{\mathbf{v}_{t-1}^i} \log \frac{(s + 1) \cdot q(\mathbf{v}_{t-1|0}^i = \mathbf{v}_{t-1}^i, \mathbf{v}_{t|0})}{s \cdot q(\mathbf{v}_{t-1|1}^i = \mathbf{v}_{t-1}^i, \mathbf{v}_{t|1})} \nonumber \\
        &- \min_{\mathbf{v}_{t-1}^i} \log \frac{(s+1) \cdot q(\mathbf{v}_{t-1|0}^i = \mathbf{v}_{t-1}^i, \mathbf{v}_{t|0})}{s \cdot q(\mathbf{v}_{t-1|1}^i = \mathbf{v}_{t-1}^i, \mathbf{v}_{t|1})}\biggl]
\end{align}
where $(i)$ is due to the fact that $\log \frac{q(\mathbf{v}_{t | 1})}{q(\mathbf{v}_{t|0})}$ is independent of $\mathbf{v}_{t-1 | \lambda}^i, \lambda \in \{0, 1\}$ and $(ii)$ is from Lemma~\ref{lm.l1_ineq}.


Before we dive into specific terms, we define two terms for simplicity: $\kappa(\vv^{*i}, \vv_0^i,\vv_{t-1}^i,\vv_t^i)=\frac{\tau(\vv_0^i, \vv_{t-1}^i, \vv_t^i)}{\tau(\vv^{*i}, \vv_{t-1}^i, \vv_t^i)}$ and $\bar{\kappa}(\vv^{*i}, \vv_0^i,\vv_{t-1}^i,\vv_t^i)=\frac{\tau(\vv_0^i, \vv_{t-1}^i, \vv_t^i)}{\tau(\vv^{*i}, \vv_{t-1}^i, \vv_t^i)} \cdot (\frac{\bar{\mu}_t^+}{\bar{\mu}_t^-})^{1_{v^*\neq \vv_t^i}- 1_{\vv_0^i\neq \vv_t^i}}$. Discussion on specific values of $\kappa(\vv_0^i, \vv_{t-1}^i, \vv_t^i), \kappa(\vv^{*i}, \vv_0^i,\vv_{t-1}^i,\vv_t^i)$ and $\bar{\kappa}(\vv^{*i}, \vv_0^i,\vv_{t-1}^i,\vv_t^i)$ are presented in Appendix~\ref{app.tau}.

Now, we consider the term $\log \frac{(s+1) \cdot q(\mathbf{v}_{t-1|0}^i = \mathbf{v}_{t-1}^i, \mathbf{v}_{t|0})}{s \cdot q(\mathbf{v}_{t-1|1}^i = \mathbf{v}_{t-1}^i, \mathbf{v}_{t|1})}$. For any $\mathbf{v}_t \in \mathcal{X}^n$,
\begin{align}
    &\log \frac{(s + 1) \cdot q(\mathbf{v}_{t-1|0}^i = \mathbf{v}_{t-1}^i, \mathbf{v}_{t|0} = \mathbf{v}_t)}{s \cdot q(\mathbf{v}_{t-1|1}^i = \mathbf{v}_{t-1}^i, \mathbf{v}_{t|1} = \mathbf{v}_t)}\\
    = & \log \biggl(1 + \frac{1}{\sum_{\mathbf{v}_0 \in \VV_1} \frac{\tau(\mathbf{v}_0^i, \mathbf{v}_{t-1}^i, \mathbf{v}_t^i)}{\tau(\mathbf{v}^{*i}, \mathbf{v}_{t-1}^i, \mathbf{v}_t^i)} (\bar\mu_t^+)^{\bar\omega(\mathbf{v}^*, \mathbf{v}_t)-\bar\omega(\mathbf{v}_0, \mathbf{v}_t)} (\bar\mu_t^-)^{\bar\omega(\mathbf{v}_0, \mathbf{v}_t) - \bar\omega(\mathbf{v}^*, \mathbf{v}_t)}}\biggl)\\
    = & \log \biggl(1 + \frac{1}{\sum_{\mathbf{v}_0 \in \VV_1} \frac{\tau(\mathbf{v}_0^i, \mathbf{v}_{t-1}^i, \mathbf{v}_t^i)}{\tau(\mathbf{v}^{*i}, \mathbf{v}_{t-1}^i, \mathbf{v}_t^i)} (\bar R_t)^{\bar\omega(\mathbf{v}^*, \mathbf{v}_t) - \bar \omega(\mathbf{v}_0, \mathbf{v}_t)}}\biggl)
\end{align}
Denote $Z_t^i(\mathbf{v}_0; \vv^*, \vv_t) = (\bar R_t)^{-(\bar\omega_{-i}(\mathbf{v}_0, \mathbf{v}_t) - \bar\omega_{-i}(\mathbf{v}^*, \mathbf{v}_t))}$ where $\bar\omega_{-i}(\vv, \tilde{\vv}) = \sum_{j=1,j\neq i}^n 1_{\vv^j\neq \tilde{\vv}^j} = \bar\omega(\vv, \tilde{\vv}) - 1_{\vv^i\neq \vv^i}$. $Z_t^i(\mathbf{v}_0; \vv^*, \vv_t) = (\bar R_t)^{-(\bar\omega(\mathbf{v}_0, \mathbf{v}_t) - \bar\omega(\mathbf{v}^*, \mathbf{v}_t))} \cdot \bar R_t ^{-1_{\vv^{*i}\neq \vv_t^i}+1_{(\vv_0)^i\neq \vv_t^i}}$. For simplicity, we shorthand $Z_t^i(\mathbf{v}_0; \vv^*, \vv_t)$ as $Z_t^i(\mathbf{v}_0)$. We have
\begin{align}
 &\log \frac{(s+1) \cdot q(\vv_{t-1|0}^i = \vv_{t-1}^i, \mathbf{v}_{t|0} = \mathbf{v}_t)}{s \cdot q(\vv_{t-1|1}^i = \vv_{t-1}^i, \mathbf{v}_{t|1} = \mathbf{v}_t)} \\
 =& \log \biggl(1 +   \frac{(s+1) \cdot q(\vv_{t-1|0}^i = \vv_{t-1}^i, \mathbf{v}_{t|0} = \mathbf{v}_t)- s\cdot q(\vv_{t-1|1}^i = \vv_{t-1}^i, \mathbf{v}_{t|1} = \mathbf{v}_t)}{ s\cdot q(\vv_{t-1|1}^i = \vv_{t-1}^i, \mathbf{v}_{t|1} = \mathbf{v}_t)}\biggl) \\
 =& \log\biggl(1 + \frac{1}{\sum_{\mathbf{v}_0\in \mathcal{V}_1} \bar{\kappa}(\vv^{*i}, \vv_0^i,\vv_{t-1}^i,\vv_t^i) Z_t^i(\mathbf{v}_0) }\biggl) \\
 =& \log\biggl(1 + \frac{1}{\sum_{\mathbf{v}_0\in \mathcal{V}_1,\vv_0^i= \vv_{t-1}^{i}} \bar{\kappa}(\vv^{*i}, \vv_0^i,\vv_{t-1}^i,\vv_t^i) Z_t^i(\mathbf{v}_0) + \sum_{\mathbf{v}_0\in \mathcal{V}_1,\vv_0^i\neq \vv_{t-1}^i} \bar{\kappa}(\vv^{*i}, \vv_0^i,\vv_{t-1}^i,\vv_t^i) Z_t^i(\mathbf{v}_0)}\biggl)
\end{align}

Observed that if $\vv_{t-1}^i = \vv^{*i}$, $\bar{\kappa}(\vv^{*i}, \vv_0^i, \vv_{t-1}^i,\vv_t^i) = 1$ or $\bar{R}_{t-1}^{-1}$. If $\vv_{t-1}^i \neq \vv^*$, $\bar{\kappa}(\vv^{*i}, \vv_0^i, \vv_{t-1}^i,\vv_t^i) = 1$ or $\bar{R}_{t-1}$. According to Appendix~\ref{app.tau}, we have the following qualities for the maximum and minimum of $\log \frac{(s+1) \cdot q(\vv_{t-1|0}^i = \vv_{t-1}^i, \mathbf{v}_{t|0} = \mathbf{v}_t)}{s \cdot q(\vv_{t-1|1}^i = \vv_{t-1}^i, \mathbf{v}_{t|1} = \mathbf{v}_t)}$.

\begin{align}
    &\max_{\vv_{t-1}^{i}\in[k]} \log \frac{(s+1) \cdot  q(\vv_{t-1|0}^i = \vv_{t-1}^{i}, \mathbf{v}_{t|0} = \mathbf{v}_t)}{s \cdot  q(\vv_{t-1|1}^i = \vv_{t-1}^{i}, \mathbf{v}_{t|1} = \mathbf{v}_t)} =  \log \frac{(s+1) \cdot  q(\vv_{t-1|0}^i = \vv^{*i}, \mathbf{v}_{t|0} = \mathbf{v}_t)}{s \cdot q(\vv_{t-1|1}^i =  \vv^{*i}, \mathbf{v}_{t|1} = \mathbf{v}_t)} \\ 
    & = \log \biggl(1 + \frac{1}{\sum_{\mathbf{v}_0\in \mathcal{V}_1,\mathbf{v}_0^i= \vv^{*i}} Z_t^i(\mathbf{v}_0) + \bar{R}_{t-1}^{-1}\sum_{\mathbf{v}_0\in \mathcal{V}_1,\mathbf{v}_0^i\neq \vv^{*i}} Z_t^i(\mathbf{v}_0) }\biggl)
\end{align}
\begin{align}
    &\min_{\vv_{t-1}^{i}\in[k]} \log \frac{(s+1) \cdot q(\vv_{t-1|0}^i = \vv_{t-1}^{i}, \mathbf{v}_{t|0} = \mathbf{v}_t)}{s \cdot q(\vv_{t-1|1}^i = \vv_{t-1}^{i}, \mathbf{v}_{t|1} = \mathbf{v}_t)} =  \log \frac{(s+1) \cdot q(\vv_{t-1|0}^i = \vv_{t-1}^{i*}, \mathbf{v}_{t|0} = \mathbf{v}_t)}{s \cdot q(\vv_{t-1|1}^i =  \vv_{t-1}^{i*}, \mathbf{v}_{t|1} = \mathbf{v}_t)} \\ 
    & = \log \biggl(1 + \frac{1}{\bar{R}_{t-1}\sum_{\mathbf{v}_0\in \mathcal{V}_1,\mathbf{v}_0^i= \vv_{t-1}^{i*}} Z_t^i(\mathbf{v}_0) + \sum_{\mathbf{v}_0\in \mathcal{V}_1,\mathbf{v}_0^i\neq \vv_{t-1}^{i*}} Z_t^i(\mathbf{v}_0)}\biggl)
\end{align}
where $\vv_{t-1}^{i*}=\argmax_{\vv_{t-1}^{i}\in[k] \backslash \{\vv^{*i}\}} \sum_{\mathbf{v}_0\in \mathcal{V}_1,\mathbf{v}_0^i= \vv_{t-1}^{i*}} Z_t^i(\mathbf{v}_0)$.

Now, we consider $\max_{\mathbf{v}_{t-1}^i} \log \frac{(s + 1) \cdot q(\mathbf{v}_{t-1|0}^i = \mathbf{v}_{t-1}^i, \mathbf{v}_{t|0})}{s \cdot q(\mathbf{v}_{t-1|1}^i = \mathbf{v}_{t-1}^i, \mathbf{v}_{t|1})} - \min_{\mathbf{v}_{t-1}^i} \log \frac{(s+1) \cdot q(\mathbf{v}_{t-1|0}^i = \mathbf{v}_{t-1}^i, \mathbf{v}_{t|0})}{s \cdot q(\mathbf{v}_{t-1|1}^i = \mathbf{v}_{t-1}^i, \mathbf{v}_{t|1})}$:
\begin{align}
    &\max_{\mathbf{v}_{t-1}^i} \log \frac{(s + 1) \cdot q(\mathbf{v}_{t-1|0}^i = \mathbf{v}_{t-1}^i, \mathbf{v}_{t|0})}{s \cdot q(\mathbf{v}_{t-1|1}^i = \mathbf{v}_{t-1}^i, \mathbf{v}_{t|1})} - \min_{\mathbf{v}_{t-1}^i} \log \frac{(s+1) \cdot q(\mathbf{v}_{t-1|0}^i = \mathbf{v}_{t-1}^i, \mathbf{v}_{t|0})}{s \cdot q(\mathbf{v}_{t-1|1}^i = \mathbf{v}_{t-1}^i, \mathbf{v}_{t|1})} \\
    = & \log \biggl(1 + \frac{1}{\sum_{\mathbf{v}_0\in \mathcal{V}_1,\mathbf{v}_0^i= \vv^{*i}} Z_t^i(\mathbf{v}_0) + \bar{R}_{t-1}^{-1}\sum_{\mathbf{v}_0\in \mathcal{V}_1,\mathbf{v}_0^i\neq \vv^{*i}} Z_t^i(\mathbf{v}_0) }\biggl) \nonumber \\
    &- \log \biggl(1 + \frac{1}{\bar{R}_{t-1}\sum_{\mathbf{v}_0\in \mathcal{V}_1,\mathbf{v}_0^i= \vv_{t-1}^{i*}} Z_t^i(\mathbf{v}_0) + \sum_{\mathbf{v}_0\in \mathcal{V}_1,\mathbf{v}_0^i\neq \vv_{t-1}^{i*}} Z_t^i(\mathbf{v}_0)}\biggl)\\
    = & \log \biggl( \frac{1 + \frac{1}{\sum_{\mathbf{v}_0\in \mathcal{V}_1,\mathbf{v}_0^i= \vv^{*i}} Z_t^i(\mathbf{v}_0) + \bar{R}_{t-1}^{-1}\sum_{\mathbf{v}_0\in \mathcal{V}_1,\mathbf{v}_0^i\neq \vv^{*i}} Z_t^i(\mathbf{v}_0) }}{1 + \frac{1}{\bar{R}_{t-1}\sum_{\mathbf{v}_0\in \mathcal{V}_1,\mathbf{v}_0^i= \vv_{t-1}^{i*}} Z_t^i(\mathbf{v}_0) + \sum_{\mathbf{v}_0\in \mathcal{V}_1,\mathbf{v}_0^i\neq \vv_{t-1}^{i*}} Z_t^i(\mathbf{v}_0)}}\biggl) \\
    \stackrel{(i)}{=} & \log \biggl( 1 + \frac{\bar{R}_{t-1}\sum_{\vv_0^i = \vv_{t-1}^{i*}}Z_t^i(\vv_0) + \sum_{\vv_0^i \neq \vv_{t-1}^{i*}}Z_t^i(\vv_0) - \sum_{\vv_0^i = \vv^{*i}} Z_t^i(\vv_0) - \bar{R}_{t-1}^{-1}\sum_{\vv_0^i \neq \vv^{*i}}Z_t^i(\vv_0)}{(\sum_{\vv_0^i = \vv^{*i}} Z_t^i(\vv_0) + \bar{R}_{t-1}^{-1}\sum_{\vv_0^i \neq \vv^{*i}}Z_t^i(\vv_0))(1 + \bar{R}_{t-1}\sum_{\vv_0^i = \vv_{t-1}^{i*}}Z_t^i(\vv_0) + \sum_{\vv_0^i \neq \vv_{t-1}^{i*}}Z_t^i(\vv_0))}\biggl) \label{eq.max_min}
\end{align}
where in $(i)$, we have omitted the summation constraint $\vv_0 \in \VV_1$ for brevity.

Further, we split $\sum_{\vv_0^i \neq \vv_{t-1}^{i*}}Z_t^i(\vv_0), \sum_{\vv_0^i \neq \vv^{*i}}Z_t^i(\vv_0)$ two terms as follows:
\begin{align}
    &\sum_{\vv_0^i \neq \vv_{t-1}^{i*}}Z_t^i(\vv_0) = \sum_{\vv_0^i = \vv^{*i}}Z_t^i(\vv_0) + \sum_{\vv_0^i \neq \vv_{t-1}^{i*}, \vv^{*i}}Z_t^i(\vv_0)\\
    &\sum_{\vv_0^i \neq \vv^{*i}}Z_t^i(\vv_0) = \sum_{\vv_0^i = \vv_{t-1}^{i*}}Z_t^i(\vv_0) + \sum_{\vv_0^i \neq \vv_{t-1}^{i*}, \vv^{*i}}Z_t^i(\vv_0)
\end{align}
Plugging back the split terms into Eq.~\eqref{eq.max_min}, we get
\begin{align}
    &\max_{\mathbf{v}_{t-1}^i} \log \frac{(s + 1) \cdot q(\mathbf{v}_{t-1|0}^i = \mathbf{v}_{t-1}^i, \mathbf{v}_{t|0})}{s \cdot q(\mathbf{v}_{t-1|1}^i = \mathbf{v}_{t-1}^i, \mathbf{v}_{t|1})} - \min_{\mathbf{v}_{t-1}^i} \log \frac{(s+1) \cdot q(\mathbf{v}_{t-1|0}^i = \mathbf{v}_{t-1}^i, \mathbf{v}_{t|0})}{s \cdot q(\mathbf{v}_{t-1|1}^i = \mathbf{v}_{t-1}^i, \mathbf{v}_{t|1})} \\
    = & \log \biggl( 1 + \frac{(a - 1)[(a + 1)y + z]}{(ax + y + z)(ay + x + z + 1)}\biggl) \\
    \stackrel{(i)}{\leq} & \log \biggl(1 + \frac{(a - 1)(a + 1)}{(a^2 + 1)x + ay + az +1}\biggl)\\
    = & \log (1+ \frac{(\bar{R}_{t-1}-1)(\bar{R}_{t-1}+1)}{(\bar{R}_{t-1}^2+1)\sum_{\mathbf{v}_0^i = \vv^{*i}} Z_t^i(\mathbf{v}_0)+ \bar{R}_{t-1}\sum_{\mathbf{v}_0^i= \vv_{t-1}^{i*}} Z_t^i(\mathbf{v}_0) + \bar{R}_{t-1}\sum_{\vv_0^i\neq \vv_{t-1}^{i*},\vv^{*i} } Z_t^i(\mathbf{v}_0) + 1 }) \\
    \stackrel{(ii)}{\leq} & \log (1+ \frac{(\bar{R}_{t-1}-1)(\bar{R}_{t-1}+1)}{(\bar{R}_{t-1}^2+1) \cdot \zeta(\VV_1^{i|\vv^{*i}}, \mathbf{v}^*, \mathbf{v}_t, t)+ \bar{R}_{t-1}/\bar{R}_t \cdot \zeta(\VV_1 \backslash \VV_1^{i|\vv^{*i}}, \mathbf{v}^*, \mathbf{v}_t, t) + 1 }) \\
    \leq & \log (1+ \frac{(\bar{R}_{t-1}-1)(\bar{R}_{t-1}+1)}{(\bar{R}_{t-1}^2+1) \cdot \zeta(\VV_1^{i|\vv^{*i}}, \mathbf{v}^*, \mathbf{v}_t, t)+ \bar{R}_{t-1}/\bar{R}_t \cdot \zeta(\VV_1, \mathbf{v}^*, \mathbf{v}_t, t) + 1 })
\end{align}
where we define $a = \bar{R}_{t-1}, x = \sum_{\vv_0 \in \VV_1, \vv_0^i = \vv^{*i}}Z_t^i(\vv_0), y = \sum_{\vv_0 \in \VV_1, \vv_0^i = \vv_{t-1}^{i*}}Z_t^i(\vv_0), z = \sum_{\vv_0 \in \VV_1, \vv_0^i \neq \vv^{*i}, \vv_{t-1}^{i*}}Z_t^i(\vv_0)$, and $(i)$ is from Lemma~\ref{lm.inequality}, $(ii)$ is from the definition of $Z_t^i(\vv_0)$. 

Further, we consider $\|q(\mathbf{v}_{t-1|0}^i | \mathbf{v}_{t|0} = \mathbf{v}_t) - q(\mathbf{v}_{t-1|1}^i | \mathbf{v}_{t|1} = \mathbf{v}_t)\|_{l_1}$
\begin{align}
    &\|q(\mathbf{v}_{t-1|0}^i | \mathbf{v}_{t|0} = \mathbf{v}_t) - q(\mathbf{v}_{t-1|1}^i | \mathbf{v}_{t|1} = \mathbf{v}_t)\|_{l_1} \\
    = & \sum_{\mathbf{v}_{t-1}^i = 1}^{k}\biggl|\frac{(\tau(\mathbf{v}^{*i}, \mathbf{v}_{t-1}^i, \mathbf{v}_t^i)(\bar R_t)^{-\bar\omega(\mathbf{v}^*, \mathbf{v}_t)})(\sum_{\mathbf{v}_0 \in \VV_1} (\bar R_t)^{-\bar \omega(\mathbf{v}_0, \mathbf{v}_t)})}{(\sum_{\mathbf{v}_0 \in \VV_1}(\bar R_{t})^{-\bar \omega(\mathbf{v}_0, \mathbf{v}_t)}))(\sum_{\mathbf{v}_0 \in \VV_0}(\bar R_{t})^{-\bar \omega(\mathbf{v}_0, \mathbf{v}_t)}))} \nonumber\\
    & - \frac{((\bar R_t)^{-\bar\omega(\mathbf{v}^*, \mathbf{v}_t)})(\sum_{\mathbf{v}_0 \in \VV_1}\tau(\mathbf{v}_0^i, \mathbf{v}_{t-1}^i, \mathbf{v}_t^i)(\bar R_t)^{-\bar \omega(\mathbf{v}_0, \mathbf{v}_t)})}{(\sum_{\mathbf{v}_0 \in \VV_1}(\bar R_{t})^{-\bar \omega(\mathbf{v}_0, \mathbf{v}_t)})(\sum_{\mathbf{v}_0 \in \VV_0}(\bar R_{t})^{-\bar \omega(\mathbf{v}_0, \mathbf{v}_t)})}\biggl|\\
    = & \sum_{\mathbf{v}_{t-1}^i = 1}^{k} \biggl|\frac{\sum_{\mathbf{v}_0 \in \VV_1}\tau(\mathbf{v}^{*i}, \mathbf{v}_{t-1}^i, \mathbf{v}_t^i) \cdot (1 - \frac{\tau(\mathbf{v}_0^i, \mathbf{v}_{t-1}^i, \mathbf{v}_t^i)}{\tau(\mathbf{v}^{*i}, \mathbf{v}_{t-1}^i, \mathbf{v}_t^i)})(\bar R_t)^{\bar \omega(\mathbf{v}^*, \mathbf{v}_t) - \bar \omega(\mathbf{v}_0, \mathbf{v}_t)}}{(\sum_{\mathbf{v}_0 \in \VV_1}(\bar R_t)^{\bar \omega(\mathbf{v}^*, \mathbf{v}_t) - \bar \omega(\mathbf{v}_0, \mathbf{v}_t)})(1 + \sum_{\mathbf{v}_0 \in \VV_1}(\bar R_t)^{\bar \omega(\mathbf{v}^*, \mathbf{v}_t) - \bar \omega(\mathbf{v}_0, \mathbf{v}_t)})}\biggl|\\
    \stackrel{(i)}{\leq} & \mu_t^+ \cdot (\frac{\bar{\mu}_{t-1}^+}{\bar{\mu}_{t}^+} - \frac{\bar{\mu}_{t-1}^-}{\bar{\mu}_{t}^-}) \cdot \frac{1}{1 + \sum_{\vv_0 \in \VV_1}(\bar R_t)^{\bar \omega(\mathbf{v}^*, \mathbf{v}_t) - \bar \omega(\mathbf{v}_0, \mathbf{v}_t)}}
\end{align}
where $(i)$ is from Lemma~\ref{lm.l1_conditional_gap}.
Therefore, summarizing the above, we obtain
\begin{align}
     &\mathcal{D}_{\text{KL}}(q(\mathbf{v}_{t-1|0}^i|\mathbf{v}_{t|0}) \| q(\mathbf{v}_{t-1|1}^i|\mathbf{v}_{t|1})) + \mathcal{D}_{\text{KL}}(q(\mathbf{v}_{t-1|1}^i|\mathbf{v}_{t|1}) \| q(\mathbf{v}_{t-1|0}^i|\mathbf{v}_{t|0})) \label{eq.D_KL_I} \\
     \leq & \frac{1}{2} \|q(\mathbf{v}_{t-1|0}^i = \mathbf{v}_{t-1}^i | \mathbf{v}_{t|0}) - q(\mathbf{v}_{t-1|1}^i = \mathbf{v}_{t-1}^i | \mathbf{v}_{t|1})\|_{l_1} \biggl[\max_{\mathbf{v}_{t-1}^i} \log \frac{q(\mathbf{v}_{t-1|0}^i = \mathbf{v}_{t-1}^i, \mathbf{v}_{t|0})}{q(\mathbf{v}_{t-1|1}^i = \mathbf{v}_{t-1}^i, \mathbf{v}_{t|1})} \nonumber \\
     &- \min_{\mathbf{v}_{t-1}^i} \log \frac{q(\mathbf{v}_{t-1|0}^i = \mathbf{v}_{t-1}^i, \mathbf{v}_{t|0})}{q(\mathbf{v}_{t-1|1}^i = \mathbf{v}_{t-1}^i, \mathbf{v}_{t|1})}\biggl]\\
     \leq & \frac{\mu_t^+ \cdot (\frac{\bar{\mu}_{t-1}^+}{\bar{\mu}_{t}^+} - \frac{\bar{\mu}_{t-1}^-}{\bar{\mu}_{t}^-})}{1 + \zeta(\VV_1, \mathbf{v}^*, \mathbf{v}_t, t)} \cdot \log \biggl(1+ \frac{(\bar{R}_{t-1}-1)(\bar{R}_{t-1}+1)}{(\bar{R}_{t-1}^2+1) \cdot \zeta(\VV_1^{i|\vv^{*i}}, \mathbf{v}^*, \mathbf{v}_t, t)+ \bar{R}_{t-1}/\bar{R}_t \cdot \zeta(\VV_1, \mathbf{v}^*, \mathbf{v}_t, t) + 1 }\biggl)
\end{align}
Define $\Delta_t^i(\cdot)$ as: 
\begin{align}
    \Delta_t^i(\mathbf{v}_t) = \frac{\mu_t^+ \cdot (\frac{\bar{\mu}_{t-1}^+}{\bar{\mu}_{t}^+} - \frac{\bar{\mu}_{t-1}^-}{\bar{\mu}_{t}^-})}{1 + \zeta(\VV_1, \mathbf{v}^*, \mathbf{v}_t, t)} \cdot \log \biggl(1+ \frac{(\bar{R}_{t-1}-1)(\bar{R}_{t-1}+1)}{(\bar{R}_{t-1}^2+1) \cdot \zeta(\VV_1^{i|\vv^{*i}}, \mathbf{v}^*, \mathbf{v}_t, t)+ \bar{R}_{t-1}/\bar{R}_t \cdot \zeta(\VV_1, \mathbf{v}^*, \mathbf{v}_t, t) + 1 }\biggl)
\end{align}
We consider a upper bound on $\sum_{i = 1}^n \Delta_t^i$. Let $C_t = \frac{\mu_t^+ \cdot (\frac{\bar{\mu}_{t-1}^+}{\bar{\mu}_{t}^+} - \frac{\bar{\mu}_{t-1}^-}{\bar{\mu}_{t}^-})}{1 + \zeta(\VV_1, \mathbf{v}^*, \mathbf{v}_t, t)}$, we have
\begin{align}
    \sum_{i = 1}^n \Delta_t^i &=  C_t \sum_{i = 1}^n \log \biggl(1+ \frac{(\bar{R}_{t-1}-1)(\bar{R}_{t-1}+1)}{(\bar{R}_{t-1}^2+1) \cdot \zeta(\VV_1^{i|\vv^{*i}}, \mathbf{v}^*, \mathbf{v}_t, t)+ \bar{R}_{t-1}/\bar{R}_t \cdot \zeta(\VV_1, \mathbf{v}^*, \mathbf{v}_t, t) + 1 }\biggl) \\
    &=:  \Delta_t
\end{align}

From above, we have,
\begin{align}
    \sum_{i = 1}^n \mathcal{D}_{\text{KL}}(q(\mathbf{v}_{t-1|0}^i|\mathbf{v}_{t|0}) \| q(\mathbf{v}_{t-1|1}^i|\mathbf{v}_{t|1})) + \mathcal{D}_{\text{KL}}(q(\mathbf{v}_{t-1|1}^i|\mathbf{v}_{t|1}) \| q(\mathbf{v}_{t-1|0}^i|\mathbf{v}_{t|0})) \leq \Delta_t(\mathbf{v}_t)
\end{align}

\end{proof}

\subsection{Proof of Lemma~\ref{lm.measure_partition_bound}}
According to measure partition, we have split measure $q(\mathbf{v}_{t | 0})$ into $q(\mathbf{v}_t\in\mathcal{S}_a), q(\mathbf{v}_t\in\mathcal{S}_b)$ and $q(\mathbf{v}_t\in\mathcal{S}_c)$. Therefore,
\begin{align}
    \mathbb{E}_{\mathbf{v}_t \sim q(\mathbf{v}_{t | 0})}[\Delta_t(\mathbf{v}_t)] = q(\mathbf{v}_t\in\mathcal{S}_a) \cdot \Delta_t(\mathbf{v}_t) + q(\mathbf{v}_t\in\mathcal{S}_b) \cdot \Delta_t(\mathbf{v}_t) + q(\mathbf{v}_t\in\mathcal{S}_c) \cdot \Delta_t(\mathbf{v}_t)
\end{align}
Now, we consider the behavior of $\Delta_t(\cdot)$ with respect to $\mathbf{v}_t$ for each measure.
For measure $q(\mathbf{v}_t\in\mathcal{S}_a)$ and $q(\mathbf{v}_t\in\mathcal{S}_c)$, since $\zeta(\VV_1, \mathbf{v}^*, \mathbf{v}_t, t) \geq \text{Sim}(\VV_1, \mathbf{v}^*, t)$, we have
\begin{align}
    \Delta_t(\mathbf{v}_t) \leq & \frac{\mu_t^+ (\bar{\mu}_{t-1}^+ / \bar{\mu}_{t}^+ - \bar{\mu}_{t-1}^- / \bar{\mu}_{t}^-)}{1 +  \text{Sim}(\VV_1, \mathbf{v}^*, t)} \cdot \sum_{i = 1}^n\log (1+\frac{\bar{R}_{t-1}^2-1}{\bar{R}_{t-1}^2 \cdot \text{Sim}(\VV_1^{i|\vv^{*i}}, \mathbf{v}^*, t)  + \text{Sim}(\VV_1, \mathbf{v}^*, t)+1})\\
    \stackrel{\text{def}}{=:} & \frac{n}{s^{\psi_t}}
\end{align}
where $\mathcal{A}_t = \mu_t^+ \cdot (\bar{\mu}_{t-1}^+ / \bar{\mu}_{t}^+ - \bar{\mu}_{t-1}^- / \bar{\mu}_{t}^-)$ and $\mathcal{B}_t = \bar{R}_{t-1}^2-1$. Here, we use notation $\frac{1}{s^{\psi_t}}$ instead of a single symbol is to clearly show how the order of privacy leakage varies with respect to $t$ in generation process.
For measure $q(\mathbf{v}_t\in\mathcal{S}_b)$, we have much smaller privacy leakage than $q(\mathbf{v}_t\in\mathcal{S}_a)$ and $q(\mathbf{v}_t\in\mathcal{S}_c)$. When the $\text{supp}(\VV_0) = \text{supp}(\VV_1)$, $\VV_1^{i|\vv^{*i}} \neq \emptyset$, we have
\begin{align}
    \Delta_t(\mathbf{v}_t) \leq & \frac{\mathcal{A}_t}{1 +  \bar R_t^{\eta_t'}} \cdot \sum_{i = 1}^n\log \biggl(1 + \frac{\mathcal{B}_t}{1 + \bar R_{t-1}^2 \cdot \bar R_t^{\eta_t'}}\biggl) \\
    \leq & \frac{n \cdot \mathcal{A}_t \cdot \mathcal{B}_t}{(1 + \bar R_t^{\eta_t'})(1+\bar R_{t-1} \cdot \bar R_t^{\eta_t'})} \leq \frac{n \cdot \mathcal{A}_t \cdot \mathcal{B}_t}{\bar R_{t-1}^2 \cdot \bar R_t^{2\eta_t'}}
\end{align}
Therefore,
\begin{align}
     \mathbb{E}_{\mathbf{v}_t}[\Delta_t(\mathbf{v}_t)] \leq q(\mathbf{v}_t\in\mathcal{S}_a) \cdot \frac{n}{s^{\psi_t}} + q(\mathbf{v}_t\in\mathcal{S}_b) \cdot \frac{n \cdot \mathcal{A}_t \cdot \mathcal{B}_t}{\bar R_{t-1}^2 \cdot \bar R_t^{2\eta_t'}} + q(\mathbf{v}_t\in\mathcal{S}_c) \cdot \frac{n}{s^{\psi_t}}
\end{align}

    Now, based on the above partition, we further estimate the measure of $q(\mathbf{v}_t\in\mathcal{S}_a), q(\mathbf{v}_t\in\mathcal{S}_b)$ and $q(\mathbf{v}_t\in\mathcal{S}_c)$. Recall that $N_{\eta_t}(\mathbf{v}^*) = |\{\mathbf{v} \in \VV_1 \text{ s.t. } \bar\omega(\mathbf{v}, \mathbf{v}^*) \leq \eta_t\}|$. We have the following:
    \begin{align}
        q(\mathbf{v}_t\in\mathcal{S}_b) \leq & q(\mathbf{v}_t; (1) \mathbf{v}_t \text{ is diffused from } \mathbf{v}_0 \in \VV_0 \text{ s.t. } \bar\omega(\mathbf{v}', \mathbf{v}^*) > \eta_t)\\
        \leq & \frac{s - N_{\eta_t}(\mathbf{v}^*)}{s}\\
        q(\mathbf{v}_t\in\mathcal{S}_c) \leq & q(\mathbf{v}_t; (1) \mathbf{v}_t \text{ is diffused from } \mathbf{v}_0 \in \VV_0 \text{ s.t } \bar\omega(\mathbf{v}_0, \mathbf{v}^*) \geq \eta_t + 1 \text{ and} \nonumber \\
        & (2) \forall \mathbf{v}_0' \in \VV_0, \bar\omega(\mathbf{v}_t, \mathbf{v}_0') \geq \eta_t - \eta_t' + 2) \\ 
        \leq & q(\mathbf{v}_t; \mathbf{v}_t \text{ is diffused from }\mathbf{v}_0 \in \VV_0 \text{ s.t. } \bar\omega(\mathbf{v}_0, \mathbf{v}^*) \geq \eta_t + 1, \bar\omega(\mathbf{v}_t, \mathbf{v}_0) \geq \eta_t - \eta_t' + 2)\\
        \leq & \frac{s - N_{\eta_t}(\mathbf{v}^*)}{s} \cdot \min\{(n(1 - \bar\mu_t^+))^{\eta_t - \eta_t' + 2}, 1\} \label{eq.S_c}
    \end{align} 
    As for $q(\mathbf{v}_t\in\mathcal{S}_a)$, we need further characterization. Let $c_t^*$ be a tunable positive constant and we have the following:
    \begin{align}
        q(\mathbf{v}_t\in\mathcal{S}_a) \leq & q(\mathbf{v}_t; \mathbf{v}_t \text{ is diffused from } \mathbf{v}_0 \in \VV_0 \text{ s.t. } \bar\omega(\mathbf{v}_0, \mathbf{v}^*)\leq \eta_t) \nonumber \\
        +& q(\mathbf{v}_t; \mathbf{v}_t \text{ is diffused from } \mathbf{v}_0 \in \VV_0 \text{ s.t. } \bar\omega(\mathbf{v}_0, \mathbf{v}^*) \in[\eta_t + 1, (1+c_t^*)\eta_t], \bar\omega(\mathbf{v}_t, \mathbf{v}^*) \leq \eta_{t})\nonumber \\
        +& q(\mathbf{v}_t; \mathbf{v}_t \text{ is diffused from } \mathbf{v}_0 \in \VV_0 \text{ s.t. } \bar\omega(\mathbf{v}_0, \mathbf{v}^*) \geq (1+c_t^*)\eta_t + 1, \bar\omega(\mathbf{v}_t, \mathbf{v}^*) \leq \eta_{t})\\
        \leq & \underbrace{\frac{N_{\eta_t}(\mathbf{v}^*)}{s} + \sum_{\eta = \eta_t + 1}^{(1+c_t^*)\eta_t} \frac{\Delta N_{\eta}(\mathbf{v}^*)}{s}\tbinom{n}{\eta - \eta_t} (\bar \mu_t^-)^{\eta - \eta_t}}_{\text{merge together}} + \frac{s - N_{(1 + c_t^*)\eta_t}(\mathbf{v}^*)}{s} \max_{k; k \geq \eta_t} \tbinom{k}{k-\eta_t}(\bar \mu_t^-)^{k - \eta_t} \\
        \leq & \frac{N_{(1 + c_t^*)\eta_t}(\mathbf{v}^*)}{s} + \frac{s - N_{(1 + c_t^*)\eta_t}(\mathbf{v}^*)}{s} \max_{h; h \geq (1 + c_t^*)\eta_t} \tbinom{h}{h-\eta_t}(\bar \mu_t^-)^{h - \eta_t}\\
        \stackrel{(i)}{\leq} & \frac{N_{(1 + c_t^*)\eta_t}(\mathbf{v}^*)}{s} + \frac{s - N_{(1 + c_t^*)\eta_t}(\mathbf{v}^*)}{s} \min \max_{h; h \geq (1 + c_t^*)\eta_t} \biggl\{ \biggl[ \frac{heq_t}{h - \eta_t}\biggl]^{h - \eta_t}, \biggl[ \frac{he}{\eta_t}\biggl]^{\eta_t}q_t^{h - \eta_t} \biggl\}\\
        \stackrel{(ii)}{\leq} & \frac{N_{(1 + c_t^*)\eta_t}(\mathbf{v}^*)}{s} \nonumber \\
        +& \frac{s - N_{(1 + c_t^*)\eta_t}(\mathbf{v}^*)}{s} \biggl[((1+c_t^*)e)^{\eta_t} (\bar \mu_t^-)^{c_t^*\eta_t}\mathbbm{1}_{c_t^* < 1} + (\frac{(1+c_t^*)e}{c_t^*})^{c_t^*\eta_t}(\bar \mu_t^-)^{c_t^*\eta_t}\mathbbm{1}_{c_t^* \geq 1}\biggl]\label{eq.c_balance}
    \end{align}
    $(i)$ is from $\tbinom{n}{k} \leq (\frac{en}{k})^k$ and $(ii)$ is from Corollary~\ref{corollary.combinatorial_ineq} where we need $e\bar\mu_t^-<\frac{4}{5}$ (i.e. $k \geq 3$).
    
    Now, we want further balance the two terms $\frac{N_{(1 + c_t^*)\eta_t}(\mathbf{v}^*)}{s}$ and $\frac{s - N_{(1 + c_t^*)\eta_t}(\mathbf{v}^*)}{s} \biggl[((1+c_t^*)e)^{\eta_t} (\bar \mu_t^-)^{c_t^*\eta_t}\mathbbm{1}_{c_t^* < 1} + (\frac{(1+c_t^*)e}{c_t^*})^{c_t^*\eta_t}(\bar \mu_t^-)^{c_t^*\eta_t}\mathbbm{1}_{c_t^* \geq 1}\biggl]$, i.e. balancing $\frac{N_{(1+c_t^*)\eta_t}(\mathbf{v}^*)}{s - N_{(1+c_t^*)\eta_t}(\mathbf{v}^*)}$ and $\biggl[((1+c_t^*)e)^\eta_t (\bar \mu_t^-)^{c_t^*\eta_t}\mathbbm{1}_{c_t^* < 1} + (\frac{(1+c_t^*)e}{c_t^*})^{c_t^*\eta_t}(\bar \mu_t^-)^{c_t^*\eta_t}\mathbbm{1}_{c_t^* \geq 1}\biggl]$. First, compare $\frac{N_{2\eta_t}(\mathbf{v}^*)}{s - N_{2\eta_t}(\mathbf{v}^*)}$ and $(2e\bar \mu_t^-)^{\eta_t}$. 
    
    (1) If $\frac{N_{2\eta_t}(\mathbf{v}^*)}{s - N_{2\eta_t}(\mathbf{v}^*)} > (2e\bar \mu_t^-)^{\eta_t}$, we can find a smaller $c_t^*$ to make the bound Eq.~\eqref{eq.c_balance} tight. Instead of working directly on $((1+c_t^*)e)^{\eta_t} (\bar \mu_t^-)^{c_t^*\eta_t} \leq \frac{N_{(1+c_t^*)\eta_t}(\mathbf{v}^*)}{s - N_{(1+c_t^*)\eta_t}(\mathbf{v}^*)}$, we consider a sufficient condition: found smallest $c_t^*$ such that 
    \begin{align}
        (2e)^{\eta_t} (\bar \mu_t^-)^{c_t^*\eta_t} \leq \frac{N_{(1+c_t^*)\eta_t}(\mathbf{v}^*)}{s - N_{(1+c_t^*)\eta_t}(\mathbf{v}^*)}
    \end{align}
    i.e.
    \begin{align}
        c_t^* \geq \frac{\frac{1}{\eta_t}\log \biggl(\frac{s - N_{(1+c_t^*)\eta_t}(\mathbf{v}^*)}{N_{(1+c_t^*)\eta_t}(\mathbf{v}^*)}\biggl) + 1 + \log2}{\log \frac{1}{\bar\mu_t^-}} \label{eq.c_inequality1}
    \end{align}
    Select the smallest $c_t^*$ that satisfies Eq.~\eqref{eq.c_inequality1}. Since $(1 + c_t^*)\eta_t \in \{0, 1, 2, ..., n\}$. Let $\eta_t^* = \lceil (1 + c^*)\eta_t \rceil$. 
    
    (2) If $\frac{N_{2\eta_t}(\mathbf{v}^*)}{s - N_{2\eta_t}(\mathbf{v}^*)} \leq (2e\mu_t^-)^{\eta_t}$, we want to find the smallest $c$ such that $(\frac{(1+c_t^*)e}{c_t^*})^{c_t^*\eta_t}(\mu_t^-)^{c_t^*\eta_t} \leq \frac{N_{(1+c_t^*)\eta_t}(\mathbf{v}^*)}{s - N_{(1+c_t^*)\eta_t}(\mathbf{v}^*)}$. Similar, since $(\frac{1 + c_t^*}{c_t^*})^{c_t^*} \leq e$, we also consider a sufficient condition: found smallest $c_t^*$ such that 
    \begin{align}
        e^{\eta_t}(e\mu_t^-)^{c_t^*\eta_t} \leq \frac{N_{(1+c_t^*)\eta_t}(\mathbf{v}^*)}{s - N_{(1+c_t^*)\eta_t}(\mathbf{v}^*)}
    \end{align}
    i.e.
    \begin{align}
        c_t^* \geq \frac{\frac{1}{\eta_t}\log\biggl(\frac{s - N_{(1+c_t^*)\eta_t}(\mathbf{v}^*)}{N_{(1+c_t^*)\eta_t}(\mathbf{v}^*)}\biggl)}{\log \frac{1}{\bar\mu_t^-} - 1} \label{eq.c_inequality2}
    \end{align}
    
    Select the smallest $c_t^*$ that satisfies Eq.~\eqref{eq.c_inequality2}. 
    
    The above procedure for finding $(1 + c_t^*)\eta_t$ can be described in Algorithm~\ref{alg.c_t^*}.
    After finding the most suitable $(1 + c_t^*)\eta_t$, $q(\mathbf{v}_t\in\mathcal{S}_a)$ can be upper bounded by
    \begin{align}
        q(\mathbf{v}_t\in\mathcal{S}_a) \leq & \frac{2N_{(1 + c_t^*)\eta_t}(\mathbf{v}^*)}{s}
    \end{align}
    Now, we examine the selection of $\eta_t'$. Our aim is to let $q(\mathbf{v}_t\in\mathcal{S}_b)  \cdot \frac{n \cdot \mathcal{A}_t \cdot \mathcal{B}_t}{\bar R_{t-1}^2 \cdot \bar R_t^{2\eta_t'}}$ be dominated by $\frac{N_{(1+c_t^*)\eta_t}(\mathbf{v}^*)}{s} \cdot \frac{n}{s^{\psi_t}}$. Consider the following sufficient condition:
    \begin{align}
        \frac{N_{(1+c_t^*)\eta_t}(\mathbf{v}^*)}{s} \cdot \frac{n}{s^{\psi_t}} \geq \frac{s - N_{\eta_t}(\mathbf{v}^*)}{s}  \cdot \frac{n \cdot \mathcal{A}_t \cdot \mathcal{B}_t}{\bar R_{t-1}^2 \cdot \bar R_t^{2\eta_t'}}
    \end{align}
    i.e.
    \begin{align}
        \eta_t' \geq \max \left\{\frac{\log \frac{s - N_{\eta_t}(\mathbf{v}^*)}{N_{(1+c_t^*)\eta_t}(\mathbf{v}^*)} + \log (\mathcal{A}_t \cdot \mathcal{C}_t \cdot s^{\psi_t})}{2\log \bar R_t}, 0\right\}
    \end{align}
    where $\mathcal{C}_t = \mathcal{B}_t / \bar R_{t-1}^2 = 1 - 1 / \bar R_{t-1}^2$.
    
    Note that the r.h.s is monotonically decreasing w.r.t $\eta_t$. However, since $\eta_t' \in \{0, 1, 2, ..., n\}$, we have to make sure the r.h.s is dominated by $n$. In fact, this constraint is naturally satisfied as $\log (\frac{s - N_{\eta_{t}}(\mathbf{v}^*)}{N_{(1 + c_t^*)\eta_{t}}(\mathbf{v}^*) + 1})$ goes to $-\infty$ when $\eta_t \to n$.
    
   Finally, we control the third term, $\frac{s - N_{\eta_t}(\mathbf{v}^*)}{s} \min\{(n(1 - \bar \mu_t^+))^{\eta_t - \eta_t' + 2}, 1\} \cdot \frac{n}{s^{\psi_t}}$\footnote{Instead of using Eq.~\ref{eq.S_c}, a better bound to estimate the third term is $\frac{s - N_{\eta_t}(\mathbf{v}^*)}{s} \cdot \mathbb{P}(\bar \omega(\vv_t, \vv_0) \geq \eta_t - \eta_t' + 2)$ and trade-off with first term, we get the smallest $\eta_t - \eta_t' + 2$ such that $\mathbb{P}(\bar \omega(\vv_t, \vv_0) \geq \eta_t - \eta_t' + 2) \leq \frac{N_{(1+c_t^*)\eta_t}(\mathbf{v}^*)}{s - N_{\eta_t}(\mathbf{v}^*)}$ where $\mathbb{P}(\bar \omega(\vv_t, \vv_0) \geq \eta_t - \eta_t' + 2) = \sum_{j = \eta_t - \eta_t' + 2}^n \binom{n}{j}(1 - \bar \mu_t^+)^j(\bar \mu_t^+)^{n - j}$.} to be dominated be the first term $\frac{N_{(1+c_t^*)\eta_t}(\mathbf{v}^*)}{s} \cdot \frac{n}{s^{\psi_t}}$. That is
    \begin{align}
        \frac{N_{(1+c_t^*)\eta_t}(\mathbf{v}^*)}{s} \geq \frac{s - N_{\eta_t}(\mathbf{v}^*)}{s} \min\{(n(1 - \bar\mu_t^+))^{\eta_t - \eta_t' + 2}, 1\}
    \end{align}
    i.e.
    \begin{align}
        \frac{N_{(1+c_t^*)\eta_t}(\mathbf{v}^*)}{s} \geq \frac{s - N_{\eta_t}(\mathbf{v}^*)}{s} (\min\{n(1 - \bar\mu_t^+), 1\})^{\eta_t - \eta_t' + 2}
    \end{align}
    \begin{align}
        \eta_t - \eta_t' + 2 \geq \frac{\log\frac{s - N_{\eta_t}(\mathbf{v}^*)}{N_{(1+c_t^*)\eta_t}(\mathbf{v}^*)}}{\log\frac{1}{\min\{n(1 - \bar \mu_t^+), 1\}}}
    \end{align}
    Similarly, $\log (\frac{s - N_{\eta_{t}}(\mathbf{v}^*)}{N_{\eta_{t}^*}(\mathbf{v}^*) + 1})$ can goes to $-\infty$ when $\eta_t \to n$ and the above inequality naturally holds.
    Thus, we combine the above results, we have when $\eta_t \in [n]$ satisfies:
    \begin{align}
        \eta_t \geq \frac{\log\frac{s - N_{\eta_t}(\mathbf{v}^*)}{N_{(1+c_t^*)\eta_t}(\mathbf{v}^*)}}{\log\frac{1}{n(1 - \bar\mu_t^+)}}  + \max\left\{\frac{\log \frac{s - N_{\eta_t}(\mathbf{v}^*)}{N_{(1+c_t^*)\eta_t}(\mathbf{v}^*)} + \log (\mathcal{A}_t \cdot \mathcal{C}_t \cdot s^{\psi_t})}{2\log \bar R_t}, 0 \right\} - 2
    \end{align}
    Summarizing the above results, we obtain that
    \begin{align}
        \mathbb{E}_{\mathbf{v}_t \sim q(\mathbf{v}_{t | 0})}[\Delta_t(\mathbf{v}_t)] \leq &  q(\mathbf{v}_t\in\mathcal{S}_a) \cdot \frac{n}{s^{\psi_t}} + q(\mathbf{v}_t\in\mathcal{S}_b) \cdot \frac{n \cdot \mathcal{A}_t \cdot \mathcal{B}_t}{\bar R_{t-1}^2 \cdot \bar R_t^{2\eta_t'}} + q(\mathbf{v}_t\in\mathcal{S}_c) \cdot \frac{n}{s^{\psi_t}} \\
        \leq & \min \biggl\{ \frac{4N_{(1 + c_t^*)\eta_t}(\mathbf{v}^*)}{s}, 1\biggl\} \cdot \frac{n}{s^{\psi_t}}
    \end{align}

    Summarizing the above results, we calculate the privacy bound of discrete diffusion model by following Algorithm~\ref{alg.main}.

\subsection{Proof of Lemma~\ref{lm.second_bound}}
\begin{proof}
    To begin with, 
    \begin{align}
        & \EE_{\mathbf{v}_{t}\sim (q(\mathbf{v}_{t|1}) - q(\mathbf{v}_{t|0}))}[\mathcal{D}_{\text{KL}}(q(\mathbf{v}_{t-1|1}^i|\mathbf{v}_{t|1} = \mathbf{v}_t) \| q(\mathbf{v}_{t-1|0}^i|\mathbf{v}_{t|0} = \mathbf{v}_t))] \\
        = & \EE_{\mathbf{v}_{t}\sim q(\mathbf{v}_{t|0})}\biggl[\biggl(1 -  \frac{q(\mathbf{v}_{t|0}=\mathbf{v}_{t})}{q(\mathbf{v}_{t|1}=\mathbf{v}_{t})}\biggl)  
        \frac{q(\mathbf{v}_{t|1} = \vv_t)}{q(\mathbf{v}_{t|0} = \vv_t)}\mathcal{D}_{\text{KL}}(q(\mathbf{v}_{t-1|1}^i|\mathbf{v}_{t|1}=\mathbf{v}_{t}) \| q(\mathbf{v}_{t-1|0}^i|\mathbf{v}_{t|0} = \mathbf{v}_{t}))\biggl]
    \end{align}
    Now consider $1 - \frac{q(\mathbf{v}_{t|0}=\mathbf{v}_{t})}{q(\mathbf{v}_{t|1}=\mathbf{v}_{t})}$:
    \begin{align}
        1 - \frac{q(\mathbf{v}_{t|0}=\mathbf{v}_{t})}{q(\mathbf{v}_{t|1}=\mathbf{v}_{t})} = \frac{1}{s+1} \cdot (1 - \frac{s}{\zeta(\VV_1, \mathbf{v}^*, \mathbf{v}_t, t)}) \leq \frac{1}{s + 1} \cdot \mathbbm{1}_{\vv_t \in \Omega}
    \end{align}
    where $\Omega = \{\vv_t | \zeta(\VV_1, \mathbf{v}^*, \mathbf{v}_t, t) >s\}$. Therefore, 
    \begin{align}
        &\EE_{\mathbf{v}_{t}\sim (q(\mathbf{v}_{t|1}) - q(\mathbf{v}_{t|0}))}[\mathcal{D}_{\text{KL}}(q(\mathbf{v}_{t-1|1}^i|\mathbf{v}_{t|1} = \mathbf{v}_t) \| q(\mathbf{v}_{t-1|0}^i|\mathbf{v}_{t|0} = \mathbf{v}_t))] \\
        \leq & \EE_{\mathbf{v}_{t}\sim q(\mathbf{v}_{t|0}), \mathbf{v}_{t}\in \Omega} \biggl[\frac{1}{1+s}\sum_{\vv_{t-1}^i = 1}^k \frac{q(\vv_{t-1|1}^i = \vv_{t-1}^i, \vv_{t|1} = \vv_t)}{q(\mathbf{v}_{t|0} = \vv_t)}\log \frac{q(\vv_{t-1|1}^i = \vv_{t-1}^i | \vv_{t | 1} = \vv_t)}{q(\vv_{t-1 | 0}^i = \vv_{t-1}^i | \vv_{t|0} = \vv_t)}\biggl] \\
        = & \EE_{\mathbf{v}_{t}\sim q(\mathbf{v}_{t|0}), \mathbf{v}_{t}\in \Omega} \biggl[\frac{1}{s} \cdot \frac{\sum_{\vv_0 \in \mathcal{V}_1}\tau(\vv_0^i, \vv_{t-1}^i, \vv_t^i) \cdot (\bar R_t)^{\bar \omega(\mathbf{v}^*, \mathbf{v}_t) - \bar \omega(\mathbf{v}_0, \mathbf{v}_t)}}{1+ \zeta(\VV_1, \mathbf{v}^*, \mathbf{v}_t, t)}\log \frac{q(\vv_{t-1|1}^i = \vv_{t-1}^i | \vv_{t | 1} = \vv_t)}{q(\vv_{t-1 | 0}^i = \vv_{t-1}^i | \vv_{t|0} = \vv_t)}\biggl]
    \end{align}
    Now consider $\log \frac{q(\vv_{t-1|1}^i = \vv_{t-1}^i | \vv_{t | 1} = \vv_t)}{q(\vv_{t-1 | 0}^i = \vv_{t-1}^i | \vv_{t|0} = \vv_t)}$:
    \begin{align}
        &\log \frac{q(\vv_{t-1|1}^i = \vv_{t-1}^i | \vv_{t | 1} = \vv_t)}{q(\vv_{t-1 | 0}^i = \vv_{t-1}^i | \vv_{t|0} = \vv_t)}= \log \frac{q(\vv_{t-1|1}^i = \vv_{t-1}^i, \vv_{t | 1} = \vv_t)q(\vv_{t | 0} = \vv_t)}{q(\vv_{t-1 | 0}^i = \vv_{t-1}^i, \vv_{t|0} = \vv_t)q(\vv_{t | 1} = \vv_t)} \\
        = & \log (1 - \frac{1}{1+ \sum_{\vv_0\in \mathcal{V}_1 } \frac{\tau(\vv_0^i, \vv_{t-1}^i, \vv_t^i)}{\tau(\vv^{*i}, \vv_{t-1}^i, \vv_t^i)}(\bar R_t)^{\bar \omega(\mathbf{v}^*, \mathbf{v}_t) - \bar \omega(\mathbf{v}_0, \mathbf{v}_t)} })(1+\frac{1}{ \zeta(\VV_1, \mathbf{v}^*, \mathbf{v}_t, t) } ) \\ 
        =& \log \biggl[ (1 + \frac{\sum_{\vv_0\in \mathcal{V}_1 } (\frac{\tau(\vv_0^i, \vv_{t-1}^i, \vv_t^i)}{\tau(\vv^{*i}, \vv_{t-1}^i, \vv_t^i)}-1)(\bar R_t)^{\bar \omega(\mathbf{v}^*, \mathbf{v}_t) - \bar \omega(\mathbf{v}_0, \mathbf{v}_t)} }{(1+ \sum_{\vv_0\in \mathcal{V}_1 } \frac{\tau(\vv_0^i, \vv_{t-1}^i, \vv_t^i)}{\tau(\vv^{*i}, \vv_{t-1}^i, \vv_t^i)}(\bar R_t)^{\bar \omega(\mathbf{v}^*, \mathbf{v}_t) - \bar \omega(\mathbf{v}_0, \mathbf{v}_t)}) \cdot \zeta(\VV_1, \mathbf{v}^*, \mathbf{v}_t, t) } ) \biggl]
         \\ 
        \leq &  \frac{\max_{\vv_0^i}\frac{\tau(\vv_0^i, \vv_{t-1}^i, \vv_t^i)}{\tau(\vv^{*i}, \vv_{t-1}^i, \vv_t^i)}-1}{\max_{\vv_0^i}\frac{\tau(\vv_0^i, \vv_{t-1}^i, \vv_t^i)}{\tau(\vv^{*i}, \vv_{t-1}^i, \vv_t^i)} \cdot \zeta(\VV_1, \mathbf{v}^*, \mathbf{v}_t, t) +1}
    \end{align}
    1. When $\vv_t^i = \vv^{*i}$, we further compare $\vv_{t-1}^i, \vv_t^i$: (1) If $\vv_{t-1}^i = \vv_t^i$, we have $\frac{\tau(\vv_0^i, \vv_{t-1}^i, \vv_t^i)}{\tau(\vv^{*i}, \vv_{t-1}^i, \vv_t^i)} \leq 1$. (2) If $\vv_{t-1}^i \neq \vv_t^i$, we get $\frac{\tau(\vv_0^i, \vv_{t-1}^i, \vv_t^i)}{\tau(\vv^{*i}, \vv_{t-1}^i, \vv_t^i)} \leq \bar{R}_{t} \bar{R}_{t-1}$. Hence, 
    \begin{align}
        &\EE_{\mathbf{v}_{t}\sim (q(\mathbf{v}_{t|1}) - q(\mathbf{v}_{t|0}))}[\mathcal{D}_{\text{KL}}(q(\mathbf{v}_{t-1|1}^i|\mathbf{v}_{t|1} = \mathbf{v}_t) \| q(\mathbf{v}_{t-1|0}^i|\mathbf{v}_{t|0} = \mathbf{v}_t))] \\
        \leq & \EE_{\mathbf{v}_{t}\sim q(\mathbf{v}_{t|0}), \mathbf{v}_{t}\in \Omega} \biggl[\frac{1}{s} \frac{\sum_{\vv_0 \in \mathcal{V}_1}\tau(\vv_0^i, \vv_{t-1}^i, \vv_t^i) \cdot (\bar R_t)^{\bar \omega(\mathbf{v}^*, \mathbf{v}_t) - \bar \omega(\mathbf{v}_0, \mathbf{v}_t)}}{1+ \zeta(\VV_1, \mathbf{v}^*, \mathbf{v}_t, t)}\log \frac{q(\vv_{t-1|1}^i = \vv_{t-1}^i | \vv_{t | 1} = \vv_t)}{q(\vv_{t-1 | 0}^i = \vv_{t-1}^i | \vv_{t|0} = \vv_t)}\biggl] \\
        \leq & \EE_{\mathbf{v}_{t}\sim q(\mathbf{v}_{t|0}), \mathbf{v}_{t}\in \Omega} \biggl[\frac{1}{s} \frac{\sum_{\vv_0\in\mathcal{V}_1}\sum_{\vv_{t-1}^i\neq \vv_t^i}\tau(\vv_0^i, \vv_{t-1}^i, \vv_t^i) \cdot (\bar R_t)^{\bar \omega(\mathbf{v}^*, \mathbf{v}_t) - \bar \omega(\mathbf{v}_0, \mathbf{v}_t)}}{1+ \zeta(\VV_1, \mathbf{v}^*, \mathbf{v}_t, t)} \cdot \frac{\bar R_t\bar R_{t-1}-1}{\bar R_t\bar R_{t-1}\zeta(\VV_1, \mathbf{v}^*, \mathbf{v}_t, t)}\biggl] \\
        = & \EE_{\mathbf{v}_{t}\sim q(\mathbf{v}_{t|0}), \mathbf{v}_{t}\in \Omega} \biggl[\frac{1}{s} \frac{\sum_{\vv_0\in\mathcal{V}_1}(1-\tau(\vv_0^i, \vv_t^i, \vv_t^i)) \cdot (\bar R_t)^{\bar \omega(\mathbf{v}^*, \mathbf{v}_t) - \bar \omega(\mathbf{v}_0, \mathbf{v}_t)}}{1+ \zeta(\VV_1, \mathbf{v}^*, \mathbf{v}_t, t)} \cdot \frac{\bar R_t\bar R_{t-1}-1}{\bar R_t\bar R_{t-1}\zeta(\VV_1, \mathbf{v}^*, \mathbf{v}_t, t)}\biggl] \\
        \leq & \EE_{\mathbf{v}_{t}\sim q(\mathbf{v}_{t|0}), \mathbf{v}_{t}\in \Omega} \biggl[\frac{(1 - \frac{\mu_t^+\bar{\mu}_{t-1}^+}{\bar{\mu}_{t}^+}) \cdot (\bar R_t\bar R_{t-1}-1)}{\bar R_t\bar R_{t-1} \cdot s(1+s)} \biggl] \\
        = &\frac{1}{s(s+1)} \cdot (1 - \frac{\mu_t^+\bar{\mu}_{t-1}^+}{\bar{\mu}_{t}^+}) \cdot (1- \frac{1}{\bar R_t\bar R_{t-1}}) \cdot \mathbb{P}(\vv_{t|0}\in \Omega) \label{eq.II_bound_1}
    \end{align}
    2. When $\vv_t^i \neq \vv^{*i}$, similarly, (1) If $\vv_{t-1}^i = \vv_t^i$, we have $\frac{\tau(\vv_0^i, \vv_{t-1}^i, \vv_t^i)}{\tau(\vv^{*i}, \vv_{t-1}^i, \vv_t^i)} \leq \bar{R}_{t-1} / \bar{R}_t$. (2) If $\vv_{t-1}^i \neq \vv_t^i$, we have $\frac{\tau(\vv_0^i, \vv_{t-1}^i, \vv_t^i)}{\tau(\vv^{*i}, \vv_{t-1}^i, \vv_t^i)} \leq \bar{R}_{t-1}$. Therefore
    \begin{align}
        &\EE_{\mathbf{v}_{t}\sim (q(\mathbf{v}_{t|1}) - q(\mathbf{v}_{t|0}))}[\mathcal{D}_{\text{KL}}(q(\mathbf{v}_{t-1|1}^i|\mathbf{v}_{t|1} = \mathbf{v}_t) \| q(\mathbf{v}_{t-1|0}^i|\mathbf{v}_{t|0} = \mathbf{v}_t))] \\
        \leq & \EE_{\mathbf{v}_{t}\sim q(\mathbf{v}_{t|0}), \mathbf{v}_{t}\in \Omega} \biggl[\frac{1}{s} \frac{\sum_{\vv_0 \in \mathcal{V}_1}\tau(\vv_0^i, \vv_{t-1}^i, \vv_t^i) \cdot (\bar R_t)^{\bar \omega(\mathbf{v}^*, \mathbf{v}_t) - \bar \omega(\mathbf{v}_0, \mathbf{v}_t)}}{1+ \zeta(\VV_1, \mathbf{v}^*, \mathbf{v}_t, t)}\log \frac{q(\vv_{t-1|1}^i = \vv_{t-1}^i | \vv_{t | 1} = \vv_t)}{q(\vv_{t-1 | 0}^i = \vv_{t-1}^i | \vv_{t|0} = \vv_t)}\biggl] \\
        \leq & \EE_{\mathbf{v}_{t}\sim q(\mathbf{v}_{t|0}), \mathbf{v}_{t}\in \Omega} \biggl[\frac{1}{s} \frac{\sum_{\vv_0\in\mathcal{V}_1}\sum_{\vv_{t-1}^i= \vv_t^i}\tau(\vv_0^i, \vv_{t-1}^i, \vv_t^i) \cdot (\bar R_t)^{\bar \omega(\mathbf{v}^*, \mathbf{v}_t) - \bar \omega(\mathbf{v}_0, \mathbf{v}_t)}}{1+ \zeta(\VV_1, \mathbf{v}^*, \mathbf{v}_t, t)} \frac{\bar R_{t-1}/\bar R_{t}-1}{\bar R_{t-1}/\bar R_{t}\zeta(\VV_1, \mathbf{v}^*, \mathbf{v}_t, t)} \\
        + &\frac{1}{s} \cdot \frac{\sum_{\vv_0\in\mathcal{V}_1}\sum_{\vv_{t-1}^i\neq \vv_t^i}\tau(\vv_0^i, \vv_{t-1}^i, \vv_t^i) \cdot (\bar R_t)^{\bar \omega(\mathbf{v}^*, \mathbf{v}_t) - \bar \omega(\mathbf{v}_0, \mathbf{v}_t)}}{1+ \zeta(\VV_1, \mathbf{v}^*, \mathbf{v}_t, t)} \frac{\bar R_{t-1}-1}{\bar R_{t-1}\zeta(\VV_1, \mathbf{v}^*, \mathbf{v}_t, t)}\biggl] \\
        = & \EE_{\mathbf{v}_{t}\sim q(\mathbf{v}_{t|0}), \mathbf{v}_{t}\in \Omega} \biggl[\frac{1}{s} \frac{\sum_{\vv_0\in\mathcal{V}_1}(\bar R_{t-1}-1 - (\bar R_{t}-1)\tau(\vv_0^i, \vv_t^i, \vv_t^i) )\cdot (\bar R_t)^{\bar \omega(\mathbf{v}^*, \mathbf{v}_t) - \bar \omega(\mathbf{v}_0, \mathbf{v}_t)}}{(1+ \zeta(\VV_1, \mathbf{v}^*, \mathbf{v}_t, t))\bar R_{t-1}\zeta(\VV_1, \mathbf{v}^*, \mathbf{v}_t, t)}\biggl]  \\
        \leq & \EE_{\mathbf{v}_{t}\sim q(\mathbf{v}_{t|0}), \mathbf{v}_{t}\in \Omega} \biggl[\frac{((\bar R_{t-1}-1) - \frac{\mu_t^+\bar{\mu}_{t-1}^+}{\bar{\mu}_{t}^+}(\bar R_{t}-1))}{\bar R_{t-1}s(1+s)} \biggl]\\
        = & \frac{1}{s(s+1)} \cdot \biggl[(1-\frac{\mu_t^+\bar{\mu}_{t-1}^+}{\bar{\mu}_{t}^+}) \cdot (1- \frac{1}{\bar R_{t-1}}) + \frac{\mu_t^+\bar{\mu}_{t-1}^+}{\bar{\mu}_{t}^+} \cdot (1 - \frac{\bar R_{t}}{\bar R_{t-1}})\biggl] \cdot \mathbb{P}(\vv_{t|0}\in \Omega) \label{eq.II_bound_2}
    \end{align}
    Compare two bounds Eq.~\eqref{eq.II_bound_1} and Eq.~\eqref{eq.II_bound_2}, since we have
    \begin{align}
        (1-\frac{\mu_t^+\bar{\mu}_{t-1}^+}{\bar{\mu}_{t}^+})(1- \frac{1}{\bar R_{t-1}}) + \frac{\mu_t^+\bar{\mu}_{t-1}^+}{\bar{\mu}_{t}^+}(1 - \frac{\bar R_{t}}{\bar R_{t-1}}) > (1 - \frac{\mu_t^+\bar{\mu}_{t-1}^+}{\bar{\mu}_{t}^+})(1- \frac{1}{\bar R_t\bar R_{t-1}}) 
    \end{align}
    We derive the following bound:
    \begin{align}
        & \EE_{\mathbf{v}_{t}\sim (q(\mathbf{v}_{t|1}) - q(\mathbf{v}_{t|0}))}[\mathcal{D}_{\text{KL}}(q(\mathbf{v}_{t-1|1}^i|\mathbf{v}_{t|1} = \mathbf{v}_t) \| q(\mathbf{v}_{t-1|0}^i|\mathbf{v}_{t|0} = \mathbf{v}_t))] \\
        \leq & \frac{1}{s(s+1)} \cdot \biggl[(1-\frac{\mu_t^+\bar{\mu}_{t-1}^+}{\bar{\mu}_{t}^+}) \cdot (1- \frac{1}{\bar R_{t-1}}) + \frac{\mu_t^+\bar{\mu}_{t-1}^+}{\bar{\mu}_{t}^+} \cdot (1 - \frac{\bar R_{t}}{\bar R_{t-1}})\biggl] \cdot \mathbb{P}(\vv_{t|0}\in \Omega)
    \end{align}
\end{proof}

\subsection{Proof of Lemma~\ref{lm.lower_bound_1_two_dimension}}
\begin{proof}
    Given adjacent datasets $\VV_0, \VV_1$ and diffusion coefficients $\{\alpha_t\}_{t \in [T]}$, we consider the probability transitions in the generation process.
    \begin{align}
        p_{\phi}(\vv_{t-1 | j} | \vv_{t | j}) = & p_{\phi}(\vv_{t-1 | j}^1 | \vv_{t | j}) \cdot p_{\phi}(\vv_{t-1 | j}^2 | \vv_{t | j}) \\
        = & \prod_{i = 1}^2 [\sum_{\vv_{0 | j}^i = 0}^1 q(\vv_{t-1 | j}^i, \vv_{0 | j}^i | \vv_{t | j}^i) p_{\phi}(\vv_{0 | j}^i | \vv_{t | j})]
    \end{align}
    where $j \in \{0, 1\}$ denote the dataset index.

    For $q(\vv_{t-1}^i, \vv_0^i | \vv_t^i)$, from Lemma~\ref{lm.posterior_max_min}, we obtain
    \begin{align}
        q(\vv_{t-1}^i | \vv_t^i, \vv_0^i) = \left\{
        \begin{aligned}
            & \text{When } \vv_0^i = \vv_t^i \left\{
            \begin{aligned}
                \text{If } \vv_0^i = \vv_t^i = \vv_{t-1}^i, \frac{\mu_t^+ \cdot \bar\mu_{t-1}^+}{\bar \mu_t^+} =: C_{1, t}\\
                \text{If } \vv_0^i = \vv_t^i \neq \vv_{t-1}^i, \frac{\mu_t^- \cdot \bar\mu_{t-1}^-}{\bar\mu_t^+} =: C_{2, t}\\
            \end{aligned}\right.
            \\
            & \text{When } \vv_0^i \neq \vv_t^i \left\{
            \begin{aligned}
                \text{If } \vv_0^i \neq \vv_t^i = \vv_{t-1}^i, \frac{\mu_t^+ \cdot \bar\mu_{t-1}^-}{\bar \mu_t^-} =: \tilde{C}_{1, t}\\
                \text{If } \vv_t^i \neq \vv_0^i = \vv_{t-1}^i, \frac{\mu_t^- \cdot \bar\mu_{t-1}^+}{\bar\mu_t^-} =: \tilde{C}_{2, t}
            \end{aligned}\right.
        \end{aligned}
        \right.
    \end{align}
    where we have $C_{1, t} + C_{2, t} = \tilde{C}_{1, t} + \tilde{C}_{2, t}$ and $C_{1, t} > \tilde{C}_{1, t}, C_{2, t} < \tilde{C}_{2, t}$. 

    Now, consider $q(\vv_{t | 1})$ and $q(\vv_{t | 0})$. From the forward diffusion process, we have
    \begin{align}
        &q(\vv_{t | 1} = \begin{bmatrix} 1 \\1 \end{bmatrix}) = \frac{(\bar \mu_t^+)^2 + (s-1) \cdot (\bar \mu_t^-)^2}{s}, q(\vv_{t | 1} = \begin{bmatrix} 0 \\0 \end{bmatrix}) = \frac{(\bar \mu_t^-)^2 + (s-1) \cdot (\bar \mu_t^+)^2}{s}, \\
        &q(\vv_{t | 1} = \begin{bmatrix} 1 \\0 \end{bmatrix}) = q(\vv_{t | 1} = \begin{bmatrix} 0 \\1 \end{bmatrix}) = \bar \mu_t^+ \cdot \bar \mu_t^-.
    \end{align}
    and 
    \begin{align}
        &q(\vv_{t | 0} = \begin{bmatrix} 1 \\1 \end{bmatrix}) = \frac{2 \cdot (\bar \mu_t^+)^2 + (s-1) \cdot (\bar \mu_t^-)^2}{s+1}, q(\vv_{t | 0} = \begin{bmatrix} 0 \\0 \end{bmatrix}) = \frac{2 \cdot (\bar \mu_t^-)^2 + (s-1) \cdot (\bar \mu_t^+)^2}{s+1}, \\
        &q(\vv_{t | 0} = \begin{bmatrix} 1 \\0 \end{bmatrix}) = q(\vv_{t | 1} = \begin{bmatrix} 0 \\1 \end{bmatrix}) = \bar \mu_t^+ \cdot \bar \mu_t^-.
    \end{align}
    Further, we consider the prediction probability $p_{\phi}(\vv_{0 | j}^i | \vv_{t | j})$, which is determined by the training of denoising networks. In the training procedure, we are optimizing the following objective function
    \begin{align}
        \text{minimize } \mathcal{D}_{\text{KL}} (q(\vv_{0 | j}) \| \prod_{i = 1}^2 \sum_{\vv_{t | 1}}p_{\phi}(\vv_{0 | 1}^i | \vv_{t | 1})q(\vv_{t | 1}))
    \end{align}
    From Lemma~\ref{lm.optimal_conditional_probability}, we obtain the optimal solution is when $p_{\phi}(\vv_{0 | 1}^i | \vv_{t | 1}) = q(\vv_{0 | 1}^i | \vv_{t | 1})$.

    Therefore, $p_{\phi}(\vv_{t-1 | j} | \vv_{t | j}) = \prod_{i = 1}^2 [\sum_{\vv_{0 | j}^i = 0}^1 q(\vv_{t-1 | j}^i, \vv_{0 | j}^i | \vv_{t | j}^i) q(\vv_{0 | j}^i | \vv_{t | j})]$. For $q(\vv_{0 | j}^i | \vv_{t | j})]$, we have the following calculation: for dataset $\VV_1$,
    \begin{align*}
        &q(\vv_{0 | 1}^i = 0 | \vv_{t | 1} = \begin{bmatrix} 0 \\ 0 \end{bmatrix}) = \frac{(s-1) \cdot (\bar\mu_t^+)^2}{(s-1) \cdot (\bar\mu_t^+)^2 + (\bar\mu_t^-)^2}, q(\vv_{0 | 1}^i = 1 | \vv_{t | 1} = \begin{bmatrix} 0 \\ 0 \end{bmatrix}) = \frac{(\bar\mu_t^-)^2}{(s-1) \cdot (\bar\mu_t^+)^2 + (\bar\mu_t^-)^2}, \\
        &q(\vv_{0 | 1}^i = 0 | \vv_{t | 1} = \begin{bmatrix} 1 \\ 1 \end{bmatrix}) = \frac{(s-1) \cdot (\bar\mu_t^-)^2}{(s-1) \cdot (\bar\mu_t^-)^2 + (\bar\mu_t^+)^2}, q(\vv_{0 | 1}^i = 1 | \vv_{t | 1} = \begin{bmatrix} 1 \\ 1 \end{bmatrix}) = \frac{(\bar\mu_t^+)^2}{(s-1) \cdot (\bar\mu_t^-)^2 + (\bar\mu_t^+)^2}, \\
        &q(\vv_{0 | 1}^i = 0 | \vv_{t | 1} = \begin{bmatrix} 1 \\ 0 \end{bmatrix}) = \frac{s-1}{s}, q(\vv_{0 | 1}^i = 1 | \vv_{t | 1} = \begin{bmatrix} 0 \\ 1 \end{bmatrix}) = \frac{1}{s}.
    \end{align*}
    and for dataset $\VV_0$
    \begin{align*}
        &q(\vv_{0 | 0}^i = 0 | \vv_{t | 0} = \begin{bmatrix} 0 \\ 0 \end{bmatrix}) = \frac{(s-1) \cdot (\bar\mu_t^+)^2}{(s-1) \cdot (\bar\mu_t^+)^2 + 2 \cdot (\bar\mu_t^-)^2}, q(\vv_{0 | 0}^i = 1 | \vv_{t | 0} = \begin{bmatrix} 0 \\ 0 \end{bmatrix}) = \frac{2 \cdot (\bar\mu_t^-)^2}{(s-1) \cdot (\bar\mu_t^+)^2 + 2 \cdot (\bar\mu_t^-)^2}, \\
        &q(\vv_{0 | 0}^i = 0 | \vv_{t | 0} = \begin{bmatrix} 1 \\ 1 \end{bmatrix}) = \frac{(s-1) \cdot (\bar\mu_t^-)^2}{(s-1) \cdot (\bar\mu_t^-)^2 + 2 \cdot (\bar\mu_t^+)^2}, q(\vv_{0 | 0}^i = 1 | \vv_{t | 0} = \begin{bmatrix} 1 \\ 1 \end{bmatrix}) = \frac{2 \cdot (\bar\mu_t^+)^2}{(s-1) \cdot (\bar\mu_t^-)^2 + 2 \cdot (\bar\mu_t^+)^2}, \\
        &q(\vv_{0 | 0}^i = 0 | \vv_{t | 0} = \begin{bmatrix} 1 \\ 0 \end{bmatrix}) = \frac{s-1}{s+1}, q(\vv_{0 | 0}^i = 1 | \vv_{t | 0} = \begin{bmatrix} 0 \\ 1 \end{bmatrix}) = \frac{2}{s+1}.
    \end{align*}
    For the convenience of further derivation, we introduce the notation $q_j (x_1 | x_2, x_3) := q(\vv_{0 | j}^i = x_1 | \vv_{t | j} = \begin{bmatrix} x_1 \\ x_2 \end{bmatrix})]$ for simplicity. Therefore,
    \begin{align*}
        &p_{\phi}(\vv_{t-1 | j} = \begin{bmatrix} 0 \\ 0 \end{bmatrix} | \vv_{t | j} = \begin{bmatrix} 0 \\ 0 \end{bmatrix}) = (C_{1, t} \cdot q_j(0 | 0, 0) + \tilde{C}_{1, t} \cdot q_j(1 | 0, 0))^2 \\
        &p_{\phi}(\vv_{t-1 | j} = \begin{bmatrix} 1 \\ 1 \end{bmatrix} | \vv_{t | j} = \begin{bmatrix} 0 \\ 0 \end{bmatrix}) = (C_{2, t} \cdot q_j(0 | 0, 0) + \tilde{C}_{2, t} \cdot q_j(1 | 0, 0))^2 \\
        &p_{\phi}(\vv_{t-1 | j} = \begin{bmatrix} 1 \\ 0 \end{bmatrix} | \vv_{t | j} = \begin{bmatrix} 0 \\ 0 \end{bmatrix}) = (C_{2, t} \cdot q_j(0 | 0, 0) + \tilde{C}_{2, t} \cdot q_j(1 | 0, 0))(C_{1, t} \cdot q_j(0 | 0, 0) + \tilde{C}_{1, t} \cdot q_j(1 | 0, 0)) \\
        &p_{\phi}(\vv_{t-1 | j} = \begin{bmatrix} 0 \\ 1 \end{bmatrix} | \vv_{t | j} = \begin{bmatrix} 0 \\ 0 \end{bmatrix}) = (C_{2, t} \cdot q_j(0 | 0, 0) + \tilde{C}_{2, t} \cdot q_j(1 | 0, 0))(C_{1, t} \cdot q_j(0 | 0, 0) + \tilde{C}_{1, t} \cdot q_j(1 | 0, 0)) \\
        &p_{\phi}(\vv_{t-1 | j} = \begin{bmatrix} 0 \\ 0 \end{bmatrix} | \vv_{t | j} = \begin{bmatrix} 1 \\ 1 \end{bmatrix}) = (\tilde{C}_{2, t} \cdot q_j(0 | 1, 1) + C_{2, t} \cdot q_j(1 | 1, 1))^2 \\
        &p_{\phi}(\vv_{t-1 | j} = \begin{bmatrix} 1 \\ 1 \end{bmatrix} | \vv_{t | j} = \begin{bmatrix} 1 \\ 1 \end{bmatrix}) = (\tilde{C}_{1, t} \cdot q_j(0 | 1, 1) + C_{1, t} \cdot q_j(1 | 1, 1))^2 \\
        &p_{\phi}(\vv_{t-1 | j} = \begin{bmatrix} 0 \\ 1 \end{bmatrix} | \vv_{t | j} = \begin{bmatrix} 1 \\ 1 \end{bmatrix}) = (\tilde{C}_{1, t} \cdot q_j(0 | 1, 1) + C_{1, t} \cdot q_j(1 | 1, 1))(\tilde{C}_{2, t} \cdot q_j(0 | 1, 1) + C_{2, t} \cdot q_j(1 | 1, 1)) \\
        &p_{\phi}(\vv_{t-1 | j} = \begin{bmatrix} 1 \\ 0 \end{bmatrix} | \vv_{t | j} = \begin{bmatrix} 1 \\ 1 \end{bmatrix}) = (\tilde{C}_{1, t} \cdot q_j(0 | 1, 1) + C_{1, t} \cdot q_j(1 | 1, 1))(\tilde{C}_{2, t} \cdot q_j(0 | 1, 1) + C_{2, t} \cdot q_j(1 | 1, 1)) \\
        &p_{\phi}(\vv_{t-1 | j} = \begin{bmatrix} 0 \\ 0 \end{bmatrix} | \vv_{t | j} = \begin{bmatrix} 1 \\ 0 \end{bmatrix}) = (\tilde{C}_{1, t} \cdot q_j(1 | 1, 0) + C_{1, t} \cdot q_j(0 | 1, 0))(\tilde{C}_{2, t} \cdot q_j(0 | 1, 0) + C_{2, t} \cdot q_j(1 | 1, 0)) \\
        &p_{\phi}(\vv_{t-1 | j} = \begin{bmatrix} 1 \\ 1 \end{bmatrix} | \vv_{t | j} = \begin{bmatrix} 1 \\ 0 \end{bmatrix}) = (\tilde{C}_{1, t} \cdot q_j(0 | 1, 0) + C_{1, t} \cdot q_j(1 | 1, 0))(\tilde{C}_{2, t} \cdot q_j(1| 1, 0) + C_{2, t} \cdot q_j(0 | 1, 0)) \\
        &p_{\phi}(\vv_{t-1 | j} = \begin{bmatrix} 1 \\ 0 \end{bmatrix} | \vv_{t | j} = \begin{bmatrix} 1 \\ 0 \end{bmatrix}) = (\tilde{C}_{1, t} \cdot q_j(0 | 1, 0) + C_{1, t} \cdot q_j(1 | 1, 0))(\tilde{C}_{1, t} \cdot q_j(1| 1, 0) + C_{1, t} \cdot q_j(0 | 1, 0)) \\
        &p_{\phi}(\vv_{t-1 | j} = \begin{bmatrix} 0 \\ 1 \end{bmatrix} | \vv_{t | j} = \begin{bmatrix} 1 \\ 0 \end{bmatrix}) = (\tilde{C}_{2, t} \cdot q_j(0 | 1, 0) + C_{2, t} \cdot q_j(1 | 1, 0))(\tilde{C}_{2, t} \cdot q_j(1| 1, 0) + C_{2, t} \cdot q_j(0 | 1, 0)) \\
        &p_{\phi}(\vv_{t-1 | j} = \begin{bmatrix} 0 \\ 0 \end{bmatrix} | \vv_{t | j} = \begin{bmatrix} 0 \\ 1 \end{bmatrix}) = (\tilde{C}_{1, t} \cdot q_j(1 | 1, 0) + C_{1, t} \cdot q_j(0 | 1, 0))(\tilde{C}_{2, t} \cdot q_j(0 | 1, 0) + C_{2, t} \cdot q_j(1 | 1, 0)) \\
        &p_{\phi}(\vv_{t-1 | j} = \begin{bmatrix} 1 \\ 1 \end{bmatrix} | \vv_{t | j} = \begin{bmatrix} 0 \\ 1 \end{bmatrix}) = (\tilde{C}_{1, t} \cdot q_j(0 | 1, 0) + C_{1, t} \cdot q_j(1 | 1, 0))(\tilde{C}_{2, t} \cdot q_j(1| 1, 0) + C_{2, t} \cdot q_j(0 | 1, 0)) \\
        &p_{\phi}(\vv_{t-1 | j} = \begin{bmatrix} 1 \\ 0 \end{bmatrix} | \vv_{t | j} = \begin{bmatrix} 0 \\ 1 \end{bmatrix}) = (\tilde{C}_{2, t} \cdot q_j(0 | 1, 0) + C_{2, t} \cdot q_j(1 | 1, 0))(\tilde{C}_{2, t} \cdot q_j(1| 1, 0) + C_{2, t} \cdot q_j(0 | 1, 0))\\
        &p_{\phi}(\vv_{t-1 | j} = \begin{bmatrix} 0 \\ 1 \end{bmatrix} | \vv_{t | j} = \begin{bmatrix} 0 \\ 1 \end{bmatrix}) = (\tilde{C}_{1, t} \cdot q_j(0 | 1, 0) + C_{1, t} \cdot q_j(1 | 1, 0))(\tilde{C}_{1, t} \cdot q_j(1| 1, 0) + C_{1, t} \cdot q_j(0 | 1, 0))
    \end{align*}
    With above the circumstances discussed above, we define $H_{t-1} = p_{\phi}(\vv_{t-1 | 0} = \begin{bmatrix} 1 \\ 1 \end{bmatrix}) - p_{\phi}(\vv_{t-1 | 1} = \begin{bmatrix} 1 \\ 1 \end{bmatrix})$, we have
    \begin{align}
        H_{t-1} = & p_{\phi}(\vv_{t-1 | 0} = \begin{bmatrix} 1 \\ 1 \end{bmatrix} | \vv_{t | 0} = \begin{bmatrix} 1 \\ 1 \end{bmatrix}) p_{\phi} (\vv_{t | 0} = \begin{bmatrix} 1 \\ 1 \end{bmatrix}) + p_{\phi}(\vv_{t-1 | 0} = \begin{bmatrix} 1 \\ 1 \end{bmatrix} | \vv_{t | 0} = \begin{bmatrix} 0 \\ 0 \end{bmatrix}) p_{\phi} (\vv_{t | 0} = \begin{bmatrix} 0 \\ 0 \end{bmatrix}) \nonumber \\
        & + p_{\phi}(\vv_{t-1 | 0} = \begin{bmatrix} 1 \\ 1 \end{bmatrix} | \vv_{t | 0} = \begin{bmatrix} 1 \\ 0 \end{bmatrix}) p_{\phi} (\vv_{t | 0} = \begin{bmatrix} 1 \\ 0 \end{bmatrix}) + p_{\phi}(\vv_{t-1 | 0} = \begin{bmatrix} 1 \\ 1 \end{bmatrix} | \vv_{t | 0} = \begin{bmatrix} 0 \\ 1 \end{bmatrix}) p_{\phi} (\vv_{t | 0} = \begin{bmatrix} 0 \\ 1 \end{bmatrix}) \nonumber\\
        & - p_{\phi}(\vv_{t-1 | 1} = \begin{bmatrix} 1 \\ 1 \end{bmatrix} | \vv_{t | 1} = \begin{bmatrix} 1 \\ 1 \end{bmatrix}) p_{\phi} (\vv_{t | 1} = \begin{bmatrix} 1 \\ 1 \end{bmatrix}) + p_{\phi}(\vv_{t-1 | 1} = \begin{bmatrix} 1 \\ 1 \end{bmatrix} | \vv_{t | 1} = \begin{bmatrix} 0 \\ 0 \end{bmatrix}) p_{\phi} (\vv_{t | 1} = \begin{bmatrix} 0 \\ 0 \end{bmatrix}) \nonumber\\
        & - p_{\phi}(\vv_{t-1 | 1} = \begin{bmatrix} 1 \\ 1 \end{bmatrix} | \vv_{t | 1} = \begin{bmatrix} 1 \\ 0 \end{bmatrix}) p_{\phi} (\vv_{t | 1} = \begin{bmatrix} 1 \\ 0 \end{bmatrix}) + p_{\phi}(\vv_{t-1 | 1} = \begin{bmatrix} 1 \\ 1 \end{bmatrix} | \vv_{t | 1} = \begin{bmatrix} 0 \\ 1 \end{bmatrix}) p_{\phi} (\vv_{t | 1} = \begin{bmatrix} 0 \\ 1 \end{bmatrix}) \\
        = & p_{\phi}(\vv_{t-1| 0} = \begin{bmatrix} 1 \\ 1 \end{bmatrix} | \vv_{t | 0} = \begin{bmatrix} 1 \\ 1 \end{bmatrix}) (p_{\phi}(\vv_{t | 0} = \begin{bmatrix} 1 \\ 1 \end{bmatrix}) - p_{\phi}(\vv_{t | 1} = \begin{bmatrix} 1 \\ 1 \end{bmatrix})) \nonumber \\
        &+ p_{\phi}(\vv_{t | 1} = \begin{bmatrix} 1 \\ 1 \end{bmatrix}) (p_{\phi}(\vv_{t-1| 0} = \begin{bmatrix} 1 \\ 1 \end{bmatrix} | \vv_{t | 0} = \begin{bmatrix} 1 \\ 1 \end{bmatrix}) - p_{\phi}(\vv_{t-1| 1} = \begin{bmatrix} 1 \\ 1 \end{bmatrix} | \vv_{t | 1} = \begin{bmatrix} 1 \\ 1 \end{bmatrix})) \nonumber \\
        &+ p_{\phi}(\vv_{t-1| 0} = \begin{bmatrix} 1 \\ 1 \end{bmatrix} | \vv_{t | 0} = \begin{bmatrix} 0 \\ 0 \end{bmatrix}) (p_{\phi}(\vv_{t | 0} = \begin{bmatrix} 0 \\ 0 \end{bmatrix}) - p_{\phi}(\vv_{t | 1} = \begin{bmatrix} 0 \\ 0 \end{bmatrix})) \nonumber \\
        &+ p_{\phi}(\vv_{t | 1} = \begin{bmatrix} 0 \\ 0 \end{bmatrix}) (p_{\phi}(\vv_{t-1| 0} = \begin{bmatrix} 1 \\ 1 \end{bmatrix} | \vv_{t | 0} = \begin{bmatrix} 0 \\ 0 \end{bmatrix}) - p_{\phi}(\vv_{t-1| 1} = \begin{bmatrix} 1 \\ 1 \end{bmatrix} | \vv_{t | 1} = \begin{bmatrix} 0 \\ 0 \end{bmatrix})) \nonumber \\
        &+ p_{\phi}(\vv_{t-1| 0} = \begin{bmatrix} 1 \\ 1 \end{bmatrix} | \vv_{t | 0} = \begin{bmatrix} 1 \\ 0 \end{bmatrix}) (p_{\phi}(\vv_{t | 0} = \begin{bmatrix} 1 \\ 0 \end{bmatrix}) - p_{\phi}(\vv_{t | 1} = \begin{bmatrix} 1 \\ 0 \end{bmatrix})) \nonumber \\
        &+ p_{\phi}(\vv_{t | 1} = \begin{bmatrix} 1 \\ 0 \end{bmatrix}) (p_{\phi}(\vv_{t-1| 0} = \begin{bmatrix} 1 \\ 1 \end{bmatrix} | \vv_{t | 0} = \begin{bmatrix} 1 \\ 0 \end{bmatrix}) - p_{\phi}(\vv_{t-1| 1} = \begin{bmatrix} 1 \\ 1 \end{bmatrix} | \vv_{t | 1} = \begin{bmatrix} 1 \\ 0 \end{bmatrix})) \nonumber \\
        &+ p_{\phi}(\vv_{t-1| 0} = \begin{bmatrix} 1 \\ 1 \end{bmatrix} | \vv_{t | 0} = \begin{bmatrix} 0 \\ 1 \end{bmatrix}) (p_{\phi}(\vv_{t | 0} = \begin{bmatrix} 0 \\ 1 \end{bmatrix}) - p_{\phi}(\vv_{t | 1} = \begin{bmatrix} 0 \\ 1 \end{bmatrix})) \nonumber \\
        &+ p_{\phi}(\vv_{t | 1} = \begin{bmatrix} 0 \\ 1 \end{bmatrix}) (p_{\phi}(\vv_{t-1| 0} = \begin{bmatrix} 1 \\ 1 \end{bmatrix} | \vv_{t | 0} = \begin{bmatrix} 0 \\ 1 \end{bmatrix}) - p_{\phi}(\vv_{t-1| 1} = \begin{bmatrix} 1 \\ 1 \end{bmatrix} | \vv_{t | 1} = \begin{bmatrix} 0 \\ 1 \end{bmatrix})) \nonumber \\
    \end{align}
    Since $p_{\phi}(\vv_{t | 0} = \begin{bmatrix} 1 \\ 1 \end{bmatrix}) - p_{\phi}(\vv_{t | 1} = \begin{bmatrix} 1 \\ 1 \end{bmatrix}) = -\sum_{\vv_t \neq [1, 1]^T}(p_{\phi}(\vv_{t | 0} = \vv_t) - p_{\phi}(\vv_{t | 1} = \vv_t))$ and $\max \biggl\{p_{\phi}(\vv_{t-1 | 0} = \begin{bmatrix} 1 \\ 1 \end{bmatrix} | \vv_{t | 0} = \begin{bmatrix} 0 \\ 0 \end{bmatrix}), p_{\phi}(\vv_{t-1 | 0} = \begin{bmatrix} 1 \\ 1 \end{bmatrix} | \vv_{t | 0} = \begin{bmatrix} 1 \\ 0 \end{bmatrix}), p_{\phi}(\vv_{t-1 | 0} = \begin{bmatrix} 1 \\ 1 \end{bmatrix} | \vv_{t | 0} = \begin{bmatrix} 0 \\ 1 \end{bmatrix})\biggl\} = p_{\phi}(\vv_{t-1 | 0} = \begin{bmatrix} 1 \\ 1 \end{bmatrix} | \vv_{t | 0} = \begin{bmatrix} 1 \\ 0 \end{bmatrix}) = p_{\phi}(\vv_{t-1 | 0} = \begin{bmatrix} 1 \\ 1 \end{bmatrix} | \vv_{t | 0} = \begin{bmatrix} 0 \\ 1 \end{bmatrix})$, we have
    \begin{align}
        H_{t-1} \geq & \biggl[p_{\phi}(\vv_{t-1 | 0} = \begin{bmatrix} 1 \\ 1 \end{bmatrix} | \vv_{t | 0} = \begin{bmatrix} 1 \\ 1 \end{bmatrix}) - p_{\phi}(\vv_{t-1 | 0} = \begin{bmatrix} 1 \\ 1 \end{bmatrix} | \vv_{t | 0} = \begin{bmatrix} 1 \\ 0 \end{bmatrix}) \biggl] \cdot H_t \nonumber \\
        &+ \min_{\vv_t} \biggl[ p_{\phi}(\vv_{t-1 | 0} = \begin{bmatrix} 1 \\ 1 \end{bmatrix} | \vv_{t | 0} = \vv_t) - p_{\phi}(\vv_{t-1 | 1} = \begin{bmatrix} 1 \\ 1 \end{bmatrix} | \vv_{t | 1} = \vv_t)\biggl]
    \end{align}
    For $\min_{\vv_t} \biggl[ p_{\phi}(\vv_{t-1 | 0} = \begin{bmatrix} 1 \\ 1 \end{bmatrix} | \vv_{t | 0} = \vv_t) - p_{\phi}(\vv_{t-1 | 1} = \begin{bmatrix} 1 \\ 1 \end{bmatrix} | \vv_{t | 1} = \vv_t)\biggl]$, we have
    \begin{align}
        &p_{\phi}(\vv_{t-1 | 0} = \begin{bmatrix} 1 \\ 1 \end{bmatrix} | \vv_{t | 0} = \begin{bmatrix} 0 \\ 0 \end{bmatrix}) - p_{\phi}(\vv_{t-1 | 1} = \begin{bmatrix} 1 \\ 1 \end{bmatrix} | \vv_{t | 1} = \begin{bmatrix} 0 \\ 0 \end{bmatrix}) \\
        = & \frac{[(s-1)(\tilde{C}_{2, t} - C_{2, t})(\bar \mu_t^+)^2 (\bar \mu_t^-)^2][2(s-1)^2C_{2, t}(\bar \mu_t^+)^4 + 3(s-1)(C_{2, t} + \tilde{C}_{2, t})(\bar \mu_t^+)^2(\bar \mu_t^-)^2 + 4\tilde{C}_{2, t} (\bar\mu_t^-)^4]}{[(s-1)(\bar \mu_t^+)^2 + 2(\bar\mu_t^-)]^2[(s-1)(\bar\mu_t^+)^2 + (\bar\mu_t^-)^2]^2}
    \end{align}
    \begin{align}
        &p_{\phi}(\vv_{t-1 | 0} = \begin{bmatrix} 1 \\ 1 \end{bmatrix} | \vv_{t | 0} = \begin{bmatrix} 1 \\ 1 \end{bmatrix}) - p_{\phi}(\vv_{t-1 | 1} = \begin{bmatrix} 1 \\ 1 \end{bmatrix} | \vv_{t | 1} = \begin{bmatrix} 1 \\ 1 \end{bmatrix}) \\
        = & \frac{[(s-1)(C_{1, t} - \tilde{C}_{1, t})(\bar \mu_t^+)^2 (\bar \mu_t^-)^2][2(s-1)^2C_{1, t}(\bar \mu_t^+)^4 + 3(s-1)(C_{1, t} + \tilde{C}_{1, t})(\bar \mu_t^+)^2(\bar \mu_t^-)^2 + 4\tilde{C}_{1, t} (\bar\mu_t^+)^4]}{[(s-1)(\bar \mu_t^-)^2 + 2(\bar\mu_t^+)]^2[(s-1)(\bar\mu_t^-)^2 + (\bar\mu_t^+)^2]^2}
    \end{align}
    \begin{align}
        &p_{\phi}(\vv_{t-1 | 0} = \begin{bmatrix} 1 \\ 1 \end{bmatrix} | \vv_{t | 0} = \begin{bmatrix} 1 \\ 1 \end{bmatrix}) - p_{\phi}(\vv_{t-1 | 0} = \begin{bmatrix} 1 \\ 1 \end{bmatrix} | \vv_{t | 1} = \begin{bmatrix} 1 \\ 0 \end{bmatrix}) \\
        = & \frac{(s-1)\{(s^2-2s-1)[(C_{1, t} \cdot C_{2, t} - \tilde{C}_{1, t} \cdot C_{2, t}) + (\tilde{C}_{1, t} \cdot \tilde{C}_{2, t} - \tilde{C}_{1, t} \cdot C_{2, t})] + (3s+1)(C_{1, t} \cdot \tilde{C_{2, t}} - \tilde{C}_{1, t} \cdot \tilde{C}_{2, t}) \}}{s^2(s+1)^2}
    \end{align}
    From detailed calculation, we can show that 
    \begin{align}
        &\min_{\vv_t} \biggl[ p_{\phi}(\vv_{t-1 | 0} = \begin{bmatrix} 1 \\ 1 \end{bmatrix} | \vv_{t | 0} = \vv_t) - p_{\phi}(\vv_{t-1 | 1} = \begin{bmatrix} 1 \\ 1 \end{bmatrix} | \vv_{t | 1} = \vv_t)\biggl]\\
        = &\frac{[(s-1)(\tilde{C}_{2, t} - C_{2, t})(\bar \mu_t^+)^2 (\bar \mu_t^-)^2][2(s-1)^2C_{2, t}(\bar \mu_t^+)^4 + 3(s-1)(C_{2, t} + \tilde{C}_{2, t})(\bar \mu_t^+)^2(\bar \mu_t^-)^2 + 4\tilde{C}_{2, t} (\bar\mu_t^-)^4]}{[(s-1)(\bar \mu_t^+)^2 + 2(\bar\mu_t^-)]^2[(s-1)(\bar\mu_t^+)^2 + (\bar\mu_t^-)^2]^2} \\
        :=& G_t
    \end{align}
    For $t = 0$, we have
    \begin{align}
        &p_{\phi}(\vv_{1 | 1} = \begin{bmatrix} 1 \\ 1 \end{bmatrix}) (p_{\phi}(\vv_{0 | 0} = \begin{bmatrix} 1 \\ 1 \end{bmatrix} | \vv_{1 | 0} = \begin{bmatrix} 1 \\ 1 \end{bmatrix}) - p_{\phi}(\vv_{0 | 1} = \begin{bmatrix} 1 \\ 1 \end{bmatrix} | \vv_{1 | 1} = \begin{bmatrix} 1 \\ 1 \end{bmatrix})) \nonumber \\
        & + p_{\phi}(\vv_{1 | 1} = \begin{bmatrix} 0 \\ 0 \end{bmatrix}) (p_{\phi}(\vv_{0 | 0} = \begin{bmatrix} 1 \\ 1 \end{bmatrix} | \vv_{1 | 0} = \begin{bmatrix} 0 \\ 0 \end{bmatrix}) - p_{\phi}(\vv_{0 | 1} = \begin{bmatrix} 1 \\ 1 \end{bmatrix} | \vv_{1 | 1} = \begin{bmatrix} 0 \\ 0 \end{bmatrix})) \nonumber \\
        &+p_{\phi}(\vv_{1 | 1} = \begin{bmatrix} 1 \\ 0 \end{bmatrix}) (p_{\phi}(\vv_{0 | 0} = \begin{bmatrix} 1 \\ 1 \end{bmatrix} | \vv_{1 | 0} = \begin{bmatrix} 1 \\ 0 \end{bmatrix}) - p_{\phi}(\vv_{0 | 1} = \begin{bmatrix} 1 \\ 1 \end{bmatrix} | \vv_{1 | 1} = \begin{bmatrix} 1 \\ 0 \end{bmatrix})) \nonumber \\
        &+p_{\phi}(\vv_{1 | 1} = \begin{bmatrix} 0 \\ 1 \end{bmatrix}) (p_{\phi}(\vv_{0 | 0} = \begin{bmatrix} 1 \\ 1 \end{bmatrix} | \vv_{1 | 0} = \begin{bmatrix} 0 \\ 1 \end{bmatrix}) - p_{\phi}(\vv_{0 | 1} = \begin{bmatrix} 1 \\ 1 \end{bmatrix} | \vv_{1 | 1} = \begin{bmatrix} 0 \\ 1 \end{bmatrix}))\\
        = & p_{\phi}(\vv_{1 | 1} = \begin{bmatrix} 0 \\ 0 \end{bmatrix})  \cdot \frac{1}{s \bar R_1^2} + p_{\phi}(\vv_{1 | 1} = \begin{bmatrix} 0 \\ 0 \end{bmatrix}) \cdot \frac{2}{s} + p_{\phi}(\vv_{1 | 1} = \begin{bmatrix} 1 \\ 1 \end{bmatrix}) \cdot \frac{\bar R_1^2}{s} \\
        \geq & \frac{1}{s \bar R_1^2} =: G_0
    \end{align}
    Let $F_t := p_{\phi}(\vv_{t-1 | 0} = \begin{bmatrix} 1 \\ 1 \end{bmatrix} | \vv_{t | 0} = \begin{bmatrix} 1 \\ 1 \end{bmatrix}) - p_{\phi}(\vv_{t-1 | 0} = \begin{bmatrix} 1 \\ 1 \end{bmatrix} | \vv_{t | 0} = \begin{bmatrix} 1 \\ 0 \end{bmatrix})$, we have
    \begin{align}
        F_t = \frac{4(\bar \mu_t^+)^4 A +  4(s-1)(\bar \mu_t^+)^2(\bar \mu_t^-)^2 B + (s-1)^2(\bar \mu_t^-)^4 C}{(s+1)^2[2(\bar \mu_t^+)^2 + (s-1)(\bar \mu_t^-)^2]^2}
    \end{align}
    where 
    \begin{align}
      &A = (s+1)^2C_{1, t}^2 - [(s-1)\tilde{C}_{1, t} + 2C_{1, t}][(s-1)C_{1, t} + 2\tilde{C}_{2, t}], \\
      &B = (s+1)^2\tilde{C}_{1, t} \cdot C_{1, t} - [(s-1)\tilde{C}_{1, t} + 2C_{1, t}][(s-1)C_{1, t} + 2\tilde{C}_{2, t}], \\
      &C = (s+1)^2\tilde{C}_{1, t}^2 - [(s-1)\tilde{C}_{1, t} + 2C_{1, t}][(s-1)C_{1, t} + 2\tilde{C}_{2, t}]
    \end{align}
    Given that $H_T = 0$, and by iteratively applying the inequality $H_{t-1} \geq G_t + F_t H_t$, we arrive at the desired conclusion.
\end{proof}

\subsection{Proof of Lemma~\ref{lm.lower_bound_2_two_dimension}}
\begin{proof}
    Since $q(\vv_{T | 1} = \begin{bmatrix} 0 \\ 0 \end{bmatrix}) = q(\vv_{T | 1} = \begin{bmatrix} 1 \\ 1 \end{bmatrix}) = q(\vv_{T | 1} = \begin{bmatrix} 1 \\ 0 \end{bmatrix}) = q(\vv_{T | 1} = \begin{bmatrix} 0 \\ 1 \end{bmatrix}) = \frac{1}{4}$. From induction and symmetric properties, we obtain that for any $t$, we have
    \begin{align}
        q(\vv_{t | 1} = \begin{bmatrix} 0 \\ 0 \end{bmatrix}) \geq q(\vv_{t | 1} = \begin{bmatrix} 1 \\ 0 \end{bmatrix}) = q(\vv_{t | 1} = \begin{bmatrix} 0 \\ 1 \end{bmatrix}) \geq q(\vv_{t | 1} = \begin{bmatrix} 1 \\ 1 \end{bmatrix})
    \end{align}
    On the other hand, 
    \begin{align}
        p_{\phi}(\vv_{t-1 | 1} = \begin{bmatrix} 1 \\ 1 \end{bmatrix} | \vv_{t | 1} = \begin{bmatrix} 0 \\ 0 \end{bmatrix}) <& p_{\phi}(\vv_{t-1 | 1} = \begin{bmatrix} 1 \\ 1 \end{bmatrix} | \vv_{t | 1} = \begin{bmatrix} 1 \\ 0 \end{bmatrix}) = p_{\phi}(\vv_{t-1 | 1} = \begin{bmatrix} 1 \\ 1 \end{bmatrix} | \vv_{t | 1} = \begin{bmatrix} 0 \\ 1 \end{bmatrix}) \nonumber\\
        < &p_{\phi}(\vv_{t-1 | 1} = \begin{bmatrix} 1 \\ 1 \end{bmatrix} | \vv_{t | 1} = \begin{bmatrix} 1 \\ 1 \end{bmatrix})
    \end{align}
    Thus, using Chebyshev's inequality, 
    \begin{align}
        &p_{\phi}(\vv_{0 | 1} = \begin{bmatrix} 1 \\ 1 \end{bmatrix}) = \sum_{\vv_1} p_{\phi}(\vv_{0 | 1} = \begin{bmatrix} 1 \\ 1 \end{bmatrix} | \vv_{1 | 1} = \vv_1) p_{\phi}(\vv_{1 | 1} = \vv_1)\\
        \leq & p_{\phi}(\vv_{0 | 1} = \begin{bmatrix} 1 \\ 1 \end{bmatrix} | \vv_{1 | 1} = \begin{bmatrix} 1 \\ 0 \end{bmatrix}) + p_{\phi}(\vv_{0 | 1} = \begin{bmatrix} 1 \\ 1 \end{bmatrix} | \vv_{1 | 1} = \begin{bmatrix} 1 \\ 1 \end{bmatrix}) p_{\phi}(\vv_{1 | 1} = \begin{bmatrix} 1 \\ 1 \end{bmatrix}) \\
        \leq &\frac{1}{s} + \frac{\bar R_1^2}{\bar R_1^2 + s} \cdot \underbrace{(\frac{1}{4} \min_{t}\{(\sum_{\vv_1}p_{\phi}(\vv_{t-1 | 1} = \begin{bmatrix} 1 \\ 1 \end{bmatrix} | \vv_{t | 1} = \vv_1))\})}_{\Delta} \\
        = & \Theta_s(\frac{1}{s})
    \end{align}
    where from proof of Lemma~\ref{lm.lower_bound_1_two_dimension}, we have
    \begin{align}
        \Delta = \min_{t}\frac{1}{4}[\frac{C_{2, t} \cdot s \cdot \bar R_t^2}{s \cdot \bar R_t^2 + 1} + \frac{\tilde{C}_{2, t}}{s \bar R_t^2 + 1} + \frac{\tilde{C}_{1, t} \cdot s}{s + \bar R_t^2} + \frac{C_{1, t} \cdot \bar R_t^2}{s + \bar R_t^2} + 2(\frac{\tilde{C}_{1, t} \cdot (s-1)}{s} + \frac{C_{1, t} \cdot }{s})(\frac{C_{2, t} \cdot (s-1)}{s} + \frac{\tilde{C}_{2, t}}{s})].
    \end{align}
\end{proof}

\section{Additional Lemmas and Proofs\label{appsec.addition}}

\begin{lemmaK}[Monotonicity of KL Divergence\label{lm.monotonicity_kl}]
    Let $P_{X_1, X_2, \dots, X_T}, Q_{X_1, X_2, \dots, X_T}$ be probability measures over random variables $X_1, X_2, \dots, X_T$, where $P_{X_1}, Q_{X_1}$ are the marginal measures of $P_{X_1, X_2, \dots, X_T}$ and $Q_{X_1, X_2, \dots, X_T}$, respectively. Then, by the monotonicity property of KL divergence, we have:
    \begin{align}
        \mathcal{D}_{\text{KL}}(P_{X_1} \| Q_{X_1}) \leq \mathcal{D}_{\text{KL}} (P_{X_1, X_2, X_3, ..., X_T} \| Q_{X_1, X_2, ..., X_T})
    \end{align}
\end{lemmaK}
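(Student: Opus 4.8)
The plan is to recognize this as the standard monotonicity (data-processing) property of relative entropy under marginalization, and to derive it from the chain rule for KL divergence. Writing $P = P_{X_1,\dots,X_T}$ and $Q = Q_{X_1,\dots,X_T}$, with marginals $P_{X_1}, Q_{X_1}$ obtained by summing out $X_2,\dots,X_T$, the chain rule gives
\begin{align}
\mathcal{D}_{\text{KL}}(P \| Q) = \mathcal{D}_{\text{KL}}(P_{X_1} \| Q_{X_1}) + \EE_{x_1 \sim P_{X_1}}\left[\mathcal{D}_{\text{KL}}(P_{X_2,\dots,X_T \mid X_1 = x_1} \| Q_{X_2,\dots,X_T \mid X_1 = x_1})\right].
\end{align}
Since every KL divergence is nonnegative (by Gibbs' inequality, i.e.\ Jensen applied to the convex map $-\log$), the conditional term on the right is $\geq 0$, and the claimed inequality $\mathcal{D}_{\text{KL}}(P_{X_1}\|Q_{X_1}) \le \mathcal{D}_{\text{KL}}(P\|Q)$ follows immediately.

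First I would verify the chain-rule identity directly in the discrete setting relevant to the paper: factor $P(x_1,\dots,x_T) = P_{X_1}(x_1)\,P(x_{2:T}\mid x_1)$ and likewise for $Q$, substitute into $\sum_{x_1,\dots,x_T} P\log(P/Q)$, and split the logarithm of the product as $\log(P_{X_1}/Q_{X_1}) + \log(P(x_{2:T}\mid x_1)/Q(x_{2:T}\mid x_1))$; summing the first piece over $x_{2:T}$ collapses to $\mathcal{D}_{\text{KL}}(P_{X_1}\|Q_{X_1})$, while the second piece is exactly the averaged conditional KL. An even more direct route invokes the log-sum inequality: for each fixed $x_1$,
\begin{align}
P_{X_1}(x_1)\log\frac{P_{X_1}(x_1)}{Q_{X_1}(x_1)} = \Big(\sum_{x_{2:T}} P(x_1,x_{2:T})\Big)\log\frac{\sum_{x_{2:T}} P(x_1,x_{2:T})}{\sum_{x_{2:T}} Q(x_1,x_{2:T})} \le \sum_{x_{2:T}} P(x_1,x_{2:T})\log\frac{P(x_1,x_{2:T})}{Q(x_1,x_{2:T})},
\end{align}
and summing this over $x_1$ yields the lemma term by term.

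The main thing to be careful about — rather than a genuine obstacle, since the result is classical — is the absolute-continuity and support bookkeeping: the divergence is meaningful only when $P \ll Q$, and one checks that $Q_{X_1}(x_1)=0$ forces $P_{X_1}(x_1)=0$ so that no spurious $0\log(0/0)$ or $+\infty$ terms arise, using the conventions $0\log 0 = 0$ and $0\log(0/0)=0$. In the paper's discrete, finite-support setting these conditions hold automatically for the generation-process measures to which the lemma is applied (in the proof of Lemma~\ref{lm.marginal_conditional_kl}), so the finite-sum argument above suffices; the fully general measure-theoretic statement follows identically upon replacing sums by integrals against a common dominating measure and working with Radon--Nikodym derivatives.
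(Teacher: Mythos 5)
Your proof is correct. Note, however, that the paper does not actually prove this lemma at all: its ``proof'' consists of a single citation to an external lecture-notes reference, so there is no argument in the paper to compare against step by step. What you provide is therefore a genuine, self-contained derivation of the result the paper takes for granted. Both of your routes are sound: the chain-rule decomposition $\mathcal{D}_{\text{KL}}(P\|Q) = \mathcal{D}_{\text{KL}}(P_{X_1}\|Q_{X_1}) + \EE_{x_1\sim P_{X_1}}\left[\mathcal{D}_{\text{KL}}(P_{X_2,\dots,X_T\mid X_1=x_1}\,\|\,Q_{X_2,\dots,X_T\mid X_1=x_1})\right]$ plus nonnegativity of the conditional term, and the more elementary term-by-term log-sum inequality applied for each fixed $x_1$. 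The first route is conceptually cleaner and makes the slack in the inequality explicit (it is exactly the expected conditional divergence, which is in the spirit of how the paper later bounds marginal KL by sums of conditional KLs in Lemma~A.2); the second avoids invoking conditional distributions altogether and needs only joint probabilities, which sidesteps conditioning on null events. Your absolute-continuity bookkeeping (the convention $0\log 0 = 0$ and the observation that $Q_{X_1}(x_1)=0$ forces $P_{X_1}(x_1)=0$ under $P\ll Q$) is the right level of care for the discrete, finite-support setting in which the paper applies the lemma.
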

\begin{proof}[Proof of Lemma~\ref{lm.monotonicity_kl}]
    The proof refer to \citep{Lectureyury}. 
\end{proof}

\begin{lemmaK}[Boundedness of Posterior Distribution]\label{lm.posterior_max_min}
     The posterior distribution in discrete diffusion process is given as $q(\mathbf{v}_{t-1} | \mathbf{v}_t, \mathbf{v}_0) = \text{Cat}(\mathbf{v}_{t-1}; \mathbf{v}_tQ_t^{T}\odot \mathbf{v}_0\overline{Q}_{t-1} / \mathbf{v}_0\overline{Q}_t\mathbf{v}_t^T)$ such that
    \begin{align}
        \max_{\mathbf{v}_0, \mathbf{v}_{t-1}, \mathbf{v}_{t}} q(\mathbf{v}_{t-1} | \mathbf{v}_t, \mathbf{v}_0) = \frac{\mu_t^+ \cdot \bar \mu_{t-1}^+}{\bar \mu_t^+}, \min_{\mathbf{v}_0, \mathbf{v}_{t-1}, \mathbf{v}_{t}} q(\mathbf{v}_{t-1} | \mathbf{v}_t, \mathbf{v}_0) = \frac{\mu_t^- \cdot \bar \mu_{t-1}^-}{\bar \mu_t^+} .
    \end{align}
\end{lemmaK}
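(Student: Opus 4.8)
The plan is to expand the posterior by Bayes' rule into a product of three elementary factors and then reduce the optimization to a finite comparison. Since the forward noising acts independently per coordinate, I would work with a single entry and write
\begin{align}
q(\mathbf{v}_{t-1} \mid \mathbf{v}_t, \mathbf{v}_0) = \frac{q(\mathbf{v}_t \mid \mathbf{v}_{t-1})\, q(\mathbf{v}_{t-1}\mid \mathbf{v}_0)}{q(\mathbf{v}_t \mid \mathbf{v}_0)},
\end{align}
which is exactly the categorical expression $\mathbf{v}_tQ_t^{T}\odot \mathbf{v}_0\overline{Q}_{t-1} / \mathbf{v}_0\overline{Q}_t\mathbf{v}_t^T$ in the statement. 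Each factor takes only two values, determined by whether the relevant pair of states coincides: $q(\mathbf{v}_t\mid\mathbf{v}_{t-1})\in\{\mu_t^+,\mu_t^-\}$ according to $\mathbbm{1}_{\mathbf{v}_t=\mathbf{v}_{t-1}}$, then $q(\mathbf{v}_{t-1}\mid\mathbf{v}_0)\in\{\bar\mu_{t-1}^+,\bar\mu_{t-1}^-\}$ according to $\mathbbm{1}_{\mathbf{v}_{t-1}=\mathbf{v}_0}$, and $q(\mathbf{v}_t\mid\mathbf{v}_0)\in\{\bar\mu_t^+,\bar\mu_t^-\}$ according to $\mathbbm{1}_{\mathbf{v}_t=\mathbf{v}_0}$.

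The key structural observation is that these three indicators are not free: among the three states $\mathbf{v}_0,\mathbf{v}_{t-1},\mathbf{v}_t$, the number of coinciding pairs can only be $0$, $1$, or $3$ — never exactly $2$, since any two equalities force the third. This collapses the problem to five admissible configurations (three when $k=2$, where the all-distinct case is infeasible): the all-equal case giving $\mu_t^+\bar\mu_{t-1}^+/\bar\mu_t^+$, the all-distinct case giving $\mu_t^-\bar\mu_{t-1}^-/\bar\mu_t^-$, and the three single-coincidence cases giving $\mu_t^+\bar\mu_{t-1}^-/\bar\mu_t^-$, $\mu_t^-\bar\mu_{t-1}^+/\bar\mu_t^-$, and $\mu_t^-\bar\mu_{t-1}^-/\bar\mu_t^+$. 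I would then show the all-equal case realizes the maximum and the case $\mathbf{v}_t=\mathbf{v}_0\neq\mathbf{v}_{t-1}$ realizes the minimum.

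The comparisons rest on three monotonicity facts that I would establish first. Since $\alpha_t,\OA_{t-1},\OA_t\in(0,1)$, we have $\mu_t^+>\mu_t^-$, $\bar\mu_{t-1}^+>\bar\mu_{t-1}^-$, and $\bar\mu_t^+>\bar\mu_t^-$ (each gap equals the corresponding coefficient). Because $\OA_t=\prod_{i\le t}\alpha_i$ is nonincreasing in $t$ and satisfies $\OA_t\le\alpha_t$, the transition ratios obey $\bar R_{t-1}\ge\bar R_t$ and $R_t\ge\bar R_t$. With these, the non-obvious comparisons become cross-multiplications: the all-equal case against $\mu_t^+\bar\mu_{t-1}^-/\bar\mu_t^-$ reduces precisely to $\bar R_{t-1}\ge\bar R_t$, and against $\mu_t^-\bar\mu_{t-1}^+/\bar\mu_t^-$ to $R_t\ge\bar R_t$; every remaining comparison, including bounding the all-distinct case by a single-coincidence case, follows by cancelling common factors and using termwise domination.

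The only genuinely non-routine point — which I would foreground as the main obstacle — is that the extremizers are not obtained by naively optimizing numerator and denominator separately: the largest denominator $\bar\mu_t^+$ appears in both the claimed maximum and minimum, and the decisive comparisons against the all-equal configuration require the ratio inequalities $\bar R_{t-1}\ge\bar R_t$ and $R_t\ge\bar R_t$ rather than any elementary monotonicity. I would therefore verify the admissibility of each configuration (in particular that $\mathbf{v}_t=\mathbf{v}_0\neq\mathbf{v}_{t-1}$, giving the minimum, is realizable for every $k\ge2$) and close with the short pairwise chain of inequalities to pin down the extrema.
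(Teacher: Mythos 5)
Your proposal is correct and follows essentially the same route as the paper's proof: both enumerate the finitely many values of the posterior according to the coincidence pattern of $(\mathbf{v}_0, \mathbf{v}_{t-1}, \mathbf{v}_t)$ (the paper by writing out the rows of $Q_t$ and $\overline{Q}_{t-1}$ for the cases $\mathbf{v}_t = \mathbf{v}_0$ and $\mathbf{v}_t \neq \mathbf{v}_0$, you via Bayes' rule with indicator exponents) and then identify the all-equal pattern as the maximizer and $\mathbf{v}_t = \mathbf{v}_0 \neq \mathbf{v}_{t-1}$ as the minimizer. If anything, your write-up is more complete, since you make explicit the pairwise comparisons via $\bar R_{t-1} \geq \bar R_t$ and $R_t \geq \bar R_t$ that the paper merely asserts after listing the values; the only blemish is your parenthetical claim that $k=2$ leaves three admissible configurations --- it leaves four, since only the all-distinct case drops out --- which affects nothing in the argument.
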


\begin{proof}[Proof of Lemma~\ref{lm.posterior_max_min}]
Since $\overline{Q}_t = \overline{\alpha}_tI+(1-\overline{\alpha}_t)\frac{\mathbbm{1}\mathbbm{1}^T}{k}$, from definition, given $l, j \in \{1, 2, ..., k\}$
    \begin{align}
        q(\mathbf{v}_{t-1} | \mathbf{v}_{t} = l, \mathbf{v}_{0} = j) = \frac{[\alpha_{t}I + (1 - \alpha_{t})\frac{\mathbbm{1}\mathbbm{1}^T}{k}]_{l, \cdot} \odot [\overline{\alpha}_{t-1}I + (1 - \overline{\alpha}_{t-1})\frac{\mathbbm{1}\mathbbm{1}^T}{k}]_{j, \cdot}}{[\overline{\alpha}_{t}I + (1 - \overline{\alpha}_{t})\frac{\mathbbm{1}\mathbbm{1}^T}{k}]_{j, l}}
    \end{align}
    \begin{align}
        &[\alpha_{t}I + (1 - \alpha_{t})\frac{\mathbbm{1}\mathbbm{1}^T}{k}]_{l, \cdot} = [\mu_t^-, \mu_t^-, ..., \mu_t^-, \mu_t^+, \mu_t^-, ..., \mu_t^-] \\
        &[\overline{\alpha}_{t-1}I + (1 - \overline{\alpha}_{t-1})\frac{\mathbbm{1}\mathbbm{1}^T}{k}]_{l, \cdot} = [\bar \mu_{t-1}^-, \bar \mu_{t-1}^-, ..., \bar \mu_{t-1}^-, \bar \mu_{t-1}^+, \bar \mu_{t-1}^-, ..., \bar \mu_{t-1}^-]
    \end{align}
    1. When $l = j$, we have
    \begin{align}
        &[\alpha_{t}I + (1 - \alpha_{t})\frac{\mathbbm{1}\mathbbm{1}^T}{k}]_{l, \cdot} \odot [\overline{\alpha}_{t-1}I + (1 - \overline{\alpha}_{t-1})\frac{\mathbbm{1}\mathbbm{1}^T}{k}]_{j, \cdot} \nonumber \\ &=[\underbrace{\mu_t^- \cdot \bar \mu_{t-1}^-, ..., \mu_t^- \cdot \bar \mu_{t-1}^-}_{l-1 \text{ terms}}, \mu_t^+ \cdot \bar \mu_{t-1}^+, \mu_t^- \cdot \bar \mu_{t-1}^-, ...]
    \end{align}
    Besides,
    \begin{align}
        [\overline{\alpha}_{t}I + (1 - \overline{\alpha}_{t})\frac{\mathbbm{1}\mathbbm{1}^T}{k}]_{l, j} = \bar \mu_t^+
    \end{align}
    Therefore, we have the explicit form for $q(\mathbf{v}_{t-1} | \mathbf{v} =  l, \mathbf{v}_0 = j)$
    \begin{align}
        q(\mathbf{v}_{t-1} | \mathbf{v}_t = l, \mathbf{v}_0 = j) = [\frac{\mu_t^- \cdot \bar \mu_{t-1}^-}{\bar \mu_t^+}, ..., \frac{\mu_t^- \cdot \bar \mu_{t-1}^-}{\bar \mu_t^+}, \frac{\mu_t^+ \cdot \bar \mu_{t-1}^+}{\bar \mu_t^+}, \frac{\mu_t^- \cdot \bar \mu_{t-1}^-}{\bar \mu_t^+}, ..., \frac{\mu_t^- \cdot \bar \mu_{t-1}^-}{\bar \mu_t^+}]
    \end{align}
    
    2. When $l\neq j$, we have
    \begin{align}
        &[\alpha_{t}I + (1 - \alpha_{t})\frac{\mathbbm{1}\mathbbm{1}^T}{k}]_{l, \cdot} \odot [\overline{\alpha}_{t-1}I + (1 - \overline{\alpha}_{t-1})\frac{\mathbbm{1}\mathbbm{1}^T}{k}]_{j, \cdot} \nonumber \\ &=[\underbrace{\mu_t^- \cdot \bar \mu_{t-1}^-, ..., \mu_t^- \cdot \bar \mu_{t-1}^-}_{l-1 \text{ terms}}, \mu_t^+ \cdot \bar \mu_t^-, \underbrace{..., \mu_t^- \cdot \bar \mu_{t-1}^-, ...}_{j-l-1 \text{ terms}}, \mu_t^- \cdot \bar \mu_{t-1}^+, ..., \mu_t^- \cdot \bar \mu_{t-1}^-]
    \end{align}
    \begin{align}
        [\overline{\alpha}_{t}I + (1 - \overline{\alpha}_{t})\frac{\mathbbm{1}\mathbbm{1}^T}{k}]_{l, j} = \bar \mu_t^-
    \end{align}
    The explicit form of $q(\mathbf{v}_{t-1} | \mathbf{v}_t =  l, \mathbf{v}_0 = j)$ is given as:
    \begin{align}
        q(\mathbf{v}_{t-1} | \mathbf{v}_t =  l, \mathbf{v}_0 = j) = [\frac{\mu_t^- \cdot \bar \mu_{t-1}^-}{\bar \mu_t^-}, ..., \frac{\mu_t^+ \cdot \bar \mu_{t-1}^-}{\bar \mu_t^-}, ..., \frac{\mu_t^- \cdot \bar \mu_{t-1}^+}{\bar \mu_t^-}, ...]
    \end{align}
    From above derivation, we can show that     
    \begin{align}
        \max_{\mathbf{v}_0, \mathbf{v}_{t-1}, \mathbf{v}_{t}} q(\mathbf{v}_{t-1} | \mathbf{v}_t, \mathbf{v}_0) = \frac{\mu_t^+ \cdot \bar \mu_{t-1}^+}{\bar \mu_t^+}, \min_{\mathbf{v}_0, \mathbf{v}_{t-1}, \mathbf{v}_{t}} q(\mathbf{v}_{t-1} | \mathbf{v}_t, \mathbf{v}_0) = \frac{\mu_t^- \cdot \bar \mu_{t-1}^-}{\bar \mu_t^+} .
    \end{align}
\end{proof}

\begin{lemmaK}\label{lm.forward_backward_gap}
    Given a positive bounded trivariate kernel $\mathcal{K}(x, \tilde{x}, z)$ such that $\mathcal{K}(x, \tilde{x}, z) \in [c_1, c_2]$ and the conditional probabilities $\tilde{q}_i(\tilde{x} | z)$ and $\tilde{p}_{i}(\tilde{x} | z)$ satisfy
    \begin{align*}
        &\mathcal{D}_{\text{KL}}(\tilde{q}_i(\tilde{x} | z) \| \tilde{p}_i(\tilde{x} | z)) \leq \gamma, i \in \{0, 1\}, \forall z.
    \end{align*}
    Define $q_i(x | z)$ and $p_i(x | z)$ as
    \begin{align}
        q_i(x | z) = \int_{{\tilde{x}}} \mathcal{K}(x, \tilde{x}, z) \tilde{q}_i(\tilde{x}|z)d\tilde{x}, \quad p_i(x | z) = \int_{{\tilde{x}}} \mathcal{K}(x, \tilde{x}, z) \tilde{p}_i(\tilde{x}|z)d\tilde{x}, \quad i \in \{0, 1\}.
    \end{align}
    we have
    \begin{align}
        \mathbb{E}_{z} \left[\mathcal{D}_{\text{KL}}(p_0(x | z) \| p_1(x | z)) \right] \leq \mathbb{E}_{z} \left[ \mathcal{D}_{\text{KL}}(q_0(x | z) \| q_1(x | z)) \right] + \frac{c_2^2 \cdot k}{c_1} \sqrt{\frac{\gamma}{2}}
    \end{align}
\end{lemmaK}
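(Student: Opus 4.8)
The plan is to reduce the claim to a pointwise-in-$z$ estimate and then integrate over $z$. Since the asserted error term $\frac{c_2^2 k}{c_1}\sqrt{\gamma/2}$ does not depend on $z$, it suffices to prove, for each fixed $z$,
\[
\mathcal{D}_{\text{KL}}(p_0(\cdot\mid z)\,\|\,p_1(\cdot\mid z)) \le \mathcal{D}_{\text{KL}}(q_0(\cdot\mid z)\,\|\,q_1(\cdot\mid z)) + \frac{c_2^2 k}{c_1}\sqrt{\tfrac{\gamma}{2}},
\]
and then apply $\mathbb{E}_z[\cdot]$ to both sides. Fix $z$. First I would record the key structural consequence of $\mathcal{K}(x,\tilde{x},z)\in[c_1,c_2]$: because each of $\tilde{q}_i(\cdot\mid z),\tilde{p}_i(\cdot\mid z)$ is a probability measure, the mixtures $q_i(x\mid z)=\int \mathcal{K}\,\tilde{q}_i\,d\tilde{x}$ and $p_i(x\mid z)=\int \mathcal{K}\,\tilde{p}_i\,d\tilde{x}$ inherit the same band, i.e. $q_i(x\mid z),p_i(x\mid z)\in[c_1,c_2]$ for every $x$ and $i\in\{0,1\}$. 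This is the crucial point: all four distributions are bounded uniformly away from $0$, so the KL functional stays inside its Lipschitz region and never approaches its singularity.

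Second, I would convert the hypothesis into a total-variation displacement and propagate it through the kernel. Pinsker's inequality gives $\|\tilde{q}_i(\cdot\mid z)-\tilde{p}_i(\cdot\mid z)\|_{\text{TV}}\le\sqrt{\gamma/2}$. Since $\int(\tilde{q}_i-\tilde{p}_i)=0$, splitting the integral over the positive and negative parts of $\tilde{q}_i-\tilde{p}_i$ and using $\mathcal{K}\le c_2$ yields, for each atom $x$,
\[
|q_i(x\mid z)-p_i(x\mid z)| = \Big|\!\int \mathcal{K}(x,\tilde{x},z)\,(\tilde{q}_i-\tilde{p}_i)(\tilde{x}\mid z)\,d\tilde{x}\Big| \le c_2\,\|\tilde{q}_i(\cdot\mid z)-\tilde{p}_i(\cdot\mid z)\|_{\text{TV}} \le c_2\sqrt{\tfrac{\gamma}{2}}.
\]

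The main work, and the step I expect to be the obstacle, is turning these two facts into a bound on the difference of the KL divergences. I would write $\mathcal{D}_{\text{KL}}(p_0\|p_1)-\mathcal{D}_{\text{KL}}(q_0\|q_1)$ as a sum over the $k$ atoms of $g(p_0(x),p_1(x))-g(q_0(x),q_1(x))$ with $g(a,b)=a\log(a/b)$, and control each summand by the mean value theorem on the box $[c_1,c_2]^2$. The partial derivatives are $\partial_a g=\log(a/b)+1$ and $\partial_b g=-a/b$, where $|\partial_b g|\le c_2/c_1$; the $\partial_b$ contribution multiplied by the per-atom displacement $c_2\sqrt{\gamma/2}$ and summed over the $k$ atoms produces exactly the factor $\frac{c_2^2 k}{c_1}\sqrt{\gamma/2}$. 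The delicate term is the logarithmic partial $\partial_a g$: because it multiplies $p_0(x)-q_0(x)$ and $\sum_x(p_0(x)-q_0(x))=0$, I would subtract a constant mode before bounding so that the otherwise uncontrolled $\log(c_2/c_1)$ factor cancels, leaving the clean stated constant. Taking $\mathbb{E}_z[\cdot]$ then finishes the proof. The hard part is precisely this interaction between the non-Lipschitz behavior of $\log$ near the boundary and the zero-sum cancellation: the $c_1$-lower bound (from the kernel) is what keeps everything finite, and the constant-mode argument is what keeps the constant sharp.
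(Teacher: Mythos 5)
Your first two stages coincide with the paper's proof and are sound: the reduction to a fixed $z$, Pinsker's inequality, and pushing the total-variation bound through the bounded kernel to get the per-atom displacement $|p_i(x\mid z)-q_i(x\mid z)|\le c_2\sqrt{\gamma/2}$, together with the observation that all four distributions live in $[c_1,c_2]$. The genuine gap is exactly the step you flag as delicate: the claimed cancellation of the logarithmic derivative does not happen. With $g(a,b)=a\log(a/b)$ you have $\partial_a g=\log(a/b)+1$, and the zero-sum identity $\sum_x (p_0(x)-q_0(x))=0$ lets you subtract any \emph{single constant} from the coefficients $\partial_a g(\xi_x)$ — this kills the ``$+1$'', but not the variable part $\log(\xi_a(x)/\xi_b(x))$. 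The mean-value points $\xi_x$ differ from atom to atom, and nothing in the hypotheses keeps their log-ratios near a common constant: the lemma assumes no closeness between the index-$0$ and index-$1$ distributions, so $\log(q_0(x)/q_1(x))$, and hence $\log(\xi_a(x)/\xi_b(x))$, genuinely ranges over $[-\log(c_2/c_1),\log(c_2/c_1)]$ across atoms. After optimal recentering (e.g.\ via the paper's Lemma~\ref{lm.l1_ineq}) you are still left with a term of size up to $\log(c_2/c_1)\cdot\|p_0-q_0\|_{l_1}\le k c_2\log(c_2/c_1)\sqrt{\gamma/2}$. Your route therefore proves the inequality only with constant $\frac{c_2^2k}{c_1}+kc_2\log\frac{c_2}{c_1}$, not the stated $\frac{c_2^2k}{c_1}$; this is harmless for how the lemma is used downstream (it feeds an $\mathcal{O}(\sqrt{\gamma_t})$ error term), but it is not the stated claim.

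The paper gets the clean constant by avoiding the mean-value theorem altogether and telescoping instead:
\begin{align*}
\int p_0\log\tfrac{p_0}{p_1}-\int q_0\log\tfrac{q_0}{q_1}
=\int (p_0-q_0)\log\tfrac{p_0}{p_1}+\int q_0\Bigl[\log\tfrac{p_0}{q_0}+\log\tfrac{q_1}{p_1}\Bigr].
\end{align*}
In the first term the log factor is evaluated at the \emph{same} pair $(p_0,p_1)$ at the \emph{same} atom, so $|\log(p_0/p_1)|\le\log(c_2/c_1)\le\frac{c_2-c_1}{c_1}$ with no mean-value point involved; the bracket is handled by $\log u\le u-1$, giving $\int q_0\log(p_0/q_0)\le\int(p_0-q_0)\le\|p_0-q_0\|_{l_1}$ with coefficient exactly $1$, and $\int q_0\log(q_1/p_1)\le\frac{c_2}{c_1}\|p_1-q_1\|_{l_1}$. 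The coefficients on $\|p_0-q_0\|_{l_1}$ then combine as $\frac{c_2-c_1}{c_1}+1=\frac{c_2}{c_1}$: the absorption you were hoping to extract from the zero-sum mode actually comes from $\log u\le u-1$ and this exact bookkeeping. To repair your write-up, replace the joint mean-value step by this decomposition, or keep your argument and state the inflated constant honestly.
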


\begin{proof}[Proof of Lemma~\ref{lm.forward_backward_gap}]
    From Pinsker's inequality, we have
    \begin{align}
        \int_{\tilde{x}} |\tilde{q}_i(\tilde{x} | z) - \tilde{p}_i(\tilde{x} | z)| d \tilde{x} \leq \sqrt{\frac{\mathcal{D}_{\text{KL}}(\tilde{q}_i(\tilde{x} | z) \| \tilde{p}_i(\tilde{x} | z))}{2}} \leq \sqrt{\frac{\gamma}{2}}, \quad i \in \{0, 1\}
    \end{align}
    Since positive kernel $\mathcal{K}(x, \tilde{x}, z) \in [c_1, c_2]$, we have
    \begin{align*}
         |q_i(x | z) - p_i(x | z)| = \int_{\tilde{x}} \mathcal{K}(x, \tilde{x}, z) (\tilde{q}_i(\tilde{x} | z) - \tilde{p}_i(\tilde{x} | z)) \leq  c_2\sqrt{\frac{\gamma}{2}}
    \end{align*}
    Thus, $\|q_i(x | z) - p_i(x | z)\|_{\text{TV}} \leq c_2 k \sqrt{\frac{\gamma}{2}}$. Now, for any given $z$, we consider $\int_{x} p_0(x | z) \log \frac{p_0(x | z)}{p_1(x | z)} - \int_{x} q_0(x | z) \log \frac{q_0(x | z)}{q_1(x | z)}$:
    \begin{align*}
        & \int_{x} p_0(x | z) \log \frac{p_0(x | z)}{p_1(x | z)} - \int_{x} q_0(x | z) \log \frac{q_0(x | z)}{q_1(x | z)} \\
        \leq & \int \frac{(p_0(x | z) - q_0(x | z))(p_0(x | z) - p_1(x | z))}{p_1(x | z)} + \int q_0(x | z) (\log \frac{p_0(x | z)}{q_0(x | z)} + \log \frac{q_1(x | z)}{p_1(x | z)}) \\
        \leq & \frac{c_2 - c_1}{c_1} \int |p_0(x | z) - q_0(x | z)| + \int |p_0(x | z) - q_0(x | z)| + \frac{c_2}{c_1} \int |q_1(x | z) - p_1(x | z)| \\
        =& \frac{c_2}{c_1}(\int |p_0(x | z) - q_0(x | z)| + \int |p_1(x | z) - q_1(x | z)|) \\
        \leq & \frac{c_2^2 \cdot k}{c_1} \sqrt{\frac{\gamma}{2}}
    \end{align*}
    We take expectation on both sides and we obtain the result.
\end{proof}

\begin{lemmaK} \label{lm.l1_ineq}
    If $\sum_i b_i = \sum_i a_i$, we have
    $$\sum_{i} (b_i - a_i) c_i \leq \frac{1}{2}\|b-a\|_{\ell_1}(\max c_i - \min c_i)$$
\end{lemmaK}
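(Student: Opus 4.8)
The plan is to reduce the statement to a single observation about the constraint $\sum_i b_i = \sum_i a_i$. Writing $d_i := b_i - a_i$, the hypothesis says $\sum_i d_i = 0$, and the claim becomes
\begin{align}
\sum_i d_i c_i \leq \tfrac{1}{2}\|d\|_{\ell_1}\bigl(\max_i c_i - \min_i c_i\bigr).
\end{align}
The crucial point is that the zero-sum condition lets me recenter the $c_i$: for \emph{any} constant $\kappa$,
\begin{align}
\sum_i d_i (c_i - \kappa) = \sum_i d_i c_i - \kappa \sum_i d_i = \sum_i d_i c_i,
\end{align}
so the left-hand side is unchanged when $c_i$ is replaced by $c_i - \kappa$. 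This is exactly what makes the bound depend only on the \emph{range} of $c$ and not on its magnitude.

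The key step I would then carry out is to choose the midpoint $\kappa = \tfrac{1}{2}\bigl(\max_i c_i + \min_i c_i\bigr)$, which guarantees the uniform bound $|c_i - \kappa| \leq \tfrac{1}{2}(\max_i c_i - \min_i c_i)$ for every $i$. Applying the triangle inequality and then this pointwise bound gives
\begin{align}
\sum_i d_i c_i = \sum_i d_i (c_i - \kappa) \leq \sum_i |d_i|\,|c_i - \kappa| \leq \tfrac{1}{2}\bigl(\max_i c_i - \min_i c_i\bigr)\sum_i |d_i|,
\end{align}
and since $\sum_i |d_i| = \|d\|_{\ell_1} = \|b-a\|_{\ell_1}$, the claim follows. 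An equivalent route, if one prefers to avoid the centering trick, is to split the index set into $P=\{i: d_i>0\}$ and $N=\{i: d_i<0\}$; the constraint forces $\sum_{i\in P} d_i = -\sum_{i\in N} d_i = \tfrac{1}{2}\|d\|_{\ell_1}$, and bounding $c_i$ above by $\max_i c_i$ on $P$ and below by $\min_i c_i$ on $N$ yields the same inequality.

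There is essentially no hard step here: the argument is a two-line computation once the recentering is in place. The only thing worth flagging is the conceptual point that the inequality is \emph{false} without the hypothesis $\sum_i b_i = \sum_i a_i$ (otherwise the right-hand side could be made arbitrarily small by a uniform shift of $c$ while the left-hand side stays fixed), so the proof must use the zero-sum constraint in an essential way, which is precisely what the recentering step does.
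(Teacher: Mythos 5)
Your proof is correct, and your primary route differs from the paper's. The paper proves the lemma by the sign-splitting argument you mention only as an aside: it orders the indices so that $d_i = b_i - a_i \geq 0$ for $i \leq r$ and $d_i \leq 0$ for $i > r$, uses $\sum_i d_i = 0$ to conclude $\sum_{i \leq r} d_i = -\sum_{i > r} d_i = \tfrac{1}{2}\|b-a\|_{\ell_1}$, and then bounds $c_i$ by $\max_i c_i$ on the positive block and $-c_i$ by $-\min_i c_i$ on the negative block. Your main argument instead exploits the shift-invariance $\sum_i d_i c_i = \sum_i d_i (c_i - \kappa)$ and picks $\kappa$ to be the midpoint of the range of $c$, so that $\|c - \kappa\|_{\infty} \leq \tfrac{1}{2}(\max_i c_i - \min_i c_i)$, after which H\"older (triangle inequality) finishes. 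The two proofs are equally elementary, but they package the use of the zero-sum hypothesis differently: the paper's version makes explicit that each sign class carries exactly half the $\ell_1$ mass (which also shows when the bound is tight, namely $c_i = \max c$ on the positive block and $c_i = \min c$ on the negative block), while yours isolates the cleaner structural fact
\begin{align}
\sum_i d_i c_i \;\leq\; \|d\|_{\ell_1} \cdot \inf_{\kappa} \|c - \kappa\|_{\infty},
\end{align}
i.e.\ that only the distance from $c$ to the constants matters, which is arguably the more reusable formulation. Your remark that the inequality fails without the zero-sum hypothesis is also correct and well placed.
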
 

\begin{proof}[Proof of Lemma~\ref{lm.l1_ineq}]
    First, WLOG we can assume $b_1 - a_1 \geq b_2 - a_2 \geq ... \geq b_r - a_r \geq 0 \geq b_{r+1} - a_{r+1} \geq ... \geq b_n - a_n$. Since $\sum_i b_i = \sum_i a_i$, we have $\sum_{j = 1}^r(b_j - a_j) = -\sum_{s = r+1}^{n}(b_j - a_j) = \frac{1}{2} \|b - a\|_{l_1}$. Further, we have
    \begin{align}
        \sum_{i} (b_i - a_i) \cdot c_i = & \sum_{j = 1}^r(b_j - a_j) \cdot c_j + \sum_{s = r+1}^{n}(b_j - a_j) \cdot c_s \\
        = & \sum_{j = 1}^r(b_j - a_j) \cdot c_j + \sum_{s = r+1}^{n}(-(b_j - a_j)) \cdot (-c_s)\\
        \leq & \max c_i \cdot \frac{1}{2} \|b - a\|_{l_1} + (- \min c_i) \cdot \frac{1}{2} \|b - a\|_{l_1} \\
        =& \frac{1}{2} \|b - a\|_{l_1} (\max c_i - \min c_i)
    \end{align}
\end{proof}

\begin{lemmaK}\label{lm.inequality}
    For $x, y, z \geq 0$, 
    \begin{align}
        \frac{(a+1)y + z}{(ax+y+z)(ay+x+z+1)}\leq \frac{a+1}{(a^2+1)x+ay+az+1}
    \end{align}
\end{lemmaK}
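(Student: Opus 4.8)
The statement to prove is Lemma~\ref{lm.inequality}, an elementary algebraic inequality: for $x,y,z\geq 0$ and (implicitly) $a\geq 0$,
\begin{align*}
\frac{(a+1)y + z}{(ax+y+z)(ay+x+z+1)}\leq \frac{a+1}{(a^2+1)x+ay+az+1}.
\end{align*}

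The plan is to clear denominators and reduce to showing a single polynomial expression is nonnegative. Since all quantities $x,y,z,a$ are nonnegative, both denominators on the two sides are strictly positive (the $+1$ terms guarantee positivity even when $x=y=z=0$), so cross-multiplication preserves the inequality direction. First I would rewrite the target as
\begin{align*}
\bigl[(a+1)y+z\bigr]\bigl[(a^2+1)x+ay+az+1\bigr] \;\leq\; (a+1)\,(ax+y+z)(ay+x+z+1).
\end{align*}
Then I would expand both sides fully as polynomials in $x,y,z$ (with coefficients in $a$) and form the difference $\mathrm{RHS}-\mathrm{LHS}$, which I expect to collect into a sum of manifestly nonnegative monomials — terms of the form (nonnegative coefficient in $a$) times (product of $x,y,z$).

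The key observation driving the proof is the matching of the quadratic-in-$a$ structure: the factor $(a^2+1)x$ on the left pairs against the $(a+1)\cdot ax$ contribution on the right, and the inequality should hinge on completing a square like $a^2+1 \geq$ the relevant cross term, or more directly on grouping so that every leftover monomial carries a nonnegative coefficient. I anticipate the difference will factor or group so that the dominant helpful terms come from the $x$-contributions (the $x\cdot z$, $x\cdot y$, and $x^2$-type products) where the $(a+1)$ multiplier on the right and the extra $+1$ in the denominator supply slack. The main obstacle will be the bookkeeping of the expansion: both products are degree-two in the triple $(x,y,z)$ and degree-two in $a$, so there are many monomials, and I must verify that after cancellation no monomial with a negative coefficient survives. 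I would organize this by grouping monomials by their $(x,y,z)$-degree signature and checking each group's $a$-polynomial coefficient is nonnegative for $a\geq 0$.

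If the brute-force grouping does not immediately yield an all-nonnegative form, the fallback is to treat the inequality as linear (or at worst quadratic) in one variable at a time — for instance fix $y,z,a$ and view both sides as rational functions of $x$, checking the inequality at the boundary $x=0$ and confirming the derivative condition, or equivalently showing the numerator of $\mathrm{RHS}-\mathrm{LHS}$ is an affine function of $x$ with nonnegative value at $x=0$ and nonnegative slope. The $x=0$ case reduces to comparing $\bigl[(a+1)y+z\bigr](ay+az+1)$ with $(a+1)(y+z)(ay+z+1)$, which is a cleaner two-variable inequality I can dispatch directly. This reduction-by-variable strategy sidesteps the full expansion and isolates where the $(a^2+1)$ versus $2a$ gap (i.e.\ $a^2+1\geq 2a$) is actually used, which I suspect is the genuine analytic content hidden inside the routine algebra.
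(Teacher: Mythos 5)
Your plan—cross-multiply, expand both sides as polynomials in $x,y,z$ with coefficients in $a$, and verify that every monomial in $\mathrm{RHS}-\mathrm{LHS}$ has a nonnegative coefficient—is exactly what the paper does (the paper merely pre-divides the right-hand numerator by $a+1$, writing it as a linear function $f(x,y,z)$ with coefficients $M_1=\tfrac{a^2+1}{a+1}$, $M_2=M_3=\tfrac{a}{a+1}$, $M_4=\tfrac{1}{a+1}$, before expanding), and the expansion indeed leaves only nonnegative terms such as $a(a+1)x^2+2axz+a(a+1)x+yz+z^2+az$. The approach is correct and essentially identical to the paper's; the only nitpick is that your claim that the left denominator is positive even at $x=y=z=0$ is false (the factor $ax+y+z$ vanishes there), though this edge case is immaterial to the cross-multiplied inequality actually proved and used.
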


\begin{proof}[Proof of Lemma~\ref{lm.inequality}]
    We reformulate the above inequality into a polynomial division problem. Define a multivariate linear function $f(x, y, z) = M_1 x + M_2 y + M_3 z + M_4$ such that 
    \begin{align}
        [(a+1)y + z] \cdot f(x, y, z) \leq (ax + y + z) \cdot (ay + x + z + 1)
    \end{align}
    We expand the terms on both sides, we have
    \begin{align}
        &\text{R.H.S} - \text{L.H.S} \\
        = &[a^2 - M_1(a+1)]xy + [a - M_2(a+1)]y^2 + ((a + 1) - M_2 - M_3(a + 1))yz \nonumber \\
        &+ ax^2 + xy + (a + 1 - M_1)xz + (1 - M_3)z^2 + ax + [1 - M_4(a + 1)]y + (1 - M_4)z
    \end{align}
    Let $M_1 = \frac{a^2 + 1}{a = 1}, M_2 = \frac{a}{a+1}, M_3 = \frac{a}{a+1}$ and $M_4 = \frac{1}{a+1}$, we further have
    \begin{align}
        \text{R.H.S} - \text{L.H.S}
        = \frac{1}{a + 1}yz + ax^2 + \frac{2a}{a + 1}xy + \frac{1}{a+1}z^2 + ax + \frac{a}{a+1} \geq 0
    \end{align}
    Thus, we prove the inequality.
\end{proof}

\begin{lemmaK}\label{lm.l1_conditional_gap}
    We upper bound the total variation of conditional probability gap as follows:
    \begin{itemize}
        \item If $\vv_t^i = \vv^{*i}$, we have
        \begin{align}
            \|q(\mathbf{v}_{t-1|0}^i | \mathbf{v}_{t|0} = \mathbf{v}_t) - q(\mathbf{v}_{t-1|1}^i | \mathbf{v}_{t|1} = \mathbf{v}_t)\|_{l_1} \leq \frac{\mu_t^+ \cdot \frac{\bar{\mu}_{t-1}^+}{\bar{\mu}_{t}^+} \cdot (1- \frac{\bar R_t}{\bar R_{t-1}})}{1 + \zeta(\VV_1, \mathbf{v}^*, \mathbf{v}_t, t)}
        \end{align}
        \item If $\vv_t^i \neq \vv^{*i}$, we have
        \begin{align}
            \|q(\mathbf{v}_{t-1|0}^i | \mathbf{v}_{t|0} = \mathbf{v}_t) - q(\mathbf{v}_{t-1|1}^i | \mathbf{v}_{t|1} = \mathbf{v}_t)\|_{l_1} \leq \frac{ \mu_t^+ \cdot \frac{\bar{\mu}_{t-1}^-}{\bar{\mu}_{t}^-} \cdot (\frac{\bar R_{t-1}}{\bar R_t}-1)}{1 + \zeta(\VV_1, \mathbf{v}^*, \mathbf{v}_t, t)}
        \end{align}
    \end{itemize}
\end{lemmaK}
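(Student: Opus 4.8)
The plan is to exploit the fact that $\VV_0 = \VV_1 \cup \{\vv^*\}$, so the two conditional posteriors differ only through the inclusion of the single point $\vv^*$. Writing $a_{\vv_0} = (\bar R_t)^{-\bar\omega(\vv_0,\vv_t)}$ and recalling the explicit form of $q(\vv_{t-1|\lambda}^i = c \mid \vv_{t|\lambda} = \vv_t)$ derived inside the proof of Lemma~\ref{lm.upper_bound_conditional_kl}, each posterior is a weighted average of the column values $\tau(\vv_0^i, c, \vv_t^i)$ over $\vv_0 \in \VV_\lambda$ with weights $a_{\vv_0}$. Adding one point to a weighted average is a rank-one update, so I would first isolate the identity
\begin{align}
\| q(\vv_{t-1|0}^i \mid \vv_{t|0} = \vv_t) - q(\vv_{t-1|1}^i \mid \vv_{t|1} = \vv_t) \|_{l_1} = \frac{1}{1 + \zeta(\VV_1,\vv^*,\vv_t,t)} \sum_{c = 1}^k \bigl| \tau(\vv^{*i}, c, \vv_t^i) - q_1(c) \bigr|,
\end{align}
where $q_1(c) := q(\vv_{t-1|1}^i = c \mid \vv_{t|1} = \vv_t)$ and the prefactor is exactly $a_{\vv^*}/\sum_{\vv_0 \in \VV_0} a_{\vv_0} = 1/(1+\zeta)$. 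This identity is already implicit in the main computation of Lemma~\ref{lm.upper_bound_conditional_kl}; I would simply extract it cleanly.

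Since both $\tau(\vv^{*i}, \cdot, \vv_t^i)$ and $q_1(\cdot)$ are probability vectors on $[k]$ (the former is a row of the backward kernel, the latter a convex combination of such rows), their $l_1$ difference equals twice the positive part $\sum_{c:\,\tau(\vv^{*i},c,\vv_t^i) > q_1(c)}(\tau(\vv^{*i},c,\vv_t^i) - q_1(c))$. I would therefore reduce the problem to controlling this positive part, i.e.\ the total-variation distance between the two distributions, and then compute $\tau(\vv^{*i}, c, \vv_t^i)$ explicitly from its definition (equivalently from the entrywise posterior formulas of Lemma~\ref{lm.posterior_max_min}), splitting into the two cases. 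When $\vv_t^i = \vv^{*i}$, the vector $\tau(\vv^{*i}, \cdot, \vv_t^i)$ is peaked at $c = \vv_t^i$ with the global maximum $\mu_t^+\bar\mu_{t-1}^+/\bar\mu_t^+$ and equals the global minimum $\mu_t^-\bar\mu_{t-1}^-/\bar\mu_t^+$ at every other $c$, so the only coordinate with a positive deviation is $c = \vv_t^i$; there $q_1(\vv_t^i)$ is a convex combination of $\mu_t^+\bar\mu_{t-1}^+/\bar\mu_t^+$ and $\mu_t^+\bar\mu_{t-1}^-/\bar\mu_t^-$, so the deviation is at most $\mu_t^+(\bar\mu_{t-1}^+/\bar\mu_t^+ - \bar\mu_{t-1}^-/\bar\mu_t^-)$, which I would rewrite as $\mu_t^+ \tfrac{\bar\mu_{t-1}^+}{\bar\mu_t^+}\bigl(1 - \tfrac{\bar R_t}{\bar R_{t-1}}\bigr)$ using the algebraic identity $\tfrac{\bar\mu_{t-1}^+}{\bar\mu_t^+}\tfrac{\bar R_t}{\bar R_{t-1}} = \tfrac{\bar\mu_{t-1}^-}{\bar\mu_t^-}$. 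The case $\vv_t^i \neq \vv^{*i}$ is handled symmetrically: identifying the peaked coordinate and the extreme values $\tilde{C}_{1,t}, \tilde{C}_{2,t}$ yields the same gap $\mu_t^+(\bar\mu_{t-1}^+/\bar\mu_t^+ - \bar\mu_{t-1}^-/\bar\mu_t^-)$, which this time rewrites as $\mu_t^+ \tfrac{\bar\mu_{t-1}^-}{\bar\mu_t^-}\bigl(\tfrac{\bar R_{t-1}}{\bar R_t} - 1\bigr)$.

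The main obstacle is the bookkeeping in this last step: one must track the $k$ entrywise values of $\tau(\vv^{*i}, \cdot, \vv_t^i)$ against the data-dependent average $q_1(\cdot)$ to confirm that exactly one coordinate contributes a positive deviation and that this deviation is governed by the posterior's max-minus-min gap $\mu_t^+(\bar\mu_{t-1}^+/\bar\mu_t^+ - \bar\mu_{t-1}^-/\bar\mu_t^-)$ \emph{independently} of $q_1$ (the unfavorable weighting of $\VV_1$ only shrinks the gap). A related delicate point is the factor of two between the full $l_1$ norm and the positive part/total-variation distance: the quantity actually consumed downstream in Lemma~\ref{lm.upper_bound_conditional_kl} enters through the $\tfrac12\|\cdot\|_{l_1}$ prefactor of Lemma~\ref{lm.l1_ineq}, so the stated constant is best read as controlling the half-$l_1$ (total-variation) distance, and I would make this normalization explicit to keep the constants consistent with the main proof.
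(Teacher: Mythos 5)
Your reduction is sound up to and including the first case, and in two respects it is cleaner than the paper's own proof. The rank-one-update identity $q_0(c)-q_1(c) = \bigl(\tau(\vv^{*i},c,\vv_t^i)-q_1(c)\bigr)\big/\bigl(1+\zeta(\VV_1,\vv^*,\vv_t,t)\bigr)$, with $q_\lambda(c)=q(\vv_{t-1|\lambda}^i=c\mid\vv_{t|\lambda}=\vv_t)$, is exactly what the paper's expansion into $(1-\kappa)$-weighted sums amounts to, and your normalization remark is also correct: the paper's proof (and its only downstream use, inside Lemma~\ref{lm.upper_bound_conditional_kl} via Lemma~\ref{lm.l1_ineq}) controls $\tfrac12\|\cdot\|_{l_1}$, i.e.\ the total-variation distance, so the stated constant must indeed be read that way. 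In the case $\vv_t^i=\vv^{*i}$ your argument coincides with the paper's: every coordinate $c\neq\vv_t^i$ of the row $\tau(\vv^{*i},\cdot,\vv_t^i)$ sits at the global minimum $\mu_t^-\bar\mu_{t-1}^-/\bar\mu_t^+$ of the backward kernel, so no such coordinate can exceed the mixture $q_1(c)$, and the single positive deviation at $c=\vv_t^i$ is at most $\mu_t^+\bigl(\bar\mu_{t-1}^+/\bar\mu_t^+-\bar\mu_{t-1}^-/\bar\mu_t^-\bigr)$.

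The genuine gap is the second case, $\vv_t^i\neq\vv^{*i}$, which is \emph{not} symmetric to the first, and the fact you plan to ``confirm''---that exactly one coordinate carries a positive deviation---is false there. Now $\tau(\vv^{*i},c,\vv_t^i)=\mu_t^-\bar\mu_{t-1}^-/\bar\mu_t^-$ for every $c\neq\vv_t^i,\vv^{*i}$, which is strictly larger than the global minimum $\mu_t^-\bar\mu_{t-1}^-/\bar\mu_t^+$, while every $\vv_0\in\VV_1$ with $\vv_0^i=\vv_t^i$ contributes exactly that global minimum to $q_1(c)$ at all of those coordinates. So if, for instance, all points of $\VV_1$ satisfy $\vv_0^i=\vv_t^i$, then $\tau(\vv^{*i},\cdot,\vv_t^i)-q_1(\cdot)$ is positive at $c=\vv^{*i}$ \emph{and} at all $k-2$ remaining coordinates simultaneously, so a single-coordinate account undercounts the TV distance. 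The stated bound is still true, but proving it requires an ingredient absent from your sketch: after grouping the deviations by the type of $\vv_0$, two different per-unit-weight rates appear, namely $\mu_t^+(\bar\mu_{t-1}^+/\bar\mu_t^+-\bar\mu_{t-1}^-/\bar\mu_t^-)$ from points with $\vv_0^i=\vv_t^i$ and $(\mu_t^-\bar\mu_{t-1}^-/\bar\mu_t^-)(\bar R_{t-1}-1)$ from points with $\vv_0^i\neq\vv_t^i,\vv^{*i}$, and one must prove that the first dominates the second. This is precisely step $(i)$ at the end of the paper's case-two computation; after simplification the domination reduces to $\alpha_t\geq\bar\alpha_t$, which is true but not free. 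If you prefer to keep your mixture viewpoint, the cleanest repair is convexity of TV in the mixture argument: $\|\tau(\vv^{*i},\cdot,\vv_t^i)-q_1\|_{\text{TV}}\leq\max_{\vv_0\in\VV_1}\|\tau(\vv^{*i},\cdot,\vv_t^i)-\tau(\vv_0^i,\cdot,\vv_t^i)\|_{\text{TV}}$, whose pairwise values are $0$, the first rate, or the second rate according to whether $\vv_0^i=\vv^{*i}$, $\vv_0^i=\vv_t^i$, or neither---but you still need the same domination inequality to finish.
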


\begin{proof}[Proof of Lemma~\ref{lm.l1_conditional_gap}]
    
    1. When $\vv_t^i = \vv^{*i}$, we have
    \begin{align}
        &\frac{1}{2}\|q(\mathbf{v}_{t-1|0}^i | \mathbf{v}_{t|0} = \mathbf{v}_t) - q(\mathbf{v}_{t-1|1}^i | \mathbf{v}_{t|1} = \mathbf{v}_t)\|_{l_1} \\
        =& \sum_{\vv_{t-1}^i}(q(\mathbf{v}_{t-1|0}^i | \mathbf{v}_{t|0} = \mathbf{v}_t) - q(\mathbf{v}_{t-1|1}^i | \mathbf{v}_{t|1} = \mathbf{v}_t))_+ \\
        = & \sum_{\mathbf{v}_{t-1}^i = 1}^{k} \frac{\sum_{\mathbf{v}_0 \in \VV_1}\tau(\mathbf{v}^{*i}, \mathbf{v}_{t-1}^i, \mathbf{v}_t^i) \cdot (1 - \frac{\tau(\mathbf{v}_0^i, \mathbf{v}_{t-1}^i, \mathbf{v}_t^i)}{\tau(\mathbf{v}^{*i}, \mathbf{v}_{t-1}^i, \mathbf{v}_t^i)})_+(\bar R_t)^{\bar \omega(\mathbf{v}^*, \mathbf{v}_t) - \bar \omega(\mathbf{v}_0, \mathbf{v}_t)}}{(\sum_{\mathbf{v}_0 \in \VV_1}(\bar R_t)^{\bar \omega(\mathbf{v}^*, \mathbf{v}_t) - \bar \omega(\mathbf{v}_0, \mathbf{v}_t)})(1 + \sum_{\mathbf{v}_0 \in \VV_1}(\bar R_t)^{\bar \omega(\mathbf{v}^*, \mathbf{v}_t) - \bar \omega(\mathbf{v}_0, \mathbf{v}_t)})}\\
        \stackrel{(i)}{=}& \tau(\vv^{*i}, \vv^{*i}, \vv_t^i) \cdot \frac{\sum_{\vv_0 \in \VV_1}(1 - \frac{\tau(\vv_0^i, \vv^{*i}, \vv_t^i)}{\tau(\vv^{*i}, \vv^{*i}, \vv_t^i)})(\bar R_t)^{\bar \omega(\mathbf{v}^*, \mathbf{v}_t) - \bar \omega(\mathbf{v}_0, \mathbf{v}_t)}}{(\sum_{\vv_0 \in \VV_1}(\bar R_t)^{\bar \omega(\mathbf{v}^*, \mathbf{v}_t) - \bar \omega(\mathbf{v}_0, \mathbf{v}_t)})(1 + \sum_{\vv_0 \in \VV_1}(\bar R_t)^{\bar \omega(\mathbf{v}^*, \mathbf{v}_t) - \bar \omega(\mathbf{v}_0, \mathbf{v}_t)})}\\
        = &\frac{\mu_t^+ \cdot \frac{\bar{\mu}_{t-1}^+}{\bar{\mu}_{t}^+} \cdot (1- \frac{\bar R_t}{\bar R_{t-1}}) \cdot \sum_{\vv_0 \in \VV_1, \vv_0^{i}\neq \vv^{*i}} (\bar R_t)^{\bar \omega(\mathbf{v}^*, \mathbf{v}_t) - \bar \omega(\mathbf{v}_0, \mathbf{v}_t)}}{(\sum_{\vv_0 \in \VV_1}(\bar R_t)^{\bar \omega(\mathbf{v}^*, \mathbf{v}_t) - \bar \omega(\mathbf{v}_0, \mathbf{v}_t)})(1 + \sum_{\vv_0 \in \VV_1}(\bar R_t)^{\bar \omega(\mathbf{v}^*, \mathbf{v}_t) - \bar \omega(\mathbf{v}_0, \mathbf{v}_t)})}\\
        \leq &\mu_t^+ \cdot \frac{\bar{\mu}_{t-1}^+}{\bar{\mu}_{t}^+} \cdot (1- \frac{\bar R_t}{\bar R_{t-1}}) \cdot \frac{1}{1 + \zeta(\VV_1, \mathbf{v}^*, \mathbf{v}_t, t)}
    \end{align}
    where $(i)$ is from that $\tau(\mathbf{v}_0^i, \mathbf{v}_{t-1}^i, \mathbf{v}_t^i) / \tau(\mathbf{v}^{*i}, \mathbf{v}_{t-1}^i, \mathbf{v}_t^i) < 1$ if and only if $\vv_{t-1}^{i} = \vv^{*i}$.

    2. When $\vv_t^i \neq \vv^{*i}$, to simplify the notation, we abbreviate the terms as:
    \begin{align*}
        &\sum_{\vv_{t-1}^i} |\sum_{\vv_0 \in \VV_1}| := \sum_{\mathbf{v}_{t-1}^i = 1}^{k} \biggl|\sum_{\mathbf{v}_0 \in \VV_1}\frac{\tau(\mathbf{v}^{*i}, \mathbf{v}_{t-1}^i, \mathbf{v}_t^i) \cdot (1 - \frac{\tau(\mathbf{v}_0^i, \mathbf{v}_{t-1}^i, \mathbf{v}_t^i)}{\tau(\mathbf{v}^{*i}, \mathbf{v}_{t-1}^i, \mathbf{v}_t^i)})(\bar R_t)^{\bar \omega(\mathbf{v}^*, \mathbf{v}_t) - \bar \omega(\mathbf{v}_0, \mathbf{v}_t)}}{\zeta(\VV_1, \mathbf{v}^*, \mathbf{v}_t, t)(1 + \zeta(\VV_1, \mathbf{v}^*, \mathbf{v}_t, t))}\biggl| \\
        &\sum_{\vv_{t-1}^i = \vv^{*i}} |\sum_{\vv_0 \in \VV_1}| := \sum_{\vv_{t-1}^i = \vv^{*i}} \biggl|\sum_{\mathbf{v}_0 \in \VV_1}\frac{\tau(\mathbf{v}^{*i}, \mathbf{v}_{t-1}^i, \mathbf{v}_t^i) \cdot (1 - \frac{\tau(\mathbf{v}_0^i, \mathbf{v}_{t-1}^i, \mathbf{v}_t^i)}{\tau(\mathbf{v}^{*i}, \mathbf{v}_{t-1}^i, \mathbf{v}_t^i)})(\bar R_t)^{\bar \omega(\mathbf{v}^*, \mathbf{v}_t) - \bar \omega(\mathbf{v}_0, \mathbf{v}_t)}}{\zeta(\VV_1, \mathbf{v}^*, \mathbf{v}_t, t)(1 + \zeta(\VV_1, \mathbf{v}^*, \mathbf{v}_t, t))}\biggl| \\
        &\sum_{\vv_{t-1}^i \neq \vv^{*i}, \vv_t^i} |\sum_{\vv_0 \in \VV_1}| := \sum_{\mathbf{v}_{t-1}^i \neq \vv^{*i}, \vv_t^i} \biggl|\sum_{\mathbf{v}_0 \in \VV_1}\frac{\tau(\mathbf{v}^{*i}, \mathbf{v}_{t-1}^i, \mathbf{v}_t^i) \cdot (1 - \frac{\tau(\mathbf{v}_0^i, \mathbf{v}_{t-1}^i, \mathbf{v}_t^i)}{\tau(\mathbf{v}^{*i}, \mathbf{v}_{t-1}^i, \mathbf{v}_t^i)})(\bar R_t)^{\bar \omega(\mathbf{v}^*, \mathbf{v}_t) - \bar \omega(\mathbf{v}_0, \mathbf{v}_t)}}{\zeta(\VV_1, \mathbf{v}^*, \mathbf{v}_t, t)(1 + \zeta(\VV_1, \mathbf{v}^*, \mathbf{v}_t, t))}\biggl| \\
        &\sum_{\vv_{t-1}^i} |\sum_{\vv_0 = \vv_t^i}| := \sum_{\mathbf{v}_{t-1}^i = 1}^{k} \biggl|\sum_{\mathbf{v}_0 \vv_t^i}\frac{\tau(\mathbf{v}^{*i}, \mathbf{v}_{t-1}^i, \mathbf{v}_t^i) \cdot (1 - \frac{\tau(\mathbf{v}_0^i, \mathbf{v}_{t-1}^i, \mathbf{v}_t^i)}{\tau(\mathbf{v}^{*i}, \mathbf{v}_{t-1}^i, \mathbf{v}_t^i)})(\bar R_t)^{\bar \omega(\mathbf{v}^*, \mathbf{v}_t) - \bar \omega(\mathbf{v}_0, \mathbf{v}_t)}}{\zeta(\VV_1, \mathbf{v}^*, \mathbf{v}_t, t)(1 + \zeta(\VV_1, \mathbf{v}^*, \mathbf{v}_t, t))}\biggl| \\
    \end{align*}
    We have
    \begin{align}
        & \|q(\vv_{t-1|0}^i| \mathbf{v}_{t|0} = \mathbf{v}_t) - q(\vv_{t-1|1}^i| \mathbf{v}_{t|1} = \mathbf{v}_t) \| = \sum_{\vv_{t-1}^i} |\sum_{\mathbf{v}_0 \in \VV_1}| \\
        \leq & \sum_{\vv_{t-1}^i = \vv^{*i}} |\sum_{\mathbf{v}_0 \in \VV_1}| + \sum_{\vv_{t-1}^i \neq \vv^{*i}, \vv_t^i} |\sum_{\mathbf{v}_0 \neq \vv_{t-1}^i}| + \sum_{\vv_{t-1}^i \neq \vv^{*i}, \vv_t^i} |\sum_{\mathbf{v}_0 =\vv_{t-1}^i}| + \sum_{\vv_{t-1}^i = \vv_t^i} |\sum_{\mathbf{v}_0 \in \VV_1}| \\
        = & \sum_{\vv_{t-1}^i = \vv^{*i}} \sum_{\mathbf{v}_0 \in \VV_1} + \sum_{\vv_{t-1}^i \neq \vv^{*i}, \vv_t^i} \sum_{\mathbf{v}_0\neq \vv_{t-1}^i} - \sum_{\vv_{t-1}^i \neq \vv^{*i}, \vv_t^i} \sum_{\mathbf{v}_0 =\vv_{t-1}^i} - \sum_{\vv_{t-1}^i = \vv_t^i} \sum_{\mathbf{v}_0 \in \VV_1} \\
        = & \sum_{\vv_{t-1}^i = \vv^{*i}} \sum_{\mathbf{v}_0 \in \VV_1} + \sum_{\vv_{t-1}^i \neq \vv^{*i}, \vv_t^i} \sum_{\mathbf{v}_0 \neq \vv_{t-1}^i} + \sum_{\vv_{t-1}^i \neq \vv^{*i}, \vv_t^i} \sum_{\mathbf{v}_0 = \vv_{t-1}^i} \nonumber \\
        &- ( 2\sum_{\vv_{t-1}^i \neq \vv^{*i}, \vv_t^i} \sum_{\mathbf{v}_0 = \vv_{t-1}^i} + \sum_{\vv_{t-1}^i = \vv_t^i} \sum_{\mathbf{v}_0 \in \VV_1}) \\
        = & \sum_{\vv_{t-1}^i \neq  \vv_t^i} \sum_{\mathbf{v}_0 \in \VV_1} - 2\sum_{\vv_{t-1}^i \neq \vv^{*i}, \vv_t^i} \sum_{\mathbf{v}_0 = \vv_{t-1}^i} - \sum_{\vv_{t-1}^i = \vv_t^i}\sum_{\mathbf{v}_0 \in \VV_1} \\
        = & -2\sum_{\vv_{t-1}^i = \vv^{*i}} \sum_{\mathbf{v}_0 \in \VV_1} - 2\sum_{\vv_{t-1}^i = \vv_t^i} \sum_{\mathbf{v}_0 =\vv_t^i}
    \end{align}
    Therefore, from above we have
    \begin{align}
        &\frac{1}{2}\|q(\mathbf{v}_{t-1|0}^i | \mathbf{v}_{t|0} = \mathbf{v}_t) - q(\mathbf{v}_{t-1|1}^i | \mathbf{v}_{t|1} = \mathbf{v}_t)\|_{l_1} \\
        \leq & -\tau(\vv^{*i}, \vv_t^i, \vv_t^i)\frac{\sum_{\vv_0 \in \VV_1}(1 - \frac{\tau(\vv_0^i, \vv_t^i, \vv_t^i)}{\tau(\vv^{*i}, \vv_t^i, \vv_t^i)})(\bar R_t)^{\bar \omega(\mathbf{v}^*, \mathbf{v}_t) - \bar \omega(\mathbf{v}_0, \mathbf{v}_t)}}{(\zeta(\VV_1, \mathbf{v}^*, \mathbf{v}_t, t))(1 + \zeta(\VV_1, \mathbf{v}^*, \mathbf{v}_t, t))} \nonumber \\ 
        &- \sum_{\vv_{t-1}^i\neq \vv^{*i}, \vv_t^i}\tau(\vv^{*i}, \vv_{t-1}^i, \vv_t^i)\frac{\sum_{\vv_0 \in \VV_1, \vv_0 = \vv_{t-1}^i}(1 - \frac{\tau(\vv_0^i, \vv_{t-1}^i, \vv_t^i)}{\tau(\vv^{*i}, \vv_{t-1}^i, \vv_t^i)})(\bar R_t)^{\bar \omega(\mathbf{v}^*, \mathbf{v}_t) - \bar \omega(\mathbf{v}_0, \mathbf{v}_t)}}{(\zeta(\VV_1, \mathbf{v}^*, \mathbf{v}_t, t))(1 + \zeta(\VV_1, \mathbf{v}^*, \mathbf{v}_t, t))} \\
        = &\mu_t^+ \frac{\bar{\mu}_{t-1}^-}{\bar{\mu}_{t}^-}(\frac{\bar R_{t-1}}{\bar R_t}-1) \frac{\sum_{\vv_0 \in \VV_1, \vv_0^{i}= \vv_t^i} (\bar R_t)^{\bar \omega(\mathbf{v}^*, \mathbf{v}_t) - \bar \omega(\mathbf{v}_0, \mathbf{v}_t)}}{(\zeta(\VV_1, \mathbf{v}^*, \mathbf{v}_t, t))(1 + \zeta(\VV_1, \mathbf{v}^*, \mathbf{v}_t, t))} \\
        &+ \frac{\mu_t^-\bar{\mu}_{t-1}^-}{\bar{\mu}_{t}^-}(\bar R_t-1)\frac{\sum_{\vv_0 \in \VV_1, \vv_0 \neq \vv_t^i, \vv^{*i}}(\bar R_t)^{\bar \omega(\mathbf{v}^*, \mathbf{v}_t) - \bar \omega(\mathbf{v}_0, \mathbf{v}_t)}}{(\zeta(\VV_1, \mathbf{v}^*, \mathbf{v}_t, t))(1 + \zeta(\VV_1, \mathbf{v}^*, \mathbf{v}_t, t))}
        \\
        \stackrel{(i)}{\leq} & \mu_t^+ \cdot \frac{\bar{\mu}_{t-1}^-}{\bar{\mu}_{t}^-} \cdot (\frac{\bar R_{t-1}}{\bar R_t}-1) \cdot \frac{1}{1 + \zeta(\VV_1, \mathbf{v}^*, \mathbf{v}_t, t)}
    \end{align}
    $(i)$ is from the fact that 
    \begin{align}
        &\mu_t^+ \cdot \frac{\bar{\mu}_{t-1}^-}{\bar{\mu}_{t}^-} \cdot (\frac{\bar R_{t-1}}{\bar R_t}-1) \biggl/ \frac{\mu_t^- \cdot \bar{\mu}_{t-1}^-}{\bar{\mu}_{t}^-} \cdot (\bar R_t-1)\\ = & \frac{\mu_t^+}{\mu_t^-}(\frac{\bar R_{t-1}}{\bar R_t}-1)\frac{1}{\bar R_t-1} = \frac{1+(k-1)\alpha_t}{1+ (k-1)\bar \alpha_{t}} \frac{1-\bar \alpha_t}{1 -\bar \alpha_{t-1}} \frac{1}{ \alpha_t} >1 
    \end{align}
\end{proof}

\begin{lemmaK}\label{lm.combinatorial_ineq}
    Given $\eta \in [n]$, when $ep \leq \frac{4}{5}$, we have the following results 
    \begin{align}
        &\eta + 1 = \argmax_{h; h \in \{\eta + 1, ..., n\}}\bigg(\frac{eh}{\eta}\biggl)^{\eta}q^{h-\eta} \text{ and } \bigg(\frac{eh}{\eta}\biggl)^{\eta}q^{h-\eta} \text{ is monotonically decreasing}, \\
        & 2\eta = \argmax_{h; h \in \{2\eta, 2\eta+1, ..., n\}} \biggl(\frac{eqh}{h-\eta}\biggl)^{h-\eta} \text{ and } \biggl(\frac{eqh}{h-\eta}\biggl)^{h-\eta} \text{ is monotonically decreasing}.
    \end{align}
    In the second equation, we assume $\eta \leq \lfloor \frac{n}{2} \rfloor$.
\end{lemmaK}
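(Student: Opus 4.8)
The plan is to prove each of the two monotonicity claims by a ratio (respectively derivative) argument, in both cases reducing to the hypothesis $eq\le\tfrac45$, where $q=\bar\mu_t^-$ is the one-step ``stay-different'' probability; I read the $p$ appearing in the stated hypothesis as the same quantity. For the first line I would fix $\eta$ and form the ratio of consecutive values of $g_1(h):=\left(\tfrac{eh}{\eta}\right)^{\eta}q^{h-\eta}$, which collapses cleanly to
\begin{align}
\frac{g_1(h+1)}{g_1(h)} = \left(1+\frac{1}{h}\right)^{\eta} q .
\end{align}
For $h\ge\eta+1>\eta$ one has $\left(1+\tfrac1h\right)^{\eta}\le e^{\eta/h}<e$, so the ratio is strictly below $eq\le\tfrac45<1$. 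Hence $g_1$ is strictly decreasing on $\{\eta+1,\dots,n\}$ and its maximum over that set is attained at the left endpoint $h=\eta+1$, which is exactly the first assertion.

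For the second line I would substitute $j=h-\eta$; the constraint $h\ge 2\eta$ (with $\eta\le\lfloor n/2\rfloor$ serving only to keep the index set $\{2\eta,\dots,n\}$ nonempty) gives $j\ge\eta$, i.e. $u:=\eta/j\in(0,1]$. Writing $g_2(h)=\left(eq\left(1+\tfrac{\eta}{j}\right)\right)^{j}$ and passing to logarithms,
\begin{align}
\psi(j):=\ln g_2 = j\ln(eq) + j\ln\!\left(1+\frac{\eta}{j}\right),
\end{align}
I would treat $j$ as continuous and differentiate, obtaining
\begin{align}
\psi'(j) = \ln(eq) + \ln(1+u) - \frac{u}{1+u}, \qquad u=\frac{\eta}{j}.
\end{align}
The key step is then to bound the $u$-dependent part $\ell(u):=\ln(1+u)-\tfrac{u}{1+u}$ uniformly on $(0,1]$. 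Since $\ell'(u)=\tfrac{u}{(1+u)^2}\ge0$, the function $\ell$ is increasing and attains its maximum $\ell(1)=\ln 2-\tfrac12$ at $u=1$. Combining this with $\ln(eq)\le\ln\tfrac45$ yields $\psi'(j)\le\ln\tfrac45+\ln 2-\tfrac12<0$, so $\psi$, hence $g_2$, is strictly decreasing in $j$ and therefore in $h$, giving the maximizer $h=2\eta$.

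The main obstacle is the second claim, because the decisive inequality $\ln\tfrac45+\ln 2-\tfrac12<0$ holds only by a small margin (about $-0.03$): the constant $\tfrac45$ in the hypothesis is essentially sharp and the estimate admits no slack. Consequently the analysis must be handled carefully precisely at the boundary $u=1$ (equivalently $h=2\eta$), where $\ell$ is largest; it is exactly the monotonicity of $\ell$ that certifies the worst case still leaves $\psi'<0$. A minor bookkeeping point is to confirm that the continuous-variable derivative argument transfers to the integer grid, which is immediate once $\psi'(j)<0$ for every real $j\ge\eta$.
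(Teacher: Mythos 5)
Your proof is correct and follows essentially the same route as the paper's: a consecutive-ratio bound for the first function (your bound $(1+\tfrac1h)^{\eta}\le e^{\eta/h}<e$ is the same inequality the paper writes as $(\tfrac{\eta+x}{\eta+x+1})^{\eta}\ge\tfrac1e>q$), and a continuous log-derivative argument for the second, where your substitution $u=\eta/j$ and the monotone bound $\ell(u)\le\ln 2-\tfrac12$ is exactly the paper's computation with $\xi=x/\eta$ (i.e.\ $u=1/\xi$), both reducing to the tight boundary estimate $\ln(eq)+\ln 2-\tfrac12<0$. Your reading of the hypothesis $ep\le\tfrac45$ as $eq\le\tfrac45$ matches the paper's usage, where $q=\bar\mu_t^-$.
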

    
\begin{proof}[Proof of Lemma~\ref{lm.combinatorial_ineq}]
We first show $\eta + 1 = \argmax_{h; h \in \{\eta + 1, ..., n\}}\bigg(\frac{eh}{\eta}\biggl)^{\eta}q^{h-\eta}$:
        
Let $x = \eta + h$, and $f(x) = ((1 + \frac{x}{\eta})e)^x$. Easy to show that $f(x) = \bigg(\frac{eh}{\eta}\biggl)^{\eta}q^{h-\eta}$. Now consider $f(x) / f(x+1)$:
\begin{align}
    \frac{f(x)}{f(x+1)} = \frac{((1 + \frac{x}{\eta})e)^{\eta}q^{x}}{((1 + \frac{x+1}{\eta})e)^{\eta}q^{x+1}} =  (\frac{\eta + x}{(\eta + x + 1)q^{\frac{1}{\eta}}})^{\eta}
\end{align}
Now compare $(\eta + x)$ and $(\eta + x + 1)q^{\frac{1}{\eta}}$, since
\begin{align}
    (\frac{\eta + x}{\eta + x + 1})^{\eta} \geq (\frac{\eta}{\eta + 1})^{\eta} \geq \frac{1}{e} > q
\end{align}
Thus, when $eq \leq \frac{4}{5}$, $f(x)$ is monotonically decreasing. Therefore,
\begin{align}
    1 = \argmax_{x; x \in \{1, 2, ..., n - \eta\}}f(x) \Leftrightarrow \eta + 1 = \argmax_{h; h \in \{\eta + 1, ..., n\}}\bigg(\frac{eh}{\eta}\biggl)^{\eta}q^{k-\eta} 
\end{align}

We now show $2\eta = \argmax_{h; h \in \{2\eta, 2\eta+1, ..., n\}} \biggl(\frac{eqh}{h-\eta}\biggl)^{h-\eta}$:

Let $x = \eta + h, x \geq \eta, x \leq n - \eta$, and $f(x) = (eq(1 + \frac{\eta}{x}))^{x}$. Consider the monotonicity of $f(x)$.
\begin{align}
    f'(x) = (eq(1 + \frac{\eta}{x}))^{x} \cdot \biggl[\log(eq(1 + \frac{\eta}{x})) - \frac{\eta}{x + \eta}\biggl]
\end{align}
Further consider $\log(eq(1 + \frac{\eta}{x})) - \frac{\eta}{x + \eta}$. Let $\xi = \frac{x}{\eta}, \xi \geq 1, \xi \leq \frac{n}{\eta} - 1$, $\varphi(\xi) = \log(eq(1 + \frac{1}{\xi})) - \frac{1}{1 + \xi}$. 
\begin{align}
    \varphi(\xi) = -\frac{\xi}{(1 + \xi)^2} < 0
\end{align}
Hence, $\varphi$ is monotonically decreasing. When $eq \leq \frac{4}{5}$, $\varphi(1) = \log(2) + \log(eq) - \frac{1}{2} < 0$. Thus, $f'(x) < 0$ for $x \in \{\eta, \eta + 1, ..., n - \eta\}$. $f(x)$ therefore is monotonically decreasing.
\begin{align}
    \eta = \argmax_{x; x \in \{\eta, \eta+1, ..., n - \eta\}}f(x) \Leftrightarrow 2\eta = \argmax_{h; h \in \{2\eta, 2\eta+1, ..., n\}} \biggl(\frac{eqh}{h-\eta}\biggl)^{h-\eta}
\end{align}
\end{proof}

\begin{corollaryK}\label{corollary.combinatorial_ineq}
    Given $\eta \in [n]$, when $ep \leq \frac{4}{5}$. Define
    \begin{align}
        f(h) = \min \biggl\{ \bigg(\frac{eh}{\eta}\biggl)^{\eta}q^{h-\eta}, \biggl(\frac{eqh}{h-\eta}\biggl)^{h-\eta} \biggl\}
    \end{align}
    We have
    \begin{align}
        f(h) = \left\{
        \begin{aligned}
            &\text{When } h \in \{\eta+1, \eta + 2, ..., \min\{2\eta-1, n\}\}, f(h) = \bigg(\frac{eh}{\eta}\biggl)^{\eta}q^{h-\eta}\\
            &\text{When } \eta \leq \lfloor \frac{n}{2} \rfloor \text{ and } h \in \{2\eta, ..., n\}, f(h) = \biggl(\frac{eqh}{h-\eta}\biggl)^{h-\eta}
        \end{aligned}
        \right.
    \end{align}
\end{corollaryK}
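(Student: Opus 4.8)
The plan is to read the two arguments of the minimum as functions of $h$, namely $g_1(h)=\left(\frac{eh}{\eta}\right)^{\eta}q^{h-\eta}$ and $g_2(h)=\left(\frac{eqh}{h-\eta}\right)^{h-\eta}$, and to decide which one realizes the minimum on each of the two index ranges. Both quantities are the complementary Stirling-type upper bounds for the same binomial weight $\binom{h}{\eta}q^{h-\eta}=\binom{h}{h-\eta}q^{h-\eta}$: $g_1$ carries the index $\eta$ in its exponent while $g_2$ carries the index $h-\eta$. Heuristically the bound whose exponent is the \emph{smaller} of the two indices is the tighter one, so the comparison between $g_1$ and $g_2$ should change sign precisely when $\eta=h-\eta$, i.e.\ at the threshold $h=2\eta$ that already governs Lemma~\ref{lm.combinatorial_ineq}. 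First I would make this crossover exact, and then transfer the monotonicity supplied by Lemma~\ref{lm.combinatorial_ineq} to read off the piecewise form of $f$.

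To pin down the sign of $g_1-g_2$ I would pass to logarithms. Writing $a=\eta$ and $b=h-\eta$, the $q^{h-\eta}$ factors cancel and one is left with $\log g_1-\log g_2=(a-b)+a\log\!\left(1+\tfrac{b}{a}\right)-b\log\!\left(1+\tfrac{a}{b}\right)$, which is antisymmetric under the swap $a\leftrightarrow b$ and hence vanishes on the diagonal $a=b$, confirming $h=2\eta$ as the exact crossover. Normalizing by $a$ and setting $t=b/a=\frac{h-\eta}{\eta}$ collapses everything to the single-variable function $\phi(t)=(1-t)\bigl[1+\log(1+t)\bigr]+t\log t$ with $\phi(1)=0$. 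I would then settle the sign of $\phi$ by calculus: a direct computation gives $\phi'(t)=\log\frac{t}{1+t}+\frac{1-t}{1+t}$ and $\phi''(t)=\frac{1-t}{t(1+t)^2}$, so $\phi'$ increases on $(0,1)$ and decreases on $(1,\infty)$ with maximum value $\phi'(1)=-\log 2<0$; hence $\phi'<0$ throughout, $\phi$ is strictly monotone, and together with $\phi(1)=0$ this fixes the sign of $\phi$ (equivalently of $g_1-g_2$) on either side of $t=1$. This is the step that identifies which of $g_1,g_2$ is the minimizer on $\{\eta+1,\dots,2\eta-1\}$ and on $\{2\eta,\dots,n\}$.

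Finally I would assemble the piecewise description of $f$ by combining the sign analysis with the monotonicity from Lemma~\ref{lm.combinatorial_ineq}: on $\{\eta+1,\dots,n\}$ the term $g_1$ is monotonically decreasing, and on $\{2\eta,\dots,n\}$ the term $g_2$ is monotonically decreasing, both under the smallness condition $eq\le\tfrac{4}{5}$ (equivalently $k\ge 3$), with the hypothesis $\eta\le\lfloor n/2\rfloor$ ensuring that the interval $\{2\eta,\dots,n\}$ is nonempty so that Lemma~\ref{lm.combinatorial_ineq} can be invoked for $g_2$ there. I expect the main obstacle to be the sign determination of $\phi$ near the threshold: since $\phi'(1)=-\log 2$ is only mildly negative, a crude estimate will not do, and one genuinely needs the convexity switch encoded in $\phi''(t)=\frac{1-t}{t(1+t)^2}$ to exclude any sign change of $\phi'$ away from $t=1$. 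Aligning this continuous crossover with the discrete index ranges, and matching it to the precise domain on which Lemma~\ref{lm.combinatorial_ineq} guarantees the monotonicity of $g_2$, is the delicate bookkeeping that closes the argument.
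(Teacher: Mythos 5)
Your calculus is correct, but it establishes the \emph{opposite} of the statement, and the proposal never confronts this. Write $g_1(h)=\bigl(\frac{eh}{\eta}\bigr)^{\eta}q^{h-\eta}$ and $g_2(h)=\bigl(\frac{eqh}{h-\eta}\bigr)^{h-\eta}$, and set $t=\frac{h-\eta}{\eta}$. Your computation gives $\log g_1(h)-\log g_2(h)=\eta\,\phi(t)$ with $\phi(t)=(1-t)\bigl[1+\log(1+t)\bigr]+t\log t$, and your analysis of $\phi'$ and $\phi''$ correctly yields $\phi'<0$ everywhere, hence (with $\phi(1)=0$) $\phi>0$ on $(0,1)$ and $\phi<0$ on $(1,\infty)$. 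Translating back: for $\eta<h<2\eta$ one has $g_1(h)>g_2(h)$, so the minimum is the \emph{second} expression, while for $h>2\eta$ the minimum is the \emph{first}. That is exactly the reverse of the two branches asserted in the corollary. Your own opening heuristic says the same thing: for $h<2\eta$ the smaller index is $h-\eta$, which is the exponent of the second expression. Yet the proposal ends by ``reading off the piecewise form of $f$'' without ever stating which term wins on which range---precisely the point at which the contradiction would have surfaced. A concrete check: $\eta=2$, $h=3$, $q=0.2$ gives $g_1(3)=(3e/2)^2q\approx 3.33$ and $g_2(3)=3eq\approx 1.63$, so $f(3)=g_2(3)$ although the corollary's first branch asserts $f(3)=g_1(3)$. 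Note also that the comparison is independent of $q$ entirely, since both expressions carry the same factor $q^{h-\eta}$; one is simply comparing $(eh/k)^k$ at $k=\eta$ and $k=h-\eta$, and $k\mapsto (eh/k)^k$ is increasing on $(0,h)$.

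So the genuine gap is not in your computations but in the logic: an argument whose conclusion contradicts the claim cannot be offered as its proof; you needed either to flag the discrepancy or to prove a corrected statement. For what it is worth, the paper gives no proof of this corollary at all (it is stated as an immediate consequence of Lemma~\ref{lm.combinatorial_ineq}), and as printed its branches are indeed swapped: the identification it asserts is the one valid for $\max\{g_1,g_2\}$, not $\min\{g_1,g_2\}$. The error happens to be harmless downstream, because the only place the corollary is invoked---step $(ii)$ in the proof of Lemma~\ref{lm.measure_partition_bound}---uses only the one-sided bounds $f(h)\le g_1(h)$ on $\{\eta+1,\dots,2\eta-1\}$ and $f(h)\le g_2(h)$ on $\{2\eta,\dots,n\}$, combined with the monotonicity statements of Lemma~\ref{lm.combinatorial_ineq}; which expression actually attains the minimum is never used. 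Relatedly, your final ``assembly'' paragraph is off target: the corollary is a pointwise identification at each $h$, so the monotonicity of $g_1,g_2$ in $h$ (and the nonemptiness of $\{2\eta,\dots,n\}$) is irrelevant to it---your sign analysis alone settles the question, just with the branches interchanged.
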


\begin{lemmaK}[Optimality of $p_{\phi}(\vv_{0}^i | \vv_{t})$]\label{lm.optimal_conditional_probability}
    Let $q(\cdot)$ and $p_{\phi}$ denote the probability measure in the forward diffusion and backward denoising process respectively. In the training procedure, we are actually solving the following optimization problem:
    \begin{align}
        \text{minimize } \mathcal{D}_{\text{KL}}\biggl(q(\vv_{0}) \| \prod_{i = 1}^n \sum_{\vv_t} p_{\phi}(\vv_{0}^i | \vv_{t}) \cdot q(\vv_t)\biggl)
    \end{align}
    and the optimal solution is obtained when $p_{\phi}(\vv_{0}^i | \vv_{t}) = q(\vv_{0}^i | \vv_{t})$.
\end{lemmaK}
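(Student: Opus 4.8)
The plan is to show that the objective depends on the conditional $p_{\phi}(\vv_0^i \mid \vv_t)$ only through the induced coordinate marginals $\tilde p_i(\vv_0^i) := \sum_{\vv_t} p_{\phi}(\vv_0^i \mid \vv_t)\, q(\vv_t)$, and that minimizing a forward KL divergence against a product distribution forces each such marginal to match the corresponding marginal of $q(\vv_0)$. First I would expand the objective. Writing $r(\vv_0) = \prod_{i=1}^n \tilde p_i(\vv_0^i)$ and letting $q_i$ denote the marginal of $q(\vv_0)$ on coordinate $i$, summing out the remaining coordinates yields
\begin{align}
\mathcal{D}_{\text{KL}}(q(\vv_0) \| r(\vv_0)) = \sum_{\vv_0} q(\vv_0)\log q(\vv_0) - \sum_{i=1}^n \sum_{\vv_0^i} q_i(\vv_0^i)\log \tilde p_i(\vv_0^i).
\end{align}
The first term is the negative entropy of $q(\vv_0)$ and is independent of $p_{\phi}$, so the optimization is entirely governed by the cross-entropy terms in the sum over $i$.

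Second, I would apply Gibbs' inequality (non-negativity of KL divergence) coordinate-wise: for each $i$ we have $-\sum_{\vv_0^i} q_i(\vv_0^i)\log \tilde p_i(\vv_0^i) \geq -\sum_{\vv_0^i} q_i(\vv_0^i)\log q_i(\vv_0^i)$, with equality if and only if $\tilde p_i = q_i$. Combining these bounds gives
\begin{align}
\mathcal{D}_{\text{KL}}(q(\vv_0) \| r(\vv_0)) \geq \sum_{i=1}^n H(q_i) - H(q(\vv_0)),
\end{align}
where the right-hand side is the total correlation of $q(\vv_0)$, a constant independent of $p_{\phi}$, and the lower bound is tight precisely when every induced marginal $\tilde p_i$ equals $q_i$.

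Finally, I would verify that the stated choice attains this bound. Setting $p_{\phi}(\vv_0^i \mid \vv_t) = q(\vv_0^i \mid \vv_t)$ and invoking the law of total probability for the forward marginals gives $\tilde p_i(\vv_0^i) = \sum_{\vv_t} q(\vv_0^i \mid \vv_t)\, q(\vv_t) = q_i(\vv_0^i)$ for every $i$, so all the coordinate-wise equality conditions hold simultaneously and the minimum is achieved. The only point requiring care is that the objective is insensitive to \emph{how} $p_{\phi}(\cdot \mid \vv_t)$ varies across distinct noisy states $\vv_t$, since it sees only the $q(\vv_t)$-weighted average; hence the minimizer is not unique and the claim asserts only that $p_{\phi}(\vv_0^i \mid \vv_t) = q(\vv_0^i \mid \vv_t)$ is one optimal choice. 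I expect the main (and only mild) obstacle to be confirming that the $p_{\phi}$-independent entropy terms genuinely drop out and that the marginal-consistency identity makes the coordinatewise equalities jointly attainable, both of which follow directly from the decomposition above.
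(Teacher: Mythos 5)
Your proof is correct and takes essentially the same route as the paper: both reduce the objective to coordinate-wise cross-entropies against the induced marginals $\sum_{\vv_t} p_\phi(\vv_0^i\mid\vv_t)\,q(\vv_t)$, invoke non-negativity of KL divergence coordinate-wise, and use the law of total probability to check that $p_\phi(\vv_0^i\mid\vv_t)=q(\vv_0^i\mid\vv_t)$ induces the true marginals. Your additional remarks (the optimal value being the total correlation, and non-uniqueness of the minimizer) are accurate refinements of the same argument rather than a different approach.
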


\begin{proof}[Proof of Lemma~\ref{lm.optimal_conditional_probability}]
    Reformulate the KL divergence as follows:
    \begin{align}
        &\text{minimize } \mathcal{D}_{\text{KL}}\biggl(q(\vv_{0}) \| \prod_{i = 1}^n \sum_{\vv_t} p_{\phi}(\vv_{0}^i | \vv_{t}) \cdot q(\vv_t)\biggl) \\
        \Leftrightarrow & \text{maximize } \sum_{\vv_0}q(\vv_{0}) \log \prod_{i = 1}^n \sum_{\vv_t} p_{\phi}(\vv_{0}^i | \vv_{t}) \cdot q(\vv_t)
    \end{align}
    Let $p_i(\vv_0^i) := \sum_{\vv_t} p_{\phi}(\vv_{0}^i | \vv_{t}) \cdot q(\vv_t)$. The above objective is $\sum_{\vv_0}q(\vv_{0}) \log p_i(\vv_0^i)$. Further define $q_i(\vv_0^i) := \sum_{\vv_t} q(\vv_{0}^i | \vv_{t}) \cdot q(\vv_t)$.
    \begin{align}
        &\sum_{\vv_0}q(\vv_{0}) \log \prod_{i = 1}^n \sum_{\vv_t} q(\vv_{0}^i | \vv_{t}) \cdot q(\vv_t) - \sum_{\vv_0}q(\vv_{0}) \log \prod_{i = 1}^n \sum_{\vv_t} p_{\phi}(\vv_{0}^i | \vv_{t}) \cdot q(\vv_t) \\
        = & \sum_{\vv_0}q(\vv_{0}) \log \prod_{i = 1}^n q_i(\vv_0^i) - \sum_{\vv_0}q(\vv_{0}) \log \prod_{i = 1}^n p_i(\vv_0^i) \\
        = & \sum_{\vv_0}q(\vv_{0}^1, \vv_{0}^2, ..., \vv_{0}^n) \sum_{i = 1}^n \log \frac{q_i(\vv_0^i)}{p_i(\vv_0^i)} \\
        = & \sum_{i = 1}^n \mathcal{D}_{\text{KL}} (q_i(\vv_0^i) \| p_i(\vv_0^i)) \geq 0
    \end{align}
    Therefore, the equality hold when $p_{\phi}(\vv_{0}^i | \vv_{t}) = q(\vv_{0}^i | \vv_{t})$.
\end{proof}

\section{Specific Discussion on $\tau(\vv_0^i, \vv_{t-1}^i, \vv_t^i)$} \label{app.tau}
Recall that 
\begin{align}
    \tau(\vv_0^i, \vv_{t-1}^i, \vv_t^i) = \frac{[(\mu_t^+)^{1 - \mathbbm{1}_{\mathbf{v}_t^i \neq \mathbf{v}_{t-1}^i}}(\mu_t^-)^{\mathbbm{1}_{\mathbf{v}_t^i \neq \mathbf{v}_{t-1}^i}}]\cdot[(\bar{\mu}_{t-1}^+)^{1 - \mathbbm{1}_{\mathbf{v}_{t-1}^i \neq \mathbf{v}_0^i}}(\bar{\mu}_{t-1}^-)^{\mathbbm{1}_{\mathbf{v}_{t-1}^i \neq \mathbf{v}_0^i}}]}{(\bar{\mu}_{t}^+)^{1 - \mathbbm{1}_{\mathbf{v}_{t}^i \neq \mathbf{v}_0^i}}(\bar{\mu}_t^-)^{\mathbbm{1}_{\mathbf{v}_{t}^i \neq \mathbf{v}_0^i}}}
\end{align}
There are in total five possible conditions:
\begin{itemize}
    \item If $\vv_{t-1}^i= \vv_t^i$ and $\vv_0^i = \vv_t^i$, $\tau(\vv_0^i, \vv_{t-1}^i, \vv_t^i)= \frac{\mu_t^+\bar{\mu}_{t-1}^+}{\bar{\mu}_{t}^+}$.
    \item If $\vv_{t-1}^i= \vv_t^i$ and $\vv_0^i \neq \vv_t^i$, $\tau(\vv_0^i, \vv_{t-1}^i, \vv_t^i)= \frac{\mu_t^+\bar{\mu}_{t-1}^-}{\bar{\mu}_{t}^-}$.
    \item If $\vv_{t-1}^i \neq \vv_t^i$ and $\vv_0^i = \vv_t^i$, $\tau(\vv_0^i, \vv_{t-1}^i, \vv_t^i)= \frac{\mu_t^-\bar{\mu}_{t-1}^-}{\bar{\mu}_{t}^+}$.
    \item If $\vv_{t-1}^i \neq \vv_t^i$ and $\vv_0^i = \vv_{t-1}^i$, $\tau(\vv_0^i, \vv_{t-1}^i, \vv_t^i)= \frac{\mu_t^-\bar{\mu}_{t-1}^+}{\bar{\mu}_{t}^-}$.
    \item If $\vv_{t-1}^i \neq \vv_t^i$ and $\vv_0^i \neq \vv_t^i$ and $\vv_0^i \neq \vv_{t-1}^i$, $\tau(\vv_0^i, \vv_{t-1}^i, \vv_t^i)= \frac{\mu_t^-\bar{\mu}_{t-1}^-}{\bar{\mu}_{t}^-}$.
\end{itemize}

Since in the derivation, we encounter the ratio $\tau(\vv_0^i, \vv_{t-1}^i, \vv_t^i) / \tau(\vv^{*i}, \vv_{t-1}^i, \vv_t^i)$. Therefore, we discuss the ratio in detail here. Define $\kappa(\vv^{*i}, \vv_0^i,\vv_{t-1}^i,\vv_t^i)=\frac{\tau(\vv_0^i, \vv_{t-1}^i, \vv_t^i)}{\tau(\vv^{*i}, \vv_{t-1}^i, \vv_t^i)}=\frac{q(\mathbf{v}_{t-1|0}^i = \vv_{t-1}^i|\mathbf{v}_{0|0}^i = v_{0}^i)}{q(\mathbf{v}_{t|0}^i = v_{t}^i|\mathbf{v}_{0|0}^i = v_{0}^i)} \frac{q(\mathbf{v}_{t|0}^i = v_{t}^i|\mathbf{v}_{0|0}^i = \vv^{*i})}{q(\mathbf{v}_{t-1|0}^i = \vv_{t-1}^i|\mathbf{v}_{0|0}^i = \vv^{*i})}$. In the following, we will denote it simply as $\kappa$.
\begin{itemize}
    \item When $\vv_t^i = \vv^{*i}$
    \begin{itemize}
    \item If $\vv_{t-1}^i= \vv_t^i$ and $\vv_0^i = \vv_t^i$, $\kappa= 1$
    \item If $\vv_{t-1}^i= \vv_t^i$ and $\vv_0^i \neq \vv_t^i$, $\kappa= \frac{\bar{\mu}_{t}^+\bar{\mu}_{t-1}^-}{\bar{\mu}_{t}^-\bar{\mu}_{t-1}^+} = \bar{R}_t/\bar{R}_{t-1}<1$
     \item If $\vv_{t-1}^i \neq \vv_t^i$ and $\vv_0^i = \vv_t^i$, $\kappa= 1$
      \item If $\vv_{t-1}^i \neq \vv_t^i$ and $\vv_0^i \neq \vv_t^i$ and $\vv_0^i = \vv_{t-1}^i$, $\kappa= \frac{\bar{\mu}_{t}^+\bar{\mu}_{t-1}^+}{\bar{\mu}_{t}^-\bar{\mu}_{t-1}^-} = \bar{R}_t\bar{R}_{t-1}>1$
        \item If $\vv_{t-1}^i \neq \vv_t^i$ and $\vv_0^i \neq \vv_t^i$ and $\vv_0^i \neq \vv_{t-1}^i$,  $\kappa= \frac{\bar{\mu}_{t}^+}{\bar{\mu}_{t}^-} = \bar{R}_t>1$
\end{itemize}
\end{itemize}
\begin{itemize}
    \item When $\vv_t^i \neq \vv^{*i}$
    \begin{itemize}
    \item If $\vv_{t-1}^i= \vv_t^i$ and $\vv_0^i = \vv_t^i$, $\kappa= \frac{\bar{\mu}_{t-1}^+\bar{\mu}_{t}^-}{\bar{\mu}_{t-1}^-\bar{\mu}_{t}^+} = \bar{R}_{t-1}/\bar{R}_{t}>1$
    \item If $\vv_{t-1}^i= \vv_t^i$ and $\vv_0^i \neq \vv_t^i$, $\kappa= 1$
    \item If ($\vv_{t-1}^i \neq \vv_t^i$ and $\vv_{t-1}^i = \vv^{*i}$) and $\vv_0^i = \vv_t^i$, $\kappa= \frac{\bar{\mu}_{t}^-\bar{\mu}_{t-1}^-}{\bar{\mu}_{t}^+\bar{\mu}_{t-1}^+} = 1/(\bar{R}_{t-1}\bar{R}_{t})<1$
    \item If ($\vv_{t-1}^i \neq \vv_t^i$ and $\vv_{t-1}^i \neq \vv^{*i}$) and $\vv_0^i = \vv_t^i$, $\kappa= \frac{\bar{\mu}_{t}^-}{\bar{\mu}_{t}^+} = 1/\bar{R}_{t}<1$
    \item If  ($\vv_{t-1}^i \neq \vv_t^i$ and $\vv_{t-1}^i = \vv^{*i}$) and $\vv_0^i \neq \vv_t^i$ and $\vv_0^i = \vv_{t-1}^i$, $\kappa= 1$
    \item If  ($\vv_{t-1}^i \neq \vv_t^i$ and $\vv_{t-1}^i \neq \vv^{*i}$) and $\vv_0^i \neq \vv_t^i$ and $\vv_0^i = \vv_{t-1}^i$, $\kappa= \frac{\bar{\mu}_{t-1}^+}{\bar{\mu}_{t-1}^-} = \bar{R}_{t-1}>1 $
    \item If  ($\vv_{t-1}^i \neq \vv_t^i$ and $\vv_{t-1}^i = \vv^{*i}$)  and $\vv_0^i \neq \vv_t^i$ and $\vv_0^i \neq \vv_{t-1}^i$,  $\kappa= \frac{\bar{\mu}_{t-1}^-}{\bar{\mu}_{t-1}^+} = 1/\bar{R}_{t-1}<1$
    \item If  ($\vv_{t-1}^i \neq \vv_t^i$ and $\vv_{t-1}^i \neq \vv^{*i}$)  and $\vv_0^i \neq \vv_t^i$ and $\vv_0^i \neq \vv_{t-1}^i$,  $\kappa= 1$
\end{itemize}
\end{itemize}
Further, we define $\bar{\kappa}(\vv^{*i}, \vv_0^i,\vv_{t-1}^i,\vv_t^i)=\frac{\tau(\vv_0^i, \vv_{t-1}^i, \vv_t^i)}{\tau(\vv^{*i}, \vv_{t-1}^i, \vv_t^i)} \cdot (\frac{\bar{\mu}_t^+}{\bar{\mu}_t^-})^{1_{v^*\neq \vv_t^i}- 1_{\vv_0^i\neq \vv_t^i}}$.

\begin{itemize}
    \item When $\vv_t^i = \vv^{*i}$
    \begin{itemize}
    \item If $\vv_{t-1}^i= \vv_t^i$ and $\vv_0^i = \vv_t^i$, $\bar{\kappa}= 1$
    \item If $\vv_{t-1}^i= \vv_t^i$ and $\vv_0^i \neq \vv_t^i$, $\bar{\kappa}= \frac{\bar{\mu}_{t-1}^-}{\bar{\mu}_{t-1}^+} = 1/\bar{R}_{t-1}<1$
    \item If $\vv_{t-1}^i \neq \vv_t^i$ and $\vv_0^i = \vv_t^i$, $\bar{\kappa}= 1$
    \item If $\vv_{t-1}^i \neq \vv_t^i$ and $\vv_0^i = \vv_{t-1}^i$, $\bar{\kappa}= \frac{\bar{\mu}_{t-1}^+}{\bar{\mu}_{t-1}^-} = \bar{R}_{t-1}>1$
    \item If $\vv_{t-1}^i \neq \vv_t^i$ and $\vv_0^i \neq \vv_t^i$ and $\vv_0^i \neq \vv_{t-1}^i$,  $\bar{\kappa}= 1$
\end{itemize}
\end{itemize}
\begin{itemize}
    \item When $\vv_t^i \neq \vv^{*i}$
    \begin{itemize}
    \item If $\vv_{t-1}^i= \vv_t^i$ and $\vv_0^i = \vv_t^i$, $\bar{\kappa}= \frac{\bar{\mu}_{t-1}^+}{\bar{\mu}_{t-1}^-} = \bar{R}_{t-1}>1$
    \item If $\vv_{t-1}^i= \vv_t^i$ and $\vv_0^i \neq \vv_t^i$, $\bar{\kappa}= 1$
    \item If ($\vv_{t-1}^i \neq \vv_t^i$ and $\vv_{t-1}^i = \vv^{*i}$) and $\vv_0^i = \vv_t^i$, $\bar{\kappa}= \frac{\bar{\mu}_{t-1}^-}{\bar{\mu}_{t-1}^+} = 1/\bar{R}_{t-1}<1$
    \item If ($\vv_{t-1}^i \neq \vv_t^i$ and $\vv_{t-1}^i \neq \vv^{*i}$) and $\vv_0^i = \vv_t^i$, $\bar{\kappa}= 1$
    \item If  ($\vv_{t-1}^i \neq \vv_t^i$ and $\vv_{t-1}^i = \vv^{*i}$) and $\vv_0^i \neq \vv_t^i$ and $\vv_0^i = \vv_{t-1}^i$, $\bar{\kappa}= 1$
    \item If  ($\vv_{t-1}^i \neq \vv_t^i$ and $\vv_{t-1}^i \neq \vv^{*i}$) and $\vv_0^i \neq \vv_t^i$ and $\vv_0^i = \vv_{t-1}^i$, $\bar{\kappa}= \frac{\bar{\mu}_{t-1}^+}{\bar{\mu}_{t-1}^-} = \bar{R}_{t-1}>1 $
    \item If  ($\vv_{t-1}^i \neq \vv_t^i$ and $\vv_{t-1}^i = \vv^{*i}$)  and $\vv_0^i \neq \vv_t^i$ and $\vv_0^i \neq \vv_{t-1}^i$,  $\bar{\kappa}= \frac{\bar{\mu}_{t-1}^-}{\bar{\mu}_{t-1}^+} = 1/\bar{R}_{t-1}<1$
    \item If  ($\vv_{t-1}^i \neq \vv_t^i$ and $\vv_{t-1}^i \neq \vv^{*i}$)  and $\vv_0^i \neq \vv_t^i$ and $\vv_0^i \neq \vv_{t-1}^i$,  $\bar{\kappa}= 1$
\end{itemize}
\end{itemize}

\end{document}